\documentclass[10pt,dvipsnames]{article} 

\PassOptionsToPackage{round,sort&compress}{natbib}
\usepackage[left=1in,right=1in,top=1in,bottom=1in]{geometry}
\usepackage[dvipsnames,table]{xcolor}
\usepackage[round, sort&compress]{natbib}
\usepackage{amssymb,amsthm, amsfonts}            
\usepackage{mathtools, thmtools}          
\usepackage{mathrsfs}           
\mathtoolsset{showonlyrefs}     
\usepackage{graphicx}           
\usepackage{subcaption}         
\usepackage[space]{grffile}     
\usepackage{url}                
\usepackage{lipsum}             
\usepackage{algorithm}
\usepackage[noend]{algpseudocode}
\usepackage{booktabs}
\usepackage{enumitem}
\usepackage{hyperref}
\usepackage{titletoc}
\hypersetup{
    colorlinks=true,
    linkcolor=Cerulean,
    citecolor=Cerulean,
    urlcolor=Magenta,
}
\usepackage[T1]{fontenc}
\theoremstyle{definition}
\newtheorem{theorem}{Theorem}
\newtheorem{lemma}{Lemma}
\newtheorem{proposition}{Proposition}
\newtheorem{assumption}{Assumption}
\newtheorem{definition}{Definition}
\newtheorem{corollary}{Corollary}
\newtheorem{remark}{Remark}

\newcommand{\eps}{\varepsilon}

\definecolor{cBad}{HTML}{C0392B}
\definecolor{cTeal}{HTML}{1B9E8A}
\definecolor{accent}{HTML}{C0392B}
\usepackage{titletoc}
\definecolor{tealcol}{HTML}{0E9E67} 

\definecolor{darkgrey}{HTML}{535050}

\definecolor{lightblue}{HTML}{0B91D4}
\definecolor{darkorange}{HTML}{D4510B}
\definecolor{golden}{HTML}{BD8A09}
\definecolor{darkgreen}{HTML}{057A1A}
\definecolor{purple}{HTML}{6713BD}
\usepackage{amsmath,amsfonts,bm}
\usepackage{mathtools}
\usepackage{makecell}
\mathtoolsset{showonlyrefs=true}

\newcommand{\QFRS}{\textsc{AnchorQ}}

\newcommand{\reach}{\rho}
\newcommand{\dist}{\operatorname{dist}}
\newcommand{\ScoreFn}{\mathcal{F}}
\newcommand{\xierr}{\xi_{a}}
\newcommand{\Mom}{\mathcal{M}}
\newcommand{\ConMom}{\mathcal{K}}
\DeclareMathOperator*{\E}{\mathbb{E}}
\newcommand{\KLop}{\operatorname{kl}}

\newcommand{\R}{\mathbb{R}}

\newcommand{\KL}{D_{\mathrm{KL}}}
\newcommand{\Var}{\mathrm{Var}}

\DeclareMathOperator*{\argmax}{arg\,max}

\newcommand{\margin}{\delta}
\newcommand{\probmeas}{\mu}
\providecommand{\betatilt}{\beta} 

\def\1{\bm{1}}

\def\vmu{{\bm{\mu}}}
\def\eps{{\epsilon}}

\def\gA{{\mathcal{A}}}

\def\gC{{\mathcal{C}}}

\def\gE{{\mathcal{E}}}

\def\gM{{\mathcal{M}}}

\def\gS{{\mathcal{S}}}
\def\gT{{\mathcal{T}}}

\def\gV{{\mathcal{V}}}

\def\gZ{{\mathcal{Z}}}

\algrenewcommand\algorithmicrequire{\textbf{Input:}}
\algrenewcommand\algorithmicensure{\textbf{Output:}}

\newcommand{\Rb}{\mathbb{R}}

\newcommand{\Dc}{\mathcal{D}}

\newcommand{\Bc}{\mathcal{B}}

\newcommand{\reverseflow}{\textsc{RevFlow}}
\newcommand{\pizero}{\ensuremath{\pi_0}}
\newcommand{\bank}{\ensuremath{Z}}

\newcommand{\numcrit}{K}
\newcommand{\cnext}{c^\star}
\newcommand{\Vnext}{V^\star}
\newcommand{\znext}{z^\star}
\newcommand{\ztnext}{\tilde{z}^\star}
\newcommand{\wnext}{w^\star}
\newcommand{\numpart}{M}

\newcommand{\mF}{F}
\newcommand{\AQ}{\mathrm{A}}
\newcommand{\Lsmooth}{L}

\newcommand{\figleft}{{\em (Left)}}
\newcommand{\figcenter}{{\em (Center)}}
\newcommand{\figright}{{\em (Right)}}

\title{Sample-Efficient Reinforcement Learning for Policy Improvement via Flow-Reversed Action Anchors}

\author{%
  \shortstack{
    Carl Qi\textsuperscript{1,2}$\dagger$
    \quad Weijun Dong\textsuperscript{1,3}
    \quad Xiaojie Wang\textsuperscript{1}
    \quad Peiyi Chen\textsuperscript{1}
    \\[0.8em]
    Rahul Sajnani\textsuperscript{1,4}
    \quad Mingyuan Zhang\textsuperscript{1}
    \quad Amy Zhang\textsuperscript{1,2}
    \quad Jose Barreiros\textsuperscript{1}
    \\[0.8em]
    Manikatan Nambi\textsuperscript{1}
    \quad Lillian J.~Ratliff\textsuperscript{1,5*}
    \quad Dylan Randle\textsuperscript{1*}
  }%
}

\begin{document}

\maketitle
\begin{abstract}
Generalist robot policies offer strong behavioral priors, but achieving deployment-level reliability and throughput remains difficult. Reinforcement learning (RL) can improve these policies, yet its effectiveness is constrained by the cost of online interaction. Existing methods adapt frozen flow-matching policies by treating noise space as a continuous action space. We show that this behaviorally unstructured space makes exploration inefficient, often preventing these methods from extracting meaningful learning signals on challenging tasks. We introduce \QFRS{}, which enables coarse-to-fine policy improvement through a structured, refinable latent action representation. At each decision step, \QFRS{} uses flow reversal to map reference behaviors to state-conditioned latent anchors, samples local particles around those anchors, and applies value-guided sampling to explore promising refinements. Our analysis shows that anchors reduce identification complexity at the cost of support bias; particle refinement trades this bias against action-class complexity; and critic accuracy determines the reliability of KL-regularized, value-guided selection. Across simulated and real-world manipulation tasks, \QFRS{} outperforms state-of-the-art post-training baselines in sample efficiency, delivering a \textbf{37\%} improvement in real-world success rate and \textbf{97\%} higher simulated task throughput under the same interaction budget. Additional visualizations are available on our website~\footnote{\url{https://anchorq-rl.com/}}.
\end{abstract}
\begin{center}
\small
$^{1}$Amazon; \small
$^{2}$UT Austin; \small
$^{3}$UC San Diego; \small
$^{4}$Brown University; \small
$^{5}$University of Washington \\
$\dagger$Work done during the internship at Amazon; *Equal advising
\end{center}

\section{Introduction}
\label{sec:intro}
Generalist vision-language-action (VLA) models provide a powerful starting
point for robotic manipulation. Through extensive pretraining, these policies
can follow language instructions and transfer to objects, scenes, and tasks
beyond the training set
\citep{black2024pi0,black2025pi05,kim2024openvla,tri2025lmb}.
Yet deployment demands more than broad competence: robots must achieve high
success rates and throughput in their target environments, while adaptation
itself must remain inexpensive. Pretrained VLAs remain imperfect under changes
in geometry, clutter, object placement, and task execution, while collecting
demonstrations that densely cover these variations is costly.
Reinforcement-learning (RL) post-training offers a natural route to
deployment-specific improvement, but fine-tuning a large VLA is computationally
expensive and can degrade useful pretrained behaviors. An appealing alternative
is therefore to keep the generalist policy frozen and learn how to select and
refine its actions.

Prior methods adapts pretrained VLA policies by learning lightweight action corrections, steering latent inputs, or selecting among candidate actions \citep{yuan2025policy,wagenmaker2025dsrl,tang2026frs,nakamoto2024vgps}.
Flow-matching policies provide an additional adaptation interface through the
noise that initializes action generation. For a fixed state $s$ and numerical
sampler, an initial latent $z$ is deterministically mapped to an action
chunk, $a=\pizero(s,z)$. The latent can therefore serve directly as the action for RL: changing how $z$ is selected changes the executed behavior without modifying the VLA. For example, diffusion steering via reinforcement learning (DSRL) learns directly
in the latent noise space of a frozen policy~\citep{wagenmaker2025dsrl}.

This raises a more fundamental question: \emph{what action representation should be used for RL to adapt pretrained VLA policies?}
At one extreme, a handful of behaviorally meaningful actions, such as moving left or right, can make online learning much easier, but only by restricting the behaviors the policy can express.
At the other extreme, existing approaches~\citep{wagenmaker2025dsrl,yuan2025policy,xiao2025pld} search expressive but behaviorally unstructured continuous action or latent spaces. The learner must discover which action corrections or latent vectors produce useful behaviors in each state, without an explicit set of behavioral alternatives to guide this search. This makes efficient exploration extremely difficult: as we demonstrate in our experiments, DSRL and action-space residual RL baselines fail to learn effective policies on challenging tasks within the interaction budget.
What is missing is an \textbf{\emph{online action representation that is both structured and refinable}}: compact when coarse behaviors suffice, but able to expose richer behavior when needed.

To address this gap, we construct a structured and refinable action representation around state-conditioned noise-space anchors, which we generate at every decision by applying flow reversal~\citep{tang2026frs} to a small library of semantically meaningful reference actions.
Building on this representation, we introduce \QFRS{}, a coarse-to-fine RL method for adapting frozen flow-matching policies directly in their latent noise space.
\QFRS{} first samples local particles around each anchor and applies exponential value tilting to bias sampling toward high-value particles. It then selects an anchor and samples a refined noise latent from its tilted particle distribution, which the frozen flow policy decodes into an action. We train the critic $Q$ using temporal-difference (TD) learning from online experience to guide both particle tilting and anchor selection.

\begin{figure}[t]
    \centering
    \includegraphics[width=\textwidth]{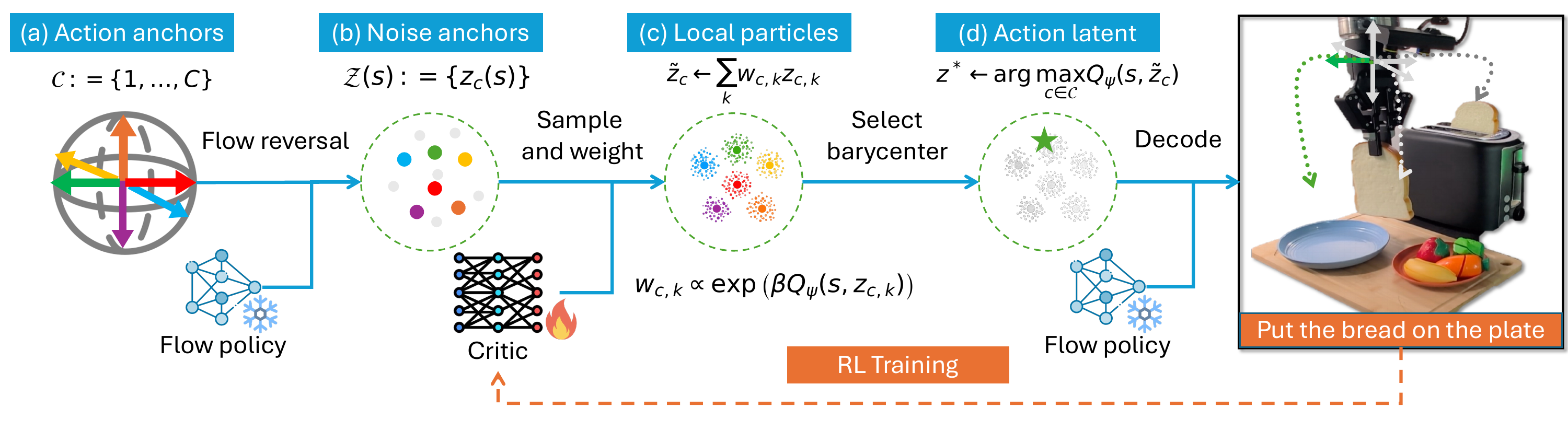}
    \vspace{2mm}
    \caption{\textbf{Overview of \QFRS{}. } (a) A small set of reference action anchors defines coarse behaviors. (b) Flow reversal maps these references into state-conditioned noise anchors. (c) \QFRS{} samples local particles around every anchor and reweights them using the learned critic to form refined barycenters. (d) It selects the highest-value barycenter, decodes it through the frozen flow policy, and executes the resulting action. The critic is trained online using Bellman updates.}
    \label{fig:anchorq_overview}
\end{figure}

\paragraph{Contributions.} The main contributions are summarized as follows:
\begin{itemize}[itemsep=-2pt,topsep=-2pt,leftmargin=*]

\item \textbf{A structured and refinable action representation for online RL.} We propose a new online action representation and a coarse-to-fine RL method that turn continuous-space exploration into selection and refinement of behaviorally meaningful actions.

\item \textbf{Guarantees for coarse-to-fine latent search.} We show that the flow-reversed noise anchors reduces identification complexity
(Theorem~\ref{thm:persistent-abstraction-main}), while particle refinement
trades support bias against the complexity of the local action class
(Theorem~\ref{thm:abstraction-frontier-main}). We further show that value tilting reliably amplifies high-value refinements under critic error (Theorem~\ref{thm:exponential-tilting-main}), while epistemic uncertainty adaptation calibrates exploitation to critic confidence (Theorem~\ref{thm:adaptive-local-tilting}, Appendix~\ref{sec:uncertainty_adaptation}).

\item \textbf{Consistent empirical gains in simulation and on real robots.}
Across three benchmarks in LIBERO \\ \citep{liu2023libero} and RoboTwin~\citep{chen2025robotwin2}, \QFRS{} outperforms the state-of-the-art steering baselines, which include continuous noise space RL and residual-based RL, with 97\% higher throughput than the closest baseline. We also deploy \QFRS{} to solve three challenging real robot manipulation tasks, where it outperforms both the frozen base policy and the RL baseline under the same interaction budget.
\end{itemize}

\section{Related Work} 
The most relevant related work is presented here; Appendix~\ref{sec:related_work} is a broader review.

\paragraph{RL and steering in frozen generative policies.}
Several methods adapt frozen policies via their latent noise space. Value-guided methods rerank or steer sampled action chunks \citep{nakamoto2024vgps,wang2025itps}, while DSRL \citep{wagenmaker2025dsrl} treats the noise itself as a continuous action and learns a policy over the full latent space. FRS \citep{tang2026frs} instead inverts the frozen flow to recover latents corresponding to reference motions, using this primitive for zero-shot steering and to initialize continuous noise-space RL. UniSteer \citep{lu2026unisteer} also combines flow inversion with noise-space RL, yet uses online human corrections and learns a continuous latent actor.  On the other hand, \QFRS{} uses inversion to 
map a fixed library of coarse reference behaviors into latent anchors that are refined locally with continuous particles, and ranked by a critic.

\paragraph{Reduced and coarse-to-fine action spaces.}
Reducing the effective action space can substantially simplify RL, through discretization \citep{tang2020discretizing}, learned primitives \citep{shafiullah2022bet,lee2024vqbet}, or hierarchical refinement \citep{seo2024cqn,seo2024cqnas}. These methods typically construct abstractions directly in the action coordinates. In contrast, \QFRS{} maps a finite set of coarse reference actions to state-conditioned latent anchors through flow inversion, then refines these anchors locally using sampled continuous perturbations.

\section{\QFRS{}: Value Learning over Online Flow-Reversed Latents} 
\label{sec:method} 
\QFRS{} adapts a frozen flow-matching VLA by learning to select actions in its latent noise space. At each decision, it flow-reverses a small library of coarse reference behaviors into state-dependent latent anchors, samples local refinements around those anchors, and uses a learned critic to select a latent for execution through the frozen VLA.

We first introduce the flow-reversed anchors, then describe local value-tilted refinement, and finally present action selection and critic training.  Figure~\ref{fig:anchorq_overview} and Algorithm~\ref{alg:anchorq-revised} summarize the method.

\subsection{Online flow-reversed anchors}
\label{sec:m-bank}
Given a state $s\in \mathcal{S}$ and latent noise $z\in\mathbb{R}^{d_z}$, the frozen
flow-matching VLA produces an action chunk
$a=\pi_{\mathrm{flow}}(s,z)\in\mathbb{R}^{d_z}$ such that $d_z:=h_a\cdot d_a$ where $h_a$ is
the chunk horizon and $d_a$  the dimension of each per-timestep action. With the ODE solver for the flow policy fixed, this
mapping is deterministic, and we therefore treat $z$ as the RL action.

\QFRS{} constructs a compact, state-dependent set of such latent actions from
a fixed library of coarse reference behaviors (e.g. cardinal Cartesian motions) indexed by $\gC:=\{1,\ldots,C\}$. At state $s\in \mathcal{S}$, a reference map $\varphi$ specifies an \emph{action anchor} $a_c^{\mathrm{ref}}(s):=\varphi(s,c)$,
a reference action chunk in the environment action space. Flow reversal (cf.~\citep{rout2024rfinversion,wang2024taming,kim2022diffusionclip}), recently
adopted  in robotics~\citep{tang2026frs}, maps each action anchor backward through the flow-matching policy,  conditioned on $s$, to obtain a corresponding \emph{noise anchor}:
\begin{equation}
    z_c(s)
    := \reverseflow\bigl(s,a_c^{\mathrm{ref}}(s)\bigr)
    \in \Rb^{d_z}.
\end{equation}
The resulting noise-anchor set is $\gZ(s):=\{z_c(s):c\in\gC\}.$ 
\QFRS{} reconstructs $\gZ(s)$ at every decision using the current state, and parameterizes the critic as $Q_\psi(s,z)$ so it evaluates  noise anchors and their continuous refinements. 

\subsection{Local particles and value-tilted refinement}
\label{sec:m-particles}
The anchor-only variant of \QFRS{}  restricts action selection to $\gZ(s)$;
the reference behaviors underlying the noise anchors may
be too coarse for fine-grained control.
We therefore refine each noise anchor using local
particles weighted by the learned critic.

For each $c\in\gC$, we retain the noise anchor as
$z_{c,0}(s):=z_c(s)$ and sample $\numpart$ additional
particles:
\begin{equation}
    z_{c,k}(s)
    := z_c(s)+\sigma_p\xi_{c,k},
    \qquad
    \xi_{c,k}\sim\mathcal N(0,I),
    \qquad
    k=1,\ldots,\numpart,
    \label{eq:m-particle}
\end{equation}
where $\sigma_p$ controls the perturbation scale.
Let
$\mathcal B_c(s):=(z_{c,0}(s),\ldots,z_{c,\numpart}(s))$
denote the resulting local particle cloud.
Within each cloud, value tilting favors particles with higher critic values. 
Indeed, denote the probability simplex $\Delta_{\numpart+1}:=
\{
p\in\mathbb{R}_{+}^{\numpart+1}:
\sum_{k=0}^\numpart p_k=1
\}$, and let $u\in \Delta_{\numpart+1}$ be the uniform element. Then, for a tilting strength $\beta>0$, define weights
\begin{equation}
    w_{c,\beta}(\cdot\,|\,s)\textstyle 
    :=\argmax_{p\in\Delta_{\numpart+1}}
    \left\{
        \sum_{k=0}^{\numpart}
        p_k Q_\psi(s,z_{c,k}(s))
        -
        \frac{1}{\beta}\KL(p\|u)
    \right\}.
    \label{eq:m-local-entropic-objective}
\end{equation}
The unique solution is $
        \exp(\beta Q_\psi(s,z_{c,k}(s)))/
        \sum_{\ell=0}^{\numpart}
        \exp(\beta Q_\psi(s,z_{c,\ell}(s)))$, yielding an exponential \emph{value tilting} of the local particles toward higher-value refinements.
These weights define a refined latent via 
    $\tilde z_{c,\beta}(s)
    :=
    \sum_{k=0}^{\numpart}
    w_{c,\beta}(k\,|\,s)\,z_{c,k}(s)$.
Since the weights are nonnegative and sum to one,
$\tilde z_{c,\beta}(s)$ lies in the convex hull of
$\mathcal B_c(s)$, so that the refined latent remains in a neighborhood of $z_{c}(s)$. 
\QFRS{} constructs a refined latent for all noise
anchors before selecting among them.

$\sigma_p$ controls the reach of local exploration, while $\beta$ controls how strongly refinement favors high-value particles. As $\sigma_p\to0$, the method recovers exact-anchor \QFRS{}. As $\beta\to0$, the weights become uniform; as $\beta\to\infty$, the refined latent converges to the highest-valued particle when the maximizer is unique. In practice, we adapt both parameters using the procedures in Appendix~\ref{sec:algorithm}.

\subsection{Action selection and critic training}
\label{sec:m-training}

During RL training, \QFRS{} explores with $\epsilon$-greedy selection among the refined latents. With probability $\epsilon$, it samples $c_t\sim\operatorname{Unif}(\gC)$; otherwise, $c_t=\argmax_{c\in\gC}Q_\psi\bigl(s_t,\tilde z_{c,\beta}(s_t)\bigr)$. We execute the action chunk $a_t=\pi_{\mathrm{flow}}(s_t,z_t)$ with $z_t:=\tilde z_{c_t,\beta}(s_t)$, observe reward $r_t$, next state $s_{t+1}$, and terminal indicator $d_t$, and store $(s_t,z_t,r_t,s_{t+1},d_t)$ in the replay buffer $\Dc$. We cache the flow-reversed anchors with each replay observation and reuse them during critic updates, while drawing fresh local particles. During evaluation, we use the same refinement procedure with greedy selection.

For critic updates, we sample batched transitions $(s,z,r,s',d)$ from $\Dc$ and use temporal difference (TD) learning. Each transition spans one latent-action decision, with reward $r$ from the executed action chunk and discount $\gamma$ per decision. We minimize
\begin{equation}
    \mathcal L(\psi):=\mathbb{E}_{(s,z,r,s',d)\sim\Dc}\left[\bigl(Q_\psi(s,z)-r-\gamma(1-d)V^\star(s')\bigr)^2\right],
    \label{eq:m-critic-loss}
\end{equation}
where $V^\star(s'):=Q_{\bar\psi}(s',\tilde z_{\cnext(s'),\beta}(s'))$ is the target critic's value of the refined latent selected by the online critic, with $\cnext(s'):=\argmax_{c\in\gC}Q_\psi(s',\tilde z_{c,\beta}(s'))$. For nonterminal transitions, all noise anchors are retrieved from cache and refined at $s'$ before selection; when $d=1$, the bootstrap term vanishes. The VLA remains frozen throughout training, and latent actions are obtained directly through critic-guided refinement and selection, without requiring a separately trained latent actor. Algorithm~\ref{alg:anchorq-revised} gives the detailed procedure, with implementation details in Appendix~\ref{sec:algorithm}.

\begin{algorithm}[ht]
\caption{\QFRS{} with local value-tilted refinement}
\label{alg:anchorq-revised}
\begin{algorithmic}[1]

\Require frozen flow policy $\pi_{\mathrm{flow}}$ and reverse map $\reverseflow$;
reference map $\varphi$; critic $Q_\psi$ and target critic $Q_{\bar\psi}$;
replay $\Dc$; particles $\numpart$, scale $\sigma_p$, tilt $\beta$,
discount $\gamma$, exploration $\epsilon$

\For{each latent-action decision $t$}

\State \textbf{Construct anchors:}\;
$z_c(s_t)\leftarrow
\reverseflow(s_t,\varphi(s_t,c))$
for all $c\in\gC$

\State \textbf{Sample particles:}\;
$z_{c,k}\leftarrow
z_c(s_t)+\sigma_p\xi_{c,k}$, $\forall c\in\gC$, $k\in[\numpart]$,
with $\xi_{c,0}=0$,
$\xi_{c,k}\sim\mathcal N(0,I)$

\State \textbf{Tilt locally:}\;
$w_{c,k}\propto
\exp\left(\beta Q_\psi(s_t,z_{c,k})\right)$
for each $c\in\gC$

\State \textbf{Refine:}\;
$\tilde z_c\leftarrow
\sum_{k=0}^{\numpart}w_{c,k}z_{c,k}$
for each $c\in\gC$

\State \textbf{Select refined action:}\;
w.p.~$\epsilon$,
$c_t\sim\operatorname{Unif}(\gC)$; otherwise
$c_t\leftarrow
\argmax_{c\in\gC}Q_\psi(s_t,\tilde z_c)$

    \State Execute $\pi_{\mathrm{flow}}(s_t,\tilde z_{c_t})$, observe $(r_t,s_{t+1},d_t)$, and store $(s_t,\tilde z_{c_t},r_t,s_{t+1},d_t)$ in $\Dc$

    \For{each critic update}
    
        \State Sample $(s,z,r,s',d)\sim\Dc$
    
        \State Retrieve cached anchors $\{z_c(s')\}_{c\in\gC}$
    
        \State Sample $\{\znext_{c,k}\}_{k=0}^{\numpart}$ around $z_c(s')$ for all $c\in\gC$;
        set $\wnext_{c,k}\propto\exp(\beta Q_\psi(s',\znext_{c,k}))$
    
        \State Set $\ztnext_c\leftarrow\sum_k\wnext_{c,k}\znext_{c,k}$ for all $c\in\gC$ and set
        $\cnext\leftarrow\argmax_c Q_\psi(s',\ztnext_c)$
    
        \State Set $\Vnext(s')\leftarrow Q_{\bar\psi}(s',\ztnext_{\cnext})$
        and set $y\leftarrow r+\gamma(1-d)\Vnext(s')$
    
        \State Update $Q_\psi$ from
        $\bigl(Q_\psi(s,z)-y\bigr)^2$ and update $Q_{\bar\psi}$
    
    \EndFor
\EndFor
\end{algorithmic}
\end{algorithm}

\subsection{Online noise anchors improve sample efficiency}
\label{sec:qfrs-theory-main}
\QFRS{} uses state-dependent noise anchors and local particle refinements to construct and select actions online.
Our results isolate three consequences of this algorithmic structure: 
\begin{itemize}[itemsep=0pt,topsep=-2pt,leftmargin=*] 
\item \textbf{Sample efficiency.} Restricting online search to the actions represented by the noise anchors can reduce the number of latent alternatives that must be identified (Theorem~\ref{thm:persistent-abstraction-main}).
\item \textbf{A bias--complexity frontier.} 
Restricting the action class introduces support bias; local particles reduce this bias by enlarging the  class, at the cost of increased statistical complexity (Theorem~\ref{thm:abstraction-frontier-main}).
\item \textbf{Risk-sensitive local refinement.}
Exponential value tilting provides an entropic, upper-tail-seeking reweighting of particles around each anchor. With a sufficiently accurate critic, it amplifies genuinely high-value refinements and, under local smoothness, yields a corresponding value guarantee for the weighted latent (Theorem~\ref{thm:exponential-tilting-main}).
\end{itemize}
Below we provide the key results in informal theorems; full statements are given in the appendix. 

\paragraph{Sample Efficiency: Online noise anchors reduce identification complexity.} 
To isolate the statistical effect of restricting the online action class, we
consider a one-decision  MDP with $N$ behaviorally distinct latent modes.
Flow reversal exposes a subset of $C<N$ modes, whereas an unrestricted learner may select
any of the $N$.

\begin{theorem}[Online noise anchors reduce complexity] \label{thm:persistent-abstraction-main} Fix $\varepsilon\in (0,1/8]$ and $\alpha\in(0,1/4]$. If the best noise anchor is $\varepsilon/2$-optimal, then restricting online
search to the $C$ noise anchors identifies an $\varepsilon$-optimal action with
probability at least $1-\alpha$ using
$\mathcal O(
\frac{C}{\varepsilon^2}\log\frac{C}{\alpha})$ episodes.
Conversely, any learner that is $(\varepsilon,\alpha)$-correct over all $N$
actions requires, on some instance whose optimal action lies in the noise-anchor set,
$\Omega(
\frac{N-C}{\varepsilon^2}\log\frac1\alpha)$
expected samples from actions outside the noise-anchor set.
\end{theorem}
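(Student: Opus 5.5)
The statement has two halves. The upper bound follows from a uniform-allocation (naive elimination) argument on the anchor set. The lower bound follows from a change-of-measure argument in the style of Mannor--Tsitsiklis / Kaufmann--Cappé--Garivier. I will model the one-decision MDP as a stochastic bandit with $N$ arms (latent modes). Rewards are bounded in $[0,1]$, or Bernoulli for the lower bound. Arms $1,\dots,C$ are the ones exposed by flow reversal, and $\mu_i$ denotes the mean return of mode $i$.

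\textbf{Upper bound.} Pull each of the $C$ anchor arms $n=\lceil \frac{8}{\varepsilon^2}\log\frac{2C}{\alpha}\rceil$ times and output the arm with the largest empirical mean $\hat\mu_c$. By Hoeffding and a union bound over the $C$ arms, with probability at least $1-\alpha$ every $|\hat\mu_c-\mu_c|\le \varepsilon/4$.

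On this event, let $c^\dagger$ be the best anchor and $\hat c$ the returned arm. Then
\[
\mu_{\hat c}\ \ge\ \hat\mu_{\hat c}-\varepsilon/4\ \ge\ \hat\mu_{c^\dagger}-\varepsilon/4\ \ge\ \mu_{c^\dagger}-\varepsilon/2 .
\]
Since $\mu_{c^\dagger}\ge \max_i\mu_i-\varepsilon/2$ by hypothesis, $\hat c$ is $\varepsilon$-optimal over all $N$ modes. The total count is $Cn=\mathcal O(\frac{C}{\varepsilon^2}\log\frac{C}{\alpha})$.

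\textbf{Lower bound.} I would construct a base instance $\nu$ with Bernoulli arms:
\begin{itemize}
\item arm $1$ (an anchor) has mean $\tfrac12+\varepsilon$;
\item every other arm has mean $\tfrac12$.
\end{itemize}
The optimal action therefore lies in the anchor set, and arm $1$ is the unique $\varepsilon$-optimal answer, with a margin to spare after shrinking constants, e.g.\ using $2\varepsilon$ in place of $\varepsilon$. For each non-anchor arm $j\in\{C+1,\dots,N\}$, define an alternative $\nu^{(j)}$ that differs from $\nu$ only in arm $j$, raising its mean to $\tfrac12+3\varepsilon$. Under $\nu^{(j)}$, arm $1$ is no longer $\varepsilon$-optimal.

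Let $\mathcal E$ be the event that the learner outputs arm $1$. Then $\mathbb P_\nu(\mathcal E)\ge 1-\alpha$ and $\mathbb P_{\nu^{(j)}}(\mathcal E)\le\alpha$. The data-processing/transportation inequality of Kaufmann et al.\ gives
\[
\mathbb E_\nu[T_j]\,\KL\bigl(\mathrm{Ber}(\tfrac12)\,\|\,\mathrm{Ber}(\tfrac12+3\varepsilon)\bigr)\ \ge\ \KLop(1-\alpha,\alpha)\ \ge\ \log\frac{1}{2.4\alpha}.
\]
The Bernoulli KL is at most $c\,\varepsilon^2$ for $\varepsilon\le 1/8$, since the mean stays in $[1/2,7/8]$ and the KL is bounded by the $\chi^2$ divergence. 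This yields $\mathbb E_\nu[T_j]\ge \frac{c'}{\varepsilon^2}\log\frac1\alpha$, where $\alpha\le 1/4$ makes $\log\frac{1}{2.4\alpha}\gtrsim\log\frac1\alpha$. Summing over the $N-C$ non-anchor arms gives $\Omega(\frac{N-C}{\varepsilon^2}\log\frac1\alpha)$ expected samples outside the anchor set on the single instance $\nu$.

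\textbf{Main obstacle.} The delicate step is the lower bound's instance design. The alternative must change only arms outside the anchor set, and it must flip the set of $\varepsilon$-correct answers. That constrains the gap choices ($\tfrac12+\varepsilon$ versus $\tfrac12+3\varepsilon$, or rescaled versions) so that no arm is $\varepsilon$-optimal under both instances. The constants $\varepsilon\le 1/8$ and $\alpha\le 1/4$ must then make the KL bound and $\KLop(1-\alpha,\alpha)\gtrsim\log(1/\alpha)$ hold cleanly. I would first fix the means as above, verify disjointness of the correct answer sets, and only then track constants. Everything else is standard concentration.
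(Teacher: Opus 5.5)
Your upper bound is the paper's argument: each anchor is pulled $\lceil 8\varepsilon^{-2}\log(2C/\alpha)\rceil$ times, Hoeffding is applied at deviation $\varepsilon/4$ with a union bound, and the empirical argmax is returned. Your lower-bound strategy is also the paper's: one alternative per non-anchor arm that changes only that arm, followed by the change-of-measure inequality summed over the $N-C$ arms.

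The gap is in the instance itself. You correctly flagged it as the crux but did not resolve it. In your base instance ($\mu_1=\tfrac12+\varepsilon$, $\mu_i=\tfrac12$ otherwise), every other arm has gap exactly $\varepsilon$. Under the criterion $\mu^\star-\mu_{\hat I}\le\varepsilon$, each of those arms is therefore $\varepsilon$-optimal, so arm $1$ is not the unique correct answer. Consequently $(\varepsilon,\alpha)$-correctness does not give $\mathbb P_\nu(\hat I=1)\ge1-\alpha$, and the correct-answer sets are not disjoint: arm $j$ is correct under both $\nu$ and $\nu^{(j)}$.

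Your suggested patch does not close this over the stated range. If you raise only $\mu_1$ to $\tfrac12+2\varepsilon$, arm $1$ is exactly $\varepsilon$-suboptimal under $\nu^{(j)}$ (where $\mu_j=\tfrac12+3\varepsilon$). It is then still correct there, so $\mathbb P_{\nu^{(j)}}(\hat I=1)\le\alpha$ fails. If you instead scale the whole construction by two, you get $\mu_j=\tfrac12+6\varepsilon=5/4$ at $\varepsilon=1/8$, which is not a Bernoulli mean.

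The fix, as in the paper, is to recenter so that both strict gaps fit inside $[0,1]$. Take the base to be $\mu_1=\tfrac12$ and $\mu_i=\tfrac12-2\varepsilon$ for $i\neq1$. For each $j\notin\mathcal C$, form the alternative by raising only $\mu_j$ to $\tfrac12+2\varepsilon$. This instance has the properties you need:
\begin{itemize}
\item Arm $1$ is the unique $\varepsilon$-optimal arm under the base, and arm $j$ is the unique one under the alternative, since both gaps are $2\varepsilon>\varepsilon$.
\item Hence the event $\{\hat I=1\}$ has probability at least $1-\alpha$ under the base and at most $\alpha$ under the alternative.
\item All means stay in $[\tfrac14,\tfrac34]$.
\item The per-arm divergence is $\KLop(\tfrac12-2\varepsilon,\tfrac12+2\varepsilon)=4\varepsilon\log\frac{1+4\varepsilon}{1-4\varepsilon}\le128\varepsilon^2/3$.
\end{itemize}
Combined with $\KLop(1-\alpha,\alpha)\ge\tfrac12\log\frac{3}{4\alpha}$, this yields the explicit bound $\frac{3(N-C)}{256\varepsilon^2}\log\frac{3}{4\alpha}$.

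With this instance the rest of your argument goes through verbatim. Because each alternative agrees with the base on every anchor arm, the bound also survives giving the learner the exact anchor-arm laws in advance. That is how the paper covers prefilled full-class learners.
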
 
The upper bound shows that when the $C$ noise anchors contain a near-optimal action, searching only among them is sufficient to identify an $\varepsilon$-optimal action.
For the lower bound construction, each non-flow-reversed action is suboptimal in the base instance but optimal in a corresponding alternative instance. Hence, a learner searching all $N$ latent modes must still collect enough samples to rule out all $N-C$ outside modes, even when a noise anchor is globally optimal. That is,
unrestricted residual RL, even with FRS-based prefill (e.g., \cite{tang2026frs}), cannot eliminate this exploration cost.
The full result is in
Theorem~\ref{thm:persistent-abstraction-separation},
Appendix~\ref{sec:persistent-abstraction-separation}.

\paragraph{Bias--Complexity Frontier: Trading support bias for learning complexity.}
Local particles interpolate between selection among exact noise anchors and broader latent search. For the more general sequential MDP, we view the frozen VLA as a control problem whose
action is the latent $z$: choosing $z$ at state $s$ executes
$\pi_{\mathrm{flow}}(s,z)$ and induces the corresponding environment reward and transition.
Let $V^\star$ and $Q^\star$ denote the optimal value and action-value functions
when any latent $z\in\mathbb{R}^{d_z}$ may be selected.

To isolate the
bias--complexity tradeoff, we analyze an idealized local action class.  Indeed, following Section~\ref{sec:m-particles}, 
let $\rho\ge0$ denote an \emph{idealized local search radius}, where $\rho=0$ allows only the exact noise anchors, while larger $\rho$ allows any latent in $B_2(z_c(s),\rho):=\{
            \tilde z\in\R^{d_z}:
            \lVert \tilde z-z_c(s)\rVert_2\leq \reach\}$. 
Define $\mathcal Z_\rho(s) := \bigcup_{c\in\gC} B_2\left(z_c(s),\rho\right)$,
and define the corresponding one-step support gap 
$\Delta_\rho(s) := V^\star(s) - \sup_{z\in\mathcal Z_\rho(s)} Q^\star(s,z)$ and set  $\bar\Delta_\rho := \sup_s\Delta_\rho(s)$. Thus $\Delta_\rho(s)$ measures how much value is unavailable as action selection remains local to the noise anchors. 
\begin{theorem}[Bias--complexity frontier] \label{thm:abstraction-frontier-main} 
Increasing the local search radius can only reduce the support bias: $0\le\rho_1\le\rho_2$ implies $\bar\Delta_{\rho_2} \le \bar\Delta_{\rho_1}$.  Moreover, suppose a finite particle representation of the $\rho$-local action class incurs one-step representation error at most $\eps_{\mathrm{rep}}$, and its learned selector incurs one-step selection error at most $\eps_{\mathrm{sel}}(n,\rho,\alpha)$ after $n$ interactions. Let $\pi_{\rho,n}$ denote the resulting learned policy and
$V^{\pi_{\rho,n}}$ its discounted value function. With probability at least $1-\alpha$, the bound holds: $\| V^\star-V^{\pi_{\rho,n}}\|_\infty \leq ( \bar\Delta_\rho + \eps_{\mathrm{rep}} + \eps_{\mathrm{sel}}(n,\rho,\alpha))/( 1-\gamma)$. 
\end{theorem}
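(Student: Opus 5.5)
The monotonicity claim follows from nesting of the local action classes, and the value bound follows from a standard performance-difference (simulation-lemma) argument that turns the three one-step losses into a discounted bound.

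\textbf{Monotonicity.} If $0\le\rho_1\le\rho_2$, then $B_2(z_c(s),\rho_1)\subseteq B_2(z_c(s),\rho_2)$ for every $c\in\gC$. Taking the union gives $\mathcal Z_{\rho_1}(s)\subseteq\mathcal Z_{\rho_2}(s)$. A supremum over a larger set is at least as large, so $\sup_{z\in\mathcal Z_{\rho_2}(s)}Q^\star(s,z)\ge\sup_{z\in\mathcal Z_{\rho_1}(s)}Q^\star(s,z)$. Hence $\Delta_{\rho_2}(s)\le\Delta_{\rho_1}(s)$ pointwise in $s$, and taking $\sup_s$ preserves the inequality. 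Nonnegativity of $\Delta_\rho$ holds because $V^\star(s)=\sup_z Q^\star(s,z)$.

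\textbf{Value bound.} First I would fix what the hypotheses mean, since they are stated informally. On an event $\mathcal E$ of probability at least $1-\alpha$, the learned policy $\pi_{\rho,n}$ selects at each $s$ a latent $z_{\rho,n}(s)$ satisfying
\begin{equation}
Q^\star(s,z_{\rho,n}(s))\ \ge\ \sup_{z\in\mathcal Z_\rho(s)}Q^\star(s,z)-\eps_{\mathrm{rep}}-\eps_{\mathrm{sel}}(n,\rho,\alpha).
\end{equation}
This splits as follows:
\begin{itemize}
\item the best particle-represented latent is within $\eps_{\mathrm{rep}}$ of the $\rho$-local supremum;
\item the selector's choice is within $\eps_{\mathrm{sel}}$ of that best represented latent, uniformly over $s$.
\end{itemize}
Adding $\Delta_\rho(s)\le\bar\Delta_\rho$ then gives the one-step gap
\begin{equation}
V^\star(s)-Q^\star(s,z_{\rho,n}(s))\le\eta:=\bar\Delta_\rho+\eps_{\mathrm{rep}}+\eps_{\mathrm{sel}}.
\end{equation}

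Next I would write, for every $s$,
\begin{equation}
V^\star(s)-V^{\pi}(s)=\bigl[V^\star(s)-Q^\star(s,z_\pi(s))\bigr]+\gamma\,\E_{s'\sim P(\cdot\mid s,\pi_{\mathrm{flow}}(s,z_\pi(s)))}\bigl[V^\star(s')-V^\pi(s')\bigr].
\end{equation}
This identity uses $Q^\star(s,z)=r(s,z)+\gamma\E V^\star(s')$ and $V^\pi(s)=r(s,z_\pi(s))+\gamma\E V^\pi(s')$. Taking sup norms gives $\|V^\star-V^\pi\|_\infty\le\eta+\gamma\|V^\star-V^\pi\|_\infty$. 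Since $V^\star\ge V^\pi$, rearranging yields $\eta/(1-\gamma)$. If $\pi_{\rho,n}$ is stochastic (for example, $\epsilon$-greedy or randomness from particle sampling), I would instead take expectations over $z\sim\pi(\cdot\mid s)$ and interpret the error hypotheses in expectation; the recursion is unchanged.

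\textbf{Main obstacle.} The analysis itself is easy; the difficulty is making the hypotheses precise enough to be uniform in $s$. In particular, the event $\mathcal E$ must control the selection error simultaneously at all states the policy visits, or else in sup norm. I would simply assume the uniform-in-$s$, high-probability form above as the definition of $\eps_{\mathrm{sel}}(n,\rho,\alpha)$, and remark that deriving it, for instance from a covering number of the $\rho$-local class growing with $\rho$, is where the complexity side of the tradeoff enters.
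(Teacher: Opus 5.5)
Your proof is correct. The monotonicity half is exactly the paper's argument (Proposition~\ref{prop:monotone-local-reach}). For the value bound you take a different route from the paper's formal version (Proposition~\ref{prop:finite-net-support-selection-decomposition} and Theorem~\ref{thm:particle-bias-complexity-frontier}).

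You fold support gap, representation error and selection error into a single Bellman residual $V^\star-\gT^{\pi}V^\star$, measured against the unrestricted $Q^\star$, and close with one simulation-lemma recursion. The paper uses that device only later, for the hierarchical policy in Theorem~\ref{thm:hierarchical-anchorq-control}.

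Here the paper instead works through the restricted MDP on a behavioral $\eps_{\mathrm B}$-net $\gA_{\reach,\eps_{\mathrm B}}(s)$ of $\gZ_\reach(s)$. It proves the exact identity $V^\star-V^\omega=(V^\star-V^\star_{\reach,\eps_{\mathrm B}})+(I-\gamma P^\omega)^{-1}e_{\reach,\eps_{\mathrm B}}(\omega;\cdot)$. It bounds the structural term by $(\bar\Delta_\reach+\eps_{\mathrm B})/(1-\gamma)$ via the $Q^\star$-greedy oracle selector on the net, so $\eps_{\mathrm{rep}}=\eps_{\mathrm B}$ (Lemma~\ref{lem:finite-net-support-gap}). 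It then measures selection error against the restricted optimum $Q^\star_{\reach,\eps_{\mathrm B}}$, not against $Q^\star$.

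Your route is shorter and needs no restricted fixed point. The cost is that your $\eps_{\mathrm{sel}}$ is not something a learner confined to the $\reach$-local class can drive to zero. Since $0\le Q^\star-Q^\star_{\reach,\eps_{\mathrm B}}\le\gamma\|V^\star-V^\star_{\reach,\eps_{\mathrm B}}\|_\infty$ can reorder candidates, even the restricted-optimal selector can have full-$Q^\star$ selection error of order $\gamma\|V^\star-V^\star_{\reach,\eps_{\mathrm B}}\|_\infty$. If you instantiate your hypothesis from a critic that estimates $Q^\star_{\reach,\eps_{\mathrm B}}$, as the empirical Q-iteration yielding $2\eps_Q$ does, the bias compounds to order $(\bar\Delta_\reach+\eps_{\mathrm B})/(1-\gamma)^2$.

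The paper's exact structural/learning split avoids this compounding. It is also what lets the complexity side, the $A_\reach(\eps_{\mathrm B})$ dependence of the sample count, attach cleanly to the selection term.
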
 
This result exposes a Pareto-type frontier: \emph{larger local search  implies smaller support bias but a larger action class to learn}. 
Exact-anchor \QFRS{} gives the
strongest restriction, while local particles progressively enlarge the
available action class. Formal support and sample-complexity results are given
in Theorems~\ref{thm:general-mdp-abstraction-price} 
and~\ref{thm:particle-bias-complexity-frontier} (Appendices~\ref{sec:general-mdp-abstraction-price} and \ref{sec:particle-bias-complexity-frontier}), respectively.
Theorem~\ref{thm:abstraction-frontier-main} is stated in terms of an idealized  $\rho$, whereas the
algorithm realizes local search via Gaussian particles.
Proposition~\ref{prop:finite-gaussian-envelope-coverage}
(Appendix~\ref{subsec:gaussian-particle-cloud-connection}) connects these two
views: sufficiently many Gaussian particles around the anchors cover the
$\rho$-local action class to a prescribed resolution with high probability.
Thus $(\sigma_p,M)$ control the stochastic coverage of the idealized local
class rather than setting $\rho$ directly.

\paragraph{Risk-Sensitive Local Refinement: Value tilting exploits particle diversity.} 
The weight vector $w_{c,\beta}$ defined in Section~\ref{sec:m-particles} is exactly the Gibbs change of measure solving the entropic variational problem $\frac{1}{\beta} \log( \frac{1}{M+1} \sum_{k=0}^{M} e^{\beta Q_\psi(s,z_{c,k}(s))}) = \max_{p\in\Delta_{M+1}} \{ \sum_{k=0}^{M} p_k Q_\psi(s,z_{c,k}(s)) - \frac{1}{\beta}\KL(p\|u)\}$,  where $u\in\Delta_{M+1}$ is uniform over the local candidates. Thus, $\beta>0$ induces an upper-tail-seeking reweighting that places increasingly more mass on higher-value particles.

\begin{theorem}[Amplification of high-value local refinements]
\label{thm:exponential-tilting-main}
Fix a state $s\in \mathcal{S}$, 
anchor $c\in\mathcal{C}$, and corresponding particle cloud
$\mathcal B_c(s)$. Let
$\mathcal H\subset\{0,\ldots, M\}$ be a nonempty proper subset and define 
$\margin_{\mathcal H}
    :=
    \min_{k\in\mathcal H}
        Q^\star(s,z_{c,k}(s))
    -
    \max_{\ell\in \{0,\ldots, M\}\setminus\mathcal H}
        Q^\star(s,z_{c,\ell}(s))$. If the critic satisfies $\max_{ 0\leq k\leq M}|
        Q_\psi(s,z_{c,k}(s))
        -
        Q^\star(s,z_{c,k}(s))|
    \leq
    \eps_Q$, then
\begin{equation}
    \label{eq:local-high-value-amplification}
        w_{c,\beta}(\mathcal H\,|\, s)/
        (1-w_{c,\beta}(\mathcal H\,|\, s))
    \geq
    \exp(
        \beta(\margin_{\mathcal H}-2\eps_Q))
        |\mathcal H|/(
        M+1-|\mathcal H|),
\end{equation}
where $w_{c,\beta}(\mathcal H\,|\, s):=\sum_{k\in\mathcal H}
    w_{c,\beta}(k\,|\, s)$. Hence, whenever
$\margin_{\mathcal H}>2\eps_Q$, value tilting increases the odds assigned to the
truly high-value subset relative to uniform weighting, with amplification
growing exponentially in both $\beta$ and the value margin.
\end{theorem}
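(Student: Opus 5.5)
The plan is to compute the odds directly from the closed form of the Gibbs weights given in Section~\ref{sec:m-particles}. Abbreviate $q_k:=Q_\psi(s,z_{c,k}(s))$ and $q^\star_k:=Q^\star(s,z_{c,k}(s))$, and write $\mathcal H^c:=\{0,\ldots,M\}\setminus\mathcal H$. The weights are $w_{c,\beta}(k\,|\,s)=e^{\beta q_k}/\sum_\ell e^{\beta q_\ell}$. In the ratio $w_{c,\beta}(\mathcal H\,|\,s)/(1-w_{c,\beta}(\mathcal H\,|\,s))$ the normalizing constant cancels, which leaves
\begin{equation}
\frac{w_{c,\beta}(\mathcal H\,|\,s)}{1-w_{c,\beta}(\mathcal H\,|\,s)}=\frac{\sum_{k\in\mathcal H}e^{\beta q_k}}{\sum_{\ell\in\mathcal H^c}e^{\beta q_\ell}}.
\end{equation}
The denominator is nonzero because $\mathcal H$ is a proper subset, so $\mathcal H^c$ is nonempty. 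This is the only place properness is used.

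Next, bound the numerator from below and the denominator from above. Let $m_{\mathcal H}:=\min_{k\in\mathcal H}q^\star_k$ and $M_{\mathcal H^c}:=\max_{\ell\in\mathcal H^c}q^\star_\ell$, so that $\margin_{\mathcal H}=m_{\mathcal H}-M_{\mathcal H^c}$. The critic error bound gives $q_k\ge q^\star_k-\eps_Q\ge m_{\mathcal H}-\eps_Q$ for every $k\in\mathcal H$, and $q_\ell\le q^\star_\ell+\eps_Q\le M_{\mathcal H^c}+\eps_Q$ for every $\ell\in\mathcal H^c$. Since $\beta>0$, the exponential is monotone. Hence the numerator is at least $|\mathcal H|\,e^{\beta(m_{\mathcal H}-\eps_Q)}$, and the denominator is at most $(M+1-|\mathcal H|)\,e^{\beta(M_{\mathcal H^c}+\eps_Q)}$.

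Dividing these bounds gives $\exp(\beta(\margin_{\mathcal H}-2\eps_Q))\,|\mathcal H|/(M+1-|\mathcal H|)$, which is \eqref{eq:local-high-value-amplification}. For the interpretive claim, note that under uniform weighting ($\beta\to0$, or equivalently $u$) the odds of $\mathcal H$ are exactly $|\mathcal H|/(M+1-|\mathcal H|)$. The multiplicative factor $\exp(\beta(\margin_{\mathcal H}-2\eps_Q))$ therefore exceeds $1$ exactly when $\margin_{\mathcal H}>2\eps_Q$, and it grows exponentially in both $\beta$ and the margin.

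I do not expect a real obstacle. The only points needing care are that the sums are bounded termwise by their worst element on each side, and that the two critic errors combine with the same sign into the $-2\eps_Q$ term. No smoothness assumption is needed for this odds statement. Smoothness would only enter the separate value guarantee for the barycenter $\tilde z_{c,\beta}(s)$, which the statement above does not require.
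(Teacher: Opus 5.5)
Your proof is correct and is essentially the paper's argument specialized to the finite cloud. The paper's Theorem~\ref{thm:exponential-tilting}(d)--(e) cancels the normalizer to get the same odds identity for a general reference law, then shows the learned-score separation is at least the population separation minus $2\eps_{\mathrm{crit}}$; on the uniform cloud, where the base odds are $|\mathcal H|/(M+1-|\mathcal H|)$, this reduces to your termwise bound (working with indices also avoids any trouble when two particles coincide as vectors).
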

Theorem~\ref{thm:exponential-tilting-main}
shows that high-value local deviations exert greater influence on each refined
candidate when their value advantage is large relative to critic error; the formal statement is Theorem~\ref{thm:exponential-tilting},
Appendix~\ref{sec:exponential-tilting}. Under local smoothness, this guarantee extends to the refined latent: its loss relative to the best local particle is controlled by critic error, finite-temperature weighting, and weighted particle variance (Proposition~\ref{prop:barycentric-refinement-value}).
As aggressive tilting may amplify critic errors, we estimate local epistemic
uncertainty from disagreement among the critic ensemble and set $\beta(s)$
inversely proportional to this disagreement. The weighting therefore remains
diffuse where the critic is uncertain and becomes sharper as critic confidence
increases (Theorem~\ref{thm:adaptive-local-tilting}).

\section{Experiment Results}
\label{sec:result}
We evaluate \QFRS{} in simulation on the LIBERO~\citep{liu2023libero} and RoboTwin~\citep{chen2025robotwin2} benchmarks, and in the real world on manipulation tasks using a Franka DROID setup~\citep{khazatsky2024droid}. We focus the analysis on three questions:
\begin{itemize}[itemsep=0pt,topsep=5pt,leftmargin=20pt]
    \item Do flow-reversed anchors improve online RL over a continuous noise action space?
    \item How does particle tilting enable finer motions beyond those produced by the noise anchors?
    \item Does the same recipe work for steering a real robot?
\end{itemize}

\subsection{Tasks \& Setup}

\paragraph{Benchmarks.} We use the two evaluation splits of \citet{tang2026frs}, both drawn from LIBERO-90 \citep{liu2023libero}: a 15-task split and a 10-task hard split, none of them used to train the base policy. We also evaluate on 6 RoboTwin~\citep{chen2025robotwin2} benchmark tasks that requires bimanual manipulation. To evaluate the effectiveness of our approach in the real world, we additionally evaluate on three
common household tasks using a Franka arm with Robotiq gripper in a DROID setup~\citep{khazatsky2024droid}. We show a visual overview of the tasks in Figure~\ref{fig:experiment-task-collage}.

\begin{table}[t]
  \centering
  \small
  \caption{
  \textbf{Performance on manipulation tasks in simulation.}
  Success rate (\%) on the two held-out LIBERO-90 splits from
  \citet{tang2026frs} and on RoboTwin~\citep{chen2025robotwin2}.
  We report mean $\pm$ standard error over four seeds. Results marked with $^{\dagger}$ use DSRL+FRS~\citep{tang2026frs}.
  \QFRS{} achieves the highest success rate across all three suites.
  }
  \label{tab:main}

  \begin{tabular}{@{}lccccc@{}}
  \toprule
  \textbf{Method}
  & \multicolumn{2}{c}{\textbf{LIBERO-15T}}
  & \multicolumn{2}{c}{\textbf{LIBERO Hard-10}}
  & \textbf{RoboTwin-6T} \\
  \cmidrule(lr){2-3}
  \cmidrule(lr){4-5}
  \cmidrule(l){6-6}
  
  $\pi_{0.5}$ zero-shot
  & \multicolumn{2}{c}{$8.5 \pm 0.4$}
  & \multicolumn{2}{c}{$1.6 \pm 0.4$}
  & $47.7 \pm 1.6$ \\

  VLM+FRS zero-shot
  & \multicolumn{2}{c}{$21.9 \pm 0.6$}
  & \multicolumn{2}{c}{$2.0 \pm 0.3$}
  & $54.2 \pm 0.2$ \\

  \midrule

  \rowcolor{gray!15}
  \textit{Online RL env.~steps}
  & $250$k & $500$k
  & $250$k & $500$k
  & $500$k \\

  DSRL
  & $37.2 \pm 1.6$
  & $48.3 \pm 3.8$
  & ${7.8 \pm 4.6}^{\dagger}$
  & ${25.2 \pm 3.4}^{\dagger}$
  & $47.2 \pm 4.5$ \\

  Residual RL
  & $22.8 \pm 0.3$
  & $26.7 \pm 0.5$
  & $9.6 \pm 2.1$ 
  & $14.6 \pm 0.8$
  & $0.0 \pm 0.0$ \\

  \rowcolor{tealcol!18}
  \textbf{\QFRS{} (Ours)}
  & $\mathbf{86.0 \pm 0.8}$
  & $\mathbf{95.5 \pm 0.6}$
  & $\mathbf{52.1 \pm 2.7}$
  & $\mathbf{62.1 \pm 4.0}$
  & $\mathbf{70.5 \pm 3.0}$ \\

  \bottomrule
  \end{tabular}
\end{table}
\begin{figure}[t]
\centering
\includegraphics[width=0.95\textwidth]{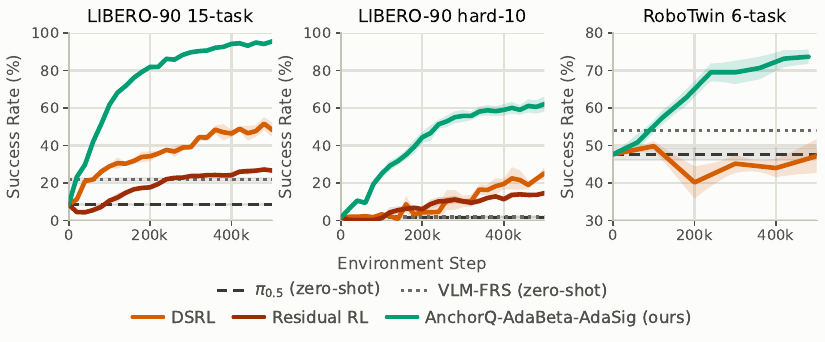}
\caption{\textbf{Success rate on manipulation tasks in simulation}; mean $\pm 1$ standard error over four seeds. \figleft{} 15-task held-out
LIBERO-90 split. \figcenter{} 10-task hard split. \figright{} 6-task bimanual suite in RoboTwin. \QFRS{} consistently outperforms the baselines across all three task suites.}
\label{fig:curves}
\end{figure}

\paragraph{Baselines.} Every method learns steering on the same frozen flow-matching VLA. In simulation we use $\pi_{0.5}$~\citep{black2025pi05}, and on the real robot we use DROID-trained $\pizero$~\citep{black2024pi0} for their out-of-the box robustness. \textbf{DSRL} \citep{wagenmaker2025dsrl} learns a noise-space policy over the continuous space that the flow policy samples from. On LIBERO 10-task split, we compare against a stronger RL variant of \citet{tang2026frs}, which pre-fills DSRL buffer with flow-reversed demonstrations. \textbf{Residual RL} follows
\citet{xiao2025pld} but only trains the residual actor on top of the frozen VLA, since we do not consider any offline learning.
\textbf{VLM-FRS} queries a VLM to propose coarse actions, which are executed via flow reversal steering~\citet{tang2026frs}.

\paragraph{Evaluation.} Each RL run trains for a fixed budget of environment steps and is evaluated periodically on held-out configurations of its task. We report the mean over four seeds and over the tasks of a split at $200K$ and $500K$ environment interactions, with one standard error across seeds in our tables and show the full training progress in our figures.

\subsection{Simulation Performance and Ablation Studies}

We compare performance and sample efficiency on our simulation benchmarks and examine how particle tilting parameters, anchor composition, and the choice of $Q$ function affect learning.

\paragraph{\QFRS{} Outperforms Steering Baselines in Simulation.} As shown in Table~\ref{tab:main} and Figure~\ref{fig:curves}, \QFRS{} outperforms other steering baselines by a wide margin. Especially on the LIBERO-90 15-task suite, it outperforms the closest baseline (DSRL) by at least $2\times$ the success rate at \textit{any} matched trajectory budget (over $250-2200$ trajectories). We attribute this huge improvement to \QFRS{}'s efficient exploration thanks to its effective anchors and the particles that elicit meaningful exploratory behaviors. Similarly, on the hard-10 suite, \QFRS{} also excels at exploring and learning the right task-solving behavior, even if the base policy performance is close to 0. Finally, on the RoboTwin bimanual suite, we demonstrate that \QFRS{} can effectively learn dexterous behaviors that require the coordination of both arms, unaffected by the complex action space induced by the additional embodiment, whereas other baselines seem to struggle to beat the base policy's performance on these tasks. \QFRS{} achieves an average throughput of $294$ completions/hr across all simulation tasks, compared to DSRL's $149$, corresponding to $97\%$ higher throughput.

\begin{figure}[t]
\centering
\begin{minipage}[t]{0.53\linewidth}
    \vspace{0pt}
    \centering
    \setlength{\tabcolsep}{3pt}

    \resizebox{\linewidth}{!}{%
    \begin{tabular}{@{}lcc@{}}
    \toprule
     & \multicolumn{2}{c}{\textbf{Success rate}} \\
    \cmidrule(l){2-3}
    \textbf{Method} & $250$k & $500$k \\
    \midrule

    AdaBeta-AdaSig (\QFRS{})
        & $\mathbf{86.0 \pm 1.0}$
        & $\mathbf{95.5 \pm 0.6}$ \\

    \midrule
    \multicolumn{3}{@{}l@{}}{\textit{Particle tilting}} \\

    \quad AdaBeta, $\sigma=0.3$
        & $80.8 \pm 2.0$
        & $94.2 \pm 0.7$ \\

    \quad $\beta=0.1$, $\sigma=0.3$
        & $68.4 \pm 1.4$
        & $85.6 \pm 0.9$ \\

    \quad No tilting, anchors only
        & $69.8 \pm 1.4$
        & $81.6 \pm 2.4$ \\

    \midrule
    \multicolumn{3}{@{}l@{}}{\textit{Discrete $Q$ (Rainbow-FRS})} \\

    \quad w. \QFRS{} anchors
        & $64.2 \pm 2.1$
        & $79.4 \pm 1.8$ \\

    \quad $+$ half-scale
        & $68.3 \pm 2.4$
        & $79.5 \pm 1.7$ \\

    \quad $+$ rotations
        & $63.4 \pm 1.7$
        & $75.7 \pm 2.9$ \\

    \bottomrule
    \end{tabular}%
    }
    \captionof{table}{
         Success rate after $250$k and
        $500$k environment steps, reported as mean $\pm$ standard error over
        four seeds for ablations on particle tilting, adaptive $\beta$ and $\sigma$,
        anchor design, and continuous versus discrete $Q$-learning on the
        15-task LIBERO-90 split. 
    }
    \label{tab:ablation}
\end{minipage}
\hfill
\begin{minipage}[t]{0.425\linewidth}
    \vspace{0pt}
    \centering
    \includegraphics[
        width=\linewidth
    ]{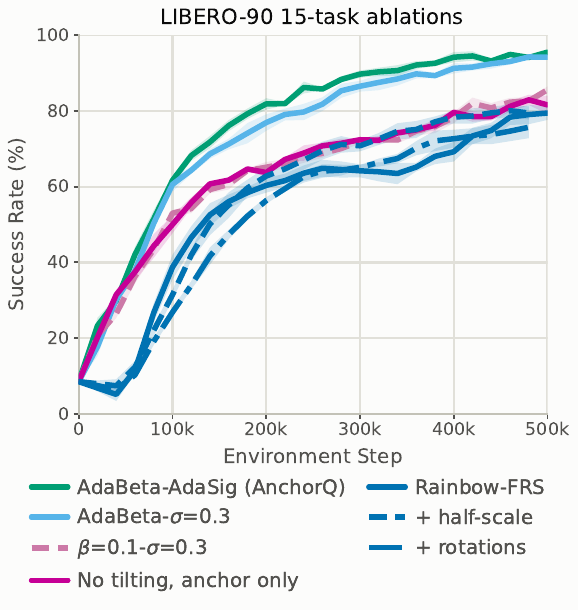}
    \vspace{0pt}
    \caption{Ablation training progress:
         mean $\pm1$ standard error over four seeds.
    }
    \label{fig:ablation}
\end{minipage}
\end{figure}

\paragraph{Ablation studies.}\label{sec:e-ablations} We consider several ablations. Tilting versus no tilting; tilting with an adaptive
$\beta$ and $\sigma$ versus fixed values; the composition of the anchor set; and
learning $Q$ in the continuous noise space versus over the discrete set of anchor
options using RAINBOW~\citep{hessel2018rainbow} (Rainbow-FRS). Figure~\ref{fig:ablation} shows the training progress of each ablation together with the corresponding success rates in Table~\ref{tab:ablation}. Tilting has greatest effect when
$\beta$ is adaptive. An adaptive $\beta$ is
worth $+8.6$ points whereas tilting at a fixed
$\beta\!=\!0.1$ recovers only $4.0$ points over not tilting at all. Adapting $\sigma$ on top adds a further $1.3$ points. Using discrete anchors achieve similar performance as no-tilt but suffers from the beginning of training and does not have the benefit of tilting. Noticeably, all of our ablations still outperforms DSRL, showing the powerful effect of leveraging flow-reversed anchors.
We provide additional results and visualizations on per-task performance in Appendix~\ref{app:additional_numerics}.

\subsection{Analysis on anchors and particle tilting.}

To illustrate how particle tilting helps with dexterous manipulation, we focus on task $81$, which requires precise placement of a book into the front compartment of a caddy and overcoming the base policy's tendency to drop it into the wrong compartment.
On this task, \QFRS{}
achieves an $84\%$ success rate, the anchor-only ablation gets $22\%$, and DSRL gets
$0\%$. Figure~\ref{fig:task81-analysis} (a) measures behavior in robot space: the realized
end-effector motion already deviates from the commanded anchor axis for both anchor-based
arms, and tilting deviates markedly more, giving it the freedom to express fine, off-axis
motions that the six discrete anchors cannot. Figure~\ref{fig:task81-analysis} (b) shows
why this steering is reachable at all. The noise anchors sit at an intermediate
magnitude, between the near-zero noise DSRL collapses to and $\pi_{0.5}$ prior, and \QFRS{} spreads its executed noise across this anchor band, tilting to refine directions that the discrete anchors alone cannot reach. DSRL is a
continuous noise-space actor regularized toward the $\pi_{0.5}$ prior; however, it collapses to a near-zero noise vector and reproduces the base policy's failure. \QFRS{} refines within the anchor-supported region and explores beyond the discrete anchors, achieving the highest success rate.

\begin{figure}[t]
\centering
\includegraphics[width=\textwidth]{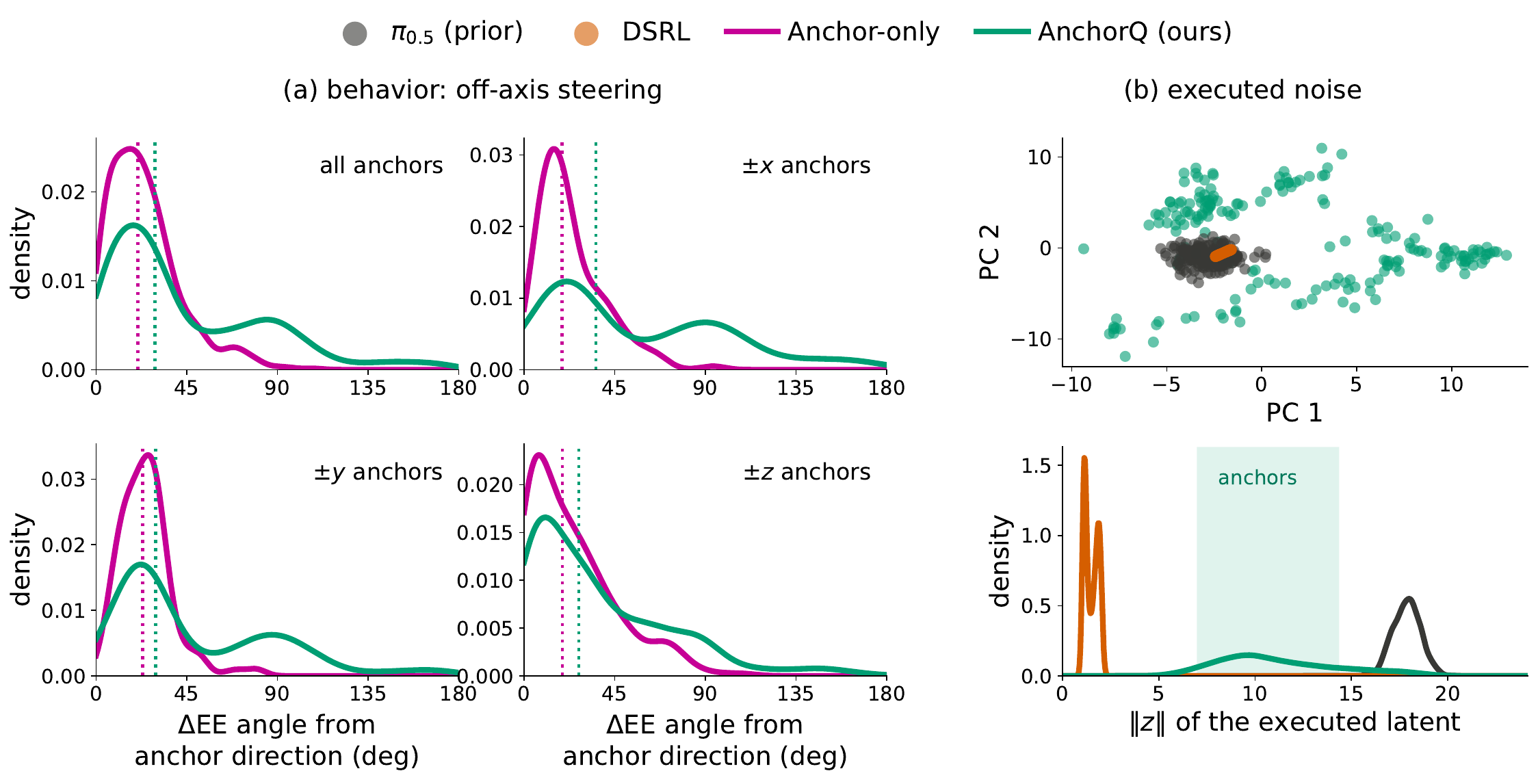}
\caption{\textbf{Anchor \& particle-tilting analysis on LIBERO task 81.} \textbf{(a)} Angle between realized end-effector displacement and anchor direction, grouped by anchor: \QFRS{} shows a heavier right tail on every axis, indicating finer steering beyond the discrete anchor.
\textbf{(b)} Executed noise projected onto the top two pooled principal components (top) and noise magnitude $\|z\|$ (bottom): reverse-flow anchors span the shaded band, \QFRS{} explores across it, DSRL concentrates near zero, and the $\pi_{0.5}$ prior lies above.}
\label{fig:task81-analysis}
\end{figure}

\subsection{Real World Steering Experiments}
\begin{figure}[ht]
\centering
\includegraphics[width=\textwidth]{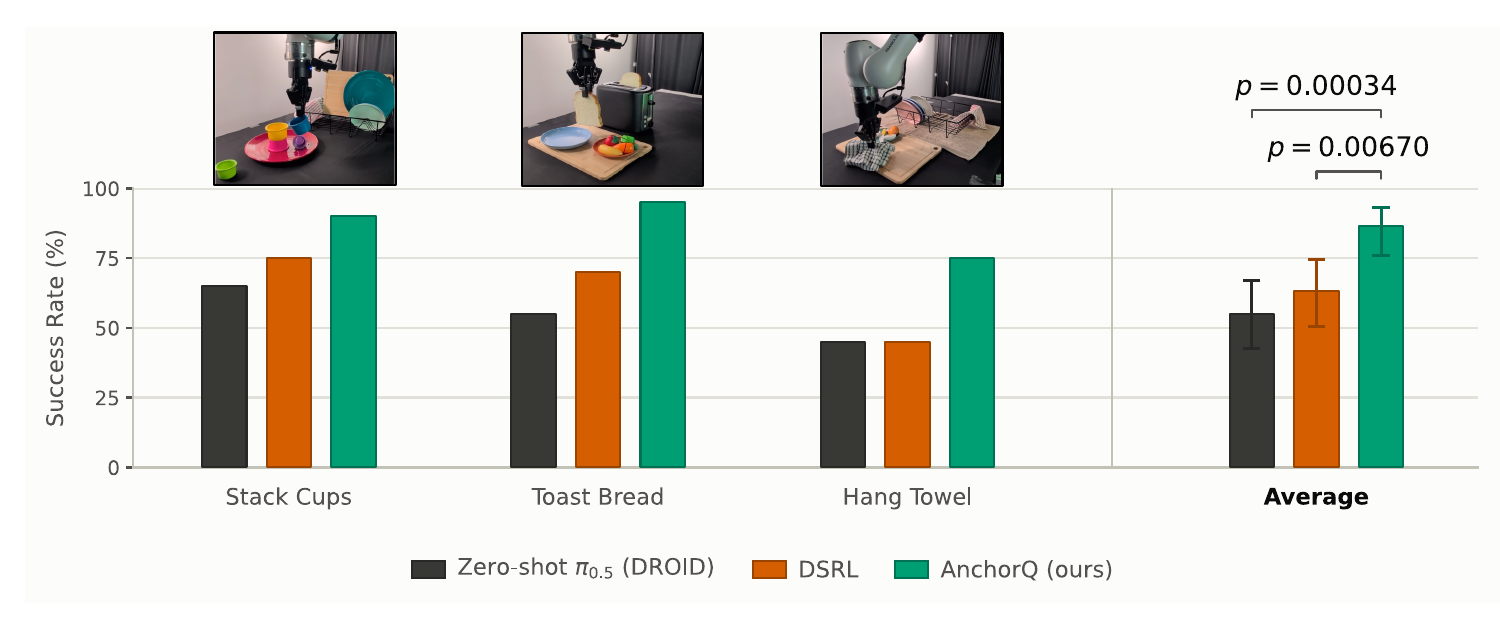}
\caption{
\textbf{Real world steering results.}
We evaluate on three manipulation tasks using a Franka robot with a DROID setup. Each RL method receives 100 online training episodes per task and is evaluated on 20 held out configurations. Error bars on the aggregate success rate show 95\% Wilson score confidence intervals. \QFRS{} achieves higher success rates than the frozen $\pi_{0.5}$ policy and DSRL on all three tasks.
}
\label{fig:realrobot}
\end{figure}
\paragraph{Setup and training.}
We evaluate \QFRS{} on three household manipulation tasks: stacking cups, toasting bread, and hanging a towel. We use a Franka arm with a DROID~\citep{khazatsky2024droid} setup, with observations from a third-person camera and a wrist camera. All methods steer the same frozen DROID pretrained $\pi_{0.5}$ policy. We compare \QFRS{} against the base policy without steering and DSRL. We give both RL methods the same interaction budget of 100 online episodes per task. A human operator observes each rollout, provides a reward at the end of the episode, and resets the scene to one of a predefined set of initial configurations. These training configurations are separate from the configurations used for evaluation. RL training runs asynchronously with data collection, updating the steering parameters while keeping the base policy frozen.

\paragraph{\QFRS{} Outperforms Steering Baselines on a Real Robot.}
We evaluate each method on 20 held out configurations per task and report individual and average task success rates, with 95\% Wilson score confidence intervals~\citep{wilson1927probable} for the average. We show statistical significance using two sided $t$-tests comparing \QFRS{} with each baseline, testing the null hypothesis of equal mean success. 
As shown in Figure~\ref{fig:realrobot}, \QFRS{} outperforms both $\pi_{0.5}$ and DSRL on all three tasks under the same online interaction budget. On the towel hanging task, it is the only method that improves over the base policy. These results demonstrate that the our approach can significantly improve manipulation performance on a real robot under limited environment interaction.

\section{Conclusion}
\label{sec:conclusion}
We introduce \QFRS{}, a coarse-to-fine RL method that adapts frozen flow-matching policies through a structured and refinable action representation. Flow reversal maps reference behaviors into state-conditioned latent anchors, while critic-guided selection and local particle tilting enable fine-grained refinement. Our analysis shows how anchoring reduces identification complexity, how refinement trades support bias against action-class complexity, and when value tilting remains reliable under critic error. Across simulated and real-world manipulation tasks, \QFRS{} improves sample efficiency over post-training baselines in challenging manipulation tasks across simulation and real world benchmarks. 
These findings highlight action representation as a key design choice for efficient adaptation of frozen policies.
  
\paragraph{Future work.} \QFRS{} requires additional memory from scoring particles at every decision. Distilling the adapted behavior into a policy could reduce the need for our sampling-based inference. Extending the current reference anchors to more complex embodiments and moving from task-specific to multitask post-training are also exciting future directions.

\newpage
\bibliography{refs}
\bibliographystyle{plainnat}

\newpage
\appendix

\startcontents[appendix]

\section*{Appendix Contents}
\printcontents[appendix]{}{1}{}

\section{Extended Related Work}
\label{sec:related_work}
\paragraph{Generalist and generative robot policies.}
Policies trained on heterogeneous robot data generalize across scenes and
instructions, from action-token transformers \citep{brohan2022rt1,brohan2023rt2}
to open checkpoints \citep{ghosh2024octo,kim2024openvla}. The current generation
couples a vision-language backbone to a flow-matching action expert
\citep{black2024pi0,black2025pi05,nvidia2025groot,geminirobotics2025}, inheriting
a generative toolkit that includes action chunking \citep{zhao2023act}, diffusion
policies \citep{chi2024diffusionpolicy}, and flow matching with rectified flow
\citep{lipman2023flowmatching,liu2023rectifiedflow}. We keep such a decoder frozen
throughout and improve its behavior only through the latent that initializes
action generation.

\paragraph{RL post-training of generative policies.}
A complementary line of work updates the generative policy itself from reward.
Q-score matching trains a diffusion policy from a learned value function
\citep{psenka2024qsm}, while DPPO applies policy optimization to diffusion-policy
parameters \citep{ren2024dppo}. Flow-based counterparts include Flow
$Q$-Learning \citep{park2025fql} and ReinFlow, which fine-tunes a flow-matching
policy with online RL \citep{zhang2025reinflow}. Flow Policy Optimization brings
flow matching into an on-policy, PPO-style policy-gradient framework without
requiring exact action likelihoods \citep{mcallister2025fmpg}. These methods retain a rich
continuous action distribution and can alter the behaviors represented by the
pretrained generator, but require updating or distilling the generative policy.
Our setting instead treats the pretrained flow model as a fixed behavioral
prior: interaction data trains only the critic, and improvement occurs by
choosing and refining the decoder's input latent. This distinction is important
for large generalist policies, where preserving pretrained behavior and avoiding
backpropagation through the policy are primary design constraints.

Between direct policy fine-tuning and frozen-policy steering, PLD \citep{xiao2025pld} separates RL-based adaptation from updating the generalist. It first trains lightweight residual actors with a frozen VLA, then collects recovery trajectories aligned with the base policy's state distribution and distills them into the VLA through supervised fine-tuning. This shares our use of a frozen behavioral prior during online RL, but ultimately transfers the improvements into the generator's parameters. We instead retain the frozen generator and express adaptation through its input noise, connecting our approach to the steering methods discussed next.

\paragraph{Steering a frozen policy in its noise space.}
Closest are methods that fix the policy and search its latent input, reranking
chunks by value \citep{nakamoto2024vgps} or steering toward human intent
\citep{wang2025itps}. DSRL \citep{wagenmaker2025dsrl} makes the noise the RL action
and trains a continuous-control agent \citep{haarnoja2018sac} in it; recent
variants add trust-region perturbations \citep{simsir2026lpds}, critic-guided
denoising \citep{ruan2026qpilots}, compositional steering
\citep{tan2026rl2vla}, and retrieval priors \citep{zhao2026retrievesteer}. FRS
\citep{tang2026frs} contributes our inversion primitive, recovering the noise
that elicits a reference motion and using reversed human demonstrations as
noise-space training data. Its RL variant prefills DSRL's replay buffer and adds
a behavior-cloning term, after which the actor still searches the continuous
noise space.

UniSteer \citep{lu2026unisteer} independently combines a frozen flow decoder,
noise-space RL, and action-to-noise inversion. Its reference actions are online
human corrections, whereas ours are a fixed library requiring no human in the
loop; moreover, it trains a continuous noise actor, while \QFRS{} learns only a
critic over state-conditioned anchors and their local neighborhoods. Thus our
use of inversion is persistent and representational: it defines the structured
action class used throughout online learning rather than only providing data or
guidance for a continuous actor.

More generally, latent-steering methods differ in whether the learned object is
an actor, a denoising-time guidance signal, or a selector over proposals. LP-DS
learns constrained perturbations in the latent space
\citep{simsir2026lpds}, while Q-Pilots injects critic guidance into the
generative process \citep{ruan2026qpilots}. Residual flow steering (RFS) jointly
optimizes latent-noise modulation for global adaptation and residual actions
for local corrections \citep{su2026rfs}. Compositional and retrieval-based
approaches instead construct guidance from reusable policy components or prior
experience \citep{tan2026rl2vla,zhao2026retrievesteer}. \QFRS{} occupies a
different point in this design space: inversion first supplies a small set of
state-conditioned semantic coordinates, and online value learning operates over
that structured set before exposing continuous local refinements. Consequently,
the critic need not discover useful modes and their latent locations
simultaneously.

\paragraph{Reduced and coarse-to-fine action spaces for RL.}
Shrinking the effective action space can aid exploration through discretization
\citep{tang2020discretizing}, per-dimension branching
\citep{tavakoli2018branching}, clustered or vector-quantized primitives
\citep{shafiullah2022bet,lee2024vqbet}, and temporal abstraction over options
\citep{sutton1999options} or chunks \citep{li2025qchunking}. CQN and its
action-sequence extension recursively refine intervals of normalized
per-dimension action coordinates \citep{seo2024cqn,seo2024cqnas}. Our hierarchy
instead operates in a learned generative latent reached by an inversion
operator, related to inversion methods used for generative editing
\citep{song2021ddim,rout2024rfinversion}.

RLMM-Flow \citep{wang2026rlmmflow} also uses the phrase ``coarse-to-fine latent
steering,'' but its hierarchy varies latent dimensionality: the coarse level
pools a continuous latent along the horizon, and the fine level relaxes it to a
full-dimensional residual. It is offline and does not use inversion. In
contrast, our coarse level is a finite set of inverted reference motions, and
our fine level is an online, $Q$-weighted local particle refinement around the
selected motion.

\paragraph{Uncertainty-aware and regularized value selection.}
Entropy- and KL-regularized RL trade value improvement against deviation from a
reference distribution. This principle underlies relative-entropy policy
improvement methods such as MPO \citep{abdolmaleki2018mpo}, maximum-entropy
control \citep{haarnoja2018sac}, and the general theory of regularized MDPs
\citep{geist2019regularizedmdp}. SAC can also tune a global entropy temperature
through a constrained objective \citep{haarnoja2018sac}. Our temperature has
a different role: it is state-dependent and determined by epistemic uncertainty
rather than by a target policy entropy.

Randomized value functions and critic ensembles provide practical surrogates
for epistemic uncertainty. Bootstrapped DQN uses randomized value functions for
deep exploration \citep{osband2016bootstrappeddqn}; randomized prior functions
preserve diversity away from observed data \citep{osband2018randomized}; and
SUNRISE uses ensemble uncertainty in both Bellman backups and upper-confidence
action selection \citep{lee2021sunrise}. Our value-tilting rule combines this
uncertainty machinery with KL regularization at the
level of a realized local particle cloud: exponential weights are the exact
optimizer of a KL-regularized value objective relative to the untilted cloud.
Rather than adding ensemble uncertainty as an exploration bonus, we use it to
set the state-dependent inverse temperature
$\betatilt(s)=1/u_{\mathrm{epi}}(s)$. The selector therefore remains close to
the reference weighting where candidate values are poorly known and becomes
sharper as epistemic uncertainty contracts. This provides an uncertainty-aware
mechanism for deciding how strongly to exploit local refinements while
preserving the support supplied by the anchors and particles.

\section{Algorithm Details}
\label{sec:algorithm}
We provide the \QFRS{} training procedure and the implementation choices used
in our experiments, including the critic architecture and benchmark-specific
hyperparameters.

\subsection{\QFRS{} Pseudocode}
Algorithm~\ref{alg:frsq} details the experience collection and critic updates
summarized in Algorithm~\ref{alg:anchorq-revised}.
Algorithms~\ref{alg:frsq-local-refine} and~\ref{alg:adaptive-scale} specify
local value-tilted refinement and adaptive particle reach, respectively.
The accompanying text defines replay conventions and the uncertainty estimates
used for adaptation.

\paragraph{Execution and replay.}
An outer decision executes at most $n_{\mathrm{exec}}$ low-level actions before replanning, with $1\leq n_{\mathrm{exec}}\leq h_a$; $\gamma$ is the discount per outer decision. After the critic warmup $W_Q$, each collected transition triggers $U$ minibatch gradient updates. Replay stores $(s,z,r,s',d)$, where $z$ is the executed latent, $r$ is the reward for the decision, $s'$ is the next observation, and $d$ indicates true termination. Each observation carries its cached candidate bank, including the defer latent when enabled. The frozen VLA allows these banks to be reused across critic updates, avoiding repeated flow reversal. At a time-limit truncation, the final observation and its bank are retained for bootstrapping before the environment is reset.

\paragraph{Local refinement and action selection.}
For each newly collected observation, flow reversal constructs the noise anchors from the fixed reference commands, and the defer candidate is sampled from the VLA's base noise prior. This bank is cached with the observation. Each refinement call draws a fresh local cloud containing the candidate itself and $M$ Gaussian perturbations; particle clouds and their weights are recomputed on demand. Exponential weights combine the particles into a refined latent. With probability $\epsilon_t$, a candidate is selected uniformly; otherwise, the online ensemble mean selects the highest-valued refined latent. The selected latent is decoded by the frozen VLA and stored as the action in replay.

\begin{algorithm}[H]
\caption{\QFRS{}: online anchors, local value tilting, and $U$ critic updates per decision}
\label{alg:frsq}
\small
\begin{algorithmic}[1]

\Require frozen $\pi_{\mathrm{flow}}$, $\reverseflow$, reference map $\varphi$;
  $\numcrit$ critics $\{Q_{\psi,i}\}$; prioritized replay $\Dc$
\Require execution length $n_{\mathrm{exec}}$, updates $U$, critic warmup $W_Q$;
  $\gamma,\eta,\tau,N_{\mathrm{targ}}$; exploration schedule $\epsilon_t$
\Require $M$ particles, particle warmup $W_P$, scale schedule $\sigma_p^{(t)}$;
  optional adaptive scale with $0\leq\sigma_{\min}\leq\sigma_{\max}$
\Require fixed $\beta$ or adaptive tilt with uncertainty floor $u_{\min}>0$

\State Initialize $\bar\psi\leftarrow\psi$ and reset the environment to $s_0$

\For{each outer decision $t=0,1,\ldots$}

  \State Retrieve cached $\bank(s_t)$; if absent, construct and cache it
    by flow reversal of $\varphi(s_t,c)$ for each reference command
  \Statex \hspace{\algorithmicindent}
    When constructing a bank, include a base-prior candidate if defer is enabled

  \If{$M=0$ or $t<W_P$}
    \State $\sigma_t\leftarrow 0$
  \ElsIf{adaptive scale is enabled}
    \State $\sigma_t\leftarrow\Call{AdaptiveScale}{s_t,\bank(s_t)}$
  \Else
    \State $\sigma_t\leftarrow\sigma_p^{(t)}$
  \EndIf

  \For{each candidate $c\in\bank(s_t)$}
    \State $\tilde z_{t,c}\leftarrow\Call{LocalRefine}{s_t,z_c(s_t),t,\sigma_t}$
  \EndFor

  \State Choose $c_t$ uniformly w.p.~$\epsilon_t$; otherwise
    $c_t\leftarrow
    \arg\max_{c\in\bank(s_t)}
    \bar Q_\psi(s_t,\tilde z_{t,c})$

  \State $\tilde z_t\leftarrow\tilde z_{t,c_t}$

  \State Decode $a_t\leftarrow\pi_{\mathrm{flow}}(s_t,\tilde z_t)$;
    execute its first $n_{\mathrm{exec}}$ actions, stopping early if the episode ends

  \State Observe outer reward $r_t$, final observation $s_{t+1}$,
    and terminal flag $d_t$

  \State Construct and cache $\bank(s_{t+1})$
    with the same anchor/defer construction

  \State Store $(s_t,\tilde z_t,r_t,s_{t+1},d_t)$ in $\Dc$
    \Comment{observations include their cached banks}

  \If{$t+1\geq W_Q$ and $\Dc$ can supply a minibatch}

    \For{$u=1,\ldots,U$}

      \State Sample
        $\Bc=\{(s_b,z_b,r_b,s'_b,d_b)\}$
        with IS weights $\omega_b$

      \State Draw target-head subset $\mathcal I$,
        $|\mathcal I|=N_{\mathrm{targ}}$

      \For{each replay row $b$ \textnormal{(batched)}}

        \State $y_b\leftarrow r_b$

        \If{$d_b=0$}

          \State Retrieve cached $\bank(s'_b)$ from replay

          \If{$M=0$ or $t<W_P$}
            \State $\sigma'_b\leftarrow 0$
          \ElsIf{adaptive scale is enabled}
            \State $\sigma'_b\leftarrow
              \Call{AdaptiveScale}{s'_b,\bank(s'_b)}$
          \Else
            \State $\sigma'_b\leftarrow\sigma_p^{(t)}$
          \EndIf

          \For{each candidate $c\in\bank(s'_b)$}
            \State $\tilde z'_{b,c}\leftarrow\Call{LocalRefine}{s'_b,z_c(s'_b),t,\sigma'_b}$
          \EndFor

          \State $c'_b\leftarrow
            \arg\max_{c\in\bank(s'_b)}
            \bar Q_\psi(s'_b,\tilde z'_{b,c})$

          \State $\tilde z'_b\leftarrow\tilde z'_{b,c'_b}$

          \State $y_b\leftarrow
            r_b+\gamma\min_{i\in\mathcal I}Q_{\bar\psi,i}(s'_b,\tilde z'_b)$

        \EndIf

      \EndFor

      \State $\mathcal L_Q\leftarrow
        (\numcrit|\Bc|)^{-1}
        \sum_{b\in\Bc}\sum_{i=1}^{\numcrit}
        \omega_b
        \bigl(
          Q_{\psi,i}(s_b,z_b)
          -\operatorname{stopgrad}(y_b)
        \bigr)^2$

      \State Take an optimizer step on $\psi$ with learning rate $\eta$

      \State Update replay priorities from absolute TD errors

      \State $\bar\psi\leftarrow(1-\tau)\bar\psi+\tau\psi$
        \Comment{frozen VLA and any random priors remain fixed}

    \EndFor

  \EndIf

  \State If the episode ended, reset before the next decision

\EndFor

\end{algorithmic}
\end{algorithm}

\paragraph{Critic ensemble and uncertainty signal.}
Write $Q_{\psi,i}$ for critic head $i$ and $\bar Q_\psi(s,z)=N^{-1}\sum_{i=1}^{N}Q_{\psi,i}(s,z)$ for the online ensemble mean used for particle weighting and selection among refined latents. Uncertainty at a candidate latent is measured by the standard deviation across heads. Its median over a particle cloud provides the local uncertainty estimate used for adaptive tilting.

\begin{algorithm}[ht]
\caption{Local refinement subroutine for Algorithm~\ref{alg:frsq}}
\label{alg:frsq-local-refine}
\small
\begin{algorithmic}[1]
\Function{LocalRefine}{$s,z_c,t,\sigma$}
  \If{$M=0$ or $t<W_P$ or $\sigma=0$}
    \State \Return $z_c$
    \Comment{exact-anchor mode}
  \EndIf

  \State Set $z_0\leftarrow z_c$; draw
    $\xi_k\sim\mathcal N(0,I)$ and set
    $z_k\leftarrow z_c+\sigma\xi_k$ for $k=1,\ldots,M$
    \Comment{fresh local cloud}

  \State Evaluate the critic ensemble on the cloud and set
    $q_k\leftarrow\bar Q_\psi(s,z_k)$ for $k=0,\ldots,M$

  \If{adaptive tilt is enabled}
    \State $\widehat u_{\mathrm{epi}}
      \leftarrow
      \operatorname{median}_{0\leq k\leq M}
      \operatorname{Std}_{i=1,\ldots,\numcrit}
      Q_{\psi,i}(s,z_k)$
    \State $\beta_{\mathrm{loc}}
      \leftarrow
      1/\max\{\widehat u_{\mathrm{epi}},u_{\min}\}$
  \Else
    \State $\beta_{\mathrm{loc}}\leftarrow\beta$
  \EndIf

  \State $w_k\leftarrow
    \exp\!\bigl(\beta_{\mathrm{loc}}(q_k-\max_j q_j)\bigr)
    \Big/
    \sum_{\ell=0}^{M}
    \exp\!\bigl(\beta_{\mathrm{loc}}(q_\ell-\max_j q_j)\bigr)$
    for $k=0,\ldots,M$

  \State \Return $\sum_{k=0}^{M}w_k z_k$
    \Comment{barycentric refinement}
\EndFunction
\end{algorithmic}
\end{algorithm}

\begin{algorithm}[ht]
\caption{Adaptive particle scale}
\label{alg:adaptive-scale}
\small
\begin{algorithmic}[1]

\Function{AdaptiveScale}{$s,\bank(s)$}

  \For{each candidate $c\in\bank(s)$}
    \State $z^{\mathrm{probe}}_{c,0}\leftarrow z_c(s)$
    \State Draw $\xi^{\mathrm{probe}}_{c,k}\sim\mathcal N(0,I)$ and set
      $z^{\mathrm{probe}}_{c,k}
      \leftarrow
      z_c(s)+\sigma_{\max}\xi^{\mathrm{probe}}_{c,k}$,
      $k=1,\ldots,M$
  \EndFor

  \State $u_{\mathrm{probe}}\leftarrow
    \operatorname{median}_{c,\,0\leq k\leq M}
    \operatorname{Std}_{i=1,\ldots,\numcrit}
    Q_{\psi,i}(s,z^{\mathrm{probe}}_{c,k})$

  \State \Return
    $\sigma_{\min}
    +(\sigma_{\max}-\sigma_{\min})
    \operatorname{clip}(u_{\mathrm{probe}},0,1)$

\EndFunction

\end{algorithmic}
\end{algorithm}

\paragraph{Adaptive particle reach.}
Particles are disabled before $W_P$. With adaptive
scale disabled, $\sigma_p^{(t)}$  follows the configured schedule---e.g., $\sigma_p^{(t)}\equiv \sigma_p$ or a ramp to $\sigma_p$
over $W_\sigma$ decisions after warmup (an immediate switch if $W_\sigma=0$).
With adaptive scale enabled, the post-warmup scale is instead chosen from
$[\sigma_{\min},\sigma_{\max}]$ using a separate critic-only probe cloud
around the current anchors at scale $\sigma_{\max}$:
\[
 \begin{aligned}
 u_{\mathrm{probe}}(s)
 &:=\operatorname{median}_{c,k}\operatorname{Std}_{i=1,\ldots,\numcrit}
       Q_{\psi,i}(s,z^{\mathrm{probe}}_{c,k}),\\
 \sigma(s)&:=\sigma_{\min}+(\sigma_{\max}-\sigma_{\min})
       \operatorname{clip}(u_{\mathrm{probe}}(s),0,1).
 \end{aligned}
\]
The clipping is fixed, not a running normalization: larger probe disagreement
widens the search, while smaller disagreement contracts it. The probe requires
no VLA decoding and serves only to set the scale; its particles do not compete
for execution. A fresh refinement cloud is then drawn around each
anchor at $\sigma(s)$. This optional reach heuristic is separate from the
fixed-cloud adaptive-tilting guarantee.

\paragraph{Adaptive tilting intensity.}
For each anchor's realized refinement cloud, set
\[
 \widehat u_{\mathrm{epi}}(s)
 :=\operatorname{median}_{0\leq k\leq M}
       \operatorname{Std}_{i=1,\ldots,\numcrit}Q_{\psi,i}(s,z_k),
 \qquad
 \beta_{\mathrm{loc}}(s)
 :=\frac{1}{\max\{\widehat u_{\mathrm{epi}}(s),u_{\min}\}}.
\]
This instantiates~\eqref{eq:adaptive-tilt-beta}; when adaptation is
disabled, the routine instead uses the fixed intensity $\beta=1/T$.
The adaptive weights compare value gaps in units of the floored uncertainty
scale: $w_j/w_k=\exp((q_j-q_k)/u_{\mathrm{epi}}(s))$, where
$q_k=\bar Q_\psi(s,z_k)$ and
$u_{\mathrm{epi}}(s)=\max\{\widehat u_{\mathrm{epi}}(s),u_{\min}\}$.
Large uncertainty therefore softens the weighting, while small uncertainty
permits stronger preference for high-scoring particles. The floor limits
$\beta_{\mathrm{loc}}$ to $1/u_{\min}$, preventing a near-zero disagreement
estimate from producing an arbitrarily sharp tilt.

Theorem~\ref{thm:adaptive-local-tilting} interprets these weights as an
uncertainty-weighted KL-regularized improvement rule. Its reliability
requires the calibration condition in
Assumption~\ref{ass:adaptive-epistemic-calibration}: local critic error
must be controlled by the uncertainty scale. Neither a small ensemble
standard deviation nor the use of random priors establishes that condition
automatically. Corollary~\ref{cor:adaptive-barycentric-refinement} additionally
accounts for executing the weighted latent, with a separate interpolation
penalty under local smoothness. Adaptive reach and adaptive tilt can be
enabled independently. Reach uses disagreement over the probe cloud, while
tilt uses disagreement recomputed on each anchor's refinement cloud.
Their realized values need
not change monotonically during learning, since the critic and clouds also
change.

\paragraph{Critic updates and next-state refinement.}
For each nonterminal replay row, we retrieve the cached next-state bank and refine all candidates with fresh particles, then select the highest-valued refined latent using the online ensemble mean. Adaptive reach, tilt, and particle weights are recomputed using the current online ensemble at the next state. At each update, we sample a subset $\mathcal I$ of $N_{\mathrm{targ}}$ target heads and form the detached target $y=r+\gamma\min_{i\in\mathcal I}Q_{\bar\psi,i}(s',\tilde z')$, where $\tilde z'$ is the selected refined latent; true terminal rows use $y=r$. Prioritized replay supplies importance-sampling weights for the squared TD loss. Each optimizer step is followed by a replay-priority refresh and a Polyak target update; the VLA and random prior networks remain frozen.

\subsection{Implementation Details and Hyperparameters}
Following FRS~\citep{tang2026frs}, we use six cardinal Cartesian reference motions and a \emph{defer} candidate sampled from the VLA's base noise prior. For bimanual RoboTwin tasks, the Cartesian product of the six directions for each arm yields $36$ reference motions; adding defer gives $37$ candidates. In simulation, \QFRS{} and DSRL use REDQ's ensemble update scheme~\citep{chen2021redq}, with multiple critic updates per decision and a minimum over randomly sampled target heads. Real-world training uses a three-step TD target (multi-step Q-learning) in place of the one-step target in Algorithm~\ref{alg:frsq}, and averages the sampled target heads. Table~\ref{tab:hyperparams} lists the settings for each benchmark.

\paragraph{Critic architecture.}
The critic evaluates $Q(s,z)$ from state features and the full latent noise trajectory. On LIBERO-90, two frozen $384$-dimensional DINOv2-S~\citep{oquab2023dinov2} embeddings from the third-person and wrist cameras form a $768$-dimensional visual input, alongside an $8$-dimensional proprioceptive vector. Separate linear layers with ReLU project the visual and proprioceptive inputs to $256$ dimensions each. Their concatenation passes through a $512\rightarrow256\rightarrow256$ state encoder. The $10\times32$ noise trajectory is flattened and encoded by a $320\rightarrow256\rightarrow256$ MLP. Input dimensions follow each benchmark's observation and latent spaces.

With randomized prior functions enabled, each of the $N=10$ heads has independent trainable featurizers and encoders. An output MLP maps the concatenated state and noise features through widths $512\rightarrow256\rightarrow256\rightarrow1$. Hidden layers use ReLU, without layer normalization or dropout. Each head adds an independently initialized, frozen prior network: $Q_{\psi,i}(s,z)=f_{\psi_i}(s,z)+\kappa p_i(s,z)$, with prior scale $\kappa$ given in Table~\ref{tab:hyperparams}. Disagreement across heads supplies the uncertainty signal for local adaptation.

For our DROID policy~\citep{khazatsky2024droid}, we stack four frames of proprioceptive and DINO features and use shared trainable encoders with independent scalar output heads, without random priors. The critic applies layer normalization to the flattened $15\times32$ noise input; execution uses the original latent.

\begin{table*}[ht]
\caption{\textbf{Hyperparameters by benchmark.}
Settings for LIBERO-90~\citep{liu2023libero}, RoboTwin~\citep{chen2025robotwin2}, and the real-world experiments.}
\label{tab:hyperparams}
\centering
\small
\setlength{\tabcolsep}{3pt}
\resizebox{\textwidth}{!}{
\begin{tabular}{@{}llcccl@{}}
\toprule
\textbf{Symbol} & \textbf{Short Key} & \textbf{LIBERO-90} & \textbf{RoboTwin} & \textbf{Real world} & \textbf{Meaning} \\
\midrule
\multicolumn{6}{@{}l@{}}{\textit{Environment and action space}}\\
$n_{\mathrm{exec}}$ & \texttt{pi0\_replan\_steps} & $10$ & $30$ & $10$ & substeps per outer decision \\
$K$ & \texttt{pi0\_flow\_steps} & $10$ & $10$ & $10$ & Euler steps in the flow ODE \\
$C$ & \texttt{num\_anchors} & $7$ & $37$ & $7$ & reference candidates including defer \\
$(H,D)$ & \texttt{noise\_shape} & $(10,32)$ & $(50,32)$ & $(15,32)$ & noise trajectory shape \\
\midrule
\multicolumn{6}{@{}l@{}}{\textit{Critic, replay, and exploration}}\\
$\gamma$ & \texttt{gamma} & $0.999^{10}$ & $0.999^{30}$ & $0.99$ & per-outer-decision discount \\
$\eta$ & \texttt{learning\_rate} & $3\!\times\!10^{-4}$ & $3\!\times\!10^{-4}$ & $3\!\times\!10^{-4}$ & Adam step size \\
 $B$ & \texttt{batch\_size} & $16$ & $16$ & $64$ & transitions per gradient step \\
$N$ & \texttt{num\_critics} & $10$ & $10$ & $10$ & critic heads \\
$N_{\mathrm{targ}}$ & \texttt{n\_critics\_to\_sample} & $2$ & $2$ & $2$ & random target subset size \\
$\tau$ & \texttt{tau} & $0.005$ & $0.005$ & $0.005$ & Polyak rate \\
$U$ & \texttt{num\_updates\_per\_train\_step} & $10$ & $10$ & $5$ & gradient steps / decision \\
$W_Q$ & \texttt{learning\_starts} & $1000$ & $1000$ & $0$ & critic warmup (outer decisions) \\
$\epsilon$ & \texttt{eps\_start} $\to$ \texttt{eps\_end} & $1.0 \to 0.05$ & $1.0 \to 0.05$ & $0.6 \to 0.05$ & exploration schedule \\
 & \texttt{eps\_decay\_steps} & $10{,}000$ & $5000$ & $2000$ & decisions for exploration decay \\
$\kappa$ & \texttt{rpf\_prior\_scale} & $2$ & $20$ & $2$ & random-prior scale \\
\midrule
\multicolumn{6}{@{}l@{}}{\textit{Particle tilting}}\\
$M$ & \texttt{particle\_count} & $32$ & $10$ & $8$ & sampled particles per anchor \\
$\sigma_p$ & \texttt{particle\_sigma} & $0.5$ & $0.3$ & $0.0$ & particle scale \\
$\beta$ & \texttt{tilt\_beta} & $1.0$ & $0.1$ & $1.0$ & fixed tilting strength \\
$W_P$ & \texttt{particle\_warmup} & $10{,}000$ & $2000$ & $2000$ & particle warmup (outer decisions) \\
$(\sigma_{\min},\sigma_{\max})$ & \texttt{sigma\_min/\_max} & $(0.05,0.3)$ & $(0.05,0.3)$ & $(0.05,0.3)$ & adaptive particle-scale range \\
\bottomrule
\end{tabular}
}
\end{table*}

\section{Additional Experiment Results}
\label{app:additional_numerics}
\begin{figure}[ht]
\centering
\includegraphics[width=0.85\textwidth]{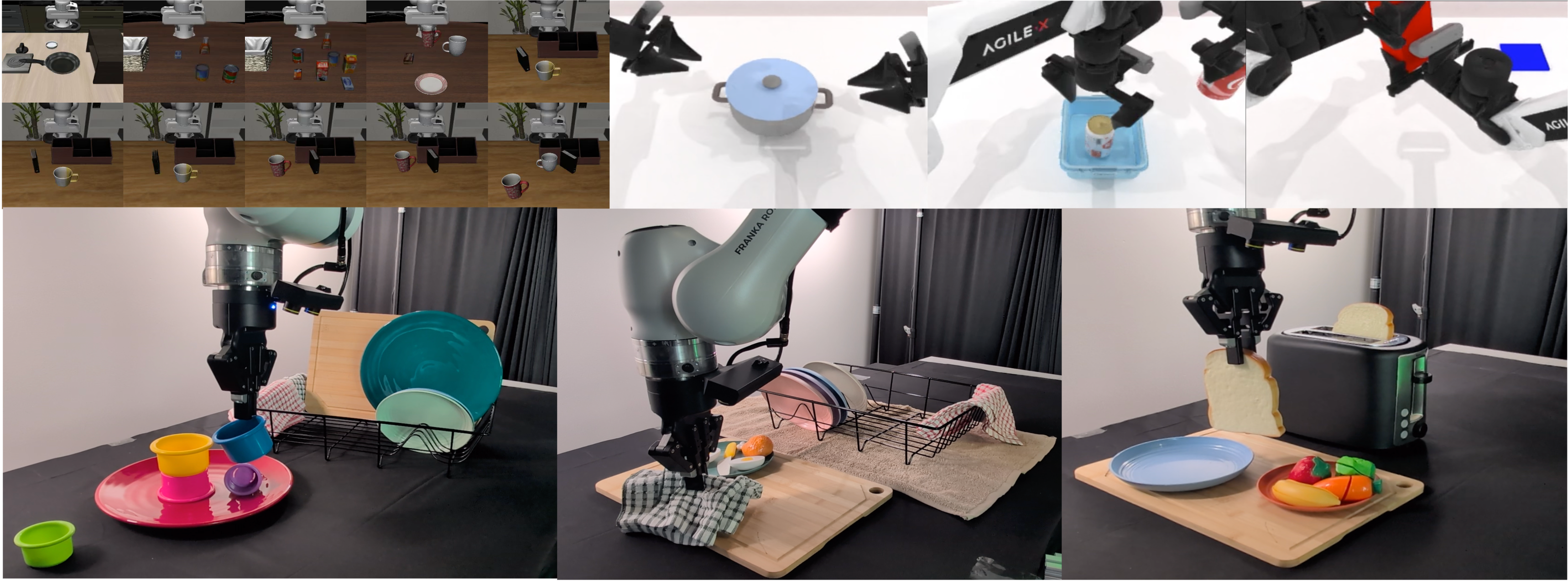}
\caption{\textbf{Overview of the experimental tasks.}
Representative tasks from the LIBERO simulation benchmark, the bimanual
RoboTwin benchmark, and our real-world Franka setup.}
\label{fig:experiment-task-collage}
\end{figure}

We supplement the experimental results in the main paper with per-task
analyses of the simulation benchmarks and, in
Figure~\ref{fig:experiment-task-collage}, a visual overview of the tasks from
LIBERO, RoboTwin, and the real-world robot experiments. Across these settings,
\QFRS{} consistently outperforms the steering baselines and improves the
frozen base policy, including on bimanual and real-world manipulation tasks.
The following sections report the simulation results by task and examine the
LIBERO tasks where particle tilting most clearly separates from anchor-only
learning and baseline RL algorithms.

\subsection{Per-Task Results}
\label{sec:appendix-per-task}

Figures~\ref{fig:libero15-per-task}--\ref{fig:robotwin-per-task} show the
training progress for each task in the three simulation suites.
On the LIBERO-90 15-task suite, the adaptive variants learn reliably across
tasks, while fixed and no-tilting variants are less consistent.
On the hard-10 suite, adaptive tilting succeeds on most tasks, although the
hardest cases remain challenging for every method.
On RoboTwin, at least one particle-tilting variant improves over the base
policy on every task, with particularly clear gains on handover and
object-placement tasks.

\begin{figure*}[ht]
\centering
\includegraphics[width=\textwidth]{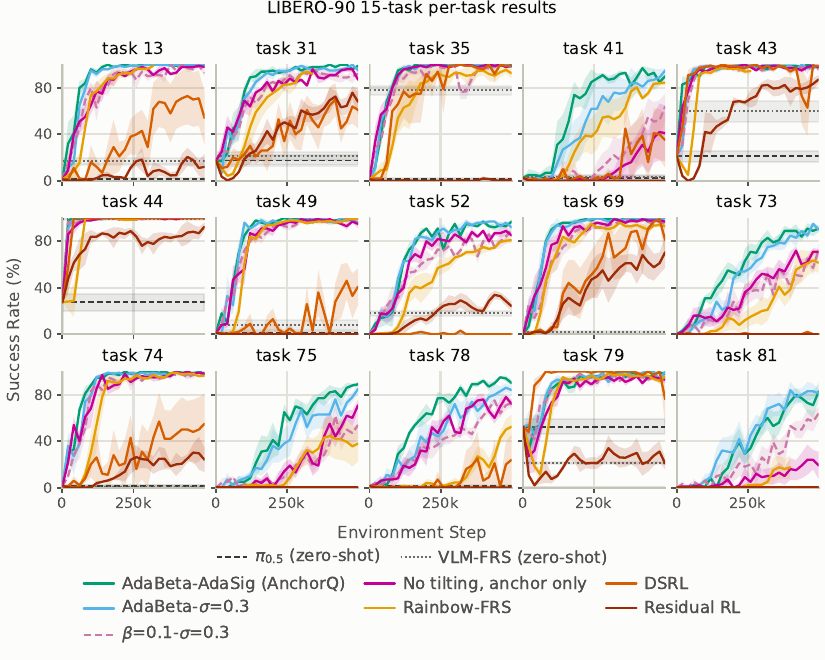}
\caption{\textbf{Per-task performance on the LIBERO-90 15-task suite.}
Success rate over training for the particle-tilting ablations and steering
baselines. Curves show the mean and shaded regions show $\pm1$ standard error
across seeds.}
\label{fig:libero15-per-task}
\end{figure*}

\begin{figure*}[ht]
\centering
\includegraphics[width=\textwidth]{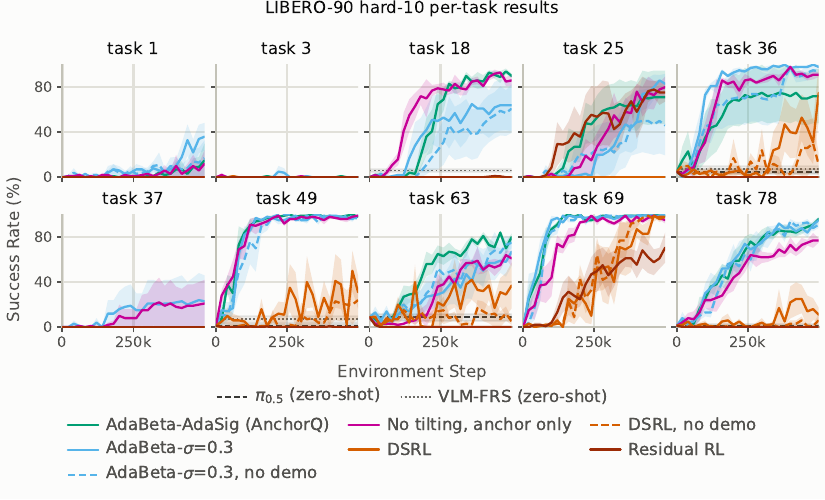}
\caption{\textbf{Per-task performance on the LIBERO-90 hard-10 suite.}
Success rate over training for the adaptive-tilting, anchor-only, DSRL, and
Residual RL variants. Curves show the mean and shaded regions show $\pm1$
standard error across seeds.}
\label{fig:hard10-per-task}
\end{figure*}

\begin{figure*}[ht]
\centering
\includegraphics[width=\textwidth]{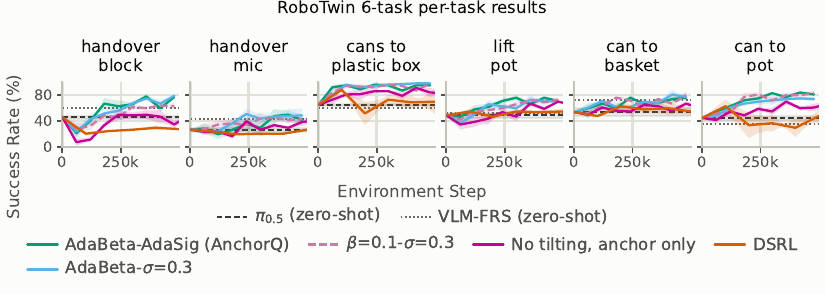}
\caption{\textbf{Per-task performance on the RoboTwin 6-task suite.}
Success rate over training for the particle-tilting ablations and steering
baselines. Curves show the mean and shaded regions show $\pm1$ standard error
across seeds.}
\label{fig:robotwin-per-task}
\end{figure*}

\subsection{Additional LIBERO-90 Results}
\label{sec:appendix-libero-results}

Figure~\ref{fig:libero15-four-task} and
Table~\ref{tab:libero15-four-task} report the four tasks with the clearest
differences among particle tilting, anchor-only learning, and the steering
baselines. Task $41$ requires placing a frying pan on top of a cabinet, tasks
$73$ and $81$ require placing a book into the front compartment of a caddy in
different study scenes, and task $75$ requires placing the book into the right
compartment. The base policy performs poorly on all four tasks, which require
precise end-effector motions to align the object with the target placement.
These fine corrections are difficult to express using only the discrete
flow-reversed anchors, giving particle tilting a larger advantage over the
anchor-only ablation. Consequently, the adaptive tilting variants perform
consistently across the four tasks, while anchor-only learning is
task-dependent and the baselines (DSRL and Residual RL) make little progress.

Figure~\ref{fig:task81-notilt-vs-ada} illustrates this difference on task $81$.
The anchor-only rollout repeatedly approaches the caddy but ends with the book
outside it, whereas \QFRS{} with adaptive tilting adjusts the approach and
successfully inserts the book into the front compartment.

\begin{figure*}[ht]
\centering
\includegraphics[width=0.6\textwidth]{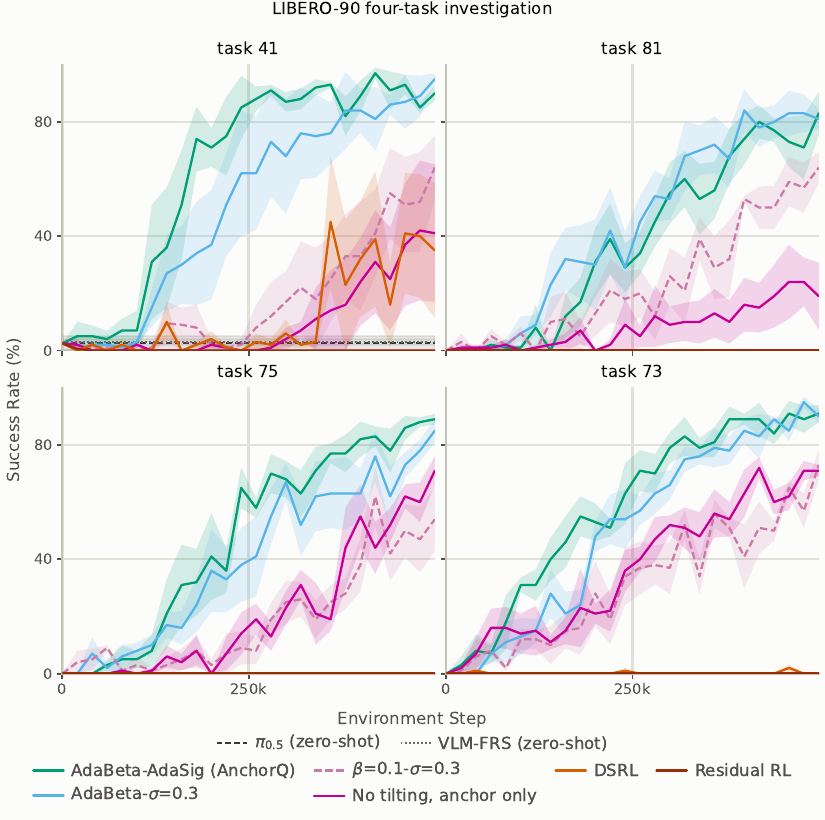}
\caption{\textbf{Per-task ablations on four LIBERO-90 tasks.}
\QFRS{} with adaptive tilting learns consistently across all four tasks.
Fixed tilting, anchor-only learning, and the steering baselines show
substantially greater variation across tasks. Curves show the mean and shaded
regions show $\pm1$ standard error across seeds.}
\label{fig:libero15-four-task}
\end{figure*}

\begin{figure*}[ht]
\centering
\includegraphics[width=\textwidth]{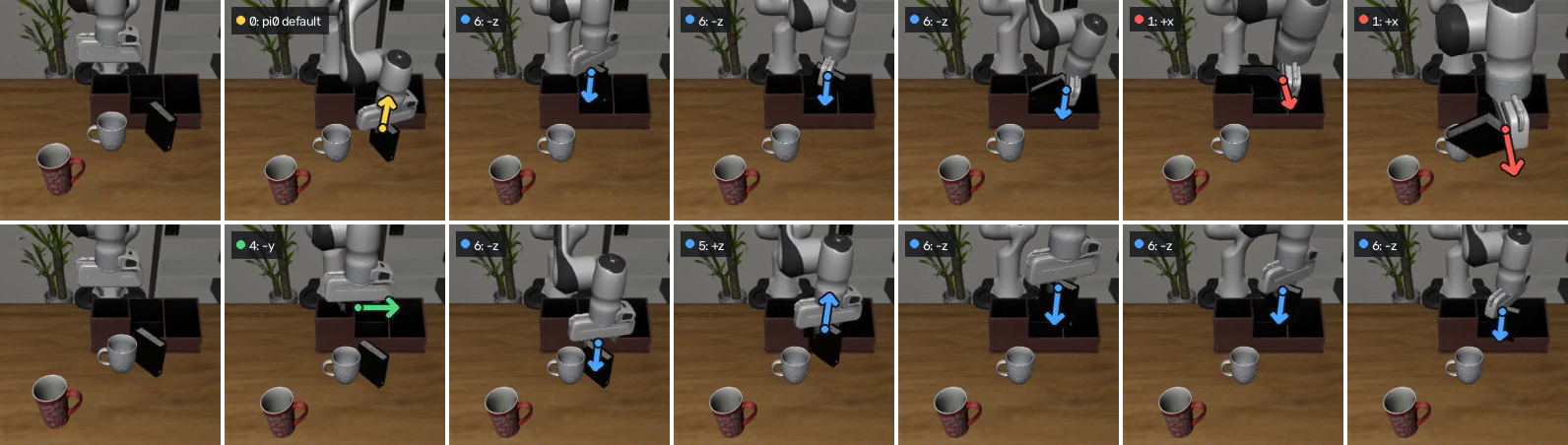}
\caption{\textbf{Anchor-only learning versus adaptive particle tilting on LIBERO-90 task $81$.}
The task is to place the book into the front compartment of the caddy.
\textbf{Top:} the no-tilting, anchor-only ablation fails to place the book.
\textbf{Bottom:} \QFRS{} completes the insertion. Colored overlays identify the nearest anchor and its reference direction;
yellow denotes a base-policy action.}
\label{fig:task81-notilt-vs-ada}
\end{figure*}

\begin{table*}[ht]
\centering
\small
\setlength{\tabcolsep}{6pt}
\caption{\textbf{Per-task success rate after $500$k environment steps.}
Success rate (\%), reported as mean $\pm$ standard error across seeds. The best
result on each task is bold.}
\label{tab:libero15-four-task}
\begin{tabular}{@{}lcccc@{}}
\toprule
\textbf{Method} & \textbf{Task 41} & \textbf{Task 81} &
\textbf{Task 75} & \textbf{Task 73} \\
\midrule
$\pi_{0.5}$ zero-shot
    & $2.5 \pm 2.5$ & $0.0 \pm 0.0$ & $0.0 \pm 0.0$ & $0.0 \pm 0.0$ \\
VLM-FRS zero-shot
    & $3.0 \pm 1.0$ & $0.0 \pm 0.0$ & $0.0 \pm 0.0$ & $0.0 \pm 0.0$ \\
DSRL
    & $35.0 \pm 23.6$ & $0.0 \pm 0.0$ & $0.0 \pm 0.0$ & $0.0 \pm 0.0$ \\
Residual RL
    & $0.0 \pm 0.0$ & $0.0 \pm 0.0$ & $0.0 \pm 0.0$ & $0.0 \pm 0.0$ \\
\midrule
\QFRS{}-AdaBeta-AdaSig
    & $90.0 \pm 2.6$ & $\mathbf{83.0 \pm 7.5}$ &
      $\mathbf{89.0 \pm 1.9}$ & $\mathbf{91.0 \pm 3.0}$ \\
\quad AdaBeta, $\sigma{=}0.3$
    & $\mathbf{95.0 \pm 1.9}$ & $81.0 \pm 3.4$ &
      $85.0 \pm 4.1$ & $90.0 \pm 2.6$ \\
\quad $\beta{=}0.1$, $\sigma{=}0.3$
    & $64.0 \pm 11.2$ & $64.0 \pm 5.4$ &
      $54.0 \pm 11.1$ & $73.0 \pm 5.3$ \\
\quad No tilting, anchors only
    & $41.0 \pm 24.0$ & $19.0 \pm 11.8$ &
      $71.0 \pm 5.0$ & $71.0 \pm 1.9$ \\
\bottomrule
\end{tabular}
\end{table*}

\section{Overview of Supplemental Theoretical Results}
\label{sec:appendix-theory-overview}
The theoretical results in this appendix develop a unified view of \QFRS{} as a
\emph{structured online action abstraction with local value-guided refinement}.
The results answer a sequence of complementary questions: why retaining flow-reversed action anchors as
the online action class can reduce the statistical burden of learning; what
 price is paid for that restriction in terms of representation bias; how local particles trade this
representation bias against increased learning complexity; how exponential
value-tilting exploits the resulting candidate support; and, finally, how the
tilting intensity can be adapted to epistemic uncertainty in the learned
critic.  Taken together, the theory separates three distinct roles:
\begin{itemize}
    \item \emph{Flow-reversal action anchr abstraction}: determines the structured online action class;
    \item \emph{Local particle refinement}: relax that class and enlarge the available latent support;
    \item \emph{Value tilting}: reallocate posterior sampling probability toward high-value candidates. 
\end{itemize}
The uncertainty-adaptive rule then determines how aggressively the learned
critic should be trusted when performing the final reweighting.

\paragraph{Online flow-reversal can reduce identification complexity.}
We begin in Appendix~\ref{sec:persistent-abstraction-separation} by isolating the statistical effect of retaining the structured latent action class induced by the flow-reversed action anchors throughout online learning, compared with learning over a broader latent class even when \emph{a priori} information is given about the restricted set. 

Theorem~\ref{thm:persistent-abstraction-separation} considers a stylized
one-decision latent-mode problem with $N$ behaviorally distinct latent modes,
of which only $C<N$ belong to the online, state-conditioned flow-reversed action anchor abstraction.  When the flow-reversed action
class is sufficiently accurate, a learner restricted to this class requires
\[\mathcal{O}\left(\frac{C}{\epsilon^2}
    \log\frac{C}{\alpha}\right)\] online samples.
On the other hand, a learner that retains a uniform guarantee over the
larger class must, on an instance for which the flow-reversal action anchor abstraction has zero
bias, spend additional online samples ruling out the $N-C$ non-flow-reversed  modes.
Importantly, this lower bound persists even if the learner is given the exact
reward laws of all flow-reversed modes before online interaction.  

Thus the statistical
benefit is attributable to the \emph{persistent online restriction} itself: information about flow-reversed behaviors, whether supplied through prefill or even their exact reward laws, cannot 
cannot
certify that unexplored non-flow-reversed behaviors are suboptimal.

\paragraph{The statistical gain has an explicit support price.}
The smaller action class is beneficial only when it contains sufficiently
good actions. In Appendix~\ref{sec:general-mdp-abstraction-price},  Theorem~\ref{thm:general-mdp-abstraction-price} therefore moves
to a general discounted latent MDP and defines the one-step flow-reversal support gap
\[
    \Delta_{\mathrm F}(s)
    :=
    V^\star(s)
    -
    \max_{c\in\mathcal C}
    Q^\star\bigl(s,z_c(s)\bigr), \quad\text{and}\quad \bar{\Delta}_{\mathrm F}
        :=
        \sup_{s\in\gS}
        \Delta_{\mathrm F}(s).
\]
It shows that the value price of remaining inside the flow-reversed anchor bank at every
decision is controlled by this local support deficiency,
\[
    \Delta_{\mathrm F}(s)
    \le
    V^\star(s)-V_{\mathrm F}^\star(s)
    \le
    \frac{\bar{\Delta}_{\mathrm F}}{1-\gamma},
\]
and, more importantly, gives the exact decomposition
\[
    V^\star-V^\omega
    =
    \underbrace{
        V^\star-V_{\mathrm F}^\star
    }_{\text{persistent flow-reversal support price}}
    +
    \underbrace{
        (I-\gamma P^\omega)^{-1}
        \operatorname{err}_{\mathrm{sel}}(\omega;\cdot)
    }_{\text{selection error within the flow-reversed class}}.
\]
The first term is structural: no amount of additional critic data can recover
a useful behavior that is absent from the flow-reversed action support.  The second is
statistical and can decrease as the learned critic and selector improve.
Together with Theorem~\ref{thm:persistent-abstraction-separation}, this
identifies the fundamental tradeoff behind  \QFRS{}: the smaller
online class can lower identification complexity, but only at the cost of an
explicit and measurable support bias.

\paragraph{Particles trace a bias--complexity frontier.}
In Appendix~\ref{sec:particle-bias-complexity-frontier}, we show that \QFRS{} provides a controlled way to relax the exact-anchor
restriction without immediately reverting to unrestricted latent-space
control.  Theorem~\ref{thm:particle-bias-complexity-frontier} considers a
local action class of deterministic local reach $\reach$ around the state-dependent flow-reversed
anchors.  Increasing $\reach$ can only reduce the corresponding support gap
$\bar{\Delta}_\reach$; under smooth local value geometry, the remaining gap
also admits an explicit local quadratic bound.  At the same time, increasing
$\reach$ enlarges the number of actions
$A_\reach(\epsilon_{\mathrm B})$
 that are distinguishable at resolution $\epsilon_{\mathrm B}$ in
the Bellman behavioral pseudometric.  The finite-sample result makes both
effects explicit:
\[
    \left\|
        V^\star
        -
        V^{\widehat\pi_{\reach,\epsilon_{\mathrm B},J}}
    \right\|_\infty
    \le
    \frac{
        \bar{\Delta}_\reach
        +
        \epsilon_{\mathrm B}
        +
        2\epsilon_Q
    }{
        1-\gamma
    },
\]
while the required generative-model sample count grows with
$A_\reach(\epsilon_{\mathrm B})$.  Hence local expansion is not free exploration:
it trades reduced structural bias against a larger behavioral action class
that must be represented and valued.  The deterministic reach $\reach$ used in
this analysis is an analytical local envelope and is not identified with the
Gaussian particle scale used by the implementation; Appendix~\ref{subsec:gaussian-particle-cloud-connection} separately
connects a finite Gaussian particle cloud to coverage of such a local
envelope.

\paragraph{Exponential tilting exploits the available support.} Once the candidate class has been generated, the remaining problem is how to
use value information to allocate execution probability within it.

Indeed, the preceding results complete the coarse-to-fine picture developed in
Theorems~\ref{thm:persistent-abstraction-separation}--%
\ref{thm:particle-bias-complexity-frontier}.  Those results characterize the
action representation made available to online learning; Appendix~\ref{sec:exponential-tilting}
characterizes how value information is used to refine that representation after
a coarse anchor has been selected.

Theorem~\ref{thm:exponential-tilting} gives the exact population-level characterization of this operation. Conditional on a state $s$ and selected anchor $c$, the exponentially tilted law is the unique optimizer of a KL-regularized value-improvement problem relative to the untilted local particle law. The theorem further shows that increasing the tilt intensity $\betatilt$ monotonically increases both the expected local-particle score and the KL departure from the reference law, while preserving its support. Thus the particle construction determines which local refinements are available, whereas $\betatilt$ determines how aggressively value information reallocates weight among those refinements. Corollary~\ref{cor:entropic-mean-variance-expansion} makes the corresponding upper-tail preference explicit: for small $\betatilt$, the entropic objective equals the mean local score plus a positive variance correction to first order.

The same analysis quantifies when this value-directed weighting remains reliable under an imperfect learned critic.  Theorem~\ref{thm:exponential-tilting}.{\it e.}  shows that a truly high-value local region with population-score separation $\Xi_{\nu_{s,c}}(A;q_{\mathrm{pop}})$---i.e., worst score inside $A$ minus the best score outside $A$, ignoring $\nu_{s,c}$-measure-zero exceptions---has its odds amplified by at least the factor
\[
\exp\left(
\betatilt
\left(
\Xi_{\nu_{s,c}}(A;q_{\mathrm{pop}})
-
2\eps_{\mathrm{crit}}
\right)
\right).
\]
 Consequently, strict amplification is guaranteed whenever the true local value separation exceeds $2\eps_{\mathrm{crit}}$. 

Finite-cloud results connect these population identities to the implemented particle weighting rule.
With critic error, Corollary~\ref{cor:finite-cloud-selection-error} gives the corresponding population-score gap
$\log(M+1)/\betatilt
+
2\eps_{\mathrm{crit}}$.
This still does not by itself characterize the action executed by \QFRS{}, because the implementation forms the barycentric latent rather than sampling one of the weighted particles. Proposition~\ref{prop:barycentric-refinement-value} closes this gap under local smoothness of the population value function: the loss from replacing the weighted particle values by the value of their weighted-average latent is at most
$
L_{\mathrm{sm}}
\mathcal V_{c,\betatilt}(s)/2,
$
where $\mathcal V_{c,\betatilt}(s)$ is the weighted particle variance. Corollary~\ref{cor:local-reach-barycenter-error} therefore separates the local refinement error into finite-particle representation error, finite-temperature weighting error, critic error, and barycentric interpolation error.

Finally, Theorem~\ref{thm:hierarchical-anchorq-control} propagates these local guarantees through the complete hierarchical \QFRS{} policy. The resulting bound
\begin{equation}
\label{eq:hierarchical-anchorq-control-bound-summary}
    \left\|V^\star-V^{\pi_{\AQ}}\right\|_\infty
    \leq
    \frac{
        \bar\Delta_\rho
        +\bar\eps_{\mathrm{sel}}
        +\bar\eps_{\mathrm{rep}}
        +\betatilt^{-1}\log(M+1)
        +2\bar\eps_{\mathrm{crit}}
        +\tfrac{L_{\mathrm{sm}}}{2}\bar{\mathcal V}_{\betatilt}
    }{1-\gamma}
\end{equation}
decomposes the sequential-control loss into six distinct terms: the structural support gap $\bar\Delta_\rho$, the coarse-anchor selection error $\bar\eps_{\mathrm{sel}}$, the finite-particle representation error $\bar\eps_{\mathrm{rep}}$, the finite-temperature term $\betatilt^{-1}\log(M+1)$, the critic-error term $2\bar\eps_{\mathrm{crit}}$, and the barycentric interpolation penalty $(L_{\mathrm{sm}}/2)\bar{\mathcal V}_{\betatilt}$. 
The decomposition makes explicit where \QFRS{} can lose value. The anchor and particle representation determines whether sufficiently good actions are available, through the support gap $\bar\Delta_\rho$ and representation error $\bar\eps_{\mathrm{rep}}$; learning determines whether those actions are identified reliably, through $\bar\eps_{\mathrm{sel}}$ and $\bar\eps_{\mathrm{crit}}$; and the value-tilted refinement mechanism introduces the finite-temperature and interpolation terms. 

Thus, the final guarantee connects the method's coarse-to-fine representation directly to its online decision rule: \emph{richer local search reduces structural bias, while reliable value information is needed to exploit that additional flexibility.}

\paragraph{Epistemic uncertainty determines how aggressively to tilt.}
The final result addresses this reliability--aggressiveness tradeoff directly.
In Section~\ref{sec:uncertainty_adaptation}, the tilting intensity is chosen
statewise according to
\[
    \betatilt(s)
    =
    1/u_{\mathrm{epi}}(s),
\]
where $u_{\mathrm{epi}}(s)$ is an ensemble-based surrogate for local epistemic
uncertainty.  Theorem~\ref{thm:adaptive-local-tilting} isolates the benefit of adapting the tilting intensity to epistemic uncertainty.   The resulting particle distribution is
the exact optimizer of an uncertainty-weighted KL-regularized improvement
problem,
\[
    \max_{\pi}
    \left\{
        \mathbb{E}_{\pi}[\widehat q]
        -
        u_{\mathrm{epi}}(s)
        \KL(\pi\|\mu)
    \right\}.
\]
Thus epistemic uncertainty directly determines the penalty for departing from
the flow-reversed  reference distribution.

Under the finite-bank calibration condition that critic error is proportional
to $u_{\mathrm{epi}}(s)$, the same theorem gives a true-value selection bound
whose error contracts linearly with the epistemic scale.  It also shows that
reliable preference between two candidates is governed by the standardized
value gap
$(q_j-q_k)/u_{\mathrm{epi}}(s)$,
rather than by the unnormalized critic difference alone.  Moreover, when
critic perturbations occur on the same uncertainty scale, adaptive
$\betatilt(s)=1/u_{\mathrm{epi}}(s)$ yields a scale-free stability bound:
sensitivity of the tilted weights depends on relative critic error rather than
growing with the absolute uncertainty level.  
Corollary~\ref{cor:adaptive-finite-cloud} gives the most direct finite-cloud interpretation of adaptive $\betatilt$.  The gap between the best represented local particle and the population value achieved by the learned weighting is proportional to the epistemic uncertainty scale:
\[
    \text{adaptive weighting error}
    =
    \mathcal O
    \left(
        u_{\mathrm{epi}}(s)
    \right).
\]
Thus, when the critic is poorly known, the method remains deliberately soft; as critic uncertainty contracts, the finite-temperature and critic-estimation terms contract together.

\paragraph{Overall picture.}
The results therefore form a single structured refinement hierarchy:
\begin{table}[ht]
\centering
\renewcommand{\arraystretch}{1.3}
\begin{tabular}{|l|l|}
\Xhline{1.2pt}
\textbf{Mechanism} & \textbf{Effect} \\
\Xhline{1.2pt}
online flow-reversal
  & small structured online action class \\
local particles
  & controlled relaxation of flow-reversed anchor support \\
exponential tilting
  & value-directed exploitation of that support \\
adaptive $\betatilt$
  & epistemically calibrated exploitation \\
\Xhline{1.2pt}
\end{tabular}
\end{table}

The first two results characterize the benefit and cost of the online flow-reversed action anchor
abstraction; the particle result describes how that restriction can be
relaxed; and the tilting results describe how the resulting capacity should be
used once it is available.  In this sense, flow-reversal of the coarse action anchors determines the structured
hypothesis class, particles determine how much local expressiveness is exposed
to learning, and epistemically adaptive value tilting determines how strongly
the learned critic should exploit that expressiveness.

\section{Online Flow-Reversal Abstraction and Latent Search Cost}
\label{sec:persistent-abstraction-separation}
We first isolate the statistical consequence of using the state-conditioned bank of flow-reversed action anchors as a \emph{persistent action
abstraction}.  The result is stated for a one-decision episodic latent Markov decision process (MDP).  This is exactly a
fixed-confidence Bernoulli best-arm identification problem; a one-step episodic MDP (see, e.g., \cite{lattimore2020bandit,kaufmann2016complexity}).  The construction separates the cost of identifying a useful latent mode from
state-distribution shift, temporal credit assignment, and function approximation.

\paragraph{Latent-mode MDPs.}
Fix integers $C$ and $N$ such that
$1 \le C < N$.
Let
$[N] := \{1,\ldots,N\}$
and $\gC := [C]$.
The set $\gC$ indexes the $C$ latent modes generated by the flow-reversed action anchors abstraction, while
$[N]\setminus\gC$ indexes the remaining latent modes available to a learner that retains the
full latent action class. It suffices to analyze this finite latent-mode subclass: any learner that is
uniformly correct over the full continuous latent space must also be correct
on this $N$-point restriction, so any lower bound here carries to unrestricted
continuous latent-space learning.

Choose distinct latent actions
$z_1,\ldots,z_N \in \gZ$.
For each mean vector
\[
    \vmu=(\mu_1,\ldots,\mu_N)\in[0,1]^N,
\]
define the one-decision episodic latent MDP---namely, the learner chooses one latent action, observes binary success or failure, and the episode ends---$\gM_{\vmu}$ as follows.  The state space consists of
an initial state $s_0$ and a terminal state $s_{\mathrm{term}}$.  At $s_0$, selecting latent
$z_i$ produces a reward
$R \sim \operatorname{Bernoulli}(\mu_i)$
and transitions deterministically to $s_{\mathrm{term}}$.  Each episode therefore produces one
online reward observation.  

At the unique decision state $s_0$, define the flow-reversed anchor map $F$ by
\[F(s_0,c)=z_c,\qquad c\in\gC, \]
so that $z_1,\ldots,z_C$ are precisely the latent actions available under the online flow-reversal abstraction.

For this MDP, set
\[
    Q_{\vmu}^\star(s_0,z_i)=\mu_i
    \qquad\text{and}\qquad
    V_{\vmu}^\star(s_0)=\mu^\star(\vmu),
\quad \text{where}\;\;
\mu^\star(\vmu)
    :=
    \max_{i\in[N]}\mu_i.\]
The best mean available within the \emph{online flow-reversal abstraction} is
$\mu_{\mF}^\star(\vmu)
    :=
    \max_{c\in\gC}\mu_c$.
The flow-reversal abstraction bias on $\gM_{\vmu}$ is
\begin{equation}
    \label{eq:mode-frs-bias}
    \Delta_{\mF}(\vmu)
    :=
    \mu^\star(\vmu)
    -
    \mu_{\mF}^\star(\vmu)
    =
    V_{\vmu}^\star(s_0)
    -
    \max_{c\in\gC}
    Q_{\vmu}^\star(s_0,F(s_0,c)).
\end{equation}

\paragraph{Online learners and the correctness criterion.}
An online learner is specified by a sequence of action-selection kernels and a terminal decision
rule.  Before online interaction, the learner may be given the exact flow-reversed mode Bernoulli laws,
equivalently the vector
$W_{\mF}(\vmu)
    :=
    (\mu_c)_{c\in\gC}$.
This side information is at least as informative as any finite FRS prefill consisting only
of samples from the flow-reversed modes.  In Theorem~\ref{thm:persistent-abstraction-separation} below, part~\textnormal{\it a.}~uses no such side information;
part~\textnormal{\it b.}~grants it to the full-class learner in order to strengthen the lower bound.

At online round $t\in[n]$, the learner selects an index $I_t\in[N]$ as a possibly
randomized function of $W_{\mF}(\vmu)$ and the previous online observations, and then
observes
$R_t\,|\, I_t
    \sim
    \operatorname{Bernoulli}(\mu_{I_t})$.
Conditionally on the selected indices, the rewards are independent across rounds.  After $n$
online rounds, the learner returns an index
$\widehat I_n\in[N]$.
For each $i\in[N]$, define the online sample count
\[
    N_i(n)
    :=
    \sum_{t=1}^n
    \1\{I_t=i\}.
\]

A learner is \emph{flow-reversal-restricted} if
$I_t\in\gC$ for every $t$ and $\widehat I_n\in \gC$
almost surely.  A learner is \emph{full-class} if it may select and return any index in $[N]$.

Let $\mathfrak M$ be a class of latent-mode MDPs.  For accuracy $\eps>0$ and failure
probability $\alpha\in(0,1)$, a learner is $(\eps,\alpha)$-correct on $\mathfrak M$ if, for
every $\gM_{\vmu}\in\mathfrak M$, the guarantee holds:
\begin{equation}
    \label{eq:pac-correctness}
    \mathbb P_{\vmu}
    \left(
        \mu^\star(\vmu)
        -
        \mu_{\widehat I_n}
        \le
        \eps
    \right)
    \ge
    1-\alpha.
\end{equation}

\subsection{Technical lemmas}
We need some technical lemmas before we prove the first main result.

For $p,q\in[0,1]$, write
\[
    \KLop(p,q)
    :=
    p\log(p/q)
    +(1-p)\log((1-p)/(1-q))
\]
for the binary relative entropy, with the standard conventions
\[
    0\log\frac{0}{q}:=0,
    \qquad
    p\log\frac{p}{0}:=+\infty\ \text{for }p>0,
\]
and the analogous conventions for the second term.

The following is a slight extension of the standard adaptive KL divergence
decomposition to include instance-independent side information and a terminal
randomized output; see
\citet[Lemma~15.1]{lattimore2020bandit} and
\citet[Lemma~1]{kaufmann2016complexity}.
We state it here in an abstract form and specialize it to the
Bernoulli latent-mode construction below.

\begin{lemma}[Adaptive change of measure]
\label{lem:adaptive-change-of-measure}
Let $\theta$ and $\theta'$ be two problem instances with a common finite
action set $[N]:=\{1,\ldots,N\}$. For each action $i\in[N]$, let
$P_{\theta,i}$ and $P_{\theta',i}$ denote the corresponding observation
distributions under the two instances.
Consider an adaptive learner run for exactly $n$ rounds. Before the first
round, the learner receives side information $W$ having the same distribution
under $\theta$ and $\theta'$. At round $t$, the learner selects an action
$A_t\in[N]$ according to a possibly randomized kernel depending on the
history
\[
    H_{t-1}
    :=
    \bigl(
        W,
        A_1,Y_1,\ldots,A_{t-1},Y_{t-1}
    \bigr),
\]
and then observes
$Y_t\sim P_{\theta,A_t}$
under instance $\theta$, and
$Y_t\sim P_{\theta',A_t}$
under instance $\theta'$.
After round $n$, the learner may produce a possibly randomized terminal
output $\widehat U$ based on the interaction history.
Assume that the learner uses the same action-selection and terminal-output
kernels under the two instances. Define the complete transcript
\[
    \mathcal T_n
    :=
    \bigl(
        W,
        A_1,Y_1,\ldots,A_n,Y_n,
        \widehat U
    \bigr),
\]
and let $\mathbb P_\theta^{(n)}$ and
$\mathbb P_{\theta'}^{(n)}$ denote the distributions of $\mathcal T_n$
under $\theta$ and $\theta'$, respectively. Also define
$N_i(n)
    :=
    \sum_{t=1}^n \1\{A_t=i\}$.
Then
\begin{equation}
    \label{eq:adaptive-kl-decomposition}
    \KL
    \left(
        \mathbb P_\theta^{(n)}
        \,\middle\|\,
        \mathbb P_{\theta'}^{(n)}
    \right)
    =
    \sum_{i=1}^N
    \mathbb{E}_\theta[N_i(n)]
    \KL
    \left(
        P_{\theta,i}
        \,\middle\|\,
        P_{\theta',i}
    \right),
\end{equation}
so that, for every event $\mathcal E$ measurable with respect to
$\mathcal T_n$, the lower bound holds:
\begin{equation}
    \label{eq:event-change-of-measure}
    \sum_{i=1}^N
    \mathbb{E}_\theta[N_i(n)]
    \KL
    \left(
        P_{\theta,i}
        \,\middle\|\,
        P_{\theta',i}
    \right)
    \geq
    \KLop
    \left(
        \mathbb P_\theta^{(n)}(\mathcal E),
        \mathbb P_{\theta'}^{(n)}(\mathcal E)
    \right).
\end{equation}
\end{lemma}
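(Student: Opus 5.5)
The plan is to prove the standard transcript chain-rule identity and then invoke the data-processing inequality. Let $\mathbb P_\theta^{(n)}$ denote the transcript law. By the same-kernels assumption, it factorizes as
\[
p_W(w)\prod_{t=1}^n \kappa_t(a_t\mid h_{t-1})\,P_{\theta,a_t}(dy_t)\;\kappa_{\mathrm{out}}(\hat u\mid h_n).
\]
The same factorization holds under $\theta'$, with only the observation factors $P_{\theta,a_t}$ replaced by $P_{\theta',a_t}$. I would first dispose of the degenerate case. If some arm $i$ has $\mathbb E_\theta[N_i(n)]>0$ and $P_{\theta,i}\not\ll P_{\theta',i}$, then both sides of \eqref{eq:adaptive-kl-decomposition} equal $+\infty$: the left side because the transcript law under $\theta$ is then not absolutely continuous with respect to that under $\theta'$. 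Otherwise, I would take the Radon--Nikodym derivative of the transcript laws. The factors for $W$, for the selection kernels, and for the terminal kernel are common to both instances and cancel. What remains is the log-likelihood ratio
\[
L_n=\sum_{t=1}^n \log\frac{dP_{\theta,A_t}}{dP_{\theta',A_t}}(Y_t).
\]

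The next step is to compute $\mathbb E_\theta[L_n]$ by conditioning on $(H_{t-1},A_t)$ at each round. Conditionally on this, $Y_t\sim P_{\theta,A_t}$, so the conditional expectation of the $t$-th summand is $\KL(P_{\theta,A_t}\|P_{\theta',A_t})$. Taking expectations, summing over $t$, and writing $\KL(P_{\theta,A_t}\|P_{\theta',A_t})=\sum_i \1\{A_t=i\}\KL(P_{\theta,i}\|P_{\theta',i})$ gives \eqref{eq:adaptive-kl-decomposition}. The one technical point is the interchange of expectation and sum when the KL terms may be infinite or the log-ratios are not integrable. I would handle it by splitting each log-ratio into positive and negative parts, noting that the negative part is integrable since $\mathbb E[(\log \tfrac{dP}{dQ})^-]\le 1/e$, and then applying the tower property to each part, which is legitimate for nonnegative variables. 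This is the main (mild) obstacle; the rest is bookkeeping.

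For \eqref{eq:event-change-of-measure}, I would apply the data-processing inequality for KL to the measurable map $\mathcal T_n\mapsto \1_{\mathcal E}(\mathcal T_n)$. This pushes the two transcript laws forward to $\operatorname{Bernoulli}(\mathbb P_\theta^{(n)}(\mathcal E))$ and $\operatorname{Bernoulli}(\mathbb P_{\theta'}^{(n)}(\mathcal E))$, whose divergence is exactly $\KLop(\cdot,\cdot)$ under the stated conventions. The inequality then follows from the identity. Including $W$ and $\widehat U$ in the transcript costs nothing: their kernels do not depend on the instance and cancel in the ratio, which is why $\mathcal E$ may depend on the terminal output, as in \citet{kaufmann2016complexity}.
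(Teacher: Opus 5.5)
Your proposal is correct and follows essentially the same route as the paper. The paper cites the chain rule for relative entropy, with zero contribution from $W$, the selection kernels and the terminal kernel, and then applies data processing to $\1\{\mathcal E\}$; you unpack that chain rule as the expectation of the log-likelihood ratio $L_n$ via the tower property, and your treatment of the non-absolutely-continuous case and of integrability (through $\mathbb E[(\log \tfrac{dP}{dQ})^-]\le 1/e$) supplies measure-theoretic details the paper leaves implicit.
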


\begin{proof}
Fix a round $t$. Conditional on a realized history $H_{t-1}=h$, the
learner uses the same conditional distribution for $A_t$ under instances
$\theta$ and $\theta'$. Hence the action-selection step contributes zero
conditional relative entropy.

Conditional on $(H_{t-1},A_t=i)$, the observation distribution is
$P_{\theta,i}$ under instance $\theta$ and $P_{\theta',i}$ under
instance $\theta'$. The chain rule for relative entropy therefore gives
\begin{align}
    \KL
    \left(
        \mathbb P_\theta^{(n)}
        \,\middle\|\,
        \mathbb P_{\theta'}^{(n)}
    \right)
    &=
    \sum_{t=1}^n
    \mathbb{E}_\theta
    \left[
        \KL
        \left(
            P_{\theta,A_t}
            \,\middle\|\,
            P_{\theta',A_t}
        \right)
    \right].
    \label{eq:adaptive-kl-chain-rule}
\end{align}
The initial side information contributes zero relative entropy because it
has the same distribution under the two instances. Likewise, the terminal
output contributes zero conditional relative entropy because, conditional
on the preceding interaction history, the learner uses the same
terminal-output kernel under $\theta$ and $\theta'$.

Expanding the expectation in
\eqref{eq:adaptive-kl-chain-rule} over the selected action gives
\begin{align*}
    \sum_{t=1}^n
    \mathbb{E}_\theta
    \left[
        \KL
        \left(
            P_{\theta,A_t}
            \,\middle\|\,
            P_{\theta',A_t}
        \right)
    \right]
    &=
    \sum_{t=1}^n
    \sum_{i=1}^N
    \mathbb P_\theta(A_t=i)
    \KL
    \left(
        P_{\theta,i}
        \,\middle\|\,
        P_{\theta',i}
    \right)
    \\
    &\qquad=
    \sum_{i=1}^N
    \mathbb E_\theta
    \left[
        \sum_{t=1}^n \1\{A_t=i\}
    \right]
    \KL
    \left(
        P_{\theta,i}
        \,\middle\|\,
        P_{\theta',i}
    \right)
    \\
    &\qquad=
    \sum_{i=1}^N
    \mathbb E_\theta[N_i(n)]
    \KL
    \left(
        P_{\theta,i}
        \,\middle\|\,
        P_{\theta',i}
    \right),
\end{align*}
which proves \eqref{eq:adaptive-kl-decomposition}.

Now fix any event $\mathcal E$ measurable with respect to the complete
transcript $\mathcal T_n$. Mapping the transcript to the binary indicator
$\1\{\mathcal E\}$ is a measurable transformation. By the data-processing
inequality,
\begin{align*}
    \KL
    \left(
        \mathbb P_\theta^{(n)}
        \,\middle\|\,
        \mathbb P_{\theta'}^{(n)}
    \right)
    &\geq
    \KLop
    \left(
        \mathbb P_\theta^{(n)}(\mathcal E),
        \mathbb P_{\theta'}^{(n)}(\mathcal E)
    \right),
\end{align*}
because under the two instances the indicator $\1\{\mathcal E\}$ is
Bernoulli with parameters
$\mathbb P_\theta^{(n)}(\mathcal E)$ and
$\mathbb P_{\theta'}^{(n)}(\mathcal E)$, respectively.
Combining this inequality with
\eqref{eq:adaptive-kl-decomposition} proves
\eqref{eq:event-change-of-measure}.
\end{proof}

The next technical lemma are elementary bounds on binary relative entropy, and are standard;
see, e.g., \citet[Lemma~10.2]{lattimore2020bandit} for its basic
monotonicity properties and \citet{kaufmann2016complexity} for their use
in fixed-confidence bandit lower bounds.
\begin{lemma}[Binary relative-entropy bounds]
    \label{lem:binary-kl-bounds}
    Let $\alpha\in(0,1/2)$.  If $p\ge1-\alpha$ and $q\le\alpha$, then
    \begin{equation}
        \label{eq:binary-kl-monotonicity}
        \KLop(p,q)
        \ge
        \KLop(1-\alpha,\alpha).
    \end{equation}
    If, in addition, $\alpha\le1/4$, then
    \begin{equation}
        \label{eq:binary-kl-alpha-lower}
        \KLop(1-\alpha,\alpha)
        \ge
        \log(3/(4\alpha))/2.
    \end{equation}
    Finally, for every $\eps\in(0,1/8]$, the estimate holds:
    \begin{equation}
        \label{eq:bernoulli-instance-kl-upper}
        \KLop
        \left(
            1/2-2\eps, 1/2+2\eps
        \right)
        \leq 128\eps^2/3.
    \end{equation}
\end{lemma}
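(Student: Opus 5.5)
The proof splits into the three displayed inequalities, each handled by elementary calculus on the binary relative entropy.

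For \eqref{eq:binary-kl-monotonicity}, we use that for fixed $q$ the map $p\mapsto\KLop(p,q)$ is convex with minimum $0$ at $p=q$. Hence it is nondecreasing on $[q,1]$. Similarly, for fixed $p$ the map $q\mapsto\KLop(p,q)$ is nonincreasing on $[0,p]$, since $\partial_q\KLop(p,q)=(q-p)/(q(1-q))\le 0$ there. Because $\alpha<1/2$, we have $q\le\alpha<1-\alpha\le p$. Lowering $q$ down from $\alpha$ only increases $\KLop$, which gives $\KLop(p,q)\ge\KLop(p,\alpha)$. Then, since $p\ge 1-\alpha\ge\alpha$, moving $p$ down to $1-\alpha$ only decreases $\KLop(\cdot,\alpha)$, which gives $\KLop(p,\alpha)\ge\KLop(1-\alpha,\alpha)$. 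If $q=0$, the left side is $+\infty$ and there is nothing to prove.

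For \eqref{eq:binary-kl-alpha-lower}, expand directly:
\begin{equation}
\KLop(1-\alpha,\alpha)=(1-2\alpha)\log\frac{1-\alpha}{\alpha}.
\end{equation}
When $\alpha\le1/4$ we have $1-2\alpha\ge1/2$. Also $\log\frac{1-\alpha}{\alpha}\ge\log\frac{3}{4\alpha}$, because $1-\alpha\ge3/4$. Multiplying these two bounds gives the claim; both factors are positive since $\alpha\le 1/4$. An alternative route is the Bretagnolle--Huber style bound $\KLop(1-\alpha,\alpha)\ge\log\frac{1}{2.4\alpha}$, but the direct computation above is cleaner and suffices.

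For \eqref{eq:bernoulli-instance-kl-upper}, use the $\chi^2$ upper bound $\KLop(p,q)\le (p-q)^2/(q(1-q))$, which follows from $\log x\le x-1$. With $p=1/2-2\eps$ and $q=1/2+2\eps$, the numerator is $16\eps^2$ and the denominator is $1/4-4\eps^2$. Since $\eps\le1/8$, we have $4\eps^2\le1/16$, so the denominator is at least $3/16$. Therefore $\KLop\le 16\eps^2\cdot 16/3=256\eps^2/3$.

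This is a factor of $2$ weaker than the stated $128\eps^2/3$, and this third bound is the only real obstacle. To recover the missing factor, exploit the symmetry $p=1-q$. We get
\begin{equation}
\KLop(1/2-x,1/2+x)=2x\log\frac{1+2x}{1-2x}=4x\,\operatorname{artanh}(2x), \qquad x=2\eps .
\end{equation}
Using $\operatorname{artanh}(y)\le y/(1-y^2)$ for $y\in[0,1)$ (compare the series term by term), this gives
\begin{equation}
\KLop\le \frac{8x^2}{1-4x^2}=\frac{32\eps^2}{1-16\eps^2}.
\end{equation}
For $\eps\le1/8$ we have $16\eps^2\le 1/4$, hence $\KLop\le 32\eps^2\cdot 4/3=128\eps^2/3$, exactly the stated constant. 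So I will use this symmetric artanh computation rather than the generic $\chi^2$ bound.
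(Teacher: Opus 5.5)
Your proposal is correct and matches the paper's proof essentially step for step: coordinatewise monotonicity of $\KLop$ (using $\partial_q\KLop(p,q)=(q-p)/(q(1-q))$) for the first bound, the closed form $\KLop(1-\alpha,\alpha)=(1-2\alpha)\log\frac{1-\alpha}{\alpha}$ for the second, and, for the third, the symmetric reduction to $x\log\frac{1+x}{1-x}$ combined with $\log\frac{1+x}{1-x}\le\frac{2x}{1-x^2}$. The differences are cosmetic: the paper proves this last inequality by showing that $g(x)=\frac{2x}{1-x^2}-\log\frac{1+x}{1-x}$ satisfies $g(0)=0$ and $g'(x)=4x^2/(1-x^2)^2\ge0$, rather than by your term-by-term series comparison, and it does not include your $\chi^2$ detour (which, as you note, loses a factor of $2$ and which you rightly discard).
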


\begin{proof}
    First suppose $p,q\in(0,1)$.  Directly differentiating  $\KLop(p,q)$ yields
    \[
        \frac{\partial}{\partial p}
        \KLop(p,q)
        =
        \log
        \frac{p(1-q)}{q(1-p)}
    \quad\text{
    and}
    \quad
        \frac{\partial}{\partial q}
        \KLop(p,q)
        =
        \frac{q-p}{q(1-q)}.
    \]
    On the region $p>q$, the first derivative is positive and the second derivative is
    negative.  Since $p\ge1-\alpha>\alpha\ge q$, the minimum over interior points satisfying
    $p\ge1-\alpha$ and $q\le\alpha$ is attained at
    $(p,q)=(1-\alpha,\alpha)$.  If $p=1$ or $q=0$, the same inequality follows by taking
    limits from the interior; in the case $q=0<p$, the left-hand side is $+\infty$.  This
    proves \eqref{eq:binary-kl-monotonicity} for all $p,q\in[0,1]$ covered by the statement.

    A direct calculation yields
    $\KLop(1-\alpha,\alpha)
        =
        (1-2\alpha)
        \log\frac{1-\alpha}{\alpha}$.
    If $\alpha\leq1/4$, then
    $1-2\alpha\geq\frac12$
       and $1-\alpha\geq\frac34$.
    Therefore
    \[
        \KLop(1-\alpha,\alpha)
        \ge
        \frac12
        \log\frac{3}{4\alpha},
    \]
    proving \eqref{eq:binary-kl-alpha-lower}.

    Set
    $x:=4\eps$.
    Since $\eps\le1/8$, we have $x\in(0,1/2]$.  Using
    \[
        1-\left(1/2-2\eps\right)
        =
        1/2+2\eps
    \quad
    \text{and}
    \quad
        1-\left(1/2+2\eps\right)
        =1/2-2\eps,
    \]
    we obtain
    \begin{align*}
        \KLop
        \left(
            1/2-2\eps,1/2+2\eps
        \right)
        &=
        4\eps
        \log\frac{1+4\eps}{1-4\eps}=
        x\log\frac{1+x}{1-x}.
    \end{align*}
    For $x\in[0,1)$, define
    \[
        g(x)
        :=
        2x/(1-x^2)
        -
        \log((1+x)/(1-x)).
    \]
    Then $g(0)=0$ and
    $g'(x)
        =4x^2/(1-x^2)^2
        \geq0$.
    Hence
    \[
        \log((1+x)/(1-x))
        \leq
     2x/(1-x^2).
    \]
    It follows that
    \[
        x\log((1+x)/(1-x))
        \leq
        2x^2/(1-x^2)
        \leq
       8x^2/3
        =
        128\eps^2/3,
    \]
    where the second inequality uses $x^2\leq1/4$.  This proves
    \eqref{eq:bernoulli-instance-kl-upper}.
\end{proof}
\subsection{Main result on online flow-reversal abstraction sample complexity}
With the technical lemmas in hand, we now state the main sample complexity separation result. 
\begin{theorem}[Online flow-reversed action anchor abstraction sample-complexity]
    \label{thm:persistent-abstraction-separation}
    Fix integers $1\le C<N$, accuracy $\eps\in(0,1/8]$, and failure probability
    $\alpha\in(0,1/4]$.

    \begin{enumerate}[label={\it \alph*.}, topsep=0pt,itemsep=0pt,leftmargin=*]
        \item \textbf{Upper bound for an online flow-reversal abstraction.}
        Consider the class
   \[\mathfrak M_{\mF}(\eps/2)
            :=\left\{
                \gM_{\vmu}:
                \Delta_{\mF}(\vmu)
                \leq
                \eps/2
            \right\}.
            \]
        There exists an flow-reversal-restricted learner that uses no prior knowledge and is $(\eps,\alpha)$-correct on
        $\mathfrak M_{\mF}(\eps/2)$ after
        \begin{equation}
            \label{eq:qfrs-upper-sample-complexity}
            n_{\mathrm{FR}}
            :=
            C
            \left\lceil
                \frac{8}{\eps^2}
                \log\frac{2C}{\alpha}
            \right\rceil\quad\text{episodes}.
        \end{equation}

        \item \textbf{Lower bound for retaining the full latent class.}
        Define a base instance $\gM_{\vmu^{(0)}}$ by
        \begin{equation}
            \label{eq:base-instance}
            \mu_1^{(0)}
            :=
            1/2,
            \qquad
            \mu_i^{(0)}
            :=
   1/2-2\eps,
            \quad
            i\in[N]\setminus\{1\}.
        \end{equation}
        For each $j\in[N]\setminus\gC$, define an alternative instance
        $\gM_{\vmu^{(j)}}$ by
        \begin{equation}
            \label{eq:alternative-instance}
            \mu_i^{(j)}
            :=
            \begin{cases}
                1/2+2\eps,
                & i=j,\\[1mm]
                \mu_i^{(0)},
                & i\ne j.
            \end{cases}
        \end{equation}
        Let
        $\mathfrak M_{\mathrm{full}}
            :=
            \left\{
                \gM_{\vmu^{(0)}}
            \right\}
            \cup
            \left\{
                \gM_{\vmu^{(j)}}:
                j\in[N]\setminus\gC
            \right\}$.
        Consider any full-class learner that is given the exact flow-reversed-mode laws
        $W_{\mF}(\vmu)$ before online interaction and is
        $(\eps,\alpha)$-correct on $\mathfrak M_{\mathrm{full}}$ after $n$ online
        episodes.  Then, on the zero-bias base instance $\gM_{\vmu^{(0)}}$, every
        non-flow-reversed mode satisfies
        \begin{equation}
            \label{eq:per-mode-lower-bound}
            \mathbb{E}_{\vmu^{(0)}}[N_j(n)]
            \geq
                \KLop(1-\alpha,\alpha)
           /\KLop
                \left(
                    1/2-2\eps,1/2+2\eps
                \right),
            \qquad
            j\in[N]\setminus\gC,
        \end{equation}
        so that 
                \begin{equation}
            \label{eq:full-latent-total-lower-bound}
            n
            \ge
            \frac{3(N-C)}{256\eps^2}
            \log\frac{3}{4\alpha}.
        \end{equation}

        \item \textbf{Zero-bias separation.}
        The base instance satisfies
        $\Delta_{\mF}(\vmu^{(0)})=0$.
        Thus the lower bound in part \textnormal{\it b.} holds on an instance for which restricting
        the action class to the online flow-reversal abstraction incurs no control bias.  The additional
        samples are required solely because a learner that is correct on
        $\mathfrak M_{\mathrm{full}}$ must rule out every alternative in which a non-flow-reversed mode is
        superior.
    \end{enumerate}
\end{theorem}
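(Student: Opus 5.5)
For part \textit{a.}, the learner is uniform allocation over the $C$ flow-reversed modes followed by empirical-best selection. Set $m:=\lceil 8\eps^{-2}\log(2C/\alpha)\rceil$ and sample each $c\in\gC$ exactly $m$ times, for a total of $n_{\mathrm{FR}}=Cm$ episodes. Return $\widehat I_n\in\argmax_{c\in\gC}\widehat\mu_c$. Hoeffding's inequality for Bernoulli averages gives $\mathbb P(|\widehat\mu_c-\mu_c|>\eps/4)\le 2\exp(-2m(\eps/4)^2)=2\exp(-m\eps^2/8)\le \alpha/C$. A union bound over $\gC$ then puts every mode's estimate within $\eps/4$ of its mean with probability at least $1-\alpha$. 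On that event, the returned mode satisfies $\mu_{\widehat I_n}\ge \widehat\mu_{\widehat I_n}-\eps/4\ge \widehat\mu_{c^\star}-\eps/4\ge \mu_{\mF}^\star(\vmu)-\eps/2$. Combined with $\Delta_{\mF}(\vmu)\le\eps/2$, this gives $\mu^\star(\vmu)-\mu_{\widehat I_n}\le\eps$, which is \eqref{eq:pac-correctness}. No prior knowledge is used.

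For part \textit{b.}, fix $j\in[N]\setminus\gC$ and apply Lemma~\ref{lem:adaptive-change-of-measure} with $\theta=\vmu^{(0)}$, $\theta'=\vmu^{(j)}$, $W=W_{\mF}(\vmu)$, and $\widehat U=\widehat I_n$. The side information has the same law under both instances: since $j\notin\gC$, the means $(\mu_c)_{c\in\gC}$ are identical in the two instances, so $W$ is the same deterministic vector. The two instances differ only in arm $j$, so the left side of \eqref{eq:event-change-of-measure} collapses to $\mathbb E_{\vmu^{(0)}}[N_j(n)]\,\KLop(1/2-2\eps,1/2+2\eps)$. Take the event $\mathcal E:=\{\widehat I_n\ne j\}$.
\begin{itemize}
\item Under $\vmu^{(j)}$, arm $j$ is the unique $\eps$-optimal arm, since every other arm has gap at least $2\eps>\eps$. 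Correctness therefore gives $\mathbb P_{\vmu^{(j)}}(\mathcal E)\le\alpha$.
\item Under $\vmu^{(0)}$, arm $j$ has gap $2\eps>\eps$, so correctness gives $\mathbb P_{\vmu^{(0)}}(\mathcal E)\ge1-\alpha$.
\end{itemize}
The monotonicity bound \eqref{eq:binary-kl-monotonicity} of Lemma~\ref{lem:binary-kl-bounds} yields \eqref{eq:per-mode-lower-bound}. Then \eqref{eq:binary-kl-alpha-lower} and \eqref{eq:bernoulli-instance-kl-upper} give $\mathbb E_{\vmu^{(0)}}[N_j(n)]\ge \frac{3}{256\eps^2}\log\frac{3}{4\alpha}$. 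Summing over the $N-C$ non-flow-reversed modes and using $n=\sum_i N_i(n)\ge\sum_{j\notin\gC}N_i(n)$ (taking expectations, since $n$ is deterministic) gives \eqref{eq:full-latent-total-lower-bound}.

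For part \textit{c.}, note that $\mu_1^{(0)}=1/2$ is the maximum and $1\in\gC$ because $C\ge1$. Hence $\mu_{\mF}^\star=\mu^\star$ and $\Delta_{\mF}(\vmu^{(0)})=0$.

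The main point needing care is the change-of-measure step. One must verify that $W$ is genuinely instance-independent across the pair $\vmu^{(0)},\vmu^{(j)}$, which holds precisely because the perturbed arm lies outside $\gC$. One must also use the strict inequality $2\eps>\eps$ so that the correctness event is unambiguous in both instances. The $(\eps,\alpha)$-correctness needs to be invoked only on $\mathfrak M_{\mathrm{full}}$.
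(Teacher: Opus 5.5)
Your proposal is correct and follows the paper's proof essentially step for step. Part \textit{a} uses the same uniform-allocation learner with Hoeffding and a union bound; part \textit{b} uses the same change-of-measure argument between $\vmu^{(0)}$ and $\vmu^{(j)}$, with side information identical across the pair; and part \textit{c} uses the same observation that mode $1\in\gC$ is optimal. The differences are cosmetic: you use the event $\{\widehat I_n\neq j\}$ where the paper uses $\{\widehat I_n=1\}$ (both have probability at least $1-\alpha$ under $\vmu^{(0)}$ and at most $\alpha$ under $\vmu^{(j)}$), and the sum in your last step should read $\sum_{j\notin\gC}N_j(n)$.
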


\begin{proof}
    We prove the three parts in order.

    \paragraph{Proof of part \textnormal{\it a}.}
    Let  $m
        :=
        \left\lceil
            \frac{8}{\eps^2}
            \log\frac{2C}{\alpha}
        \right\rceil$.
    The learner samples each flow-reversed mode $c\in\gC$ exactly $m$ times.  Let
    $\widehat\mu_c:=
        \frac{1}{m}
        \sum_{\ell=1}^m
        R_{c,\ell}$
    be the empirical mean of the $m$ rewards obtained from flow-reversed mode $c$.  The learner returns
    $\widehat I
        \in
        \argmax_{c\in\gC}
        \widehat\mu_c$,
    with any fixed deterministic tie-breaking rule.

    Define the event
    \[
        \gE
        :=
        \bigcap_{c\in\gC}
        \left\{
            |\widehat\mu_c-\mu_c|
            \leq\eps/4
        \right\}.
    \]
    By H\"oeffding's inequality, for each $c\in\gC$, we have that 
    \[
        \mathbb P_{\vmu}
        \left(
            |\widehat\mu_c-\mu_c|
            >
            \eps/4
        \right)
        \leq
        2\exp
        \left(
            -2m\eps^2/16
        \right)
        =
        2\exp
        \left(
            -m\eps^2/8
        \right).
    \]
    A union bound and the definition of $m$ yield
    \[
        \mathbb P_{\vmu}(\gE^c)
        \leq2C \exp
        \left(
            -m\eps^2/8
        \right)\leq\alpha.
    \]
    Fix
    $c_{\mF}^\star
        \in
        \argmax_{c\in\gC}\mu_c$.
    On the event $\gE$, we have that 
    \begin{align*}
        \mu_{\widehat I}
        &\ge
        \widehat\mu_{\widehat I}
        -
        \frac{\eps}{4}
        \ge
        \widehat\mu_{c_{\mF}^\star}
        -
        \frac{\eps}{4}
        \ge
        \mu_{c_{\mF}^\star}
        -
        \frac{\eps}{2}
        =
        \mu_{\mF}^\star(\vmu)
        -
        \frac{\eps}{2}.
    \end{align*}
    Therefore, on $\gE$, we have that 
    \begin{align*}
        \mu^\star(\vmu)
        -
        \mu_{\widehat I}
        &\le
        \mu^\star(\vmu)
        -
        \mu_{\mF}^\star(\vmu)
        +
        \frac{\eps}{2}
=
        \Delta_{\mF}(\vmu)
        +
        \frac{\eps}{2}
\leq
        \eps,
    \end{align*}
    where the final inequality uses
    $\Delta_{\mF}(\vmu)\le\eps/2$.  Since
    $\mathbb P_{\vmu}(\gE)\ge1-\alpha$, the learner is
    $(\eps,\alpha)$-correct on $\mathfrak M_{\mF}(\eps/2)$.  It uses exactly
    $Cm=n_{\mathrm{FR}}$ online episodes.

    \paragraph{Proof of part \textnormal{\it b}.}
  Since $\eps\le1/8$, all means in
    \eqref{eq:base-instance}--\eqref{eq:alternative-instance} belong to $[0,1]$.  Moreover,
    all instances in $\mathfrak M_{\mathrm{full}}$ agree on the flow-reversed modes---namely, we have that 
    \[
        W_{\mF}(\vmu^{(0)})
        =
        W_{\mF}(\vmu^{(j)})
        \qquad
        \text{for every }j\in[N]\setminus\gC.
    \]
    Thus the exact flow-reversal side information is identical under the base and every alternative.

    On the base instance, mode $1$ has mean $1/2$, while every other mode has mean
    $1/2-2\eps$.  Hence mode $1$ is the unique $\eps$-optimal mode, since every other
    mode has suboptimality gap $2\eps>\eps$.  Since the learner is
    $(\eps,\alpha)$-correct, we have that 
    \begin{equation}
        \label{eq:base-output-probability}
        \mathbb P_{\vmu^{(0)}}
        \left(
            \widehat I_n=1
        \right)
        \geq
        1-\alpha.
    \end{equation}
    Fix any $j\in[N]\setminus\gC$.  On the alternative instance
    $\gM_{\vmu^{(j)}}$, mode $j$ has mean $1/2+2\eps$, mode $1$ has mean $1/2$, and
    every remaining mode has mean $1/2-2\eps$.  Thus mode $j$ is the unique
    $\eps$-optimal mode.  Correctness on $\gM_{\vmu^{(j)}}$ therefore implies
    $\mathbb P_{\vmu^{(j)}}
        (\widehat I_n=j)
        \geq
        1-\alpha$,
    and hence
    \begin{equation}
        \label{eq:alternative-output-probability}
        \mathbb P_{\vmu^{(j)}}
        \left(
            \widehat I_n=1
        \right)
        \le
        \alpha.
    \end{equation}
    Now, we apply Lemma~\ref{lem:adaptive-change-of-measure} with
    $\vmu=\vmu^{(0)}$,
        $\bm{\lambda}=\vmu^{(j)}$,
        and
        $\mathcal{E}=\{\widehat I_n=1\}$.
    The two mean vectors differ only at coordinate $j$.  Therefore
    \begin{align*}
        \mathbb{E}_{\vmu^{(0)}}[N_j(n)]
        \KLop
        \left(
            \frac12-2\eps,
            \frac12+2\eps
        \right)\geq
        \KLop
        \left(
            \mathbb P_{\vmu^{(0)}}(\mathcal{E}),
            \mathbb P_{\vmu^{(j)}}(\mathcal{E})
        \right).
    \end{align*}
    Combining expressions \eqref{eq:base-output-probability} and 
    \eqref{eq:alternative-output-probability} with 
    Lemma~\ref{lem:binary-kl-bounds} yields
    \[
        \KLop
        \left(
            \mathbb P_{\vmu^{(0)}}(\mathcal{E}),
            \mathbb P_{\vmu^{(j)}}(\mathcal{E})
        \right)
        \ge
        \KLop(1-\alpha,\alpha).
    \]
    Dividing by the positive Bernoulli relative entropy proves
    \eqref{eq:per-mode-lower-bound}.

    Summing \eqref{eq:per-mode-lower-bound} over the $N-C$ indices in
    $[N]\setminus\gC$ yields 
        \begin{align}
           \textstyle \mathbb{E}_{\vmu^{(0)}}
            \left[
                \sum_{j\in[N]\setminus\gC}
                N_j(n)
            \right]
            &\geq
                (N-C)
                \KLop(1-\alpha,\alpha)/
                \KLop(
                    1/2-2\eps,
                    1/2+2\eps)
            \label{eq:exact-nonfrs-lower-bound}
        \end{align}
        Applying
    \eqref{eq:bernoulli-instance-kl-upper}, we further deduce that 
\begin{equation}\label{eq:explicit-nonfrs-lower-bound}\textstyle
     \mathbb{E}_{\vmu^{(0)}}
            \left[
                \sum_{j\in[N]\setminus\gC}
                N_j(n)
            \right]\geq\displaystyle \frac{3(N-C)}{128\eps^2}
           \KLop(1-\alpha,\alpha)\geq \frac{3(N-C)}{256\eps^2}
            \log\frac{3}{4\alpha}
\end{equation}
where the last inequality follows by  applying
    \eqref{eq:binary-kl-alpha-lower}.

    Finally, on every realized transcript, we have that 
$\sum_{j\in[N]\setminus\gC}
        N_j(n)
        \leq
        \sum_{i=1}^N N_i(n)
        =n$.
    Taking expectations under $\vmu^{(0)}$ and combining with
    \eqref{eq:explicit-nonfrs-lower-bound} proves
    \eqref{eq:full-latent-total-lower-bound}.

    \paragraph{Proof of part \textnormal{\it c}.}
    On the base instance, mode $1$ belongs to $\gC$ and has mean $1/2$, while every other
    mode has mean $1/2-2\eps$.  Consequently, we have that 
$\mu^\star(\vmu^{(0)})
        =
        \mu_{\mF}^\star(\vmu^{(0)})
        =
        \frac12$.
    By \eqref{eq:mode-frs-bias}, we have that 
  $\Delta_{\mF}(\vmu^{(0)})=0$.
    The lower bound in part \textnormal{\it b.} therefore holds on a zero-bias instance.  Each
    alternative $\vmu^{(j)}$ agrees with the base instance on every flow-reversed mode and differs only on
    the non-flow-reversed mode $j$.  Hence neither the exact flow-reversed-mode laws nor any prefill consisting only
    of flow-reversed-mode samples can distinguish the base instance from alternative $j$.  A learner that
    is correct on the full family must obtain the information required to rule out that alternative
    through online samples of mode $j$.  This is precisely the per-mode requirement in
    \eqref{eq:per-mode-lower-bound}.
\end{proof}

\begin{corollary}[Full-class learning with bank-restricted prefill]
    \label{cor:dsrl-prefill-lower-bound}
    Consider any learner whose online action class contains all $N$ latent modes and whose
    prefill, if present, is generated using only interactions with modes in $\gC$.  If the
    learner is $(\eps,\alpha)$-correct on $\mathfrak M_{\mathrm{full}}$, then the lower bounds
    \eqref{eq:per-mode-lower-bound}--\eqref{eq:full-latent-total-lower-bound} apply.  
\end{corollary}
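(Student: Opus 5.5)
The plan is to reduce Corollary~\ref{cor:dsrl-prefill-lower-bound} to part~\textit{b.}\ of Theorem~\ref{thm:persistent-abstraction-separation}. The point is that bank-restricted prefill is a special case of side information whose law does not depend on which instance of $\mathfrak M_{\mathrm{full}}$ is active. First I fix an arbitrary such learner. Its prefill consists of a (possibly adaptively generated) transcript $P$ of interactions with modes in $\gC$ only. All instances in $\mathfrak M_{\mathrm{full}}$ share the same means $(\mu_c)_{c\in\gC}$, namely $\mu_1=1/2$ and $\mu_c=1/2-2\eps$ for $c\in\gC\setminus\{1\}$. Hence the distribution of $P$ is identical under $\vmu^{(0)}$ and under every $\vmu^{(j)}$ with $j\in[N]\setminus\gC$. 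This holds because each prefill observation is drawn from a Bernoulli law indexed by a mode in $\gC$, and the prefill selection kernels do not depend on the instance.

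Next I set $W:=P$, or more generously $W:=(W_{\mF}(\vmu),P)$, where $P$ is generated from the exact laws $W_{\mF}(\vmu)$ together with independent randomness. In either case $W$ is side information with a common distribution across the instances. The online phase then fits exactly the hypotheses of Lemma~\ref{lem:adaptive-change-of-measure}: the action set is $[N]$, and the selection and terminal kernels depend only on $W$ and the online history. Alternatively, I could absorb the prefill rounds into the transcript as extra rounds. The KL chain rule would then assign them zero contribution, since modes in $\gC$ have identical laws under $\vmu^{(0)}$ and $\vmu^{(j)}$. Either route yields the same decomposition, with only the online counts $N_j(n)$ for $j\notin\gC$ carrying KL mass.

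With that identification, the rest of the proof of part~\textit{b.}\ goes through verbatim. Correctness gives $\mathbb P_{\vmu^{(0)}}(\widehat I_n=1)\ge1-\alpha$ and $\mathbb P_{\vmu^{(j)}}(\widehat I_n=1)\le\alpha$. The change of measure with Lemma~\ref{lem:binary-kl-bounds} then yields \eqref{eq:per-mode-lower-bound}, and summing over $j$ together with $\sum_j N_j(n)\le n$ gives \eqref{eq:full-latent-total-lower-bound}. The only step needing care is the first: verifying that adaptively generated prefill really has an instance-independent law, or equivalently contributes zero KL. The remaining work is bookkeeping.
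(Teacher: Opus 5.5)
Your proof is correct and uses the same key fact as the paper: every instance in $\mathfrak M_{\mathrm{full}}$ agrees on the bank modes, so bank-restricted prefill has an instance-independent law and contributes nothing to the change of measure. The paper packages this as a simulation argument. A learner given $W_{\mF}(\vmu)$ regenerates the prefill with its own internal randomness, so Theorem~\ref{thm:persistent-abstraction-separation}\textit{b.} applies as a black box, with the conclusion reached by contradiction. Your main route instead plugs $P$ directly into Lemma~\ref{lem:adaptive-change-of-measure} as side information and re-runs part~\textit{b.}; this is equally valid and slightly more direct.
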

    This result covers both residual RL  without prefill and learning initialized with bank-restricted data; pure DSRL and FRS-prefill followed by DSRL are respective examples \citep{wagenmaker2025dsrl, tang2026frs}.
\begin{proof}
    Suppose, for contradiction, that such a learner violates one of the lower bounds.  Construct a
    learner that is given the exact flow-reversed-mode laws $W_{\mF}(\vmu)$.  Using independent
    internal randomness and these exact Bernoulli laws, it first simulates the original learner's
    entire prefill-generation procedure, including any adaptive choices among modes in $\gC$,
    and then runs the original learner's online procedure on the resulting simulated prefill.  The
    simulated prefill has exactly the same distribution as the original prefill under every
    instance in $\mathfrak M_{\mathrm{full}}$.  The constructed learner therefore has the same
    correctness probability and the same distribution of online sample counts as the original
    learner.  This contradicts Theorem~\ref{thm:persistent-abstraction-separation}\textnormal{\it b.},
    which applies to every learner given the exact flow-reversed-mode laws.  Hence the stated lower bounds
    also apply to the original learner.
\end{proof}
This is a worst-case result for latent-mode problems with no assumed
smoothness or generalization between modes. It does not imply that every
practical DSRL run must explicitly test $N-C$ additional modes. Rather, it
shows that FRS-only prefill cannot, without further structural assumptions,
guarantee removal of the exploration cost associated with retaining the full
latent action class.
\begin{remark}[What the separation establishes]
    \label{rem:separation-interpretation}
    Theorem~\ref{thm:persistent-abstraction-separation} is a nested-model or adaptation result.
    The flow-reversal-restricted learner commits to the hypothesis that the online flow-reversal action abstraction has
    bias at most $\eps/2$.  The full-class learner retains a correctness guarantee on instances
    in which a non-flow-reversed latent mode is better.  The lower bound shows that retaining this broader
    guarantee has a statistical cost even on the base instance, where the flow-reversal abstraction is
    exactly sufficient.  A full-class learner can avoid that cost only by declining to rule out the
    non-flow-reversed alternatives, in which case it no longer has the stated correctness guarantee on
    $\mathfrak M_{\mathrm{full}}$.
\end{remark}

\begin{remark}[Why buffer prefill with flow-reversed action anchors cannot remove the lower bound]
    \label{rem:prefill-lower-bound}
    The lower bound grants the learner the exact Bernoulli reward law of every flow-reversed mode before any
    online interaction.  This is more information than any finite flow-reversal prefill generated solely by
    those modes. Since the base and alternative instances have identical reward laws on all flow-reversed
    modes, no amount of such prefill changes the likelihood ratio between them.  The lower bound
    therefore applies, \emph{a fortiori}, to FRS-prefill followed by DSRL \cite{tang2026frs}.
\end{remark}

\begin{remark}[Relation to continuous latent spaces]
    \label{rem:continuous-latent-embedding}
    The finite latent set $\{z_1,\ldots,z_N\}$ may be embedded in any continuous latent space
    $\gZ\subseteq\R^{d_z}$ containing $N$ distinct points.  More generally, each index may
    represent a measurable latent region whose elements induce the same reward law.  The theorem
    therefore models a latent space with $N$ behaviorally distinguishable modes and makes no
    smoothness assumption linking the rewards of different modes.  Since the construction is a
    one-state, one-decision episodic MDP, it is also a standard Bernoulli multi-armed bandit and can
    be inserted as a decision state in a longer-horizon MDP without changing the identification
    lower bound at that state.
\end{remark}

\section{The general-MDP price of the persistent FRS abstraction}
\label{sec:general-mdp-abstraction-price}
Theorem~\ref{thm:persistent-abstraction-separation} isolates the statistical benefit of solving a
smaller online latent-action problem.  We now quantify the control price of that restriction in a
general discounted latent MDP.  The result separates two errors: an irreducible support error caused
by the flow-reversal action anchor abstraction and a reducible selection error caused by finite-data learning within
that abstraction.

\subsection{General latent MDP and the persistent flow-reversal action class}
\label{subsec:general-latent-mdp}

Let
$\gM
    =
    (\gS,\gZ,P,r,\gamma)$
be a discounted latent-action MDP.  Assume that $(\gS,\Sigma_{\gS})$ and
$(\gZ,\Sigma_{\gZ})$ are standard Borel spaces, that
$\gamma\in[0,1),$
that
$r:\gS\times\gZ\to\R$
is bounded and measurable, and that
$P(\cdot\,|\, s,z)$
is a stochastic kernel on $\gS$ given $(s,z)\in\gS\times\gZ$.

For a bounded measurable function $V:\gS\to\R$, define
\[
    Q_V(s,z)
    :=
    r(s,z)
    +
    \gamma
    \int_{\gS}
        V(s')
        P(\mathrm{d}s'\mid s,z),
\]
and define the full latent-action Bellman optimality operator
\[
    (\gT V)(s)
    :=
    \sup_{z\in\gZ}
    Q_V(s,z).
\]
Assume that $\gT$ maps bounded measurable functions to bounded measurable functions and that the
full optimal value $V^\star$ is its unique bounded fixed point---namely,
$V^\star
    =
    \gT V^\star$.
Define the corresponding full optimal action-value function by
\[
    Q^\star(s,z)
    :=
    Q_{V^\star}(s,z)
    =
    r(s,z)
    +
    \gamma
    \int_{\gS}
        V^\star(s')
        P(\mathrm{d}s'\mid s,z).
\]

Recall that
$\gC
    =
    \{1,\ldots,C\}$
is the finite set of coarse commands, and let
$F:\gS\times\gC\to\gZ$
be the measurable online flow-reversal map.  As in the method definition, write
\[
    z_c(s)
    :=
    F(s,c)
    =
    H_\theta^K\bigl(s,\varphi(s,c)\bigr).
\]
Recall the state-dependent flow-reversed action bank is
$\gZ(s)
    :=
    \{z_c(s):c\in\gC\}$.

For a bounded measurable $V$, define the flow-reversal-restricted Bellman optimality operator
\begin{equation}
    \label{eq:frs-restricted-bellman-operator}
    (\gT_{\mF}V)(s)
    :=
    \max_{c\in\gC}
    Q_V\bigl(s,z_c(s)\bigr).
\end{equation}
Since $\gC$ is finite, the maximum in
\eqref{eq:frs-restricted-bellman-operator} is attained at every state.

A stationary flow-reversal selector is a measurable Markov kernel
$\omega(\cdot\,|\, s)$
on $\gC$.  The selector draws a command $c\sim\omega(\cdot\,|\, s)$ and executes the actual latent
$z_c(s)$.  Define its reward  by
\[
    r^\omega(s)
    :=
    \sum_{c\in\gC}
    \omega(c\,|\, s)
    r\bigl(s,z_c(s)\bigr),
\]
and transition kernel by 
\[
    P^\omega(B\,|\, s)
    :=
    \sum_{c\in\gC}
    \omega(c\,|\, s)
    P\bigl(B\,|\, s,z_c(s)\bigr),
    \quad\text{where}\quad
    B\in\Sigma_{\gS}.
\]
For a bounded measurable $V$, write
\[
    (P^\omega V)(s)
    :=
    \int_{\gS}
        V(s')
        P^\omega(\mathrm{d}s'\,|\, s).
\]
For an initial state $s$, let
$\E_s^\omega[\cdot]$
denote expectation under the Markov chain $(S_t)_{t\ge0}$ satisfying $S_{t+1}\sim P^\omega(\cdot\,|\,S_t)$ with 
$S_0=s$.
Define the policy Bellman operator
\begin{align}
    (\gT^\omega V)(s)
    &:=
    r^\omega(s)
    +
    \gamma(P^\omega V)(s)=
    \sum_{c\in\gC}
    \omega(c\,|\, s)
    Q_V\bigl(s,z_c(s)\bigr).
\end{align}
The value $V^\omega$ is the unique bounded fixed point of $\gT^\omega$.

\begin{lemma}[Restricted Bellman fixed point and optimal stationary selector]
    \label{lem:restricted-bellman-fixed-point}
    The operator $\gT_{\mF}$ and, for every stationary flow-reversal selector $\omega$, the operator
    $\gT^\omega$ are monotone $\gamma$-contractions in the supremum norm.  Hence
    $\gT_{\mF}$ has a unique bounded fixed point $V_{\mF}^\star$, and
    $\gT^\omega$ has a unique bounded fixed point $V^\omega$---namely, 
    \[
        V_{\mF}^\star
        =
        \gT_{\mF}V_{\mF}^\star,
        \qquad\text{and}\qquad
        V^\omega
        =
        \gT^\omega V^\omega.
    \]
    There exists a measurable deterministic stationary flow-reversal selector
    $\omega_{\mF}^\star$ such that
   $V^{\omega_{\mF}^\star}
        =
        V_{\mF}^\star$.
    Moreover, for every stationary flow-reversal selector $\omega$, the bounds hold:
    \[
        V^\omega(s)
        \le
        V_{\mF}^\star(s)
        \le
        V^\star(s)
        \qquad
        \text{for every }s\in\gS.
    \]
\end{lemma}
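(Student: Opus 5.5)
The plan follows the standard Banach fixed-point argument for discounted MDPs, specialized to the finite restricted action set $\gC$.

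\textbf{Monotonicity and contraction.} For bounded measurable $V\le V'$ pointwise, $Q_V(s,z)\le Q_{V'}(s,z)$ for every $(s,z)$, because $P(\cdot\,|\,s,z)$ is a probability measure. Taking a finite maximum over $c\in\gC$, or an $\omega$-average, preserves this inequality, so both $\gT_{\mF}$ and $\gT^\omega$ are monotone. For contraction, I will use $|Q_V(s,z)-Q_{V'}(s,z)|\le\gamma\|V-V'\|_\infty$ together with two elementary facts. First, $|\max_c a_c-\max_c b_c|\le\max_c|a_c-b_c|$. Second, a convex combination of differences is bounded by their maximum. Together these give $\|\gT_{\mF}V-\gT_{\mF}V'\|_\infty\le\gamma\|V-V'\|_\infty$, and likewise for $\gT^\omega$.

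\textbf{Measurability and fixed points.} I must check that both operators map bounded measurable functions to bounded measurable functions. For each fixed $c$, the map $s\mapsto Q_V(s,z_c(s))$ is measurable, since it composes the measurable map $s\mapsto(s,F(s,c))$ with the jointly measurable $Q_V$. Joint measurability of $Q_V$ follows because $r$ is measurable and $(s,z)\mapsto\int V\,dP(\cdot\,|\,s,z)$ is measurable for a stochastic kernel. A finite maximum, and a finite sum weighted by the measurable functions $\omega(c\,|\,s)$, then remain measurable, and boundedness is immediate. The space of bounded measurable functions with the sup norm is complete, so Banach's theorem yields the unique fixed points $V_{\mF}^\star$ and $V^\omega$.

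\textbf{Optimal selector.} Define $\omega_{\mF}^\star(s)$ as the smallest index $c$ attaining $\max_c Q_{V_{\mF}^\star}(s,z_c(s))$. This is measurable, because each set $\{s: c\text{ attains the max and no smaller index does}\}$ is defined by finitely many measurable inequalities. By construction $\gT^{\omega_{\mF}^\star}V_{\mF}^\star=\gT_{\mF}V_{\mF}^\star=V_{\mF}^\star$. Uniqueness of the fixed point of $\gT^{\omega_{\mF}^\star}$ then gives $V^{\omega_{\mF}^\star}=V_{\mF}^\star$.

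\textbf{Ordering.} Since an average never exceeds a maximum, $\gT^\omega V\le\gT_{\mF}V$ for every $V$. Iterating from $V^\omega$ gives $V^\omega=\gT^\omega V^\omega\le\gT_{\mF}V^\omega$. Monotonicity then yields $V^\omega\le\gT_{\mF}^kV^\omega\to V_{\mF}^\star$. The same argument applies with $\gT_{\mF}V\le\gT V$, because $z_c(s)\in\gZ$ and so the restricted max is at most the full sup. Hence $V_{\mF}^\star\le\gT^kV_{\mF}^\star\to V^\star$, where convergence uses the assumption that $\gT$ is a $\gamma$-contraction on bounded measurable functions with unique fixed point $V^\star$.

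The only step needing care is the measurability of the attaining selector and of $s\mapsto Q_V(s,z_c(s))$. Finiteness of $\gC$ makes this routine, so no measurable-selection theorem is needed.
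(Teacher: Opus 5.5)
Your proposal is correct and follows the paper's proof essentially step for step. Both use the sup-norm contraction and monotonicity estimate via $|\max_c a_c-\max_c b_c|\le\max_c|a_c-b_c|$, Banach fixed points, the smallest-index greedy selector together with uniqueness of the policy fixed point, and iteration of $\gT_{\mF}$ from $V^\omega$; your measurability checks are more explicit than the paper's. The only deviation is in $V_{\mF}^\star\le V^\star$. The paper iterates $\gT_{\mF}$ downward from $V^\star$ and so needs nothing about $\gT$ beyond $\gT V^\star=V^\star$. You iterate $\gT$ upward from $V_{\mF}^\star$, which uses monotonicity and contraction of $\gT$. Contraction is not an assumption in the paper, as you state it is, but it follows at once from the same sup-difference estimate together with the paper's assumption that $\gT$ preserves bounded measurable functions.
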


\begin{proof}
    Let $V,W$ be bounded measurable functions.  For each $s\in\gS$, we have that 
    \begin{align*}
        \left|
            (\gT_{\mF}V)(s)
            -
            (\gT_{\mF}W)(s)
        \right|
        &\le
        \max_{c\in\gC}
        \left|
            Q_V\bigl(s,z_c(s)\bigr)
            -
            Q_W\bigl(s,z_c(s)\bigr)
        \right|
        \\
        &=
        \gamma
        \max_{c\in\gC}
        \left|
            \int_{\gS}
                \bigl(V(s')-W(s')\bigr)
                P(\mathrm{d}s'\mid s,z_c(s))
        \right|
        \\
        &\le
        \gamma
        \lVert V-W\rVert_\infty.
    \end{align*}
    Taking the supremum over $s$ proves
    \[
        \lVert
            \gT_{\mF}V
            -
            \gT_{\mF}W
        \rVert_\infty
        \le
        \gamma
        \lVert V-W\rVert_\infty.
    \]
    Thus $\gT_{\mF}$ is a contraction.  It is monotone because
    $V\le W$ pointwise implies
    $Q_V(s,z)\le Q_W(s,z)$ for every $(s,z)$, and taking a maximum over $c$ preserves this
    inequality.  The contraction mapping theorem gives a unique bounded fixed point
    $V_{\mF}^\star$.

    Fix a stationary flow-reversal selector $\omega$.  For bounded measurable $V$ and $W$, and every
    state $s$,
    \begin{align*}
        \left|
            (\gT^\omega V)(s)
            -
            (\gT^\omega W)(s)
        \right|
        &\le
        \sum_{c\in\gC}
        \omega(c\mid s)
        \left|
            Q_V\bigl(s,z_c(s)\bigr)
            -
            Q_W\bigl(s,z_c(s)\bigr)
        \right|\leq        \gamma\lVert V-W\rVert_\infty.
    \end{align*}
    Taking the supremum over $s$ shows that $\gT^\omega$ is a $\gamma$-contraction.  The same
    pointwise-order argument used for $\gT_{\mF}$ shows that $\gT^\omega$ is monotone.
    The contraction mapping theorem therefore gives a unique bounded fixed point $V^\omega$.

    For every bounded $V$ and state $s$, we have that 
    \[
        (\gT^\omega V)(s)
        =
        \sum_{c\in\gC}
        \omega(c\mid s)
        Q_V\bigl(s,z_c(s)\bigr)
        \le
        \max_{c\in\gC}
        Q_V\bigl(s,z_c(s)\bigr)
        =
        (\gT_{\mF}V)(s).
    \]
    In particular, we have that
    $V^\omega
        =
        \gT^\omega V^\omega
        \le
        \gT_{\mF}V^\omega$.
    Iterating the monotone operator $\gT_{\mF}$ yields
    \[
        V^\omega
        \le
        \gT_{\mF}V^\omega
        \le
        \gT_{\mF}^2V^\omega
        \le
        \cdots.
    \]
    Since $\gT_{\mF}$ is a contraction, the sequence
    $\gT_{\mF}^kV^\omega$ converges in supremum norm to its unique fixed point
    $V_{\mF}^\star$.  Passing to the limit yields
    $V^\omega
        \le
        V_{\mF}^\star$.

    Define
    \[
        c_{\mF}^\star(s)
        :=
        \min
        \argmax_{c\in\gC}
        Q_{V_{\mF}^\star}\bigl(s,z_c(s)\bigr).
    \]
    For each $c\in\gC$, set
    $q_c^{\mF}(s)
        :=
        Q_{V_{\mF}^\star}\bigl(s,z_c(s)\bigr)$.
    Every function $q_c^{\mF}$ is measurable.  For a fixed $c\in\gC$,
    \[
        \{s:c_{\mF}^\star(s)=c\}
        =
        \bigg(
            \bigcap_{j\in\gC}
            \{s:q_c^{\mF}(s)\ge q_j^{\mF}(s)\}
        \bigg)
        \cap
        \bigg(
            \bigcap_{\ell<c}
            \{s:q_\ell^{\mF}(s)<q_c^{\mF}(s)\}
        \bigg).
    \]
    This is a finite intersection of measurable sets.  Hence
    $c_{\mF}^\star$ is measurable and defines a deterministic stationary selector
    $\omega_{\mF}^\star$.  By construction, we have that 
    \begin{align*}
        (\gT^{\omega_{\mF}^\star}V_{\mF}^\star)(s)
        &=
        \max_{c\in\gC}
        Q_{V_{\mF}^\star}\bigl(s,z_c(s)\bigr)
        =
        (\gT_{\mF}V_{\mF}^\star)(s)
        =
        V_{\mF}^\star(s).
    \end{align*}
    Therefore $V_{\mF}^\star$ is a fixed point of
    $\gT^{\omega_{\mF}^\star}$.  By uniqueness of the policy fixed point, we have that
    $V^{\omega_{\mF}^\star}
        =
        V_{\mF}^\star$.

    Finally, for every bounded $V$, the bound 
    $\gT_{\mF}V
        \le
        \gT V$ holds
    pointwise since the flow-reversed latent action bank is a subset of the full latent action space.  Hence we deduce that 
    \[
        \gT_{\mF}V^\star
        \le
        \gT V^\star
        =
        V^\star.
    \]
    Applying $\gT_{\mF}$ repeatedly and using monotonicity gives
    \[
        \gT_{\mF}^kV^\star
        \le
        V^\star
        \qquad
        \text{for every }k\ge1.
    \]
    The left-hand side converges in supremum norm to $V_{\mF}^\star$, so
    $V_{\mF}^\star
        \le
        V^\star$.
\end{proof}

Define the optimal restricted action-value function on the actual flow-reversed latents by
\begin{equation}
    \label{eq:restricted-optimal-q}
    Q_{\mF}^\star\bigl(s,z_c(s)\bigr)
    :=
    r\bigl(s,z_c(s)\bigr)
    +
    \gamma
    \int_{\gS}
        V_{\mF}^\star(s')
        P(\mathrm{d}s'\mid s,z_c(s)).
\end{equation}
Then the optimal value is given by 
\[
    V_{\mF}^\star(s)
    =
    \max_{c\in\gC}
    Q_{\mF}^\star\bigl(s,z_c(s)\bigr).
\]

\subsection{Main Result: Support gap and selection error}
\label{subsec:support-gap-selection-error}
We now define some key quantities that help decompose the error. 
\begin{definition}[One-step flow-reversal support gap]
    \label{def:one-step-frs-support-gap}
    For each state $s\in\gS$, define
        $\Delta_{\mF}(s)
        :=
        V^\star(s)
        -
        \max_{c\in\gC}
        Q^\star\bigl(s,z_c(s)\bigr)$, and define the worst-case support gap by
        $\bar{\Delta}_{\mF}
        :=
        \sup_{s\in\gS}
        \Delta_{\mF}(s)$.
\end{definition}

Since $V^\star(s)
    =
    \sup_{z\in\gZ}Q^\star(s,z)$
and every $z_c(s)$ belongs to $\gZ$, we have that
$
    \Delta_{\mF}(s)\ge0$.
The quantity $\Delta_{\mF}(s)$ forces only the current latent to lie in the latent bank generated via flow-reversal; the
continuation value inside $Q^\star$ still assumes globally optimal future control over the full
latent space.

\begin{definition}[Restricted selection error]
    \label{def:restricted-selection-error}
    For a stationary flow-reversal selector $\omega$, define its pointwise selection error by
    \begin{equation}
                \label{eq:pointwise-selection-error}\operatorname{err}_{\mathrm{sel}}(\omega;s)
        :=
        V_{\mF}^\star(s)
        -
        (\gT^\omega V_{\mF}^\star)(s)
 =
        V_{\mF}^\star(s)
        -
        \sum_{c\in\gC}
        \omega(c\mid s)
        Q_{\mF}^\star\bigl(s,z_c(s)\bigr).
    \end{equation}
    Its uniform selection error is
    \begin{equation}
        \label{eq:uniform-selection-error}
        \operatorname{err}_{\mathrm{sel}}(\omega)
        :=
        \sup_{s\in\gS}
        \operatorname{err}_{\mathrm{sel}}(\omega;s).
    \end{equation}
\end{definition}

Since $V_{\mF}^\star(s)$ is the maximum of
$Q_{\mF}^\star(s,z_c(s))$ over $c\in\gC$ and $\omega(\cdot\mid s)$ is a probability
distribution, both quantities in
\eqref{eq:pointwise-selection-error}--\eqref{eq:uniform-selection-error} are nonnegative.

The following theorem characterizes the price of abstraction and finite-sample selection error. 
\begin{theorem}[General-MDP abstraction price and finite-sample selection error]
    \label{thm:general-mdp-abstraction-price}
 The following statements hold.
    \begin{enumerate}[label=\textnormal{\it\alph*.}, leftmargin=25pt, topsep=0pt,itemsep=0pt]
        \item \textbf{Oracle price of the online flow-reversal abstraction.}
        For every state $s\in\gS$,
       the bounds hold: \begin{equation}
            \label{eq:oracle-abstraction-sandwich}
            \Delta_{\mF}(s)
            \le
            V^\star(s)-V_{\mF}^\star(s)
            \le
            \bar{\Delta}_{\mF}/(1-\gamma).
        \end{equation}

\item \textbf{Exact decomposition for an arbitrary learned flow-reversal selector.}
Fix any stationary flow-reversal selector $\omega$. Then, as an identity between
bounded functions, we have the decomposition
\begin{equation}
    \label{eq:exact-support-selection-decomposition}
    V^\star-V^\omega
    =
    \bigl(V^\star-V_{\mathrm F}^\star\bigr)
    +
    (I-\gamma P^\omega)^{-1}
    \operatorname{err}_{\mathrm{sel}}(\omega;\cdot).
\end{equation}
Consequently, for every $s\in\gS$, the bounds hold:
\begin{equation}
    \label{eq:general-policy-abstraction-selection-bound}
    \Delta_{\mF}(s)
    \le
    V^\star(s)-V^\omega(s)
    \leq
    \frac{
        \bar{\Delta}_{\mF}
        +\operatorname{err}_{\mathrm{sel}}(\omega)
    }{1-\gamma}.
\end{equation}

        \item \textbf{Finite-sample consequence.}
        Let $\widehat\omega_n$ be any random stationary flow-reversal selector returned after $n$ online
        environment interactions.  Suppose that, for some
        $\alpha\in(0,1)$ and deterministic number
        $\eps_{\mathrm{sel}}(n,\alpha)\ge0$, the bound holds:
        \begin{equation}
            \label{eq:finite-sample-selection-event}
            \mathbb P
            \left(
                \operatorname{err}_{\mathrm{sel}}(\widehat\omega_n)
                \le
                \eps_{\mathrm{sel}}(n,\alpha)
            \right)
            \ge
            1-\alpha.
        \end{equation}
        Then, with probability at least $1-\alpha$, simultaneously for every
        $s\in\gS$,
        \begin{equation}
            \label{eq:finite-sample-general-mdp-bound}
            \Delta_{\mF}(s)
            \le
            V^\star(s)-V^{\widehat\omega_n}(s)
            \leq
                (\bar{\Delta}_{\mF}
                +
                \eps_{\mathrm{sel}}(n,\alpha))/(1-\gamma).
        \end{equation}
    \end{enumerate}
\end{theorem}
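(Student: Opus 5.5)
I will prove the three parts in order, relying throughout on Lemma~\ref{lem:restricted-bellman-fixed-point}: $\gT_{\mF}$ and $\gT^\omega$ are monotone $\gamma$-contractions, $V^\omega\le V_{\mF}^\star\le V^\star$, and $V_{\mF}^\star=\gT_{\mF}V_{\mF}^\star$.

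For part \textit{a}, the lower bound uses the ordering $V_{\mF}^\star\le V^\star$. By monotonicity of $Q_V$ in $V$,
\[
V_{\mF}^\star(s)=\max_{c}Q_{V_{\mF}^\star}(s,z_c(s))\le\max_c Q^\star(s,z_c(s))=V^\star(s)-\Delta_{\mF}(s).
\]
For the upper bound, I write
\[
V^\star(s)-V_{\mF}^\star(s)=\Delta_{\mF}(s)+\max_cQ^\star(s,z_c(s))-\max_cQ_{V_{\mF}^\star}(s,z_c(s)).
\]
The difference of maxima is at most $\max_c\bigl(Q^\star-Q_{V_{\mF}^\star}\bigr)(s,z_c(s))$. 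This equals $\gamma\int(V^\star-V_{\mF}^\star)\,dP$, which is bounded by $\gamma\lVert V^\star-V_{\mF}^\star\rVert_\infty$. Taking the supremum over $s$ gives $D\le\bar\Delta_{\mF}+\gamma D$ with $D:=\lVert V^\star-V_{\mF}^\star\rVert_\infty$, which is finite because both functions are bounded. Rearranging yields $D\le\bar\Delta_{\mF}/(1-\gamma)$.

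For part \textit{b}, I split $V^\star-V^\omega=(V^\star-V_{\mF}^\star)+(V_{\mF}^\star-V^\omega)$. The key step is the second term. Using $V^\omega=\gT^\omega V^\omega$ and the affine form $\gT^\omega V=r^\omega+\gamma P^\omega V$,
\[
V_{\mF}^\star-V^\omega=(V_{\mF}^\star-\gT^\omega V_{\mF}^\star)+\gamma P^\omega(V_{\mF}^\star-V^\omega)=\operatorname{err}_{\mathrm{sel}}(\omega;\cdot)+\gamma P^\omega(V_{\mF}^\star-V^\omega).
\]
Since $P^\omega$ is a Markov operator on bounded functions with norm $1$, $I-\gamma P^\omega$ is invertible via the Neumann series $\sum_t\gamma^t(P^\omega)^t$. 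Solving gives the identity \eqref{eq:exact-support-selection-decomposition}.

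I expect the main care point to be the measurability and boundedness of $\operatorname{err}_{\mathrm{sel}}(\omega;\cdot)$, so that the Neumann series acts on it legitimately. This follows because it is a finite mixture of measurable bounded functions, given that $\omega$ is a measurable kernel.

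For the bounds in \eqref{eq:general-policy-abstraction-selection-bound}, the upper bound combines three facts. First, part \textit{a} bounds the first term by $\bar\Delta_{\mF}/(1-\gamma)$. Second, the Neumann series is positive with total mass $1/(1-\gamma)$. Third, $0\le\operatorname{err}_{\mathrm{sel}}(\omega;\cdot)\le\operatorname{err}_{\mathrm{sel}}(\omega)$, so the second term is nonnegative and at most $\operatorname{err}_{\mathrm{sel}}(\omega)/(1-\gamma)$. I note that this adds the two terms separately, so I must keep the support term bounded by $\bar\Delta_{\mF}/(1-\gamma)$ rather than by $\Delta_{\mF}(s)$. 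The lower bound follows from nonnegativity of the second term together with part \textit{a}.

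Part \textit{c} is immediate. On the event in \eqref{eq:finite-sample-selection-event}, apply part \textit{b} with $\omega=\widehat\omega_n$ realization-wise. Part \textit{b} is deterministic for each fixed selector, so the bound holds simultaneously over all $s$ on that event, and the event has probability at least $1-\alpha$.
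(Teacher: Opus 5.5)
Your proof is correct, but in one place it takes a different route from the paper. Parts \emph{b} and \emph{c}, and the lower bound in part \emph{a}, follow the paper's argument essentially verbatim: monotonicity of $\gT_{\mathrm F}$, the Bellman-residual identity $V_{\mathrm F}^\star-V^\omega=\operatorname{err}_{\mathrm{sel}}(\omega;\cdot)+\gamma P^\omega(V_{\mathrm F}^\star-V^\omega)$ inverted by a Neumann series, and realization-wise application on the high-probability event.

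The difference is the upper bound in part \emph{a}. You run a direct sup-norm fixed-point estimate. You bound the gap between the two maxima over anchors by $\gamma\lVert V^\star-V_{\mathrm F}^\star\rVert_\infty$ and then solve $D\le\bar\Delta_{\mathrm F}+\gamma D$. This step implicitly uses $V^\star-V_{\mathrm F}^\star\ge0$ from the lemma, so that the supremum of the difference equals its sup norm, together with finiteness of $D$; both are available.

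The paper instead builds an explicit admissible selector $\omega^\dagger$. It picks, with measurable smallest-index tie-breaking, an anchor maximizing the full $Q^\star$. It then proves the exact resolvent identity $V^\star-V^{\omega^\dagger}=(I-\gamma P^{\omega^\dagger})^{-1}\Delta_{\mathrm F}$ and concludes via $V^{\omega^\dagger}\le V_{\mathrm F}^\star$.

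Your route is shorter and avoids the measurable-selection construction entirely. The paper's route pays for that construction but yields strictly more: the state-dependent occupancy bound $V^\star(s)-V_{\mathrm F}^\star(s)\le\mathbb{E}_s^{\omega^\dagger}\bigl[\sum_{t\ge0}\gamma^t\Delta_{\mathrm F}(S_t)\bigr]$. This shows that large support gaps matter only at states visited by the $Q^\star$-greedy anchor selector. The paper uses that refinement in its interpretation and reuses the same template for the reach-$\rho$ abstraction price. For the uniform sandwich actually asserted in the statement, your argument suffices.
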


\begin{proof}
    We prove the three parts in order.

\noindent\textbf{Proof of part {\it a.}: lower bound.}
    By Lemma~\ref{lem:restricted-bellman-fixed-point}, we have that $V_{\mF}^\star
        \le
        V^\star$.
    The operator $\gT_{\mF}$ is monotone, so that
  $\gT_{\mF}V_{\mF}^\star
        \le
        \gT_{\mF}V^\star$.
    Using the fixed-point equation for $V_{\mF}^\star$ and the definition of
    $\Delta_{\mF}(s)$, we deduce that
    \begin{align*}
        V_{\mF}^\star(s)
        &=
        (\gT_{\mF}V_{\mF}^\star)(s)
        \le
        (\gT_{\mF}V^\star)(s)
        =
        \max_{c\in\gC}
        Q^\star\bigl(s,z_c(s)\bigr)
        =
        V^\star(s)-\Delta_{\mF}(s).
    \end{align*}
    Rearranging yields
  $\Delta_{\mF}(s)
        \le
        V^\star(s)-V_{\mF}^\star(s)$.

\noindent\textbf{Proof of part \textit{a.}: resolvent identity and upper bound.}
    For each $c\in\gC$, define the measurable function
  $q_c^\dagger(s)
        :=
        Q^\star\bigl(s,z_c(s)\bigr)$. Moreover, let
\[c^\dagger(s)
            :=
            \min\left\{c\in \gC\,:\, 
            Q^\star\bigl(s,z_c(s)\bigr)=\max_{j\in \gC} Q^\star(s,z_j(s))\right\}.\]
    The smallest-maximizer selector is measurable since, for every $c\in\gC$, we have that
    \[
        \{s:c^\dagger(s)=c\}
        =
        \Bigg(
            \bigcap_{j\in\gC}
            \{s:q_c^\dagger(s)\ge q_j^\dagger(s)\}
        \Bigg)
        \cap
        \Bigg(
            \bigcap_{\ell<c}
            \{s:q_\ell^\dagger(s)<q_c^\dagger(s)\}
        \Bigg),
    \]
    which is a finite intersection of measurable sets.  Let $\omega^\dagger$ be the associated
    deterministic FRS selector that assigns probability one to
        $c^\dagger(s)$.  
    By construction, we have that 
    \begin{align}
        (\gT^{\omega^\dagger}V^\star)(s)
        &=
        Q^\star\bigl(s,z_{c^\dagger(s)}(s)\bigr)
=
        \max_{c\in\gC}
        Q^\star\bigl(s,z_c(s)\bigr)
        =
        V^\star(s)-\Delta_{\mF}(s).
        \label{eq:oracle-full-q-greedy-residual}
    \end{align}

    Define $D^\dagger
        :=
        V^\star-V^{\omega^\dagger}$.
    Using the Bellman fixed-point equation
    $V^{\omega^\dagger}=\gT^{\omega^\dagger}V^{\omega^\dagger}$, adding and subtracting
    $\gT^{\omega^\dagger}V^\star$, and applying
    \eqref{eq:oracle-full-q-greedy-residual}, we obtain, for every state $s$,
    \begin{align*}
        D^\dagger(s)
        &=
        V^\star(s)
        -
        (\gT^{\omega^\dagger}V^{\omega^\dagger})(s)
        \\
        &=
        V^\star(s)
        -
        (\gT^{\omega^\dagger}V^\star)(s)+
        (\gT^{\omega^\dagger}V^\star)(s)
        -
        (\gT^{\omega^\dagger}V^{\omega^\dagger})(s)
        \\
        &=
        \Delta_{\mF}(s)
        +
        \gamma
        \bigl(P^{\omega^\dagger}D^\dagger\bigr)(s).
    \end{align*}
    Hence, as an identity between bounded functions, we have that 
    \begin{equation}
        \label{eq:oracle-resolvent-equation}
        (I-\gamma P^{\omega^\dagger})D^\dagger
        =
        \Delta_{\mF}(\cdot).
    \end{equation}

    Recall also that a Markov operator is nonexpansive in the supremum norm---i.e., $\lVert P^{\omega^\dagger}V\rVert_\infty
        \le
        \lVert V\rVert_\infty$.
    Therefore the Neumann series
$\sum_{t=0}^{\infty}
        \gamma^t
        (P^{\omega^\dagger})^t$
    converges in operator norm and is the inverse of
    $I-\gamma P^{\omega^\dagger}$.  Applying this inverse to
    \eqref{eq:oracle-resolvent-equation} yields
    \[
        D^\dagger
        =
        (I-\gamma P^{\omega^\dagger})^{-1}
        \Delta_{\mF}(\cdot)
        =
        \sum_{t=0}^{\infty}
        \gamma^t
        (P^{\omega^\dagger})^t
        \Delta_{\mF}(\cdot).
    \]
For every initial state $s$, the $t$-th term
    satisfies
   $\bigl((P^{\omega^\dagger})^t\Delta_{\mF}(\cdot)\bigr)(s)
        =
        \mathbb{E}_s^{\omega^\dagger}
        \left[
            \Delta_{\mF}(S_t)
        \right]$.
    The support-gap function is bounded and nonnegative.  Therefore the monotone convergence
    theorem permits interchange of the expectation and the increasing sequence of nonnegative
    partial sums.  Since $\Delta_{\mF}(s)\leq\bar{\Delta}_{\mF}$ for every state, we have that 
    \begin{align*}
        D^\dagger(s)
        &=
        \mathbb{E}_s^{\omega^\dagger}
        \left[
            \sum_{t=0}^{\infty}
            \gamma^t
            \Delta_{\mF}(S_t)
        \right]\leq
        \sum_{t=0}^{\infty}
        \gamma^t
        \bar{\Delta}_{\mF}=
        \frac{\bar{\Delta}_{\mF}}{1-\gamma}.
    \end{align*}
    The selector $\omega^\dagger$ is an admissible flow-reversal selector.  By
    Lemma~\ref{lem:restricted-bellman-fixed-point}, the bound
 $V^{\omega^\dagger}
        \le
        V_{\mF}^\star$ holds, so that 
    \[
        V^\star(s)-V_{\mF}^\star(s)
        \le
        V^\star(s)-V^{\omega^\dagger}(s)
        =
        D^\dagger(s).
    \]
    Combining this inequality with the preceding occupancy and uniform bounds proves
    \eqref{eq:oracle-state-dependent-upper} and the upper bound in
    \eqref{eq:oracle-abstraction-sandwich}.

\noindent\textbf{Proof of part \textnormal{\it b}.}
    Fix an arbitrary stationary flow-reversal selector $\omega$, and define
$D_{\mF}^\omega
        :=
        V_{\mF}^\star-V^\omega$.
    Using the Bellman fixed-point equation
    $V^\omega=\gT^\omega V^\omega$, adding and subtracting
    $\gT^\omega V_{\mF}^\star$, and using
    Definition~\ref{def:restricted-selection-error}, we obtain, for every state $s$, we deduce that 
    \begin{align*}
        D_{\mF}^\omega(s)
        &=
        V_{\mF}^\star(s)
        -
        (\gT^\omega V^\omega)(s)
        \\
        &=
        V_{\mF}^\star(s)
        -
        (\gT^\omega V_{\mF}^\star)(s)+
        (\gT^\omega V_{\mF}^\star)(s)
        -
        (\gT^\omega V^\omega)(s)
        \\
        &=
        \operatorname{err}_{\mathrm{sel}}(\omega;s)
        +
        \gamma
        \bigl(P^\omega D_{\mF}^\omega\bigr)(s).
    \end{align*}
    Hence
 $(I-\gamma P^\omega)D_{\mF}^\omega
        =
        \operatorname{err}_{\mathrm{sel}}(\omega;\cdot)$.
    As above, the Neumann series yields
    \begin{align}
        D_{\mF}^\omega
        &=
        (I-\gamma P^\omega)^{-1}
        \operatorname{err}_{\mathrm{sel}}(\omega;\cdot)=
        \sum_{t=0}^{\infty}
        \gamma^t
        (P^\omega)^t
        \operatorname{err}_{\mathrm{sel}}(\omega;\cdot).
        \label{eq:selection-resolvent-identity}
    \end{align}        
    The selection-error function is bounded and nonnegative.  Thus the monotone convergence
    theorem again permits interchange of expectation and the nonnegative discounted series.
    Therefore, for every initial state $s$, we have that 
    \begin{align}
        V_{\mF}^\star(s)-V^\omega(s)
        &=
        \mathbb{E}_s^\omega
        \left[
            \sum_{t=0}^{\infty}
            \gamma^t
            \operatorname{err}_{\mathrm{sel}}(\omega;S_t)
        \right]\leq
            \operatorname{err}_{\mathrm{sel}}(\omega)
        /(1-\gamma).
        \label{eq:selection-value-upper}
    \end{align}

    The algebraic identity $V^\star-V^\omega
        =
        \bigl(V^\star-V_{\mF}^\star\bigr)
        +
        \bigl(V_{\mF}^\star-V^\omega\bigr)$
    combined with \eqref{eq:selection-resolvent-identity} proves \eqref{eq:exact-support-selection-decomposition}.
    Its occupancy form---namely,
        for every $s\in\gS$, the form 
            $V^\star(s)-V^\omega(s)
            =
            V^\star(s)-V_{\mF}^\star(s)
            +
            \mathbb{E}_s^\omega
            \left[
                \sum_{t=0}^{\infty}
                \gamma^t
                \operatorname{err}_{\mathrm{sel}}(\omega;S_t)
            \right]$---follows from the first equality in
    \eqref{eq:selection-value-upper}.

    Since $V^\omega\le V_{\mF}^\star$, part~\textnormal{\it a.} implies that 
$V^\star(s)-V^\omega(s)
        \ge
        V^\star(s)-V_{\mF}^\star(s)
        \ge
        \Delta_{\mF}(s)$.
    For the upper bound, part~\textnormal{\it a.} and
    \eqref{eq:selection-value-upper} together yield
    \begin{align*}
        V^\star(s)-V^\omega(s)
        &=
        V^\star(s)-V_{\mF}^\star(s)
        +
        V_{\mF}^\star(s)-V^\omega(s)\leq
        \frac{
           \bar{\Delta}_{\mF}
            +
            \operatorname{err}_{\mathrm{sel}}(\omega)
        }{1-\gamma}.
    \end{align*}
    This proves \eqref{eq:general-policy-abstraction-selection-bound}.

\noindent\textbf{Proof of part \textnormal{\it c}.}
    Define the event $\gE_n
        :=
        \left\{
            \operatorname{err}_{\mathrm{sel}}(\widehat\omega_n)
            \le
            \eps_{\mathrm{sel}}(n,\alpha)
        \right\}$.
    By assumption, we have  $\mathbb P(\gE_n)
        \ge
        1-\alpha$.
    On every outcome in $\gE_n$, the realized selector $\widehat\omega_n$ is a stationary flow-reversal
    selector satisfying
$\operatorname{err}_{\mathrm{sel}}(\widehat\omega_n)
        \leq
        \eps_{\mathrm{sel}}(n,\alpha)$.
    Applying the deterministic bound
    \eqref{eq:general-policy-abstraction-selection-bound} to that realized selector yields,
    simultaneously for every state $s$, that 
    \[
        \Delta_{\mF}(s)
        \le
        V^\star(s)-V^{\widehat\omega_n}(s)
        \leq
            (\bar{\Delta}_{\mF}
            +
            \eps_{\mathrm{sel}}(n,\alpha))
    /(1-\gamma).
    \]
    Since this statement holds on an event of probability at least $1-\alpha$, it proves
    \eqref{eq:finite-sample-general-mdp-bound}.
\end{proof}

\subsection{From critic accuracy to selection error}
\label{subsec:critic-to-selection-error}

Theorem~\ref{thm:general-mdp-abstraction-price} does not assume a particular critic objective or
candidate-selection rule.  The following corollary shows how a conventional finite-sample critic
bound can be inserted into the abstract selection term.

In particular, Theorem~\ref{thm:general-mdp-abstraction-price} has an abstract term $\operatorname{err}_{\text{sel}}$; the next corollary shows  what that abstract error means for an ordinary critic-based implementation. Indeed, it shows
\[\text{
selection error}\leq 2 \cdot \text{critic error}+\text{failure to maximize the critic}.\]
The last sentence is simply identifying the common special case where the final term vanishes.

\begin{corollary}[Approximate critic and approximate candidate selection]
    \label{cor:critic-to-general-mdp-control}
    Let $\widehat Q_n$ be any random score defined on the state-dependent  latent bank generated via flow-reversal, and let
    $\widehat\omega_n$ be any random stationary flow-reversal selector.  Suppose that, with probability at
    least $1-\alpha$, the following two inequalities hold simultaneously:
    \begin{equation}
        \label{eq:restricted-critic-uniform-error}
        \sup_{s\in\gS}
        \max_{c\in\gC}
        \left|
            \widehat Q_n\bigl(s,z_c(s)\bigr)
            -
            Q_{\mF}^\star\bigl(s,z_c(s)\bigr)
        \right|
        \le
        \eps_Q(n,\alpha),
    \end{equation}
    and, for every $s\in\gS$,
    \begin{equation}
        \label{eq:approximate-score-maximization}
        \sum_{c\in\gC}
        \widehat\omega_n(c\mid s)
        \widehat Q_n\bigl(s,z_c(s)\bigr)
        \ge
        \max_{c\in\gC}
        \widehat Q_n\bigl(s,z_c(s)\bigr)
        -
        \eps_{\mathrm{opt}}(n,\alpha).
    \end{equation}
    Then, on the same event,
    \begin{equation}
        \label{eq:critic-implies-selection-error}
        \operatorname{err}_{\mathrm{sel}}(\widehat\omega_n)
        \le
        2\eps_Q(n,\alpha)
        +
        \eps_{\mathrm{opt}}(n,\alpha).
    \end{equation}
    Consequently, with probability at least $1-\alpha$, simultaneously for every state $s$,
    \begin{equation}
        \label{eq:critic-general-mdp-control-bound}
        \Delta_{\mF}(s)
        \le
        V^\star(s)-V^{\widehat\omega_n}(s)
        \le
        \frac{
            \bar{\Delta}_{\mF}
            +
            2\eps_Q(n,\alpha)
            +
            \eps_{\mathrm{opt}}(n,\alpha)
        }{1-\gamma}.
    \end{equation}
    In particular, exact greedy selection with respect to $\widehat Q_n$ corresponds to
    $\eps_{\mathrm{opt}}(n,\alpha)=0$.
\end{corollary}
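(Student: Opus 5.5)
We work on the event of probability at least $1-\alpha$ on which both \eqref{eq:restricted-critic-uniform-error} and \eqref{eq:approximate-score-maximization} hold. On that event we show the pointwise bound $\operatorname{err}_{\mathrm{sel}}(\widehat\omega_n;s)\le 2\eps_Q+\eps_{\mathrm{opt}}$ at every state. Taking the supremum over $s$ gives \eqref{eq:critic-implies-selection-error}. We then feed this bound into Theorem~\ref{thm:general-mdp-abstraction-price}\textnormal{\it c.} with $\eps_{\mathrm{sel}}(n,\alpha):=2\eps_Q(n,\alpha)+\eps_{\mathrm{opt}}(n,\alpha)$, which gives \eqref{eq:critic-general-mdp-control-bound}.

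\textbf{Pointwise bound.} Fix $s$, and pick $c^\star\in\argmax_{c\in\gC}Q_{\mF}^\star(s,z_c(s))$, so that $V_{\mF}^\star(s)=Q_{\mF}^\star(s,z_{c^\star}(s))$. We move from the true restricted value to the executed mixture in three steps. First, by the critic bound,
\[
V_{\mF}^\star(s)\le \widehat Q_n(s,z_{c^\star}(s))+\eps_Q\le \max_{c\in\gC}\widehat Q_n(s,z_c(s))+\eps_Q.
\]
Second, approximate maximization \eqref{eq:approximate-score-maximization} bounds the maximum by
\[
\sum_{c\in\gC}\widehat\omega_n(c\mid s)\widehat Q_n(s,z_c(s))+\eps_{\mathrm{opt}}.
\]
Third, since $\widehat\omega_n(\cdot\mid s)$ is a probability vector, the critic bound applied termwise gives
\[
\sum_{c\in\gC}\widehat\omega_n(c\mid s)\widehat Q_n(s,z_c(s))\le \sum_{c\in\gC}\widehat\omega_n(c\mid s)Q_{\mF}^\star(s,z_c(s))+\eps_Q.
\]
Chaining the three steps and subtracting the final sum yields the pointwise bound, by Definition~\ref{def:restricted-selection-error}.

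\textbf{Passing to the value bound.} The only subtlety is probabilistic bookkeeping. Theorem~\ref{thm:general-mdp-abstraction-price}\textnormal{\it c.} needs the event $\{\operatorname{err}_{\mathrm{sel}}(\widehat\omega_n)\le\eps_{\mathrm{sel}}\}$ to have probability at least $1-\alpha$. That event contains our joint event, so the requirement holds.

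Alternatively, we can apply the deterministic bound \eqref{eq:general-policy-abstraction-selection-bound} realization-wise on the event. This avoids any measurability concern about the event itself.

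The lower bound $\Delta_{\mF}(s)\le V^\star(s)-V^{\widehat\omega_n}(s)$ comes directly from part \textnormal{\it b.} and needs no event. For exact greedy selection, the argmax choice satisfies \eqref{eq:approximate-score-maximization} with $\eps_{\mathrm{opt}}=0$, which gives the final remark.

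I expect no real obstacle. The one point needing care is that the critic error is measured against $Q_{\mF}^\star$ rather than $Q^\star$, so the selection error must be compared with the restricted optimum $V_{\mF}^\star$; the choice of $c^\star$ above handles this.
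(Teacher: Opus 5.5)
Your proposal is correct and follows essentially the same route as the paper. The paper's proof uses the same three-step chain (critic error on the maximum, approximate score maximization, critic error on the $\widehat\omega_n$-average) to obtain $\operatorname{err}_{\mathrm{sel}}(\widehat\omega_n;s)\le 2\eps_Q+\eps_{\mathrm{opt}}$ at every state, takes the supremum over $s$, and then invokes Theorem~\ref{thm:general-mdp-abstraction-price}\textnormal{\it c.} with $\eps_{\mathrm{sel}}(n,\alpha)=2\eps_Q(n,\alpha)+\eps_{\mathrm{opt}}(n,\alpha)$.
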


\begin{proof}
    Work on an outcome on which
    \eqref{eq:restricted-critic-uniform-error} and
    \eqref{eq:approximate-score-maximization} both hold.  Fix an arbitrary state $s\in\gS$.
    For brevity, write
    \[
        q_c
        :=
        Q_{\mF}^\star\bigl(s,z_c(s)\bigr)
        \qquad\text{and}\qquad
        \widehat q_c
        :=
        \widehat Q_n\bigl(s,z_c(s)\bigr).
    \]
    By \eqref{eq:restricted-critic-uniform-error}, we have that 
    \[
        q_c
        \le
        \widehat q_c
        +
        \eps_Q(n,\alpha)
        \qquad
        \text{for every }c\in\gC.
    \]
    Taking the maximum over $c$ gives us that 
    \begin{equation}
        \label{eq:true-max-to-estimated-max}
        \max_{c\in\gC}q_c
        \le
        \max_{c\in\gC}\widehat q_c
        +
        \eps_Q(n,\alpha).
    \end{equation}
    By \eqref{eq:approximate-score-maximization}, we have that 
    \begin{equation}
        \label{eq:estimated-max-to-selected-estimated}
        \max_{c\in\gC}\widehat q_c
        \le
        \sum_{c\in\gC}
        \widehat\omega_n(c\mid s)
        \widehat q_c
        +
        \eps_{\mathrm{opt}}(n,\alpha).
    \end{equation}
    The other direction of the uniform critic bound yields
    $\widehat q_c
        \le
        q_c
        +
        \eps_Q(n,\alpha)$
        for every $c\in\gC$.
    Averaging with respect to $\widehat\omega_n(\cdot\mid s)$ then gives us that 
    \begin{equation}
        \label{eq:selected-estimated-to-selected-true}
        \sum_{c\in\gC}
        \widehat\omega_n(c\mid s)
        \widehat q_c
        \le
        \sum_{c\in\gC}
        \widehat\omega_n(c\mid s)
        q_c
        +
        \eps_Q(n,\alpha).
    \end{equation}
    Combining
    \eqref{eq:true-max-to-estimated-max}--\eqref{eq:selected-estimated-to-selected-true}
     yields
    \[
        \max_{c\in\gC}q_c
        \le
        \sum_{c\in\gC}
        \widehat\omega_n(c\mid s)
        q_c
        +
        2\eps_Q(n,\alpha)
        +
        \eps_{\mathrm{opt}}(n,\alpha).
    \]
    Since
$\max_{c\in\gC}q_c
        =
        V_{\mF}^\star(s)$,
    rearranging the above inequality proves
    \[
        \operatorname{err}_{\mathrm{sel}}(\widehat\omega_n;s)
        \le
        2\eps_Q(n,\alpha)
        +
        \eps_{\mathrm{opt}}(n,\alpha).
    \]
    The state $s$ is arbitrary, so taking the supremum over $s$ proves
    \eqref{eq:critic-implies-selection-error}.  Applying 
    Theorem~\ref{thm:general-mdp-abstraction-price}.{\it c.}~with
$\eps_{\mathrm{sel}}(n,\alpha)
        =
        2\eps_Q(n,\alpha)
        +
        \eps_{\mathrm{opt}}(n,\alpha)$
    proves \eqref{eq:critic-general-mdp-control-bound}.
\end{proof}

The corollary is basically a bridge from \emph{my critic is accurate} to \emph{my learned \QFRS{} policy has good control performance.} 

It says the total error comes from:
\[\bar\Delta_{\rm F}+2\eps_Q(n,\alpha)+\eps_{\text{opt}}(n,\alpha),\]
where the first term captures that 
flow-reversing action anchors does not contain the right behavior, the second term captures that 
critic has not learned the available behaviors accurately, and the last term captures that 
we do not fully exploit the learned critic.
Those are three very different failure modes.
The first is a 
structural error.
More data does not fix this.
If the right behavior is missing from the latent bank generated by flow-reversing anchors, even a perfect critic and perfect selector cannot recover it.
The second 
is a statistical learning error.
More/better data can reduce this.
The right candidate is available, but the critic cannot yet tell which candidate is good.
And, the last term is the candidate-selection error.
This depends on how aggressively you optimize the learned scores.
Exact greedy selection removes it completely:
$
\eps_\text{opt}=0$. 

\subsection{Sharpness and regime consequences}
\label{subsec:abstraction-price-sharpness}
Theorem~\ref{thm:general-mdp-abstraction-price} bounds the loss of the
flow-reversal-restricted policy using a one-step support gap and a one-step selection error.
A natural question is whether the resulting factors are merely artifacts of the
Bellman argument.  The following proposition shows that they are not: the
one-step support gap can equal the entire control loss in a one-decision problem,
and both support error and selection error can accumulate over the full effective
discounted horizon $1/(1-\gamma)$.

\begin{proposition}[Worst-case tightness of the abstraction and selection bounds]
\label{prop:abstraction-price-sharpness}
Each of the three bounds appearing in
Theorem~\ref{thm:general-mdp-abstraction-price} is attained by some discounted
latent MDP:

\begin{enumerate}[label=\textnormal{\it \alph*.}, itemsep=0pt,
                  topsep=-2pt, leftmargin=20pt]

    \item \textbf{Tightness of the one-step lower bound.}
    For every $d>0$, there exists a one-decision latent MDP and an
    flow-reversal action set for which
    \[
        \Delta_{\mF}(s_0)
        =
        V^\star(s_0)-V_{\mF}^\star(s_0)
        =
        d.
    \]

    \item \textbf{Tightness of support-error accumulation.}
    For every $\gamma\in[0,1)$ and $d>0$, there exists a one-state
    discounted latent MDP and an flow-reversal action set for which
    \[
        \Delta_{\mF}(s)=d,
        \qquad
        V^\star(s)-V_{\mF}^\star(s)
        =
        \frac{d}{1-\gamma}.
    \]

    \item \textbf{Tightness of selection-error accumulation.}
    For every $\gamma\in[0,1)$ and $\xi>0$, there exists a one-state
    flow-reversal-restricted latent MDP and an flow-reversal selector $\omega$ for which
    \[
        \operatorname{err}_{\mathrm{sel}}(\omega;s)=\xi,
        \qquad
        V_{\mF}^\star(s)-V^\omega(s)
        =
        \frac{\xi}{1-\gamma}.
    \]
\end{enumerate}
\end{proposition}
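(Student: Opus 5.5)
The plan is to give three explicit constructions, each a tiny latent MDP with finitely many relevant latents, where every quantity can be computed in closed form.

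For part \textit{a.}, take the one-decision episodic MDP of Section~\ref{sec:persistent-abstraction-separation}, but with deterministic rewards. There is an initial state $s_0$ and an absorbing terminal state $s_{\mathrm{term}}$ that has zero reward forever, so every value there vanishes. Use two latents $z_1,z_2$ with $C=1$ and $\gZ(s_0)=\{z_1\}$. Set $r(s_0,z_1)=0$ and $r(s_0,z_2)=d$, with both transitioning to $s_{\mathrm{term}}$. Then $Q^\star(s_0,z)=r(s_0,z)$, so $V^\star(s_0)=d$, and $V_{\mF}^\star(s_0)=(\gT_{\mF}V_{\mF}^\star)(s_0)=0$. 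This gives $\Delta_{\mF}(s_0)=d=V^\star(s_0)-V_{\mF}^\star(s_0)$. If the latent space $\gZ$ must be a general standard Borel space, we can take $\gZ=\{z_1,z_2\}$, or assign the reward $0$ to every latent except $z_2$.

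For part \textit{b.}, use a single state $s$ with $P(\cdot\mid s,z)=\delta_s$ for every $z$, together with the same two latents and rewards $r(s,z_1)=0$ and $r(s,z_2)=d$. The full Bellman fixed point solves $V=d+\gamma V$, so $V^\star=d/(1-\gamma)$. The restricted fixed point solves $V=0+\gamma V$, so $V_{\mF}^\star=0$. Then
\[
Q^\star(s,z_1)=\gamma V^\star=\frac{\gamma d}{1-\gamma},
\]
hence
\[
\Delta_{\mF}(s)=\frac{d}{1-\gamma}-\frac{\gamma d}{1-\gamma}=d,
\]
while $V^\star-V_{\mF}^\star=d/(1-\gamma)$. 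Uniqueness of the fixed points (Lemma~\ref{lem:restricted-bellman-fixed-point} and the standing assumption on $\gT$) justifies reading these values off the scalar equations.

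For part \textit{c.}, again use a single state with self-loop transitions, now with $C=2$ and anchors $z_1,z_2$ having rewards $\xi$ and $0$. Take $\omega(\cdot\mid s)=\delta_{2}$. Then $V_{\mF}^\star=\xi/(1-\gamma)$ and $V^\omega=0$. We have $Q_{\mF}^\star(s,z_2)=\gamma\xi/(1-\gamma)$, so
\[
\operatorname{err}_{\mathrm{sel}}(\omega;s)=\frac{\xi}{1-\gamma}-\frac{\gamma\xi}{1-\gamma}=\xi,
\]
while the value gap is $\xi/(1-\gamma)$. The measurability and boundedness assumptions are trivial on finite spaces.

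No step is a genuine obstacle. The only care needed is verifying that each toy model meets the standing assumptions of Section~\ref{subsec:general-latent-mdp}, namely standard Borel spaces, bounded measurable reward, and a unique fixed point of $\gT$. These hold immediately because the state and latent sets are finite and rewards are bounded, so $\gT$ is a $\gamma$-contraction. Also, in part \textit{b.} the continuation inside $Q^\star$ must be evaluated with the \emph{full} optimal value, which is exactly what makes $\Delta_{\mF}$ equal $d$ rather than $d/(1-\gamma)$.
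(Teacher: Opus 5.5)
Your proposal is correct and matches the paper's proof essentially verbatim. It uses the same two-latent one-decision MDP for part \emph{a.}, the same single-state self-loop MDP with rewards $d$ and $0$ for part \emph{b.}, and the same two-anchor self-loop MDP with a selector committed to the zero-reward anchor for part \emph{c.}. Your additional checks of the standing assumptions (an absorbing zero-reward terminal state, and finite spaces making the Bellman operators $\gamma$-contractions) are correct and are details the paper leaves implicit.
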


\begin{proof}
We prove each part separately. 

\noindent\textbf{Proof of part \textnormal{\it a}.}
    Consider an MDP with one decision state $s_0$ and one terminal state.  At $s_0$, the full
    latent action set contains two latents, $z^\star$ and $z_{\mF}$.  Executing either latent
    transitions immediately to the terminal state.  Set
    \[
        r(s_0,z^\star)=d
        \qquad\text{and}\qquad
        r(s_0,z_{\mF})=0,
    \]
    and let the latent bank generated via flow-reversal contain only $z_{\mF}$.  Then
    \[
        V^\star(s_0)=d,
        \qquad
        V_{\mF}^\star(s_0)=0,
        \qquad\text{and}\;\;
        Q^\star(s_0,z_{\mF})=0.
    \]
    Therefore
    \[
        \Delta_{\mF}(s_0)
        =
        d-0
        =
        d
        =
        V^\star(s_0)-V_{\mF}^\star(s_0).
    \]

\noindent\textbf{Proof of part \textnormal{\it b}.}
    Consider an MDP with a single nonterminal state $s$.  Both available latents return
    deterministically to $s$.  Let the full action set contain $z^\star$ and $z_{\mF}$, set
    \[
        r(s,z^\star)=d
        \qquad\text{and}\qquad
        r(s,z_{\mF})=0,
    \]
    and let the flow-reversed latent bank contain only $z_{\mF}$.  The full optimal policy repeatedly executes
    $z^\star$, so
    \[
        V^\star(s)
        =
        d+\gamma V^\star(s)
        =
        \frac{d}{1-\gamma}.
    \]
    The restricted policy repeatedly executes $z_{\mF}$, so
    \[
        V_{\mF}^\star(s)
        =
        0+\gamma V_{\mF}^\star(s)
        =
        0.
    \]
    When the current latent is forced to be $z_{\mF}$ but the continuation is valued by the
    full optimum,
    \[
        Q^\star(s,z_{\mF})
        =
        0+\gamma V^\star(s)
        =
        \frac{\gamma d}{1-\gamma}.
    \]
    Hence
    \[
        \Delta_{\mF}(s)
        =
        V^\star(s)-Q^\star(s,z_{\mF})
        =
        \frac{d}{1-\gamma}
        -
        \frac{\gamma d}{1-\gamma}
        =
        d,
    \]
    while
    \[
        V^\star(s)-V_{\mF}^\star(s)
        =
        \frac{d}{1-\gamma}.
    \]

\noindent\textbf{Proof of part \textnormal{\it c}.}
    Consider a one-state restricted MDP whose flow-reversed  bank contains two latents $z_1$ and $z_2$.  Both
    return deterministically to the same state $s$.  Set
    \[
        r(s,z_1)=\xi
        \qquad\text{and}\qquad
        r(s,z_2)=0.
    \]
    The restricted optimal policy repeatedly executes $z_1$, and hence
    \[
        V_{\mF}^\star(s)
        =
        \xi+\gamma V_{\mF}^\star(s)
        =
        \frac{\xi}{1-\gamma}.
    \]
    Let $\omega$ select $z_2$ with probability one.  Then
   $V^\omega(s)=0$.
    Moreover, we have that 
    \[
        Q_{\mF}^\star(s,z_2)
        =
        0+\gamma V_{\mF}^\star(s)
        =
        \frac{\gamma \xi}{1-\gamma},
    \]
    so that 
    \begin{align*}
        \operatorname{err}_{\mathrm{sel}}(\omega;s)
        &=
        V_{\mF}^\star(s)
        -
        Q_{\mF}^\star(s,z_2)
=
        \frac{\xi }{1-\gamma}
        -
        \frac{\gamma \xi}{1-\gamma}=
        \xi,
    \end{align*}
    while
    $V_{\mF}^\star(s)-V^\omega(s)
        =\xi/(1-\gamma)$.
\end{proof}
Next we investigate the small and large bias regimes. 
\begin{corollary}[Bias regimes]
    \label{cor:small-large-bias-regimes}
    Fix a target control accuracy $\eps_{\mathrm{ctrl}}>0$ and initial state $s_0$.

    \begin{enumerate}[label=\textnormal{\it \alph*.},itemsep=0pt, topsep=-2pt, leftmargin=20pt]
        \item \textbf{Large bias.} If
$\Delta_{\mF}(s_0)
            >
            \eps_{\mathrm{ctrl}}$,
        then no policy that always executes a latent from the flow-reversed  bank can be
        $\eps_{\mathrm{ctrl}}$-optimal at $s_0$.

\item \textbf{Small bias.} Suppose that, for some $\alpha\in(0,1)$ and deterministic
$\eps_{\mathrm{sel}}(n,\alpha)\geq0$, the bound
$\mathbb P\left(
        \operatorname{err}_{\mathrm{sel}}(\widehat\omega_n)
        \leq \eps_{\mathrm{sel}}(n,\alpha)
    \right)
    \geq 1-\alpha$ holds.
If, in addition,
$\bar{\Delta}_{\mF}
    +\eps_{\mathrm{sel}}(n,\alpha)
    \leq
    (1-\gamma)\eps_{\mathrm{ctrl}}$,
then, with probability at least $1-\alpha$,
\[
    V^\star(s)-V^{\widehat\omega_n}(s)
    \leq \eps_{\mathrm{ctrl}}
    \qquad\text{for every }s\in\gS.
\]
    \end{enumerate}
\end{corollary}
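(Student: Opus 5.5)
Both parts follow directly from Theorem~\ref{thm:general-mdp-abstraction-price}, so the plan is to read them off rather than to develop new machinery.

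\textbf{Part a.} I would argue by contradiction. Take any policy that always executes a latent from the flow-reversed bank. If it is stationary and Markov, it is a stationary flow-reversal selector $\omega$. By the lower bound in \eqref{eq:general-policy-abstraction-selection-bound}, $V^\star(s_0)-V^\omega(s_0)\ge\Delta_{\mF}(s_0)>\eps_{\mathrm{ctrl}}$, so $\omega$ is not $\eps_{\mathrm{ctrl}}$-optimal.

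For a general, possibly history-dependent, bank-restricted policy I would instead use the one-step argument from part \textit{a} of the theorem. At $s_0$ the policy picks some mixture over $c\in\gC$. Its continuation value from any next state is at most $V^\star$, since every bank latent is an admissible latent. Its value at $s_0$ is therefore at most
\[
\sum_c p_c\,Q^\star(s_0,z_c(s_0))\le \max_c Q^\star(s_0,z_c(s_0))=V^\star(s_0)-\Delta_{\mF}(s_0).
\]
This avoids appealing to stationarity, and the only point needing care is stating that any admissible policy's value is dominated by $V^\star$. I take that as the standard fact that $V^\star$ is the optimal value over all policies, which is implicit in the paper's setup.

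\textbf{Part b.} I would work on the event $\gE_n=\{\operatorname{err}_{\mathrm{sel}}(\widehat\omega_n)\le\eps_{\mathrm{sel}}(n,\alpha)\}$, which has probability at least $1-\alpha$. On $\gE_n$, part \textit{c} of Theorem~\ref{thm:general-mdp-abstraction-price} gives, simultaneously for all $s$,
\[
V^\star(s)-V^{\widehat\omega_n}(s)\le\frac{\bar\Delta_{\mF}+\eps_{\mathrm{sel}}(n,\alpha)}{1-\gamma}.
\]
The assumed budget condition makes the right-hand side at most $\eps_{\mathrm{ctrl}}$.

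The only real care point in the whole corollary is the general-policy case of part a. If stationarity is assumed, as the theorem's selectors are, part a is immediate as well.
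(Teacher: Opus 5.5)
Your proposal is correct and matches the paper's proof. For part a the paper uses exactly your one-step argument for a general (possibly randomized, history-dependent) bank-restricted policy: it bounds $V^\pi(s_0)\le\mathbb{E}^\pi[Q^\star(s_0,Z_0)]\le\max_{c}Q^\star(s_0,z_c(s_0))$ using that $V^\star$ dominates every continuation value. Part b is the same direct substitution of the budget condition into the high-probability bound of Theorem~\ref{thm:general-mdp-abstraction-price}\textit{c}; your separate stationary-case argument is just a special case of the general one.
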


\begin{proof}
For part~\textnormal{\it a.}, let $\pi$ be any possibly randomized or
history-dependent policy that always selects a latent from the FRS bank.
Let $Z_0$ denote its latent action at the initial state $s_0$. Since
$V^\star$ upper-bounds the value of any continuation policy, we have
\begin{align*}
    V^\pi(s_0)
    &\leq
    \mathbb E^\pi\left[
        r(s_0,Z_0)+\gamma V^\star(S_1)
        \,\middle|\, S_0=s_0
    \right]\\
    &=
    \mathbb E^\pi\left[
        Q^\star(s_0,Z_0)
        \,\middle|\, S_0=s_0
    \right]\\
    &\leq
    \max_{c\in\gC}
        Q^\star\bigl(s_0,z_c(s_0)\bigr)
    =
    V^\star(s_0)-\Delta_{\mF}(s_0).
\end{align*}
Therefore we deduce that 
\[
    V^\star(s_0)-V^\pi(s_0)
    \geq
    \Delta_{\mF}(s_0)
    >
    \eps_{\mathrm{ctrl}}.
\]
Thus no policy restricted to the flow-reversed bank can be
$\eps_{\mathrm{ctrl}}$-optimal at $s_0$.

    For part~\textnormal{\it b.}, apply
    \eqref{eq:finite-sample-general-mdp-bound} and the assumed inequality to get that
    \[
        V^\star(s)-V^{\widehat\omega_n}(s)
        \le
        \frac{
            \bar{\Delta}_{\mF}
            +
            \eps_{\mathrm{sel}}(n,\alpha)
        }{1-\gamma}
        \le
        \eps_{\mathrm{ctrl}}
    \]
    simultaneously for all states on the event in
    \eqref{eq:finite-sample-selection-event}, which has probability at least $1-\alpha$.
\end{proof}

\subsection{Interpretation and novelty}
\label{subsec:abstraction-price-interpretation}

\textbf{The support gap is an oracle diagnostic of the FRS abstraction.}
The quantity
\[
    \Delta_{\mF}(s)
    =
    V^\star(s)
    -
    \max_{c\in\gC}
    Q^\star\bigl(s,z_c(s)\bigr)
\]
asks a deliberately local question: if future decisions could still use the globally optimal full
latent policy, how much is lost by requiring only the current latent to come from the online flow-reversed
bank?  It therefore measures the quality of the state-dependent flow-reversed action support independently
of critic estimation, replay coverage, hard versus soft backups, or actor parameterization.

\textbf{One-step support error and repeated restriction are not the same quantity.}
The actual oracle \QFRS{} comparator must remain inside the flow-reversed bank at every future state.  Its value
loss is therefore generally larger than the one-step gap.  Part~\textnormal{(a)} shows exactly how
the local support errors may accumulate:
\[
    \Delta_{\mF}(s)
    \le
    V^\star(s)-V_{\mF}^\star(s)
    \leq\bar{\Delta}_{\mF}/(1-\gamma).
\]
The state-dependent occupancy expression in
        \begin{equation}
            \label{eq:oracle-state-dependent-upper}
            V^\star(s)-V_{\mF}^\star(s)
            \leq
            \mathbb{E}_s^{\omega^\dagger}
            \left[
                \sum_{t=0}^{\infty}
                \gamma^t
                \Delta_{\mF}(S_t)
            \right]
            \leq\bar{\Delta}_{\mF}/(1-\gamma).
        \end{equation}
is more informative than the uniform bound: large support
gaps matter only when they occur at states visited by the full-$Q^\star$-greedy flow-reversal selector.
Proposition~\ref{prop:abstraction-price-sharpness} shows that neither the lower coefficient nor the
factor $1/(1-\gamma)$ can be improved uniformly over discounted MDPs.

\textbf{The theorem separates structural bias from finite-sample learning error.}
The following expression (also given in~\eqref{eq:exact-support-selection-decomposition}) is the central decomposition:
\[
    \underbrace{V^\star-V^\omega}_{\text{total full-MDP loss}}
    =
    \underbrace{V^\star-V_{\mF}^\star}_{\text{online flow-reversal support price}}
    +
    \underbrace{
        (I-\gamma P^\omega)^{-1}
        \operatorname{err}_{\mathrm{sel}}(\omega;\cdot)
    }_{\text{discounted selection error within the bank}}.
\]
The first term is irreducible for anchor-only \QFRS{}: more data cannot remove a behavior that is not
present in the bank.  The second term is reducible: it vanishes as the learned selector approaches
the optimal selector for the restricted MDP.  Corollary~\ref{cor:critic-to-general-mdp-control}
shows that critic estimation and candidate optimization enter only through this second term.

\textbf{The result is backend agnostic.}
Theorem~\ref{thm:general-mdp-abstraction-price} makes no assumption about hard versus soft Bellman
backups, critic ensembles,  subsampling scheme (e.g., REDQ~\cite{chen2021redq}), entropy regularization, or how the selector is
parameterized.  Any learning analysis that establishes
\[
    \operatorname{err}_{\mathrm{sel}}(\widehat\omega_n)
    \le
    \eps_{\mathrm{sel}}(n,\alpha)
\]
can be substituted directly into
\eqref{eq:finite-sample-general-mdp-bound}.  These implementation choices affect how quickly the
selection error decreases; they do not change the structural support term
$\bar{\Delta}_{\mF}$.

\textbf{Connection to the sample-complexity separation.}
Theorem~\ref{thm:persistent-abstraction-separation} shows that committing to a $C$-mode persistent
FRS abstraction can reduce online identification complexity.
Theorem~\ref{thm:general-mdp-abstraction-price} bounds the value loss caused
by the FRS restriction and selection error. Corollary~\ref{cor:small-large-bias-regimes}
translates this bound into conditions for achieving a target value-suboptimality
tolerance $\eps_{\mathrm{ctrl}}$ relative to the optimal full-latent policy.
In the small-bias regime,
\[
    \bar{\Delta}_{\mF}
    +
    \eps_{\mathrm{sel}}(n,\alpha)
    \le
    (1-\gamma)\eps_{\mathrm{ctrl}},
\]
the restricted learner can meet the desired full-MDP control tolerance while retaining the smaller
action problem.  In the large-bias regime, if
$\Delta_{\mF}(s_0)
    >
    \eps_{\mathrm{ctrl}}$,
no amount of \QFRS{} data can reach the target accuracy at $s_0$.

\textbf{Persistent action abstraction versus continuous noise-space RL.}
The distinction concerns the online action class, rather than a particular
initialization scheme. In the anchor-restricted setting analyzed here,
\QFRS{} receives the statistical benefit of a smaller action class and pays
the persistent support price quantified above. Continuous noise-space RL
instead learns an actor or perturbation that selects latents beyond a fixed
anchor bank. Examples include DSRL \citep{wagenmaker2025dsrl}, FRS-prefill
followed by DSRL \citep{tang2026frs}, human-guided noise-actor learning in
UniSteer \citep{lu2026unisteer}, and the trust-region noise perturbations of
LP-DS \citep{simsir2026lpds}. These methods use different sources of guidance
and constraints, but do not impose the persistent finite FRS action
restriction studied here.

When the online class retains all latent modes, it need not incur the fixed
FRS-bank support gap. Theorem~\ref{thm:persistent-abstraction-separation}
shows that guaranteeing correctness over this broader class can nevertheless
require additional online exploration, even when the FRS bank is sufficient
on the instance encountered. FRS-only prefill is a concrete example: it
provides information about bank modes without ruling out better modes outside
the bank. This is a worst-case statement under the theorem's information and
model assumptions, not a method-specific lower bound for every noise-space
RL algorithm; additional guidance, structural assumptions, or restrictions
on reachable latents require separate analysis. Together, the two results
identify the tradeoff between lower identification complexity with explicit
support bias and broader expressiveness with a potentially higher cost of
verifying that no better latent mode exists.

\textbf{Technical Novelty.}
The contraction, Bellman-residual, and resolvent arguments used in the proof are standard tools.
The contribution is their organization around the state-dependent online flow-reversal abstraction.  In
particular, the theorem introduces a full-$Q^\star$ one-step support gap for the latent bank generated via flow-reversal, proves a
sharp bridge from that local geometric quantity to the infinite-horizon value price of persistent
flow-reversal restriction, and separates it exactly from the finite-sample error of selecting among the
current-state flow-reversed latents.  This decomposition is the general-MDP counterpart to the adaptation
separation in Theorem~\ref{thm:persistent-abstraction-separation} and provides the bias coordinate
for the particle bias--complexity frontier developed next.

\section{\QFRS{} traces a bias--complexity frontier}
\label{sec:particle-bias-complexity-frontier}
Theorem~\ref{thm:persistent-abstraction-separation} shows that a online flow-reversal action anchor
abstraction can reduce online identification complexity when a small structured action class is
sufficient.  Theorem~\ref{thm:general-mdp-abstraction-price} shows that the same restriction incurs
an explicit support price.  Exact-anchor \QFRS{} therefore represents one endpoint of a statistical
tradeoff: it has the smallest persistent action class, but it can exclude useful latents that lie
near an flow-reversed anchor rather than exactly on the anchor bank.

Particle \QFRS{} relaxes the abstraction locally.  The purpose of this section is to formalize the
resulting frontier.  We first define a deterministic local envelope (whose size is parameterized by  $\reach$) around the
state-dependent flow-reversed anchors.  We then show that increasing $\reach$ monotonically reduces the
one-step support gap while monotonically increasing, at every fixed behavioral resolution, the maximal number
of pairwise behaviorally distinguishable local candidates.  Finally, under a finite-state generative model, we give an explicit finite-sample
bound in which both quantities appear.

We use $\reach\ge0$ for the deterministic \emph{local-reach radius} in the analysis. 
The symbol $\eps_{\mathrm B}>0$ denotes
resolution in the Bellman behavioral pseudometric, whereas $\eps_z>0$ denotes Euclidean
resolution in latent space.
The deterministic local reach $\reach$ around flow-reversed action anchors is also not identical to the Gaussian particle scale
$\sigma_{p}$ used by the implementation.  A Gaussian scale determines a distribution of
finitely many random candidates and has unbounded support.  A separate result at the end of this
section gives a formal connection between a finite Gaussian cloud and a bounded local envelope of
reach $\reach$.

\subsection{Continuous local action abstractions}
\label{subsec:continuous-local-action-abstractions}

Throughout this section, retain the discounted latent-action MDP
$\gM=(\gS,\gZ,P,r,\gamma)$
from Section~\ref{sec:general-mdp-abstraction-price}, with the following additional assumptions.

\begin{assumption}[Finite-state continuous latent model]
    \label{ass:finite-state-continuous-latent}
    The state space $\gS=\{1,\ldots,S\}$ is finite with 
        $S<\infty$,
    the latent action space is
    $\gZ=\R^{d_z}$,
    and for some finite $r_{\max}>0$, rewards satisfy $r(s,z)\in [0,r_{\max}]$ for every $(s,z)\in\gS\times\gZ$.
    For every state $s$ and next state $s'$, the maps
    \[
        z\mapsto r(s,z)
        \qquad\text{and}\qquad
        z\mapsto P(s'\mid s,z)
    \]
    are continuous.  The full optimal value $V^\star$ is the unique bounded fixed point of the
    full Bellman operator $\gT$, and
    $V^\star(s)=\sup_{z\in\gZ}Q^\star(s,z)$
    for every state $s$.
\end{assumption}

Recall that  $\gC=\{1,\ldots,C\}$
is the fixed coarse-command set and let
$z_c(s)=F(s,c)$
be the corresponding online flow-reversed anchor, as in Theorem~\ref{thm:general-mdp-abstraction-price}.

\begin{definition}[Local-reach-$\reach$ flow-reversed action set]
    \label{def:radius-reach-frs-action-set}
    For every reach $\reach\ge0$ and state $s\in\gS$, define
    \begin{equation}
        \label{eq:radius-reach-frs-action-set}
        \gZ_\reach(s)
        :=
        \bigcup_{c\in\gC}
        B_2\bigl(z_c(s),\reach\bigr),
    \end{equation}
    where
$B_2(z,\reach)
        :=
        \left\{
            \tilde z\in\R^{d_z}:
            \lVert \tilde z-z\rVert_2\le \reach
        \right\}$.
    For a bounded function $V:\gS\to\R$, define the reach-$\reach$ Bellman optimality operator
    \begin{equation}
        \label{eq:radius-reach-bellman-operator}
        (\gT_\reach V)(s)
        :=
        \max_{z\in\gZ_\reach(s)}
        \left\{
            r(s,z)
            +
            \gamma
            \sum_{s'\in\gS}
            P(s'\mid s,z)V(s')
        \right\}.
    \end{equation}
\end{definition}

Since $\gZ_\reach(s)$ is a finite union of compact Euclidean balls and the maximand in
\eqref{eq:radius-reach-bellman-operator} is continuous in $z$, the maximum is attained.

A stationary reach-$\reach$ policy is a Markov kernel $\pi(\mathrm dz\,|\, s)$ supported on
$\gZ_\reach(s)$ at every state.  Let $V^\pi$ denote its value.  Write $V_\reach^\star$ for the unique
fixed point of $\gT_\reach$.

\begin{lemma}[Well-posedness of the reach-$\reach$ control problem]
    \label{lem:radius-reach-well-posedness} Suppose  Assumption~\ref{ass:finite-state-continuous-latent} holds.
    For every $\reach\ge0$, the operator $\gT_\reach$ is a monotone
    $\gamma$-contraction in the supremum norm.  Hence it has a unique bounded fixed point
    $V_\reach^\star$.  There exists a deterministic stationary reach-$\reach$ policy
    $\pi_\reach^\star$ satisfying
    $V^{\pi_\reach^\star}=V_\reach^\star$.
    Moreover, the bounds 
   $V^\pi(s)
        \leq
        V_\reach^\star(s)
        \leq
        V^\star(s)$
    hold
    for every stationary reach-$\reach$ policy $\pi$ and every state $s\in \mathcal{S}$.
\end{lemma}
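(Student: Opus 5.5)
The plan mirrors the proof of Lemma~\ref{lem:restricted-bellman-fixed-point}, with the finite command set $\gC$ replaced by the compact set $\gZ_\reach(s)$. For a finite state space, bounded functions $V:\gS\to\R$ are just vectors in $\R^S$, so measurability issues disappear.

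\textbf{Step 1: $\gT_\reach$ is a well-defined monotone contraction.} Fix $V$ and write the maximand in \eqref{eq:radius-reach-bellman-operator} as $Q_V(s,z)=r(s,z)+\gamma\sum_{s'}P(s'\mid s,z)V(s')$. By Assumption~\ref{ass:finite-state-continuous-latent}, this is a finite sum of continuous functions of $z$, hence continuous. The set $\gZ_\reach(s)$ is a finite union of closed balls, hence compact and nonempty. Therefore the maximum is attained and finite, and $\gT_\reach V$ is bounded because $0\le r\le r_{\max}$.

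For monotonicity: $V\le W$ implies $Q_V\le Q_W$ pointwise, and taking the maximum preserves the inequality. For the contraction property, use $|\max_z f-\max_z g|\le\max_z|f-g|$ together with $|Q_V(s,z)-Q_W(s,z)|\le\gamma\lVert V-W\rVert_\infty$. Banach's fixed point theorem on $(\R^S,\lVert\cdot\rVert_\infty)$ then yields a unique fixed point $V_\reach^\star$.

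\textbf{Step 2: an optimal deterministic stationary policy exists.} For each state $s$, choose $z_\reach^\star(s)\in\argmax_{z\in\gZ_\reach(s)}Q_{V_\reach^\star}(s,z)$; it exists by compactness and continuity. Because $\gS$ is finite, no measurable-selection theorem is needed: any choice defines a Markov kernel $\pi_\reach^\star(\cdot\mid s)=\delta_{z_\reach^\star(s)}$ supported on $\gZ_\reach(s)$. Its policy operator $\gT^{\pi}V(s)=\int Q_V(s,z)\,\pi(\mathrm dz\mid s)$ is a $\gamma$-contraction whose unique fixed point is $V^\pi$. By construction, $\gT^{\pi_\reach^\star}V_\reach^\star=\gT_\reach V_\reach^\star=V_\reach^\star$, so uniqueness gives $V^{\pi_\reach^\star}=V_\reach^\star$.

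\textbf{Step 3: the two inequalities.} For any stationary reach-$\reach$ policy $\pi$, the kernel is supported on $\gZ_\reach(s)$, so $\gT^\pi V\le\gT_\reach V$ pointwise. Hence $V^\pi=\gT^\pi V^\pi\le\gT_\reach V^\pi$. Iterating the monotone contraction gives $V^\pi\le\gT_\reach^kV^\pi\to V_\reach^\star$.

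Similarly, $\gZ_\reach(s)\subseteq\R^{d_z}$ gives $\gT_\reach V\le\gT V$. Thus $\gT_\reach V^\star\le\gT V^\star=V^\star$, and iterating yields $V_\reach^\star\le V^\star$.

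The only step that requires care is Step~1, namely attainment of the maximum and boundedness. Compactness of $\gZ_\reach(s)$ and continuity from Assumption~\ref{ass:finite-state-continuous-latent} settle both. One should also check that $V^\pi$ is defined as the fixed point of $\gT^\pi$; this requires $z\mapsto Q_V(s,z)$ to be integrable against $\pi(\cdot\mid s)$, which holds because it is bounded and continuous. All remaining steps repeat the earlier lemma verbatim.
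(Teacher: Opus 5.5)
Your proof is correct and follows the paper's argument essentially step for step. You use compactness of $\gZ_\reach(s)$ together with continuity to get attainment, the sup-norm contraction and monotonicity of $\gT_\reach$, a pointwise argmax choice for $\pi_\reach^\star$ (valid because $\gS$ is finite), and the comparisons $\gT^\pi\le\gT_\reach\le\gT$ for the two inequalities. The only cosmetic difference is in proving $V^\pi\le V_\reach^\star$: you iterate the monotone $\gT_\reach$ upward from $V^\pi$, as in Lemma~\ref{lem:restricted-bellman-fixed-point}, whereas the paper iterates $\gT^\pi$ downward from $V_\reach^\star$; both are equally valid.
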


\begin{proof}
    Fix bounded functions $V,W:\gS\to\R$.  For any state $s\in \mathcal{S}$,
    \begin{align*}
        \left|
            (\gT_\reach V)(s)
            -
            (\gT_\reach W)(s)
        \right|
        &\leq
        \max_{z\in\gZ_\reach(s)}
        \gamma
        \left|
            \sum_{s'\in\gS}
            P(s'\,|\, s,z)
            \bigl(V(s')-W(s')\bigr)
        \right|
        \leq
        \gamma\lVert V-W\rVert_\infty.
    \end{align*}
    Taking the maximum over $s$ yeilds
    \[
        \lVert \gT_\reach V-\gT_\reach W\rVert_\infty
        \le
        \gamma\lVert V-W\rVert_\infty.
    \]
    Thus $\gT_\reach$ is a contraction.  If $V\le W$ pointwise, then the Bellman maximand for
    $V$ is no larger than the Bellman maximand for $W$ for every $(s,z)$, so
    $\gT_\reach V\le\gT_\reach W$.  Hence $\gT_\reach$ is monotone.  The contraction mapping theorem
    gives a unique bounded fixed point $V_\reach^\star$.

    For each state $s$, continuity and compactness imply that the set
    \[
        \argmax_{z\in\gZ_\reach(s)}
        \left\{
            r(s,z)
            +
            \gamma
            \sum_{s'\in\gS}
            P(s'\mid s,z)V_\reach^\star(s')
        \right\}
    \]
    is nonempty.  Since $\gS$ is finite, select one maximizer $z_\reach^\star(s)$ at every state and
    define the deterministic stationary policy
    \[\pi_\reach^\star(\cdot\,|\, s)
        :=
        \delta_{z_\reach^\star(s)}(\cdot).\]
    Its Bellman operator $\gT^{\pi_\reach^\star}$ satisfies
    $\gT^{\pi_\reach^\star}V_\reach^\star
        =
        \gT_\reach V_\reach^\star
        =
        V_\reach^\star$.
    The policy Bellman operator is also a $\gamma$-contraction, so its bounded fixed point is
    unique.  Therefore
    $V^{\pi_\reach^\star}=V_\reach^\star$.

    For an arbitrary stationary reach-$\reach$ policy $\pi$ and any bounded $V$,
    \[
        \gT^\pi V
        \le
        \gT_\reach V
    \]
    pointwise, since $\gT^\pi V$ averages Bellman values over actions in $\gZ_\reach(s)$ whereas
    $\gT_\reach V$ takes their maximum.  Starting from $V_\reach^\star$ and iterating
    $\gT^\pi$ gives
    \[
        (\gT^\pi)^kV_\reach^\star
        \le
        V_\reach^\star
        \qquad
        \text{for every }k\ge1.
    \]
    The left-hand side converges in supremum norm to $V^\pi$, so
    $V^\pi\le V_\reach^\star$.

    Finally, for every bounded $V$, we have that 
    $\gT_\reach V
        \le
        \gT V$
    as $\gZ_\reach(s)\subseteq\gZ$.  In particular, we have 
    \[
        \gT_\reach V^\star
        \le
        \gT V^\star
        =
        V^\star.
    \]
    Monotonicity of $\gT_\reach$ gives
    $\gT_\reach^kV^\star
        \le
        V^\star$
for every $k\ge1$.
    The left-hand side converges to $V_\reach^\star$, proving
    $V_\reach^\star\le V^\star$.
\end{proof}

\begin{definition}[Reach-$\reach$ one-step support gap]
    \label{def:radius-reach-support-gap}
    For every $\reach\ge0$ and state $s\in\gS$, define
        \[\Delta_\reach(s)
        :=
        V^\star(s)
        -
        \max_{z\in\gZ_\reach(s)}Q^\star(s,z),\]
    and define the corresponding uniform support gap by
        $\bar\Delta_\reach
        :=
        \max_{s\in\gS}\Delta_\reach(s)$.
\end{definition}

Since $\gZ_\reach(s)\subseteq\gZ$, every $\Delta_\reach(s)$ is nonnegative.  At zero reach, we have 
$\gZ_0(s)
    =
    \{z_c(s):c\in\gC\}$,
so that 
\begin{equation}
    \label{eq:reach-zero-recovers-frs}
    \Delta_0(s)=\Delta_{\mF}(s),
    \qquad
    \bar\Delta_0=\bar\Delta_{\mF},
    \qquad\text{and}\qquad
    V_0^\star=V_{\mF}^\star.
\end{equation}

\begin{proposition}[Monotone expressiveness under local reach]
    \label{prop:monotone-local-reach}
    Let $0\le \reach_1\le \reach_2$.  Then, for every state $s$, the following hold:
    \begin{equation*}
        \gZ_{\reach_1}(s)
        \subseteq
        \gZ_{\reach_2}(s),
\quad 
        \Delta_{\reach_2}(s)
        \leq
        \Delta_{\reach_1}(s),
        \quad\text{and}\quad 
        V_{\reach_1}^\star(s)
        \leq
        V_{\reach_2}^\star(s)
        \leq
        V^\star(s),
    \end{equation*}
so that 
    $\bar\Delta_{\reach_2}
        \leq
        \bar\Delta_{\reach_1}$.
    In addition, for every state $s$, the limits are such that 
    \begin{equation}
        \label{eq:reach-gap-converges-pointwise}
        \lim_{\reach\to\infty}\Delta_\reach(s)=0,\quad\text{and}\quad 
    \lim_{\reach\to\infty}\bar\Delta_\reach=0, \end{equation}
   since $\gS$ is finite.
\end{proposition}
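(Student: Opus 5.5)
The argument splits into monotonicity and the limit. Monotonicity should follow from set inclusion together with the order properties of the Bellman operators established in Lemma~\ref{lem:radius-reach-well-posedness}.

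\textbf{Inclusion and the support gap.} If $\reach_1\le\reach_2$, then $B_2(z_c(s),\reach_1)\subseteq B_2(z_c(s),\reach_2)$ for every $c\in\gC$. Taking the union over $c$ gives $\gZ_{\reach_1}(s)\subseteq\gZ_{\reach_2}(s)$. A maximum over a larger set is no smaller, so $\max_{z\in\gZ_{\reach_1}(s)}Q^\star(s,z)\le\max_{z\in\gZ_{\reach_2}(s)}Q^\star(s,z)$. Both maxima are attained because $Q^\star(s,\cdot)=r(s,\cdot)+\gamma\sum_{s'}P(s'\mid s,\cdot)V^\star(s')$ is continuous under Assumption~\ref{ass:finite-state-continuous-latent} and the sets are compact. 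Subtracting from $V^\star(s)$ gives $\Delta_{\reach_2}(s)\le\Delta_{\reach_1}(s)$. Taking the maximum over the finite set $\gS$ gives $\bar\Delta_{\reach_2}\le\bar\Delta_{\reach_1}$.

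\textbf{Value ordering.} The same inclusion gives $\gT_{\reach_1}V\le\gT_{\reach_2}V$ pointwise for every bounded $V$. Apply this to $V_{\reach_1}^\star$ to get $V_{\reach_1}^\star=\gT_{\reach_1}V_{\reach_1}^\star\le\gT_{\reach_2}V_{\reach_1}^\star$. Iterating the monotone contraction $\gT_{\reach_2}$ yields an increasing sequence that converges to $V_{\reach_2}^\star$, so $V_{\reach_1}^\star\le V_{\reach_2}^\star$. The upper bound $V_{\reach_2}^\star\le V^\star$ is already part of Lemma~\ref{lem:radius-reach-well-posedness}.

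\textbf{Limit.} Fix $s$ and $\eta>0$. Since $V^\star(s)=\sup_{z}Q^\star(s,z)$, there is some $z_\eta$ with $Q^\star(s,z_\eta)\ge V^\star(s)-\eta$. Choose any anchor $c\in\gC$. For every $\reach\ge\lVert z_\eta-z_c(s)\rVert_2$ we have $z_\eta\in\gZ_\reach(s)$, and hence $0\le\Delta_\reach(s)\le\eta$. Because $\Delta_\reach(s)$ is nonincreasing in $\reach$ and bounded below by $0$, its limit exists and is at most $\eta$ for every $\eta>0$, so it equals $0$. For the uniform gap, take the threshold $\reach$ to be the maximum of the per-state thresholds over the finitely many states. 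This is where finiteness of $\gS$ enters, giving $\bar\Delta_\reach\le\eta$ beyond that threshold.

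The only step needing care is the limit. It relies on the supremum representation of $V^\star$ from Assumption~\ref{ass:finite-state-continuous-latent}; we do not need the supremum to be attained, since near-maximizers suffice. Everything else is routine set inclusion.
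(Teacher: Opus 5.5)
Your proposal is correct and follows essentially the same route as the paper. Nested balls give the inclusion and the monotonicity of $\Delta_\reach$. The limit uses a near-maximizer $z_\eta$ that is eventually captured by a ball around a fixed anchor, with finiteness of $\gS$ giving the uniform statement. The only minor difference is the value ordering $V_{\reach_1}^\star\le V_{\reach_2}^\star$. The paper appeals to nesting of the admissible policy classes, implicitly via Lemma~\ref{lem:radius-reach-well-posedness}: $\pi_{\reach_1}^\star$ is an admissible reach-$\reach_2$ policy, so its value is at most $V_{\reach_2}^\star$. You instead compare $\gT_{\reach_1}V\le\gT_{\reach_2}V$ and iterate the monotone contraction $\gT_{\reach_2}$, which is equally valid and slightly more self-contained.
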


\begin{proof}
    For each anchor $z_c(s)$, we have that
    \[
        B_2\bigl(z_c(s),\reach_1\bigr)
        \subseteq
        B_2\bigl(z_c(s),\reach_2\bigr).
    \]
    Taking the union over $c\in\gC$ proves
   $\gZ_{\reach_1}(s)
        \subseteq
        \gZ_{\reach_2}(s)$.  Maximizing the same function
    $Q^\star(s,\cdot)$ over a larger set cannot decrease its maximum.  Therefore
    \[
        \max_{z\in\gZ_{\reach_2}(s)}Q^\star(s,z)
        \ge
        \max_{z\in\gZ_{\reach_1}(s)}Q^\star(s,z),
    \]
    which proves  $\Delta_{\reach_2}(s)
        \leq
        \Delta_{\reach_1}(s)$ after subtracting both sides from
    $V^\star(s)$.  Taking the maximum over the finite state set proves the uniform inequality.

    The admissible policy class at radius $\reach_1$ is contained in the admissible policy class at
    radius $\reach_2$.  Therefore optimizing over the larger class gives
    $V_{\reach_1}^\star(s)
        \le
        V_{\reach_2}^\star(s)$.
    The upper bound $V_{\reach_2}^\star\le V^\star$ follows from
    Lemma~\ref{lem:radius-reach-well-posedness}.

    To prove \eqref{eq:reach-gap-converges-pointwise}, fix a state $s$ and an arbitrary command
    $c_0\in\gC$.  Since $C\ge1$, such a command exists.  For every latent
    $z\in\R^{d_z}$,
   we have  $
        z
        \in
        B_2
        \left(
            z_{c_0}(s),
            \lVert z-z_{c_0}(s)\rVert_2
        \right)$.
    Hence
    \[
        \bigcup_{\reach\ge0}\gZ_\reach(s)
        =
        \R^{d_z}.
    \]
    Define
    $q_\reach(s)
        :=
        \max_{z\in\gZ_\reach(s)}Q^\star(s,z)$.
    The quantity $q_\reach(s)$ is nondecreasing in $\reach$ and bounded above by $V^\star(s)$.
    Fix $\eta>0$.  By
    $V^\star(s)=\sup_{z\in\R^{d_z}}Q^\star(s,z)$,
    there exists $z_\eta\in\R^{d_z}$ such that
    $Q^\star(s,z_\eta)
        \ge
        V^\star(s)-\eta$.
    Whenever
    $\reach
        \ge
        \lVert z_\eta-z_{c_0}(s)\rVert_2$,
    the point $z_\eta$ belongs to $\gZ_\reach(s)$, so
    \[
        q_\reach(s)
        \ge
        V^\star(s)-\eta.
    \]
    Since also $q_\reach(s)\le V^\star(s)$, it follows that
    $\lim_{\reach\to\infty}q_\reach(s)
        =
        V^\star(s)$.
    Therefore
    \[
        \lim_{\reach\to\infty}\Delta_\reach(s)
        =
        \lim_{\reach\to\infty}
        \bigl(V^\star(s)-q_\reach(s)\bigr)
        =0,
    \]
    proving \eqref{eq:reach-gap-converges-pointwise}.

    Finally, fix $\eta>0$.  For every $s\in\gS$, pointwise convergence gives a finite
    $\reach_s(\eta)$ such that
    \[
        \Delta_\reach(s)\leq\eta
        \qquad
        \text{whenever }\reach\geq\reach_s(\eta).
    \]
    Since $\gS$ is finite, define 
    \[
        \reach_{\eta}
        :=
        \max_{s\in\gS}\reach_s(\eta)
        <\infty.
    \]
    Thus, for every $\reach\geq\reach_{\eta}$, we have that 
    \[
        \bar\Delta_\reach
        =
        \max_{s\in\gS}\Delta_\reach(s)
        \leq
        \eta.
    \]
    Since $\eta>0$ was arbitrary, $\lim_{\reach\to\infty}\bar\Delta_\reach=0$
   follows.
\end{proof}

\begin{proposition}[Reach-$\reach$ abstraction price]
    \label{prop:radius-reach-abstraction-price}
    For every $\reach\ge0$ and state $s$,
    \begin{equation}
        \label{eq:radius-reach-abstraction-sandwich}
        \Delta_\reach(s)
        \le
        V^\star(s)-V_\reach^\star(s)
        \le
        \bar\Delta_\reach/(1-\gamma).
    \end{equation}
    More precisely, let $z_\reach^\dagger(s)$ be any maximizer of
    $Q^\star(s,\cdot)$ over $\gZ_\reach(s)$ and let $\pi_\reach^\dagger$ be the deterministic policy
    selecting $z_\reach^\dagger(s)$.  Then
    \[
        V^\star-V^{\pi_\reach^\dagger}
        =
        (I-\gamma P^{\pi_\reach^\dagger})^{-1}
        \Delta_\reach(\cdot)=
        \sum_{t=0}^{\infty}
        \gamma^t
        (P^{\pi_\reach^\dagger})^t
        \Delta_\reach(\cdot),\]
    and hence
    \begin{equation}
        \label{eq:radius-reach-occupancy-bound}
        V^\star(s)-V_\reach^\star(s)
        \leq
        \mathbb{E}_s^{\pi_\reach^\dagger}
        \left[
            \sum_{t=0}^{\infty}
            \gamma^t
            \Delta_\reach(S_t)
        \right].
    \end{equation}
\end{proposition}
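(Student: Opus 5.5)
The proof will follow the argument for part~\textit{a.} of Theorem~\ref{thm:general-mdp-abstraction-price}, replacing the finite anchor bank by the compact local set $\gZ_\reach(s)$. Lemma~\ref{lem:radius-reach-well-posedness} supplies the monotone contraction $\gT_\reach$, its fixed point $V_\reach^\star$, and the domination $V^\pi\le V_\reach^\star\le V^\star$ for every stationary reach-$\reach$ policy.

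\textbf{Lower bound.} By monotonicity and $V_\reach^\star\le V^\star$,
\[
V_\reach^\star(s)=(\gT_\reach V_\reach^\star)(s)\le(\gT_\reach V^\star)(s)=\max_{z\in\gZ_\reach(s)}Q^\star(s,z)=V^\star(s)-\Delta_\reach(s).
\]
Rearranging gives the first inequality in \eqref{eq:radius-reach-abstraction-sandwich}.

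\textbf{Resolvent identity.} First, I will fix a maximizer $z_\reach^\dagger(s)$ of the continuous function $Q^\star(s,\cdot)$ over the compact set $\gZ_\reach(s)$. Continuity of $Q^\star(s,\cdot)$ follows from the continuity of $r$ and $P$ in Assumption~\ref{ass:finite-state-continuous-latent}, and the maximizer exists. Because $\gS$ is finite, the choice of maximizer raises no measurability issue.

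Let $\pi_\reach^\dagger$ be the corresponding deterministic policy. By construction, $(\gT^{\pi_\reach^\dagger}V^\star)(s)=V^\star(s)-\Delta_\reach(s)$.

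Next, set $D:=V^\star-V^{\pi_\reach^\dagger}$. Adding and subtracting $\gT^{\pi_\reach^\dagger}V^\star$ in $V^\star-\gT^{\pi_\reach^\dagger}V^{\pi_\reach^\dagger}$ gives
\[
D=\Delta_\reach(\cdot)+\gamma P^{\pi_\reach^\dagger}D,
\]
that is, $(I-\gamma P^{\pi_\reach^\dagger})D=\Delta_\reach$.

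Since $P^{\pi_\reach^\dagger}$ is a stochastic matrix on the finite space $\gS$, the Neumann series $\sum_t\gamma^t(P^{\pi_\reach^\dagger})^t$ converges and inverts $I-\gamma P^{\pi_\reach^\dagger}$. This yields the stated series representation. Its $t$-th term evaluated at $s$ equals $\mathbb E_s^{\pi_\reach^\dagger}[\Delta_\reach(S_t)]$.

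\textbf{Upper bounds.} Because $\pi_\reach^\dagger$ is an admissible reach-$\reach$ policy, Lemma~\ref{lem:radius-reach-well-posedness} gives $V^{\pi_\reach^\dagger}\le V_\reach^\star$. Hence $V^\star-V_\reach^\star\le D$, which is the occupancy bound \eqref{eq:radius-reach-occupancy-bound}.

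To finish, I will bound each $\Delta_\reach(S_t)$ by $\bar\Delta_\reach$. Nonnegativity of $\Delta_\reach$ justifies interchanging the sum and the expectation. This gives $\bar\Delta_\reach/(1-\gamma)$.

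\textbf{Main obstacle.} The only non-routine point is the existence of the maximizer $z_\reach^\dagger$. This requires continuity of $Q^\star(s,\cdot)=r(s,\cdot)+\gamma\sum_{s'}P(s'\mid s,\cdot)V^\star(s')$. That continuity holds because the sum is finite and $V^\star$ is bounded. Everything else is a direct transcription of the earlier proof.
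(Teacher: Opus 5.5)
Your proposal is correct and matches the paper's proof step for step. The steps are: the lower bound from monotonicity of $\gT_\reach$ applied to $V_\reach^\star\le V^\star$; the resolvent identity for $V^\star-V^{\pi_\reach^\dagger}$, obtained by adding and subtracting $\gT^{\pi_\reach^\dagger}V^\star$ and inverting via the Neumann series; and the upper bounds from $V^{\pi_\reach^\dagger}\le V_\reach^\star$ together with $\Delta_\reach\le\bar\Delta_\reach$. Your continuity justification for $Q^\star(s,\cdot)$ (a finite sum with bounded $V^\star$) spells out what the paper cites as ``compactness and continuity''.
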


\begin{proof}
    By Lemma~\ref{lem:radius-reach-well-posedness}, we have that
    $V_\reach^\star\leq V^\star$.
    Monotonicity of $\gT_\reach$ therefore yields
    $\gT_\reach V_\reach^\star
        \leq
        \gT_\reach V^\star$.
    Using the fixed-point identity for $V_\reach^\star$ and the definition of
    $\Delta_\reach$,
    \begin{align*}
        V_\reach^\star(s)
        &=
        (\gT_\reach V_\reach^\star)(s)
  \leq
        (\gT_\reach V^\star)(s)=
        \max_{z\in\gZ_\reach(s)}Q^\star(s,z)=
        V^\star(s)-\Delta_\reach(s).
    \end{align*}
    Rearranging proves the lower bound in
    \eqref{eq:radius-reach-abstraction-sandwich}.

    For every state $s$, the maximum defining $z_\reach^\dagger(s)$ is attained by compactness and
    continuity.  Since $\gS$ is finite, the deterministic selector is well defined.  By
    construction,
    \begin{equation}
        \label{eq:radius-reach-greedy-residual}
        (\gT^{\pi_\reach^\dagger}V^\star)(s)
        =
        V^\star(s)-\Delta_\reach(s).
    \end{equation}
    Define
    $D_\reach^\dagger
        :=
        V^\star-V^{\pi_\reach^\dagger}$.
    Using the Bellman fixed-point identity
    $V^{\pi_\reach^\dagger}=\gT^{\pi_\reach^\dagger}V^{\pi_\reach^\dagger}$, adding and subtracting
    $\gT^{\pi_\reach^\dagger}V^\star$, and applying
    \eqref{eq:radius-reach-greedy-residual}, we obtain
    \begin{align*}
        D_\reach^\dagger(s)
        &=
        V^\star(s)
        -
        (\gT^{\pi_\reach^\dagger}V^{\pi_\reach^\dagger})(s)\\
&   =
        V^\star(s)
        -
        (\gT^{\pi_\reach^\dagger}V^\star)(s)
       +
        (\gT^{\pi_\reach^\dagger}V^\star)(s)
        -
        (\gT^{\pi_\reach^\dagger}V^{\pi_\reach^\dagger})(s)\\
       & =        \Delta_\reach(s)
        +
        \gamma
        (P^{\pi_\reach^\dagger}D_\reach^\dagger)(s).
    \end{align*}
    Hence
    $(I-\gamma P^{\pi_\reach^\dagger})D_\reach^\dagger
        =
        \Delta_\reach(\cdot)$.
    A Markov operator has supremum-norm operator norm at most one.  Since $\gamma<1$, the Neumann
    series
    $\sum_{t=0}^{\infty}
        \gamma^t(P^{\pi_\reach^\dagger})^t$
    converges in operator norm and equals
    $(I-\gamma P^{\pi_\reach^\dagger})^{-1}$.  This proves $ V^\star-V^{\pi_\reach^\dagger}
        =
        (I-\gamma P^{\pi_\reach^\dagger})^{-1}
        \Delta_\reach(\cdot)$.

    Evaluating the Neumann series at state $s$ gives
    \[
        V^\star(s)-V^{\pi_\reach^\dagger}(s)
        =
        \mathbb{E}_s^{\pi_\reach^\dagger}
        \left[
            \sum_{t=0}^{\infty}
            \gamma^t\Delta_\reach(S_t)
        \right].
    \]
    Since $\pi_\reach^\dagger$ is an admissible reach-$\reach$ policy,
    $V_\reach^\star(s)
        \ge
        V^{\pi_\reach^\dagger}(s)$.
    Therefore
    \[
        V^\star(s)-V_\reach^\star(s)
        \le
        V^\star(s)-V^{\pi_\reach^\dagger}(s),
    \]
    proving \eqref{eq:radius-reach-occupancy-bound}.  Finally,
    $\Delta_\reach(S_t)
        \le
        \bar\Delta_\reach$
    for every $t$, so that
    \[
        V^\star(s)-V_\reach^\star(s)
        \le
        \sum_{t=0}^{\infty}
        \gamma^t\bar\Delta_\reach
        =
        \bar\Delta_\reach/(1-\gamma).
    \]
\end{proof}

\subsection{Selection error within the \texorpdfstring{reach-$\reach$}{radius-reach} abstraction}
\label{subsec:radius-reach-selection-error}

For $z\in\gZ_\reach(s)$, define the optimal action-value function of the reach-$\reach$ restricted
MDP by
\begin{equation}
    \label{eq:radius-reach-restricted-qstar}
    Q_\reach^\star(s,z)
    :=
    r(s,z)
    +
    \gamma
    \sum_{s'\in\gS}
    P(s'\mid s,z)V_\reach^\star(s').
\end{equation}
Then
$V_\reach^\star(s)
    =
    \max_{z\in\gZ_\reach(s)}Q_\reach^\star(s,z)$.
For a stationary reach-$\reach$ policy $\pi$, define
\begin{align}
    r^\pi(s)&:=\int_{\gZ_\reach(s)}r(s,z)\pi(\mathrm dz\mid s)
    \quad\text{and}\quad
    P^\pi(s'\mid s)
    :=
    \int_{\gZ_\reach(s)}P(s'\mid s,z)\pi(\mathrm dz\mid s),
\end{align}
and define its pointwise selection error inside the reach-$\reach$ abstraction by
\begin{equation}
    \label{eq:radius-reach-selection-error}
    e_{\mathrm{sel},\reach}(\pi;s)
    :=
    V_\reach^\star(s)
    -
    \int_{\gZ_\reach(s)}
    Q_\reach^\star(s,z)
    \pi(\mathrm dz\mid s).
\end{equation}
Let
$\overline e_{\mathrm{sel},\reach}(\pi)
    :=
    \max_{s\in\gS}e_{\mathrm{sel},\reach}(\pi;s)$.
Since the integral in \eqref{eq:radius-reach-selection-error} is an average of values no larger
than their maximum, $e_{\mathrm{sel},\reach}(\pi;s)\ge0$.

\begin{proposition}[Exact support--selection decomposition at reach $\reach$]
    \label{prop:radius-reach-support-selection-decomposition}
    For every stationary reach-$\reach$ policy $\pi$,
    \begin{equation}
        \label{eq:radius-reach-support-selection-decomposition}
        V^\star-V^\pi
        =
        \bigl(V^\star-V_\reach^\star\bigr)
        +
        (I-\gamma P^\pi)^{-1}
        e_{\mathrm{sel},\reach}(\pi;\cdot).
    \end{equation}
    Equivalently, for every state $s$, the decomposition 
    \begin{align}
        V^\star(s)-V^\pi(s)
        ={}&
        V^\star(s)-V_\reach^\star(s)
        +
        \mathbb{E}_s^\pi
        \left[
            \sum_{t=0}^{\infty}
            \gamma^t
            e_{\mathrm{sel},\reach}(\pi;S_t)
        \right],
        \label{eq:radius-reach-support-selection-occupancy}
    \end{align}
 holds so that 
    \begin{equation}
        \label{eq:radius-reach-support-selection-uniform}
        V^\star(s)-V^\pi(s)
        \in [\Delta_\reach(s),
            (\bar\Delta_\reach
            +
            \overline e_{\mathrm{sel},\reach}(\pi))/(1-\gamma)]
    \end{equation} 
    for every state $s$.
\end{proposition}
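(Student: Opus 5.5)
The plan is to mirror the proof of Theorem~\ref{thm:general-mdp-abstraction-price}, part~\textit{b}, with the finite flow-reversal bank replaced by the compact set $\gZ_\reach(s)$ and sums over $c\in\gC$ replaced by integrals against $\pi(\mathrm dz\,|\,s)$. First, I will write $V^\star-V^\pi=(V^\star-V_\reach^\star)+(V_\reach^\star-V^\pi)$, which is a trivial algebraic identity. The whole task then reduces to identifying the second term as the resolvent of the selection error.

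Set $D^\pi:=V_\reach^\star-V^\pi$. I will use the policy fixed-point identity $V^\pi=\gT^\pi V^\pi$, where
\[
(\gT^\pi V)(s)=r^\pi(s)+\gamma(P^\pi V)(s)=\int_{\gZ_\reach(s)}\Bigl(r(s,z)+\gamma\sum_{s'}P(s'\,|\,s,z)V(s')\Bigr)\pi(\mathrm dz\,|\,s),
\]
which is well defined because $r$ is bounded and $\gS$ is finite. Adding and subtracting $\gT^\pi V_\reach^\star$, and observing that $(\gT^\pi V_\reach^\star)(s)=\int Q_\reach^\star(s,z)\,\pi(\mathrm dz\,|\,s)$ by \eqref{eq:radius-reach-restricted-qstar}, gives
\[
D^\pi(s)=e_{\mathrm{sel},\reach}(\pi;s)+\gamma(P^\pi D^\pi)(s).
\]
Hence $(I-\gamma P^\pi)D^\pi=e_{\mathrm{sel},\reach}(\pi;\cdot)$. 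Since $P^\pi$ is a stochastic matrix on the finite set $\gS$ and $\gamma<1$, the Neumann series inverts $I-\gamma P^\pi$, which yields \eqref{eq:radius-reach-support-selection-decomposition}. Evaluating the series at $s$ and using $((P^\pi)^t e)(s)=\mathbb E_s^\pi[e(S_t)]$ gives the occupancy form \eqref{eq:radius-reach-support-selection-occupancy}. The interchange of sum and expectation is justified by nonnegativity and boundedness of $e_{\mathrm{sel},\reach}$ together with monotone convergence.

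For the interval \eqref{eq:radius-reach-support-selection-uniform}, the lower bound follows because the selection term is nonnegative. Combined with the lower bound of Proposition~\ref{prop:radius-reach-abstraction-price}, this gives $V^\star(s)-V^\pi(s)\ge V^\star(s)-V_\reach^\star(s)\ge\Delta_\reach(s)$; alternatively, one can use $V^\pi\le V_\reach^\star$ from Lemma~\ref{lem:radius-reach-well-posedness}. For the upper bound, I will bound the first term by $\bar\Delta_\reach/(1-\gamma)$ via Proposition~\ref{prop:radius-reach-abstraction-price}, and the occupancy sum by $\sum_t\gamma^t\,\overline e_{\mathrm{sel},\reach}(\pi)=\overline e_{\mathrm{sel},\reach}(\pi)/(1-\gamma)$.

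The only mildly delicate step is measurability and integrability: $\int Q_\reach^\star(s,z)\,\pi(\mathrm dz\,|\,s)$ must be well defined and must equal $(\gT^\pi V_\reach^\star)(s)$. This follows from continuity (hence Borel measurability) and boundedness of $z\mapsto r(s,z)$ and $z\mapsto P(s'\,|\,s,z)$ under Assumption~\ref{ass:finite-state-continuous-latent}. The same facts guarantee that $V^\pi$ is the unique bounded fixed point of the $\gamma$-contraction $\gT^\pi$, exactly as in Lemma~\ref{lem:radius-reach-well-posedness}. Otherwise the argument is routine.
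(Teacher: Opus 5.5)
Your proposal is correct and follows essentially the same route as the paper: split off $V^\star-V_\reach^\star$, obtain $(I-\gamma P^\pi)(V_\reach^\star-V^\pi)=e_{\mathrm{sel},\reach}(\pi;\cdot)$ by adding and subtracting $\gT^\pi V_\reach^\star$, invert via the Neumann series, and bound the two pieces using Proposition~\ref{prop:radius-reach-abstraction-price} and $\sum_t\gamma^t=1/(1-\gamma)$. For the lower bound the paper uses $V^\pi\le V_\reach^\star$ from Lemma~\ref{lem:radius-reach-well-posedness}, which you list as an alternative; your nonnegativity-of-the-resolvent argument is equally valid, and your measurability remark makes explicit a point the paper leaves implicit.
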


\begin{proof}
    Let $\gT^\pi$ denote the Bellman operator of $\pi$:
    \[
        (\gT^\pi V)(s)
        =
        r^\pi(s)
        +
        \gamma
        \sum_{s'\in\gS}
        P^\pi(s'\mid s)V(s').
    \]
    By \eqref{eq:radius-reach-restricted-qstar} and
    \eqref{eq:radius-reach-selection-error}, we have that 
    \begin{align*}
        e_{\mathrm{sel},\reach}(\pi;s)
        &={}
        V_\reach^\star(s)
        -
        (\gT^\pi V_\reach^\star)(s).
    \end{align*}
    Since $V^\pi=\gT^\pi V^\pi$, adding and subtracting
    $\gT^\pi V_\reach^\star$ gives
    \begin{align*}
        V_\reach^\star(s)-V^\pi(s)
        &={}
        V_\reach^\star(s)
        -
        (\gT^\pi V_\reach^\star)(s)+
        (\gT^\pi V_\reach^\star)(s)
        -
        (\gT^\pi V^\pi)(s)
        \\
        &={}
        e_{\mathrm{sel},\reach}(\pi;s)
        +
        \gamma
        \sum_{s'\in\gS}
        P^\pi(s'\mid s)
        \bigl(V_\reach^\star(s')-V^\pi(s')\bigr).
    \end{align*}
    Hence, in vector notation,
    $(I-\gamma P^\pi)
        \bigl(V_\reach^\star-V^\pi\bigr)
        =
        e_{\mathrm{sel},\reach}(\pi;\cdot)$.
    Since $P^\pi$ is a Markov operator and $\gamma<1$, we have that 
    $(I-\gamma P^\pi)^{-1}
        =
        \sum_{t=0}^{\infty}\gamma^t(P^\pi)^t$
    in the supremum-norm operator topology.  Therefore
    \begin{equation}
        \label{eq:radius-reach-restricted-policy-gap}
        V_\reach^\star-V^\pi
        =
        (I-\gamma P^\pi)^{-1}
        e_{\mathrm{sel},\reach}(\pi;\cdot).
    \end{equation}
    Adding $V^\star-V_\reach^\star$ to both sides of
    \eqref{eq:radius-reach-restricted-policy-gap} proves
    \eqref{eq:radius-reach-support-selection-decomposition}.  Evaluating the Neumann series at
    state $s$ proves \eqref{eq:radius-reach-support-selection-occupancy}.

    The lower bound in \eqref{eq:radius-reach-support-selection-uniform} follows since
    $V^\pi\le V_\reach^\star$ by Lemma~\ref{lem:radius-reach-well-posedness} and
    \[
        V^\star(s)-V_\reach^\star(s)
        \ge
        \Delta_\reach(s)
    \]
    by Proposition~\ref{prop:radius-reach-abstraction-price}.  For the upper bound, the same
    proposition yields
    \[
        V^\star(s)-V_\reach^\star(s)
        \leq
      \bar\Delta_\reach/(1-\gamma),
    \]
    while the Neumann series and nonnegativity give
    \[
        \bigl(
            (I-\gamma P^\pi)^{-1}
            e_{\mathrm{sel},\reach}(\pi;\cdot)
        \bigr)(s)
        \leq
        \sum_{t=0}^{\infty}
        \gamma^t
        \overline e_{\mathrm{sel},\reach}(\pi)
        =
   \overline e_{\mathrm{sel},\reach}(\pi)/(1-\gamma).
    \]
    Combining the two inequalities proves the result.
\end{proof}

\begin{remark}[Recovery of Theorem~\ref{thm:general-mdp-abstraction-price}]
    At zero reach, $\reach=0$, the local action set is the exact FRS anchor bank by
    \eqref{eq:reach-zero-recovers-frs}.  Proposition~\ref{prop:radius-reach-support-selection-decomposition}
    therefore reduces to the support--selection decomposition of
    Theorem~\ref{thm:general-mdp-abstraction-price}.
\end{remark}

\subsection{A geometric rate for the support gap}
\label{subsec:geometric-support-gap-rate}

Monotonicity alone does not quantify how quickly the support gap decreases.  The following
assumption gives a local quadratic rate when a full-MDP optimal latent lies near the flow-reversed anchor bank.

\begin{assumption}[Smooth local optimal-value geometry]
    \label{ass:smooth-local-optimal-value-geometry}
    For every state $s$, there exists a latent
    \[
        z^\star(s)
        \in
        \argmax_{z\in\gZ}Q^\star(s,z).
    \]
    The map $z\mapsto Q^\star(s,z)$ is differentiable on every line segment joining
    $z^\star(s)$ to an flow-reversed anchor $z_c(s)$ and has an $\Lsmooth$-Lipschitz gradient on those
    segments.  Moreover, 
    \[
        \nabla_zQ^\star\bigl(s,z^\star(s)\bigr)=0.
    \]
\end{assumption}
Here $z^\star(s)$ is an optimizer over the full latent space $\gZ$, not over
the reach-$\reach$ action class.  The latter is the constrained approximation
whose loss is measured by $\Delta_\reach(s)$.  Since in our setting
$\gZ=\R^{d_z}$, differentiability at an attained full-space optimum implies
$\nabla_z Q^\star(s,z^\star(s))=0$.

Define the distance from a full-MDP optimal latent to the nearest flow-reversed anchor by
\begin{equation}
    \label{eq:optimal-latent-anchor-distance}
    \dist_{\mF}(s)
    :=
    \min_{c\in\gC}
    \left\lVert
        z^\star(s)-z_c(s)
    \right\rVert_2.
\end{equation}

\begin{proposition}[Quadratic decay of local support bias]
    \label{prop:quadratic-local-support-bias}
    Under Assumption~\ref{ass:smooth-local-optimal-value-geometry}, for every $\reach\ge0$ and state
    $s$, the upper bound
    \begin{equation}
        \label{eq:quadratic-local-support-bias}
        \Delta_\reach(s)
        \leq
        \Lsmooth
        \bigl(\dist_{\mF}(s)-\reach\bigr)_+^2/2,
    \end{equation} holds
    where $(x)_+:=\max\{x,0\}$, so that  
    \begin{equation}
        \label{eq:uniform-quadratic-local-support-bias}
        \bar\Delta_\reach
        \leq \Lsmooth
        \max_{s\in\gS}
        \bigl(\dist_{\mF}(s)-\reach\bigr)_+^2/2.
    \end{equation}
\end{proposition}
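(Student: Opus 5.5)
The plan is a one-point certificate: exhibit a single latent in $\gZ_\reach(s)$ whose $Q^\star$-value is within the claimed quadratic amount of $V^\star(s)$. Then use $\Delta_\reach(s)=V^\star(s)-\max_{z\in\gZ_\reach(s)}Q^\star(s,z)$, noting that the maximum over $\gZ_\reach(s)$ dominates the value at any admissible point.

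Fix a state $s$. Let $c^\star\in\argmin_{c\in\gC}\lVert z^\star(s)-z_c(s)\rVert_2$, so that $d:=\dist_{\mF}(s)=\lVert z^\star(s)-z_{c^\star}(s)\rVert_2$.

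\textbf{Case $\reach\ge d$.} Here $z^\star(s)\in B_2(z_{c^\star}(s),\reach)\subseteq\gZ_\reach(s)$. Hence the maximum equals $Q^\star(s,z^\star(s))=V^\star(s)$, so $\Delta_\reach(s)=0$, and the bound holds since $(d-\reach)_+=0$. This uses Assumption~\ref{ass:finite-state-continuous-latent} together with the fact that $z^\star(s)$ attains the supremum.

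\textbf{Case $\reach<d$.} Take the point on the segment from $z_{c^\star}(s)$ toward $z^\star(s)$ at distance $\reach$ from the anchor:
\[
\tilde z:=z_{c^\star}(s)+\frac{\reach}{d}\bigl(z^\star(s)-z_{c^\star}(s)\bigr).
\]
Then $\tilde z\in B_2(z_{c^\star}(s),\reach)\subseteq\gZ_\reach(s)$, $\tilde z$ lies on the segment joining $z^\star(s)$ and $z_{c^\star}(s)$, and $\lVert z^\star(s)-\tilde z\rVert_2=d-\reach$.

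The key step is the descent lemma along this segment. Set $g(t):=Q^\star\bigl(s,z^\star(s)+t(\tilde z-z^\star(s))\bigr)$ for $t\in[0,1]$. By Assumption~\ref{ass:smooth-local-optimal-value-geometry}, $g$ is differentiable and $g'(t)-g'(0)$ is bounded by $\Lsmooth t\lVert\tilde z-z^\star(s)\rVert_2^2$. Since $g'(0)=\langle\nabla_zQ^\star(s,z^\star(s)),\tilde z-z^\star(s)\rangle=0$, integrating gives
\[
Q^\star(s,\tilde z)\ge Q^\star(s,z^\star(s))-\frac{\Lsmooth}{2}(d-\reach)^2=V^\star(s)-\frac{\Lsmooth}{2}(d-\reach)^2 .
\]
One point needs care: the gradient is only assumed Lipschitz on the segments joining $z^\star(s)$ to anchors. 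Because $\tilde z$ lies on such a segment, the fundamental theorem of calculus applies to $g$ without needing global smoothness. Combining this with $\max_{z\in\gZ_\reach(s)}Q^\star(s,z)\ge Q^\star(s,\tilde z)$ yields \eqref{eq:quadratic-local-support-bias}.

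Finally, taking the maximum over the finite set $\gS$ gives \eqref{eq:uniform-quadratic-local-support-bias}. The only mildly delicate points are the restricted-segment smoothness and the use of the attained optimum $V^\star(s)=Q^\star(s,z^\star(s))$.
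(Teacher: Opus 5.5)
Your proposal is correct and follows the paper's proof essentially verbatim: you take the nearest anchor $z_{c^\star}(s)$, the point at distance $\rho$ along the segment toward $z^\star(s)$, and the smoothness inequality with vanishing gradient at $z^\star(s)$. The differences are cosmetic. Your case $\rho<d$ automatically has $d>0$, which absorbs the paper's separate $d=0$ case. You also derive the descent inequality by integrating along the segment, which makes explicit that only the segment-restricted smoothness in Assumption~\ref{ass:smooth-local-optimal-value-geometry} is needed.
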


\begin{proof}
    Fix a state $s\in \mathcal{S}$.  Let $c^\star(s)$ be an index attaining the minimum in
    \eqref{eq:optimal-latent-anchor-distance}, and abbreviate
    \[
        z^\star:=z^\star(s),
        \qquad
        z_c:=z_{c^\star(s)}(s),
        \qquad
        d:=\dist_{\mF}(s)=\lVert z^\star-z_c\rVert_2.
    \]

    If $d=0$, then $z^\star=z_c\in\gZ_\reach(s)$ for every $\reach\ge0$.  Hence
    \[
        \max_{z\in\gZ_\reach(s)}Q^\star(s,z)
        =
        V^\star(s)
    \]
    and $\Delta_\reach(s)=0$.  The claimed bound follows.

    Suppose $d>0$.  If $\reach\ge d$, then $z^\star\in B_2(z_c,\reach)$, so again
    $\Delta_\reach(s)=0$, which equals the right-hand side of
    \eqref{eq:quadratic-local-support-bias}.

    It remains to consider $0\le \reach<d$.  Define
    $z_\reach
        :=
        z_c
        +
        \frac{\reach}{d}
        \bigl(z^\star-z_c\bigr)$.
    Then
   $\lVert z_\reach-z_c\rVert_2
        =
        \reach$,
    so $z_\reach\in\gZ_\reach(s)$.  Also, we have
    $z_\reach-z^\star
        =
        -\left(1-\frac{\reach}{d}\right)
        \bigl(z^\star-z_c\bigr)$,
    and therefore
    $\lVert z_\reach-z^\star\rVert_2
        =
        d-\reach$.

    The standard smoothness inequality, applied to the function
    $z\mapsto Q^\star(s,z)$ at $z^\star$ and $z_\reach$, gives
    \begin{align*}
        Q^\star(s,z_\reach)
        &\ge
        Q^\star(s,z^\star)
        +
        \left\langle
            \nabla_zQ^\star(s,z^\star),
            z_\reach-z^\star
        \right\rangle
        -
        \frac{\Lsmooth}{2}
        \lVert z_\reach-z^\star\rVert_2^2
        ={}
        V^\star(s)
        -
        \frac{\Lsmooth}{2}
        (d-\reach)^2,
    \end{align*}
    where the equality uses
    $\nabla_zQ^\star(s,z^\star)=0$.  Since $z_\reach\in\gZ_\reach(s)$, the lower bound holds:
    \[
        \max_{z\in\gZ_\reach(s)}Q^\star(s,z)
        \geq
        Q^\star(s,z_\reach).
    \]
    Subtracting from $V^\star(s)$ yields
    $\Delta_\reach(s)
        \leq
        \Lsmooth
        (d-\reach)^2/2$.
    This is exactly \eqref{eq:quadratic-local-support-bias} in the case $\reach<d$.
    Taking the maximum over the finite state set proves
    \eqref{eq:uniform-quadratic-local-support-bias}.
\end{proof}

\subsection{Behaviorally distinguishable local candidates}
\label{subsec:behaviorally-distinguishable-candidates}

A Euclidean perturbation need not correspond to a distinct closed-loop behavior.  We therefore
measure local action complexity using a Bellman behavioral pseudometric rather than by counting
raw latent coordinates.

Let
$V_{\max}
    :=
    r_{\max}/(1-\gamma)$
and define
$\gV_{\max}
    :=
    \left\{
        V:\gS\to[0,V_{\max}]
    \right\}$.

\begin{definition}[Bellman behavioral pseudometric]
    \label{def:bellman-behavioral-pseudometric}
    For every state $s$ and latents $z,\tilde z\in\gZ$, define
    \begin{equation}
        \label{eq:bellman-behavioral-pseudometric}
        d_s^{\mathrm B}(z,\tilde z)
        :=
        \sup_{V\in\gV_{\max}}
        \left|
            Q_V(s,z)-Q_V(s,\tilde z)
        \right|.
    \end{equation}
\end{definition}

Thus two latents are close in $d_s^{\mathrm B}$ when no bounded continuation value can distinguish
their one-step reward and transition consequences by more than the prescribed resolution.

\begin{lemma}[Basic properties of the behavioral pseudometric]
    \label{lem:behavioral-pseudometric-properties}
    For each state $s$, the function $d_s^{\mathrm B}$ is a pseudometric on $\gZ$.  For
    every $V\in\gV_{\max}$, the  bound holds:
    \begin{equation}
        \label{eq:behavioral-pseudometric-controls-bellman}
        \left|
            Q_V(s,z)-Q_V(s,\tilde z)
        \right|
        \le
        d_s^{\mathrm B}(z,\tilde z).
    \end{equation}
    In particular, the bound holds:
    \begin{equation}
        \label{eq:behavioral-pseudometric-controls-qstar}
        \left|
            Q^\star(s,z)-Q^\star(s,\tilde z)
        \right|
        \le
        d_s^{\mathrm B}(z,\tilde z).
    \end{equation}
    Moreover, the distance is upper bounded as 
    \begin{equation}
        \label{eq:behavioral-pseudometric-explicit-upper}
        d_s^{\mathrm B}(z,\tilde z)
        \le
        |r(s,z)-r(s,\tilde z)|
        +
        \gamma V_{\max}
        \sum_{s'\in\gS}
        |P(s'\mid s,z)-P(s'\mid s,\tilde z)|,
    \end{equation}
 so that $d_s^{\mathrm B}(z_n,z)\to0$ whenever
    $\lVert z_n-z\rVert_2\to0$.  Every compact Euclidean subset of $\gZ$ therefore has a finite
    behavioral $\eps_{\mathrm B}$-net for every $\eps_{\mathrm B}>0$.
\end{lemma}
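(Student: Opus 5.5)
The plan is to verify the four claims of Lemma~\ref{lem:behavioral-pseudometric-properties} in order, working directly from Definition~\ref{def:bellman-behavioral-pseudometric} and from $Q_V(s,z)=r(s,z)+\gamma\sum_{s'}P(s'\mid s,z)V(s')$ on the finite state space of Assumption~\ref{ass:finite-state-continuous-latent}.

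\textbf{Pseudometric.} Each map $(z,\tilde z)\mapsto |Q_V(s,z)-Q_V(s,\tilde z)|$ is symmetric, vanishes on the diagonal, and satisfies the triangle inequality. These properties survive a supremum over $V\in\gV_{\max}$. For finiteness, note that $r\in[0,r_{\max}]$ and $V\in[0,V_{\max}]$ give $Q_V(s,\cdot)\in[0,r_{\max}+\gamma V_{\max}]$. Hence $d^{\mathrm B}_s\le r_{\max}+\gamma V_{\max}<\infty$.

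\textbf{Control of Bellman differences.} Bound~\eqref{eq:behavioral-pseudometric-controls-bellman} holds because $V$ is one of the functions in the supremum. For~\eqref{eq:behavioral-pseudometric-controls-qstar}, I will show $V^\star\in\gV_{\max}$. The set $\gV_{\max}$ is closed in sup norm, and $\gT$ maps it into itself, since $\sup_z Q_V(s,z)\in[0,r_{\max}+\gamma V_{\max}]=[0,V_{\max}]$. Starting from $V\equiv 0$ and iterating $\gT$, contraction gives convergence to the unique fixed point $V^\star$, so $V^\star\in\gV_{\max}$. Then $Q^\star=Q_{V^\star}$, and the claim follows. This is the one step needing real care, because the assumption only states that $V^\star$ is bounded; membership in $\gV_{\max}$ must be derived.

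\textbf{Explicit bound and continuity.} Write $Q_V(s,z)-Q_V(s,\tilde z)=(r(s,z)-r(s,\tilde z))+\gamma\sum_{s'}(P(s'\mid s,z)-P(s'\mid s,\tilde z))V(s')$. Apply the triangle inequality and use $|V(s')|\le V_{\max}$. The resulting bound does not depend on $V$, which gives~\eqref{eq:behavioral-pseudometric-explicit-upper}. The right-hand side is a finite sum of terms that are continuous in $z$ by Assumption~\ref{ass:finite-state-continuous-latent}, so $d^{\mathrm B}_s(z_n,z)\to 0$ whenever $\lVert z_n-z\rVert_2\to 0$.

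\textbf{Finite nets.} Fix $\eps_{\mathrm B}>0$ and a compact set $K$. By the continuity above, each $z\in K$ has a Euclidean open ball $U_z$ on which $d^{\mathrm B}_s(\cdot,z)<\eps_{\mathrm B}$. Compactness extracts a finite subcover $U_{z_1},\dots,U_{z_m}$. The centers $\{z_1,\dots,z_m\}$ then form a behavioral $\eps_{\mathrm B}$-net of $K$.
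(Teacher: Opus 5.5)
Your proof is correct and follows the paper's argument essentially step for step. You take the supremum over $V\in\gV_{\max}$ for the pseudometric and control bounds, use the triangle inequality with $\lVert V\rVert_\infty\le V_{\max}$ for the explicit bound, and use compactness to obtain the finite net. The only departures are refinements: you derive $V^\star\in\gV_{\max}$ by iterating $\gT$ on the closed invariant set $\gV_{\max}$ (the paper just asserts it from the reward bounds), you check that $d_s^{\mathrm B}$ is finite-valued, and you build the net from an explicit finite subcover rather than invoking total boundedness of a compact continuous image in the pseudometric quotient.
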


\begin{proof}
    Nonnegativity and symmetry follow immediately from the absolute value in
    \eqref{eq:bellman-behavioral-pseudometric}.  Also,
    $d_s^{\mathrm B}(z,z)=0$.
    Fix $z_1,z_2,z_3\in\gZ$.  For every $V\in\gV_{\max}$, we have that 
    \begin{align*}
        \left|
            Q_V(s,z_1)-Q_V(s,z_3)
        \right|
        &\le
        \left|
            Q_V(s,z_1)-Q_V(s,z_2)
        \right|+
        \left|
            Q_V(s,z_2)-Q_V(s,z_3)
        \right|
        \\
        &\le
        d_s^{\mathrm B}(z_1,z_2)
        +
        d_s^{\mathrm B}(z_2,z_3).
    \end{align*}
    Taking the supremum over $V\in\gV_{\max}$ proves the triangle inequality.  Distinct latents
    may have identical reward and transition laws, in which case their distance is zero; hence the
    object is a pseudometric rather than necessarily a metric.

    Inequality \eqref{eq:behavioral-pseudometric-controls-bellman} follows directly because the
    left-hand side is one of the quantities over which the supremum is taken.  Since rewards are
    in $[0,r_{\max}]$, the full optimal value satisfies $V^\star(s)\in [0,V_{\max}]$
    for every state, so $V^\star\in\gV_{\max}$.  Substituting $V=V^\star$ into
    \eqref{eq:behavioral-pseudometric-controls-bellman} proves
    \eqref{eq:behavioral-pseudometric-controls-qstar}.

    For every $V\in\gV_{\max}$, we have that 
    \begin{align*}
        |Q_V(s,z)-Q_V(s,\tilde z)|
        &\le
        |r(s,z)-r(s,\tilde z)|+
        \gamma
        \sum_{s'\in\gS}
        V(s')
        |P(s'\mid s,z)-P(s'\mid s,\tilde z)|
        \\
        &\le
        |r(s,z)-r(s,\tilde z)|+
        \gamma V_{\max}
        \sum_{s'\in\gS}
        |P(s'\mid s,z)-P(s'\mid s,\tilde z)|.
    \end{align*}
    Taking the supremum over $V\in\gV_{\max}$ proves
    \eqref{eq:behavioral-pseudometric-explicit-upper}.  By
    Assumption~\ref{ass:finite-state-continuous-latent}, the right-hand side converges to zero
    whenever $\tilde z\to z$ in Euclidean norm.  Thus the identity map from any Euclidean
    compact subset of $\gZ$ to the same set equipped with $d_s^{\mathrm B}$ is continuous.
    A continuous image of a compact set is compact and therefore totally bounded in the
    pseudometric quotient.  Hence the compact set admits a finite behavioral $\eps_{\mathrm B}$-net for every
    $\eps_{\mathrm B}>0$.
\end{proof}

For a pseudometric space $(A,d)$ and resolution $\eps_{\mathrm B}>0$, define its $\eps_{\mathrm B}$-packing number by
\begin{equation}
    \label{eq:behavioral-packing-number-general}
    \mathcal M(A,d,\eps_{\mathrm B})
    :=
    \sup
    \left\{
        |\mathcal P|:
            \mathcal P\subseteq A\; \text{ is finite, and}\;
            d(x,y)>\eps_{\mathrm B}\; \text{for all distinct }x,y\in\mathcal P
    \right\}.
\end{equation}
The value may be infinite in a general pseudometric space.  A set counted by
\eqref{eq:behavioral-packing-number-general} consists of candidates that are pairwise
behaviorally distinguishable at resolution $\eps_{\mathrm B}$.

Define the maximal behavioral packing complexity of the reach-$\reach$ action abstraction by
\begin{equation}
    \label{eq:behavioral-packing-complexity}
    A_\reach(\eps_{\mathrm B})
    :=
    \max_{s\in\gS}
    \mathcal M
    \left(
        \gZ_\reach(s),
        d_s^{\mathrm B},
        \eps_{\mathrm B}
    \right).
\end{equation}

\begin{proposition}[Monotone behavioral complexity]
    \label{prop:monotone-behavioral-complexity}
    For every fixed resolution $\eps_{\mathrm B}>0$, the map
    $\reach\mapsto A_\reach(\eps_{\mathrm B})$
    is nondecreasing.  In particular, if $0\le \reach_1\le \reach_2$, then
    $A_{\reach_1}(\eps_{\mathrm B})
        \le
        A_{\reach_2}(\eps_{\mathrm B})$.
\end{proposition}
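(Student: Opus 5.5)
The argument is a set-inclusion monotonicity argument, the same one used for the support gap in Proposition~\ref{prop:monotone-local-reach}. No new estimates are needed. A larger action set admits every packing that a smaller one admits, so the supremum defining the packing number can only grow.

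\textbf{Step 1 (inclusion).} Fix $0\le\reach_1\le\reach_2$ and a state $s\in\gS$. By Proposition~\ref{prop:monotone-local-reach}, $\gZ_{\reach_1}(s)\subseteq\gZ_{\reach_2}(s)$. The reason is that each ball $B_2(z_c(s),\reach_1)$ is contained in $B_2(z_c(s),\reach_2)$, and the union over $c\in\gC$ preserves this inclusion.

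\textbf{Step 2 (packings transfer).} Let $\mathcal P\subseteq\gZ_{\reach_1}(s)$ be any finite set that is $\eps_{\mathrm B}$-separated in $d_s^{\mathrm B}$. The pseudometric $d_s^{\mathrm B}$ in Definition~\ref{def:bellman-behavioral-pseudometric} is defined on all of $\gZ$ and does not depend on $\reach$. Hence $\mathcal P$ is also an admissible finite $\eps_{\mathrm B}$-separated subset of $\gZ_{\reach_2}(s)$. The family of sets over which the supremum in \eqref{eq:behavioral-packing-number-general} is taken for $\gZ_{\reach_1}(s)$ is therefore contained in the corresponding family for $\gZ_{\reach_2}(s)$. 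This gives
\[
\mathcal M\bigl(\gZ_{\reach_1}(s),d_s^{\mathrm B},\eps_{\mathrm B}\bigr)\le\mathcal M\bigl(\gZ_{\reach_2}(s),d_s^{\mathrm B},\eps_{\mathrm B}\bigr),
\]
and the inequality holds in the extended reals. The value $+\infty$ is allowed because the supremum of a subfamily never exceeds the supremum of the larger family.

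\textbf{Step 3 (maximize over states).} Take the maximum over the finite set $\gS$ on both sides. Since a pointwise inequality is preserved under a maximum over a common index set, we obtain $A_{\reach_1}(\eps_{\mathrm B})\le A_{\reach_2}(\eps_{\mathrm B})$, which is exactly the claim. No step here is a real obstacle. The only care needed is in Step 2: the packing number may be infinite, so the comparison should be phrased as containment of the admissible families rather than as an attained maximum. The argument also relies on $d_s^{\mathrm B}$ being $\reach$-independent, so that separation of the points is unchanged when the ambient set is enlarged.
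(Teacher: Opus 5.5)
Your proof is correct and matches the paper's argument step for step: the inclusion $\gZ_{\reach_1}(s)\subseteq\gZ_{\reach_2}(s)$ from Proposition~\ref{prop:monotone-local-reach}, then transferring every finite $\eps_{\mathrm B}$-separated subset and taking the supremum, then maximizing over the finite state set. Your additional remark, that the comparison holds in the extended reals because the packing number may be $+\infty$, is accurate and harmless.
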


\begin{proof}
    Fix a state $s\in \mathcal{S}$.  By Proposition~\ref{prop:monotone-local-reach}, the inclusion holds:
    \[
        \gZ_{\reach_1}(s)
        \subseteq
        \gZ_{\reach_2}(s).
    \]
    Every finite $\eps_{\mathrm B}$-separated subset of $\gZ_{\reach_1}(s)$ is also an $\eps_{\mathrm B}$-separated subset of
    $\gZ_{\reach_2}(s)$.  Taking the supremum over all such finite subsets gives
    \[
        \mathcal M
        \left(
            \gZ_{\reach_1}(s),d_s^{\mathrm B},\eps_{\mathrm B}
        \right)
        \le
        \mathcal M
        \left(
            \gZ_{\reach_2}(s),d_s^{\mathrm B},\eps_{\mathrm B}
        \right).
    \]
    Taking the maximum over the finite state set proves the result.
\end{proof}

The monotonicity is weak.  If newly admitted latents are behaviorally redundant at resolution
$\eps_{\mathrm B}$, then $A_\reach(\eps_{\mathrm B})$ may remain constant.

\begin{assumption}[Behavioral Lipschitzness]
    \label{ass:behavioral-lipschitzness}
    There exists $L_{\mathrm B}>0$ such that, for every state $s$ and all
    $z,\tilde z\in\mathbb{R}^{d_z}$, the bound holds:
    \begin{equation}
        \label{eq:behavior-upper-lipschitz}
        d_s^{\mathrm B}(z,\tilde z)
        \le
        L_{\mathrm B}
        \lVert z-\tilde z\rVert_2.
    \end{equation}
\end{assumption}

\begin{lemma}[A sufficient condition for behavioral Lipschitzness]
    \label{lem:sufficient-behavior-lipschitz}
    Suppose there are constants $L_r,L_P\ge0$ such that, for all states $s$ and latents
    $z,\tilde z$, the estimates hold:
    \begin{align*}
        |r(s,z)-r(s,\tilde z)|
        &\leq
        L_r\lVert z-\tilde z\rVert_2,
        \quad\text{and}\quad
        \sum_{s'\in\gS}
        \left|
            P(s'\mid s,z)-P(s'\mid s,\tilde z)
        \right|
        \leq
        L_P\lVert z-\tilde z\rVert_2.
    \end{align*}
    Then Assumption~\ref{ass:behavioral-lipschitzness} is satisfied with
    $L_{\mathrm B}
        :=
        L_r+\gamma V_{\max}L_P$.
\end{lemma}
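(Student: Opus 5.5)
The plan is to bound the Bellman behavioral pseudometric of Definition~\ref{def:bellman-behavioral-pseudometric} directly through its explicit upper bound \eqref{eq:behavioral-pseudometric-explicit-upper} from Lemma~\ref{lem:behavioral-pseudometric-properties}. No separate argument is needed for the supremum over continuation values, since that lemma already reduces the supremum over $\gV_{\max}$ to a reward term plus a transition term weighted by $\gamma V_{\max}$.

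The steps are as follows. Fix a state $s$ and latents $z,\tilde z\in\R^{d_z}$. First invoke \eqref{eq:behavioral-pseudometric-explicit-upper}:
\[
d_s^{\mathrm B}(z,\tilde z)\le |r(s,z)-r(s,\tilde z)|+\gamma V_{\max}\sum_{s'\in\gS}|P(s'\mid s,z)-P(s'\mid s,\tilde z)|.
\]
Next, bound the reward difference by $L_r\lVert z-\tilde z\rVert_2$ and the $\ell_1$ transition difference by $L_P\lVert z-\tilde z\rVert_2$, using the two hypotheses. Summing these gives $d_s^{\mathrm B}(z,\tilde z)\le (L_r+\gamma V_{\max}L_P)\lVert z-\tilde z\rVert_2$. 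This holds uniformly over $s$ and over $z,\tilde z$, which is exactly \eqref{eq:behavior-upper-lipschitz} with $L_{\mathrm B}:=L_r+\gamma V_{\max}L_P$.

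The only subtlety is that Assumption~\ref{ass:behavioral-lipschitzness} asks for $L_{\mathrm B}>0$, whereas the lemma only assumes $L_r,L_P\ge0$. If both constants vanish, the bound still holds with any positive constant, because $d_s^{\mathrm B}\equiv0$ in that case. I will note this degenerate case explicitly. I expect no further obstacle, since $V_{\max}=r_{\max}/(1-\gamma)$ is finite under Assumption~\ref{ass:finite-state-continuous-latent}.
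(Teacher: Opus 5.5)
Your proposal is correct and is essentially the paper's argument: the paper fixes $V\in\gV_{\max}$, re-derives the reward term plus $\gamma V_{\max}$ times the $\ell_1$ transition difference inline, inserts the two Lipschitz hypotheses, and takes the supremum, whereas you cite the already-proved inequality \eqref{eq:behavioral-pseudometric-explicit-upper}, which packages the same steps. Your degenerate-case remark is a valid refinement the paper omits; it applies whenever $L_r+\gamma V_{\max}L_P=0$ (also when $\gamma=0$ and $L_r=0$), and then $d_s^{\mathrm B}\equiv0$, so any positive constant works.
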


\begin{proof}
    Fix $V\in\gV_{\max}$.  Applying the triangle inequality, we have that 
    \begin{align*}
        |Q_V(s,z)-Q_V(s,\tilde z)|
        &\le
        |r(s,z)-r(s,\tilde z)|
  +
        \gamma
        \left|
            \sum_{s'\in\gS}
            V(s')
            \bigl(
                P(s'\mid s,z)
                -
                P(s'\mid s,\tilde z)
            \bigr)
        \right|
        \\
        &\le
        L_r\lVert z-\tilde z\rVert_2
  +
        \gamma V_{\max}
        \sum_{s'\in\gS}
        \left|
            P(s'\mid s,z)-P(s'\mid s,\tilde z)
        \right|
        \\
        &\le
        \bigl(L_r+\gamma V_{\max}L_P\bigr)
        \lVert z-\tilde z\rVert_2.
    \end{align*}
    Taking the supremum over $V\in\gV_{\max}$ proves the claim.
\end{proof}

\begin{assumption}[Local behavioral richness]
    \label{ass:local-behavioral-richness}
    There exist a state $s_0$, a command $c_0$, a radius $\reach_{\mathrm{rich}}>0$, an integer
    $d_{\mathrm{eff}}\in\{1,\ldots,d_z\}$, a $d_{\mathrm{eff}}$-dimensional linear subspace
    $U\subseteq\R^{d_z}$, and a constant $\ell_{\mathrm B}>0$ such that
    \begin{equation}
        \label{eq:behavior-lower-lipschitz}
        d_{s_0}^{\mathrm B}
        \bigl(z_{c_0}(s_0)+u,z_{c_0}(s_0)+v\bigr)
        \geq
        \ell_{\mathrm B}\lVert u-v\rVert_2
    \end{equation}
    for all $u,v\in U$ satisfying
    $\lVert u\rVert_2\leq \reach_{\mathrm{rich}}$ and $\lVert v\rVert_2\leq \reach_{\mathrm{rich}}$.
\end{assumption}

\begin{proposition}[Polynomial upper and lower bounds on local behavioral complexity]
    \label{prop:behavioral-packing-growth}
    Under Assumption~\ref{ass:behavioral-lipschitzness}, for every $\reach\ge0$ and $\eps_{\mathrm B}>0$, the estimate holds:
    \begin{equation}
        \label{eq:behavioral-packing-upper}
        A_\reach(\eps_{\mathrm B})
        \le
        C
        \left(
            1+\frac{2L_{\mathrm B}\reach}{\eps_{\mathrm B}}
        \right)^{d_z}.
    \end{equation}
    If Assumption~\ref{ass:local-behavioral-richness} also holds, then, for every
    $\reach\in(0,\reach_{\mathrm{rich}}]$ and $\eps_{\mathrm B}>0$, we additionally have the lower bound
    \begin{equation}
        \label{eq:behavioral-packing-lower}
        A_\reach(\eps_{\mathrm B})
        \ge
        \left(
            \frac{\ell_{\mathrm B}\reach}{\eps_{\mathrm B}}
        \right)^{d_{\mathrm{eff}}}.
    \end{equation}
    The lower bound is nontrivial when $\ell_{\mathrm B}\reach>\eps_{\mathrm B}$.
\end{proposition}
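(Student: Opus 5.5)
For the upper bound \eqref{eq:behavioral-packing-upper}, I will transfer a Euclidean volume argument through Assumption~\ref{ass:behavioral-lipschitzness}. Fix $s$ and let $\mathcal P\subseteq\gZ_\reach(s)$ be finite with pairwise behavioral distance exceeding $\eps_{\mathrm B}$. The first step is to partition $\mathcal P$ according to the anchor ball $B_2(z_c(s),\reach)$ that contains each point, breaking ties arbitrarily. Within one ball, distinct $x,y$ satisfy $L_{\mathrm B}\lVert x-y\rVert_2\ge d_s^{\mathrm B}(x,y)>\eps_{\mathrm B}$, so $\lVert x-y\rVert_2>\eps_{\mathrm B}/L_{\mathrm B}=:\eta$. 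Hence the Euclidean balls of radius $\eta/2$ around these points are disjoint and lie inside $B_2(z_c(s),\reach+\eta/2)$. Comparing volumes bounds the count in each ball by $((\reach+\eta/2)/(\eta/2))^{d_z}=(1+2L_{\mathrm B}\reach/\eps_{\mathrm B})^{d_z}$. Summing over the $C$ anchors and then taking the supremum over $\mathcal P$ and the maximum over $s$ gives \eqref{eq:behavioral-packing-upper}. When $\reach=0$, each ball is a single point, so the bound $C$ is immediate.

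For the lower bound \eqref{eq:behavioral-packing-lower}, I will construct an explicit separated set. Let $s_0,c_0,U$ be as in Assumption~\ref{ass:local-behavioral-richness}, and fix $\reach\le\reach_{\mathrm{rich}}$. Identify $U$ with $\R^{d_{\mathrm{eff}}}$ through an orthonormal basis. Take a maximal Euclidean $\eta'$-packing of the ball $\{u\in U:\lVert u\rVert_2\le\reach\}$ with $\eta'$ slightly larger than $\eps_{\mathrm B}/\ell_{\mathrm B}$, meaning any $\eta'>\eps_{\mathrm B}/\ell_{\mathrm B}$ so that strict separation holds. A maximal packing is also an $\eta'$-cover, and the volume comparison for covers shows it has at least $(\reach/\eta')^{d_{\mathrm{eff}}}$ points. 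The shifted points $z_{c_0}(s_0)+u$ lie in $B_2(z_{c_0}(s_0),\reach)\subseteq\gZ_\reach(s_0)$. By \eqref{eq:behavior-lower-lipschitz}, their behavioral distances are at least $\ell_{\mathrm B}\eta'>\eps_{\mathrm B}$. Letting $\eta'\downarrow\eps_{\mathrm B}/\ell_{\mathrm B}$ gives $\mathcal M(\gZ_\reach(s_0),d^{\mathrm B}_{s_0},\eps_{\mathrm B})\ge\sup_{\eta'}\lfloor\cdots\rfloor$. That lower bound is a cardinality bound of the form $\ge(\ell_{\mathrm B}\reach/\eta')^{d_{\mathrm{eff}}}$ for every admissible $\eta'$, and since the count is an integer, passing to the limit yields $\ge(\ell_{\mathrm B}\reach/\eps_{\mathrm B})^{d_{\mathrm{eff}}}$ up to the integer rounding. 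The final step is to note that $A_\reach$ is a maximum over states, so it is at least this quantity at $s_0$.

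The main obstacle is the boundary handling in the lower bound: strict versus non-strict separation and the limiting argument. If this gets delicate, I would instead use a cubic grid in $U$ with spacing $\eps_{\mathrm B}/\ell_{\mathrm B}$ scaled up by a factor $(1+\delta)$, count the grid points in the ball directly, and let $\delta\to0$. The upper bound is routine.
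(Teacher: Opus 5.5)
Your proposal is correct and follows the paper's proof essentially step for step. For the upper bound it uses the same per-anchor partition with disjoint Euclidean balls of radius $\eps_{\mathrm B}/(2L_{\mathrm B})$. For the lower bound it uses the same argument in $U$: a maximal separated set is a cover, volume comparison counts it, and \eqref{eq:behavior-lower-lipschitz} transfers the separation to the behavioral pseudometric.

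The one difference is that the paper takes the maximal set with pairwise distance strictly greater than $\eps_{\mathrm B}/\ell_{\mathrm B}$ itself, and maximality already gives a cover by closed balls of radius $\eps_{\mathrm B}/\ell_{\mathrm B}$. So your limit $\eta'\downarrow\eps_{\mathrm B}/\ell_{\mathrm B}$ is not needed. It is valid, though, and it loses nothing to integer rounding, because $\mathcal M\ge(\reach/\eta')^{d_{\mathrm{eff}}}$ holds for every admissible $\eta'$.
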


\begin{proof}
    We first prove the upper bound.  Fix a state $s$ and let
    $\mathcal P\subseteq\gZ_\reach(s)$
    be any finite set whose distinct points have pairwise behavioral distance strictly larger
    than $\eps_{\mathrm B}$.  For each $z\in\mathcal P$, choose one command $c(z)\in\gC$ such that
    $z\in B_2\bigl(z_{c(z)}(s),\reach\bigr)$.
    This is possible by the definition of $\gZ_\reach(s)$.  For $c\in\gC$, let
    $\mathcal P_c
        :=
        \{z\in\mathcal P:c(z)=c\}$ so that
    the sets $\mathcal P_c$ partition $\mathcal P$.

    If $z,\tilde z\in\mathcal P_c$ are distinct, then
    \[
        \eps_{\mathrm B}
        <
        d_s^{\mathrm B}(z,\tilde z)
        \le
        L_{\mathrm B}\lVert z-\tilde z\rVert_2,
    \]
    so
    $\lVert z-\tilde z\rVert_2
        >
        \eps_{\mathrm B}/L_{\mathrm B}$.
    Hence the Euclidean balls centered at points of $\mathcal P_c$ with radius
    $\eps_{\mathrm B}/(2L_{\mathrm B})$ are pairwise disjoint.  They are all contained in
    \[
        B_2
        \left(
            z_c(s),
            \reach+\eps_{\mathrm B}/(2L_{\mathrm B})
        \right).
    \]
    Comparing $d_z$-dimensional Euclidean volumes gives
    \begin{align*}
        |\mathcal P_c|
        \operatorname{vol}
        \left(
            B_2\left(0,\frac{\eps_{\mathrm B}}{2L_{\mathrm B}}\right)
        \right)
        &\le
        \operatorname{vol}
        \left(
            B_2\left(0,\reach+\frac{\eps_{\mathrm B}}{2L_{\mathrm B}}\right)
        \right).
    \end{align*}
    Therefore
    $|\mathcal P_c|
        \le
        \left(
            1+\frac{2L_{\mathrm B}\reach}{\eps_{\mathrm B}}
        \right)^{d_z}$.
    Summing over $c\in\gC$ yields
    $|\mathcal P|
        \le
        C
        \left(
            1+\frac{2L_{\mathrm B}\reach}{\eps_{\mathrm B}}
        \right)^{d_z}$.
    Since $\mathcal P$ was arbitrary, the same upper bound holds for the packing number at state
    $s$, and then for $A_\reach(\eps_{\mathrm B})$ after maximizing over $s$.

    We now prove the lower bound.  Fix $\reach\in(0,\reach_{\mathrm{rich}}]$ and work at the state, anchor, and
    subspace in Assumption~\ref{ass:local-behavioral-richness}.  Let
    \[
        D_\reach
        :=
        \{u\in U:\lVert u\rVert_2\leq \reach\}.
    \]
    Choose an inclusion-maximal finite subset
    $\{u_1,\ldots,u_N\}
        \subseteq
        D_\reach$
    whose distinct points have pairwise Euclidean distance strictly larger than
    $\eps_{\mathrm B}/\ell_{\mathrm B}$.  Such a finite maximal set exists because $D_\reach$ is compact and has
    finite Euclidean packing number at every positive resolution.  Maximality implies that the
    closed Euclidean balls in $U$ of radius $\eps_{\mathrm B}/\ell_{\mathrm B}$ centered at the $u_i$ cover
    $D_\reach$: otherwise an uncovered point could be added while preserving the strict separation.

    Let $v_{d_{\mathrm{eff}}}$ denote the volume of the unit Euclidean ball in $U$.  By
    subadditivity of volume, we have 
    \[
        v_{d_{\mathrm{eff}}}\reach^{d_{\mathrm{eff}}}
        \le
        N
        v_{d_{\mathrm{eff}}}
        \left(
           \eps_{\mathrm B}/\ell_{\mathrm B}
        \right)^{d_{\mathrm{eff}}}.
    \]
    Canceling the unit-ball volume gives
    $N
        \geq
        \left(
           \ell_{\mathrm B}\reach/\eps_{\mathrm B}
        \right)^{d_{\mathrm{eff}}}$.
For distinct $i,j$, Assumption~\ref{ass:local-behavioral-richness} yields
    \begin{align*}
        d_{s_0}^{\mathrm B}
        \bigl(
            z_{c_0}(s_0)+u_i,
            z_{c_0}(s_0)+u_j
        \bigr)
        &\ge
        \ell_{\mathrm B}\lVert u_i-u_j\rVert_2>\eps_{\mathrm B}.
    \end{align*}
    The translated points lie in $\gZ_\reach(s_0)$ and form a behavioral $\eps_{\mathrm B}$-packing.  Hence
    $A_\reach(\eps_{\mathrm B})
        \geq
        N$,
    proving \eqref{eq:behavioral-packing-lower}.
\end{proof}

For a pseudometric space $(A,d)$, let
$\mathcal N_{\mathrm{int}}(A,d,\eps_{\mathrm B})$
denote the smallest cardinality of an \emph{internal} $\eps_{\mathrm B}$-net: a set
$\mathcal A\subseteq A$ such that every $z\in A$ has some $a\in\mathcal A$ with
$d(z,a)\le \eps_{\mathrm B}$.

\begin{lemma}[Packing--net sandwich]
    \label{lem:packing-net-sandwich}
    Suppose $\mathcal M(A,d,\eps_{\mathrm B})<\infty$.  Then
    \begin{equation}
        \label{eq:packing-net-sandwich}
        \mathcal M(A,d,2\eps_{\mathrm B})
        \leq
        \mathcal N_{\mathrm{int}}(A,d,\eps_{\mathrm B})
        \leq
        \mathcal M(A,d,\eps_{\mathrm B}).
    \end{equation}
    In particular, under Assumption~\ref{ass:behavioral-lipschitzness}, for every state $s$ there
    exists a behavioral $\eps_{\mathrm B}$-net
    $\gA_{\reach,\eps_{\mathrm B}}(s)
        \subseteq
        \gZ_\reach(s)$
    satisfying
   $|\gA_{\reach,\eps_{\mathrm B}}(s)|
        \le
        A_\reach(\eps_{\mathrm B})$.
    If Assumption~\ref{ass:local-behavioral-richness} also holds and
    $\reach\in(0,\reach_{\mathrm{rich}}]$, then every behavioral $\eps_{\mathrm B}$-net of
    $\gZ_\reach(s_0)$ has cardinality at least
    \begin{equation}
        \label{eq:behavioral-net-lower-bound}
        \left(
        \ell_{\mathrm B}\reach/(2\eps_{\mathrm B})
        \right)^{d_{\mathrm{eff}}}.
    \end{equation}
\end{lemma}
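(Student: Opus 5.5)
The plan is to prove the three claims of the packing--net sandwich separately, using only maximality and triangle-inequality arguments, and then to specialize to the behavioral pseudometric via Proposition~\ref{prop:behavioral-packing-growth}.

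\textbf{Upper inequality.} We show $\mathcal N_{\mathrm{int}}(A,d,\eps_{\mathrm B})\le\mathcal M(A,d,\eps_{\mathrm B})$. Since $\mathcal M(A,d,\eps_{\mathrm B})<\infty$, there is a finite $\eps_{\mathrm B}$-separated set $\mathcal P\subseteq A$ of maximal cardinality; in particular it is inclusion-maximal. If some $z\in A$ had $d(z,a)>\eps_{\mathrm B}$ for all $a\in\mathcal P$, then $\mathcal P\cup\{z\}$ would be a larger separated set, which is a contradiction. Hence $\mathcal P$ is an internal $\eps_{\mathrm B}$-net.

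\textbf{Lower inequality.} We show $\mathcal M(A,d,2\eps_{\mathrm B})\le\mathcal N_{\mathrm{int}}(A,d,\eps_{\mathrm B})$. Take any internal $\eps_{\mathrm B}$-net $\mathcal A$ and any finite $2\eps_{\mathrm B}$-separated set $\mathcal P$. Map each $x\in\mathcal P$ to some $a(x)\in\mathcal A$ with $d(x,a(x))\le\eps_{\mathrm B}$. If $a(x)=a(y)$ for distinct $x,y$, the triangle inequality gives $d(x,y)\le 2\eps_{\mathrm B}$, which contradicts separation. So the map is injective, $|\mathcal P|\le|\mathcal A|$, and we take the supremum and infimum. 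One small point needs care here: the supremum may not be attained, but the injectivity bound holds for every finite packing, so it passes to the supremum directly.

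\textbf{Consequences.} Under Assumption~\ref{ass:behavioral-lipschitzness}, the bound \eqref{eq:behavioral-packing-upper} gives $\mathcal M(\gZ_\reach(s),d_s^{\mathrm B},\eps_{\mathrm B})\le A_\reach(\eps_{\mathrm B})<\infty$ for each $s$. The upper inequality then yields an internal net $\gA_{\reach,\eps_{\mathrm B}}(s)\subseteq\gZ_\reach(s)$ of size at most $A_\reach(\eps_{\mathrm B})$.

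For the final lower bound, every behavioral $\eps_{\mathrm B}$-net has cardinality at least $\mathcal M(\gZ_\reach(s_0),d_{s_0}^{\mathrm B},2\eps_{\mathrm B})$. The richness construction in the proof of Proposition~\ref{prop:behavioral-packing-growth}, run at resolution $2\eps_{\mathrm B}$ at state $s_0$, produces a packing of size at least $(\ell_{\mathrm B}\reach/(2\eps_{\mathrm B}))^{d_{\mathrm{eff}}}$. We use the state-$s_0$ bound from that proof directly, not the maximum over states $A_\reach$.

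Two points deserve care. First, the lower inequality was proved for internal nets, while "every behavioral net" may be read as any covering subset of $\gZ_\reach(s_0)$; the injectivity argument only needs the net points to exist in the ambient space, so it still applies. Second, finiteness of $\mathcal M(\gZ_\reach(s_0),\cdot,2\eps_{\mathrm B})$ follows from the same Lipschitz upper bound, so the lower inequality needs no extra hypothesis.
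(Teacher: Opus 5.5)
Your proposal is correct and follows the paper's proof essentially step for step: the triangle-inequality injectivity argument for $\mathcal M(A,d,2\eps_{\mathrm B})\le\mathcal N_{\mathrm{int}}(A,d,\eps_{\mathrm B})$, a maximal $\eps_{\mathrm B}$-separated set serving as an internal net for the upper inequality, finiteness from \eqref{eq:behavioral-packing-upper}, and the richness packing at resolution $2\eps_{\mathrm B}$ for the final bound. Two of your additions tighten the paper's argument and are both sound: you use the state-$s_0$ packing built in the proof of Proposition~\ref{prop:behavioral-packing-growth}, rather than the state-maximum $A_\reach(2\eps_{\mathrm B})$ that the paper's one-line citation literally invokes, and you observe that the lower bound also covers nets not contained in $\gZ_\reach(s_0)$.
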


\begin{proof}
    Let $\mathcal P\subseteq A$ be any finite set whose distinct points have pairwise distance
    strictly larger than $2\eps_{\mathrm B}$, and let $\mathcal A\subseteq A$ be an internal $\eps_{\mathrm B}$-net.  No single
    point $a\in\mathcal A$ can be within distance $\eps_{\mathrm B}$ of two distinct points of $\mathcal P$;
    otherwise the triangle inequality would make those two points have distance at most $2\eps_{\mathrm B}$.
    Consequently $|\mathcal A|\ge|\mathcal P|$.  Taking the supremum over all such
    $\mathcal P$ and then the infimum over all internal $\eps_{\mathrm B}$-nets proves the left inequality in
    \eqref{eq:packing-net-sandwich}.

    For the right inequality, construct an inclusion-maximal $\eps_{\mathrm B}$-separated subset
    $\mathcal P_{\max}\subseteq A$.  Because the $\eps_{\mathrm B}$-packing number is finite, this construction
    terminates after finitely many points.  Maximality implies that $\mathcal P_{\max}$ is an
    internal $\eps_{\mathrm B}$-net: if some $x\in A$ were farther than $\eps_{\mathrm B}$ from every point of
    $\mathcal P_{\max}$, then $x$ could be added while preserving $\eps_{\mathrm B}$-separation.  Moreover, we have that
    $|\mathcal P_{\max}|
        \le
        \mathcal M(A,d,\eps_{\mathrm B})$.
    This proves the right inequality.

    Under Assumption~\ref{ass:behavioral-lipschitzness},
    Proposition~\ref{prop:behavioral-packing-growth} makes the relevant packing numbers finite,
    so the existence claim follows after maximizing over states.  Finally,
    \eqref{eq:behavioral-net-lower-bound} follows from the left inequality in
    \eqref{eq:packing-net-sandwich} and
    Proposition~\ref{prop:behavioral-packing-growth} applied at resolution $2\eps_{\mathrm B}$.
\end{proof}

\subsection{Finite behavioral discretization}
\label{subsec:finite-behavioral-discretization}

Fix $\reach\ge0$ and $\eps_{\mathrm B}>0$.  For every state $s$, choose a behavioral $\eps_{\mathrm B}$-net
$\gA_{\reach,\eps_{\mathrm B}}(s)
    \subseteq
    \gZ_\reach(s)$
with
$|\gA_{\reach,\eps_{\mathrm B}}(s)|
    \le
    A_\reach(\eps_{\mathrm B})$.
Define the finite-net Bellman operator
\begin{equation}
    \label{eq:finite-net-bellman-operator}
    (\gT_{\reach,\eps_{\mathrm B}}V)(s)
    :=
    \max_{a\in\gA_{\reach,\eps_{\mathrm B}}(s)}
    Q_V(s,a),
\end{equation}
and let $V_{\reach,\eps_{\mathrm B}}^\star$ denote its unique fixed point. Define the one-step support gap of the finite net by
\begin{equation}
    \label{eq:finite-net-support-gap}
    \Delta_{\reach,\eps_{\mathrm B}}(s)
    :=
    V^\star(s)
    -
    \max_{a\in\gA_{\reach,\eps_{\mathrm B}}(s)}
    Q^\star(s,a).
\end{equation}

\begin{lemma}[Discretization adds at most $\eps_{\mathrm B}$ one-step error]
    \label{lem:finite-net-support-gap}
    For every state $s$, we have that 
    \begin{equation}
        \label{eq:finite-net-support-gap-bound}
        \Delta_{\reach,\eps_{\mathrm B}}(s)
        \leq
        \Delta_\reach(s)+\eps_{\mathrm B},
    \end{equation}
    so that
    \begin{equation}
        \label{eq:finite-net-value-approximation}
        V^\star(s)-V_{\reach,\eps_{\mathrm B}}^\star(s)
        \in [0,
        (\bar\Delta_\reach+\eps_{\mathrm B})/(1-\gamma)]
    \end{equation}
    for every state $s\in \mathcal{S}$.
\end{lemma}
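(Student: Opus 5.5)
The plan has two parts: first the one-step inequality \eqref{eq:finite-net-support-gap-bound}, via the covering property of the net and the behavioral pseudometric; then a transfer to the fixed point $V_{\reach,\eps_{\mathrm B}}^\star$, by the same Bellman-residual and resolvent argument used in Proposition~\ref{prop:radius-reach-abstraction-price}.

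\textbf{Step 1: one-step bound.} Fix a state $s$. By compactness of $\gZ_\reach(s)$ and continuity of $Q^\star(s,\cdot)$, choose a maximizer $z_\reach^\dagger(s)\in\gZ_\reach(s)$ of $Q^\star(s,\cdot)$, so that $Q^\star(s,z_\reach^\dagger(s))=V^\star(s)-\Delta_\reach(s)$. Since $\gA_{\reach,\eps_{\mathrm B}}(s)$ is an internal behavioral $\eps_{\mathrm B}$-net of $\gZ_\reach(s)$, there is some $a\in\gA_{\reach,\eps_{\mathrm B}}(s)$ with $d_s^{\mathrm B}(z_\reach^\dagger(s),a)\le\eps_{\mathrm B}$. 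Inequality \eqref{eq:behavioral-pseudometric-controls-qstar} of Lemma~\ref{lem:behavioral-pseudometric-properties}, which applies because $V^\star\in\gV_{\max}$, then gives
\[
Q^\star(s,a)\ge V^\star(s)-\Delta_\reach(s)-\eps_{\mathrm B}.
\]
Maximizing over the net and subtracting from $V^\star(s)$ yields \eqref{eq:finite-net-support-gap-bound}.

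\textbf{Step 2: value sandwich.} Since the net is a finite subset of $\gZ$, the operator $\gT_{\reach,\eps_{\mathrm B}}$ is a monotone $\gamma$-contraction. We have $\gT_{\reach,\eps_{\mathrm B}}V\le\gT V$ pointwise, so iterating from $V^\star$ gives $V_{\reach,\eps_{\mathrm B}}^\star\le V^\star$; this is the argument of Lemma~\ref{lem:radius-reach-well-posedness} verbatim and gives the lower endpoint $0$. For the upper endpoint, define the deterministic selector $\pi^\dagger(s)$ that picks a maximizer of $Q^\star(s,\cdot)$ over the finite net; this is well defined since $\gS$ is finite. Its one-step residual is
\[
V^\star-\gT^{\pi^\dagger}V^\star=\Delta_{\reach,\eps_{\mathrm B}}(\cdot).
\]
Setting $D:=V^\star-V^{\pi^\dagger}$, we get $(I-\gamma P^{\pi^\dagger})D=\Delta_{\reach,\eps_{\mathrm B}}$. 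The Neumann series then gives $D\le\sup_s\Delta_{\reach,\eps_{\mathrm B}}(s)/(1-\gamma)\le(\bar\Delta_\reach+\eps_{\mathrm B})/(1-\gamma)$ by Step~1. Finally, $V^{\pi^\dagger}\le V_{\reach,\eps_{\mathrm B}}^\star$ because $\pi^\dagger$ is admissible for the net problem, by the same monotone-iteration argument as before. Together these give \eqref{eq:finite-net-value-approximation}.

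\textbf{Main obstacle.} The only nonroutine point is Step~1. The net property is stated in the behavioral pseudometric, not in Euclidean distance, so it must be converted into a bound on $Q^\star$. That conversion needs $V^\star$ to lie in $\gV_{\max}$, which follows from $r\in[0,r_{\max}]$ as already noted in Lemma~\ref{lem:behavioral-pseudometric-properties}. The remainder is the standard resolvent bookkeeping and should present no difficulty.
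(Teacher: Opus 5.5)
Your proposal is correct and follows essentially the paper's proof. Step~1 is the same net-plus-pseudometric argument, using $V^\star\in\gV_{\max}$. In Step~2 you write out the $Q^\star$-greedy resolvent bound, whereas the paper gets it by invoking the oracle upper bound of Theorem~\ref{thm:general-mdp-abstraction-price} for the finite-net abstraction; the lower endpoint comes from $V^\star_{\reach,\eps_{\mathrm B}}\le V^\star$ in both. Writing it out has the small advantage of not needing to fit a state-dependent net of varying cardinality into that theorem's fixed-$\gC$ format.
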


\begin{proof}
    Fix a state $s\in \mathcal{S}$.  Since $\gZ_\reach(s)$ is compact and $Q^\star(s,\cdot)$ is continuous, choose
    \[
        z_\reach^\star(s)
        \in
        \argmax_{z\in\gZ_\reach(s)}Q^\star(s,z).
    \]
    Since $\gA_{\reach,\eps_{\mathrm B}}(s)$ is an $\eps_{\mathrm B}$-net under $d_s^{\mathrm B}$, there exists
    $a_{\reach,\eps_{\mathrm B}}(s)\in\gA_{\reach,\eps_{\mathrm B}}(s)$ satisfying
    \[
        d_s^{\mathrm B}
        \bigl(
            z_\reach^\star(s),a_{\reach,\eps_{\mathrm B}}(s)
        \bigr)
        \leq \eps_{\mathrm B}.
    \]
    Lemma~\ref{lem:behavioral-pseudometric-properties} gives us that 
    \[
        Q^\star
        \bigl(
            s,a_{\reach,\eps_{\mathrm B}}(s)
        \bigr)
        \geq
        Q^\star
        \bigl(
            s,z_\reach^\star(s)
        \bigr)
        -\eps_{\mathrm B}.
    \]
    Therefore
    \begin{align*}
        \max_{a\in\gA_{\reach,\eps_{\mathrm B}}(s)}Q^\star(s,a)
        &\geq
        \max_{z\in\gZ_\reach(s)}Q^\star(s,z)-\eps_{\mathrm B}.
    \end{align*}
    Subtracting both sides from $V^\star(s)$ proves
    \eqref{eq:finite-net-support-gap-bound}.

    The finite sets $\gA_{\reach,\eps_{\mathrm B}}(s)$ define a persistent state-dependent action abstraction of
    the form analyzed in Theorem~\ref{thm:general-mdp-abstraction-price}.  Applying the oracle
    upper bound from that theorem with support gap $\Delta_{\reach,\eps_{\mathrm B}}$ gives
    \[
        V^\star(s)-V_{\reach,\eps_{\mathrm B}}^\star(s)
        \leq
            \max_{x\in\gS}\Delta_{\reach,\eps_{\mathrm B}}(x)/(1-\gamma).
    \]
    By \eqref{eq:finite-net-support-gap-bound}, we have that
    \[
        \max_{x\in\gS}\Delta_{\reach,\eps_{\mathrm B}}(x)
        \leq
        \bar\Delta_\reach+\eps_{\mathrm B}.
    \]
    This proves the upper bound in
    \eqref{eq:finite-net-value-approximation}.  The lower bound follows from
    $V_{\reach,\eps_{\mathrm B}}^\star\le V^\star$.
\end{proof}

    For $a\in\gA_{\reach,\eps_{\mathrm B}}(s)$, define
    \begin{equation}
        \label{eq:finite-net-optimal-q-definition}
        Q_{\reach,\eps_{\mathrm B}}^\star(s,a)
        :=
        r(s,a)
        +
        \gamma
        \sum_{s'\in\gS}
        P(s'\mid s,a)
        V_{\reach,\eps_{\mathrm B}}^\star(s').
    \end{equation}
    Let $\omega(\cdot\mid s)$ be any stationary Markov kernel supported on
    $\gA_{\reach,\eps_{\mathrm B}}(s)$, and define its pointwise finite-net selection error by
    \begin{equation}
        \label{eq:finite-net-selection-error}
        e_{\reach,\eps_{\mathrm B}}(\omega;s)
        :=
        V_{\reach,\eps_{\mathrm B}}^\star(s)
        -
        \sum_{a\in\gA_{\reach,\eps_{\mathrm B}}(s)}
        \omega(a\mid s)
        Q_{\reach,\eps_{\mathrm B}}^\star(s,a).
    \end{equation}
    
\begin{proposition}[Finite-net support--selection decomposition]
    \label{prop:finite-net-support-selection-decomposition}
    Let
    \begin{equation}
        \label{eq:uniform-finite-net-selection-error}
        \bar{e}_{\reach,\eps_{\mathrm B}}(\omega)
        :=
        \max_{s\in\gS}
        e_{\reach,\eps_{\mathrm B}}(\omega;s).
    \end{equation}
    Then $e_{\reach,\eps_{\mathrm B}}(\omega;s)\ge0$ for every state and
    \begin{equation}
        \label{eq:finite-net-exact-support-selection-decomposition}
        V^\star-V^\omega
        =
        \bigl(V^\star-V_{\reach,\eps_{\mathrm B}}^\star\bigr)
        +
        (I-\gamma P^\omega)^{-1}
        e_{\reach,\eps_{\mathrm B}}(\omega;\cdot),
    \end{equation}
so that
    \begin{equation}
        \label{eq:finite-net-support-selection-bound}
        0
        \leq
        V^\star(s)-V^\omega(s)
        \leq
            (
            \bar\Delta_\reach
            +\eps_{\mathrm B}
            +\bar{e}_{\reach,\eps_{\mathrm B}}(\omega))/(1-\gamma)
    \end{equation}
    for every state $s$.
    In particular, if a learned selector $\widehat\omega_n$ satisfies
    \begin{equation}
        \label{eq:finite-net-selection-event}
        \mathbb P
        \left(
            \bar{e}_{\reach,\eps_{\mathrm B}}(\widehat\omega_n)
            \leq
            \eps_{\mathrm{sel}}(n,\alpha)
        \right)
        \geq
        1-\alpha,
    \end{equation}
    then, with probability at least $1-\alpha$, simultaneously for every state,
    \begin{equation}
        \label{eq:finite-net-high-probability-control}
        V^\star(s)-V^{\widehat\omega_n}(s)
        \leq
            (\bar\Delta_\reach
            +\eps_{\mathrm B}
            +\eps_{\mathrm{sel}}(n,\alpha)
        )/(1-\gamma).
    \end{equation}
\end{proposition}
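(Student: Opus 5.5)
The plan is to mirror the proof of Theorem~\ref{thm:general-mdp-abstraction-price}\textit{b}--\textit{c} and Proposition~\ref{prop:radius-reach-support-selection-decomposition}, now with the finite net $\gA_{\reach,\eps_{\mathrm B}}(s)$ as the persistent action bank. The finite-net problem is a state-dependent finite abstraction of exactly the type covered by Lemma~\ref{lem:restricted-bellman-fixed-point}. Consequently $\gT_{\reach,\eps_{\mathrm B}}$ is a monotone $\gamma$-contraction, and $V_{\reach,\eps_{\mathrm B}}^\star(s)=\max_{a}Q_{\reach,\eps_{\mathrm B}}^\star(s,a)$ holds by the fixed-point equation and \eqref{eq:finite-net-optimal-q-definition}. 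Nonnegativity of $e_{\reach,\eps_{\mathrm B}}(\omega;s)$ then follows at once, since an $\omega$-average of $Q_{\reach,\eps_{\mathrm B}}^\star(s,\cdot)$ over the net cannot exceed its maximum.

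For the exact decomposition \eqref{eq:finite-net-exact-support-selection-decomposition}, I write $e_{\reach,\eps_{\mathrm B}}(\omega;s)=V_{\reach,\eps_{\mathrm B}}^\star(s)-(\gT^\omega V_{\reach,\eps_{\mathrm B}}^\star)(s)$, where $\gT^\omega$ is the policy Bellman operator. I then use $V^\omega=\gT^\omega V^\omega$ and add and subtract $\gT^\omega V_{\reach,\eps_{\mathrm B}}^\star$, which gives
\[
(I-\gamma P^\omega)\bigl(V_{\reach,\eps_{\mathrm B}}^\star-V^\omega\bigr)=e_{\reach,\eps_{\mathrm B}}(\omega;\cdot).
\]
Since $P^\omega$ is a stochastic matrix on the finite set $\gS$ and $\gamma<1$, the Neumann series inverts $I-\gamma P^\omega$. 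Adding $V^\star-V_{\reach,\eps_{\mathrm B}}^\star$ to both sides yields the identity.

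For the two-sided bound \eqref{eq:finite-net-support-selection-bound}, both summands are nonnegative. The first is nonnegative by Lemma~\ref{lem:finite-net-support-gap}. The second is nonnegative because the Neumann series has nonnegative coefficients $\gamma^t(P^\omega)^t$ applied to a nonnegative function. This gives $V^\star-V^\omega\ge0$. For the upper bound, I bound the first term by $(\bar\Delta_\reach+\eps_{\mathrm B})/(1-\gamma)$ using \eqref{eq:finite-net-value-approximation}. I bound the second by $\sum_t\gamma^t\bar e_{\reach,\eps_{\mathrm B}}(\omega)=\bar e_{\reach,\eps_{\mathrm B}}(\omega)/(1-\gamma)$, since each $(P^\omega)^t$ is a Markov operator of sup-norm one.

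For the high-probability statement, I work on the event in \eqref{eq:finite-net-selection-event}. On that event the realized $\widehat\omega_n$ is a fixed stationary kernel supported on the net with $\bar e_{\reach,\eps_{\mathrm B}}(\widehat\omega_n)\le\eps_{\mathrm{sel}}(n,\alpha)$, and the deterministic bound applies to it simultaneously for all states.

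The only step needing care is confirming that Lemma~\ref{lem:restricted-bellman-fixed-point} transfers to the net setting. The number of net points may vary with $s$, but it is bounded by $A_\reach(\eps_{\mathrm B})<\infty$ by Lemma~\ref{lem:packing-net-sandwich}, so all maxima are over finite sets and the earlier argument goes through by relabeling the net points as commands, padding with duplicates where needed.
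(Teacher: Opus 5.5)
Your proposal is correct and follows essentially the same argument as the paper. Nonnegativity comes from the max-versus-$\omega$-average comparison; the Bellman-residual identity $(I-\gamma P^\omega)(V^\star_{\reach,\eps_{\mathrm B}}-V^\omega)=e_{\reach,\eps_{\mathrm B}}(\omega;\cdot)$ is inverted by the Neumann series; the support term is controlled by Lemma~\ref{lem:finite-net-support-gap}; and the high-probability claim follows by applying the deterministic bound on the stated event. Your extra justification that Lemma~\ref{lem:restricted-bellman-fixed-point} transfers to state-dependent finite nets (by relabeling and padding net points as commands) is a harmless detail that the paper leaves implicit, since it invokes the same reduction in the proof of Lemma~\ref{lem:finite-net-support-gap}.
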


\begin{proof}
    Since
    \[
        V_{\reach,\eps_{\mathrm B}}^\star(s)
        =
        \max_{a\in\gA_{\reach,\eps_{\mathrm B}}(s)}
        Q_{\reach,\eps_{\mathrm B}}^\star(s,a),
    \]
    the average in \eqref{eq:finite-net-selection-error} cannot exceed
    $V_{\reach,\eps_{\mathrm B}}^\star(s)$.  Hence
    $e_{\reach,\eps_{\mathrm B}}(\omega;s)\ge0$.

    Let $\gT^\omega$ be the policy Bellman operator.  By
    \eqref{eq:finite-net-selection-error},
   we have that  \[
        e_{\reach,\eps_{\mathrm B}}(\omega;\cdot)
        =
        V_{\reach,\eps_{\mathrm B}}^\star
        -
        \gT^\omega V_{\reach,\eps_{\mathrm B}}^\star.
    \]
    Since $V^\omega=\gT^\omega V^\omega$,
   we have  \begin{align*}
        V_{\reach,\eps_{\mathrm B}}^\star-V^\omega
        &=
        V_{\reach,\eps_{\mathrm B}}^\star
        -
        \gT^\omega V_{\reach,\eps_{\mathrm B}}^\star
        +
        \gT^\omega V_{\reach,\eps_{\mathrm B}}^\star
        -
        \gT^\omega V^\omega=
        e_{\reach,\eps_{\mathrm B}}(\omega;\cdot)
        +
        \gamma P^\omega
        \bigl(V_{\reach,\eps_{\mathrm B}}^\star-V^\omega\bigr).
    \end{align*}
    The Markov operator $P^\omega$ has supremum-norm operator norm at most one, so
    $I-\gamma P^\omega$ is invertible with Neumann-series inverse
    $(I-\gamma P^\omega)^{-1}
        =
        \sum_{t=0}^{\infty}
        \gamma^t(P^\omega)^t$.
    Therefore
    \begin{equation}
        \label{eq:finite-net-selection-resolvent}
        V_{\reach,\eps_{\mathrm B}}^\star-V^\omega
        =
        (I-\gamma P^\omega)^{-1}
        e_{\reach,\eps_{\mathrm B}}(\omega;\cdot).
    \end{equation}
    Adding $V^\star-V_{\reach,\eps_{\mathrm B}}^\star$ to both sides proves
    \eqref{eq:finite-net-exact-support-selection-decomposition}.

    Lemma~\ref{lem:finite-net-support-gap} yields
   $0
        \leq
        V^\star(s)-V_{\reach,\eps_{\mathrm B}}^\star(s)
        \leq
        (\bar\Delta_\reach+\eps_{\mathrm B})/(1-\gamma)$.
    Moreover, evaluating the Neumann series in
    \eqref{eq:finite-net-selection-resolvent} at state $s$ yields
    \begin{align*}
        V_{\reach,\eps_{\mathrm B}}^\star(s)-V^\omega(s)
        &=
        \mathbb{E}_s^\omega
        \left[
            \sum_{t=0}^{\infty}
            \gamma^t
            e_{\reach,\eps_{\mathrm B}}(\omega;S_t)
        \right]\leq
        \frac{\overline e_{\reach,\eps_{\mathrm B}}(\omega)}{1-\gamma}.
    \end{align*}
    Combining the two bounds proves
    \eqref{eq:finite-net-support-selection-bound}.  Finally, on the event in
    \eqref{eq:finite-net-selection-event}, substitute
    \[
        \overline e_{\reach,\eps_{\mathrm B}}(\widehat\omega_n)
        \leq
        \eps_{\mathrm{sel}}(n,\alpha)
    \]
    into \eqref{eq:finite-net-support-selection-bound}.  The event has probability at least
    $1-\alpha$, proving \eqref{eq:finite-net-high-probability-control}.
\end{proof}

\subsection{A finite-sample frontier under a generative model}
\label{subsec:finite-sample-particle-frontier}

We now instantiate the finite-sample selection term using a synchronous empirical Q-iteration
scheme.  This specialization is used only to make the dependence on the number of behaviorally
distinguishable local candidates explicit.

\begin{assumption}[Generative model]
    \label{ass:particle-frontier-generative-model}
    For every state $s$ and latent $z$, a call to the generative model returns a random pair
    $(\mathsf R,\mathsf S')$
    satisfying
    \[
        0\le\mathsf R\le r_{\max},
        \qquad
        \E[\mathsf R]=r(s,z),
        \qquad
        \mathbb P(\mathsf S'=s')=P(s'\mid s,z).
    \]
    All generative-model calls are independent conditional on their queried state--latent pairs.
\end{assumption}

For the finite action sets $\gA_{\reach,\eps_{\mathrm B}}(s)$, define the optimal Q-Bellman operator
\begin{equation}
    \label{eq:finite-net-q-bellman-operator}
    (\gT_{\reach,\eps_{\mathrm B}}Q)(s,a)
    :=
    r(s,a)
    +
    \gamma
    \sum_{s'\in\gS}
    P(s'\mid s,a)
    \max_{a'\in\gA_{\reach,\eps_{\mathrm B}}(s')}
    Q(s',a').
\end{equation}
Its unique fixed point is the function $Q_{\reach,\eps_{\mathrm B}}^\star$ defined in
\eqref{eq:finite-net-optimal-q-definition}.

Initialize
$\widehat Q_0(s,a):=0$
for every finite-net state--action pair.  For an integer $J\ge1$, at each iteration $k\in\{0,\ldots,J-1\}$ and for every
$(s,a)$, draw $m$ fresh generative samples
\[
    \bigl(
        \mathsf R_{k,i}(s,a),
        \mathsf S'_{k,i}(s,a)
    \bigr),
    \qquad
    i=1,\ldots,m,
\]
and set
\begin{equation}
    \label{eq:empirical-q-iteration}
    \widehat Q_{k+1}(s,a)
    :=
    \frac1m
    \sum_{i=1}^m
    \left[
        \mathsf R_{k,i}(s,a)
        +
        \gamma
        \max_{a'\in\gA_{\reach,\eps_{\mathrm B}}(\mathsf S'_{k,i}(s,a))}
        \widehat Q_k
        \bigl(
            \mathsf S'_{k,i}(s,a),a'
        \bigr)
    \right].
\end{equation}
After $J$ iterations, let $\widehat\pi_{\reach,\eps_{\mathrm B},J}$ be a deterministic policy satisfying
\begin{equation}
    \label{eq:empirical-q-greedy-policy}
    \widehat\pi_{\reach,\eps_{\mathrm B},J}(s)
    \in
    \argmax_{a\in\gA_{\reach,\eps_{\mathrm B}}(s)}
    \widehat Q_J(s,a).
\end{equation}

\begin{theorem}[QFRS{} bias--complexity frontier]
    \label{thm:particle-bias-complexity-frontier}
    Suppose Assumptions~\ref{ass:finite-state-continuous-latent},
    \ref{ass:behavioral-lipschitzness}, and
    \ref{ass:particle-frontier-generative-model} hold.  Fix the following constants:
    \[
        \reach\geq0,
        \quad
        \eps_{\mathrm B}>0,
        \quad
        \eps_Q>0,\quad\text{and}
        \quad
        \alpha\in(0,1).
    \]
    For every state, choose a behavioral $\eps_{\mathrm B}$-net $\gA_{\reach,\eps_{\mathrm B}}(s)$ with cardinality at most
    $A_\reach(\eps_{\mathrm B})$.  Let $J\geq1$ satisfy
        $\gamma^J V_{\max}
        \leq
        \eps_Q/2$,
    and set
    \begin{equation}
        \label{eq:frontier-samples-per-pair}
        m
        :=
        \left\lceil
        \frac{2V_{\max}^2}{(1-\gamma)^2\eps_Q^2}
        \log
        \left(
            \frac{2SA_\reach(\eps_{\mathrm B})J}{\alpha}
        \right)
        \right\rceil.
    \end{equation}
    Then, with probability at least $1-\alpha$, we have that 
    \begin{equation}
        \label{eq:frontier-control-bound}
        \lVert
            V^\star
            -
            V^{\widehat\pi_{\reach,\eps_{\mathrm B},J}}
        \rVert_\infty
        \leq
        \frac{
            \bar\Delta_\reach
            +
            \eps_{\mathrm B}
            +
            2\eps_Q
        }{1-\gamma}.
    \end{equation}
    The number of generative-model samples used by
    \eqref{eq:empirical-q-iteration} is
    \begin{equation}
        \label{eq:frontier-total-samples}
        N_{\mathrm{gen}}
        =
        Jm
        \sum_{s\in\gS}
        |\gA_{\reach,\eps_{\mathrm B}}(s)|
        \le
        SJA_\reach(\eps_{\mathrm B})
        \left\lceil
        \frac{2V_{\max}^2}{(1-\gamma)^2\eps_Q^2}
        \log
        \left(
            \frac{2SA_\reach(\eps_{\mathrm B})J}{\alpha}
        \right)
        \right\rceil.
    \end{equation}
Furthermore, we have the following consequences:
\begin{enumerate}[label={\it\alph*.},itemsep=0pt, topsep=0pt, leftmargin=20pt]
    \item  For every fixed behavioral resolution $\eps_{\mathrm B}$, estimation accuracy $\eps_Q$, confidence level $\alpha$, and
    admissible iteration count $J$, increasing the reach $\reach$ moves the two structural
quantities in opposite directions: $\bar{\Delta}_\reach$ is
nonincreasing in $\reach$, whereas
$A_\reach(\eps_{\mathrm B})$ is nondecreasing in $\reach$.
    Consequently, for fixed $\eps_{\mathrm B}$, $\eps_Q$, $\alpha$, and $J$, the sample-complexity upper bound
    on the right-hand side of \eqref{eq:frontier-total-samples} is nondecreasing in $\reach$.
    \item  If Assumption~\ref{ass:smooth-local-optimal-value-geometry} also holds, then
    \begin{equation}
        \label{eq:explicit-frontier-control-bound}
        \left\lVert
            V^\star
            -
            V^{\widehat\pi_{\reach,\eps_{\mathrm B},J}}
        \right\rVert_\infty
        \le
        \frac{1}{1-\gamma}
        \left[
            \frac{\Lsmooth}{2}
            \max_{s\in\gS}
            \bigl(\dist_{\mF}(s)-\reach\bigr)_+^2
            +\eps_{\mathrm B}+2\eps_Q
        \right]
    \end{equation}
    with probability at least $1-\alpha$.  Under the standing behavioral Lipschitzness
    assumption, the bound holds:
    \begin{equation}
        \label{eq:explicit-frontier-action-count}
        A_\reach(\eps_{\mathrm B})
        \le
        C
        \left(
            1+\frac{2L_{\mathrm B}\reach}{\eps_{\mathrm B}}
        \right)^{d_z}.
    \end{equation}
  \item   If Assumption~\ref{ass:local-behavioral-richness} also holds and
    $\reach\in(0,\reach_{\mathrm{rich}}]$, then every behavioral $\eps_{\mathrm B}$-net at the distinguished state $s_0$
    contains at least
        $\left(
            \ell_{\mathrm B}\reach/(2\eps_{\mathrm B})
        \right)^{d_{\mathrm{eff}}}$
    candidates.
\end{enumerate}

\end{theorem}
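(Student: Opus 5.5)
The plan is to reduce the control bound to a uniform estimation bound for empirical Q-iteration on the finite net. Then Corollary~\ref{cor:critic-to-general-mdp-control}, applied in its finite-net form, and Proposition~\ref{prop:finite-net-support-selection-decomposition} finish the argument. Concretely, the target is the event
\[
\max_{s\in\gS}\max_{a\in\gA_{\reach,\eps_{\mathrm B}}(s)}\bigl|\widehat Q_J(s,a)-Q_{\reach,\eps_{\mathrm B}}^\star(s,a)\bigr|\le \eps_Q,
\]
which should hold with probability at least $1-\alpha$. On this event, the greedy policy \eqref{eq:empirical-q-greedy-policy} has finite-net selection error at most $2\eps_Q$. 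This follows from the same three-inequality chain as in Corollary~\ref{cor:critic-to-general-mdp-control}, with $\eps_{\mathrm{opt}}=0$ and $Q_{\mF}^\star$ replaced by $Q_{\reach,\eps_{\mathrm B}}^\star$. Substituting $\bar e_{\reach,\eps_{\mathrm B}}\le 2\eps_Q$ into \eqref{eq:finite-net-high-probability-control} gives \eqref{eq:frontier-control-bound}.

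For the estimation bound, I would write $\widehat Q_{k+1}=\gT_{\reach,\eps_{\mathrm B}}\widehat Q_k+\xi_k$. Here $\xi_k(s,a)$ is the centered average of $m$ i.i.d.\ terms of the form $\mathsf R+\gamma\max_{a'}\widehat Q_k(\mathsf S',a')$, taken conditionally on $\widehat Q_k$; the fresh samples make this conditioning legitimate. By induction, $0\le\widehat Q_k\le V_{\max}$ (since rewards lie in $[0,r_{\max}]$ and $r_{\max}+\gamma V_{\max}=V_{\max}$), so each summand lies in an interval of length at most $V_{\max}$. Hoeffding's inequality gives $|\xi_k(s,a)|\le(1-\gamma)\eps_Q/2$ except with probability $2\exp(-m(1-\gamma)^2\eps_Q^2/(2V_{\max}^2))$. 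The choice of $m$ in \eqref{eq:frontier-samples-per-pair} makes this at most $\alpha/(SA_\reach(\eps_{\mathrm B})J)$. A union bound over at most $SA_\reach(\eps_{\mathrm B})$ pairs and $J$ iterations then controls all the noise simultaneously.

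Next, the $\gamma$-contraction of $\gT_{\reach,\eps_{\mathrm B}}$ in sup norm gives
\[
\|\widehat Q_J-Q^\star_{\reach,\eps_{\mathrm B}}\|_\infty\le\gamma^J\|Q^\star_{\reach,\eps_{\mathrm B}}\|_\infty+\sum_{k<J}\gamma^{J-1-k}\|\xi_k\|_\infty\le \eps_Q/2+\eps_Q/2,
\]
using $\gamma^JV_{\max}\le\eps_Q/2$. This is where the constants must be matched carefully: the $(1-\gamma)^{-2}$ in $m$ is what absorbs the geometric sum. I expect this constant bookkeeping to be the main obstacle, and I would first check that the Hoeffding range is $V_{\max}$ rather than $2V_{\max}$. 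The sample count \eqref{eq:frontier-total-samples} is immediate by counting $m$ draws per pair per iteration and bounding $|\gA_{\reach,\eps_{\mathrm B}}(s)|\le A_\reach(\eps_{\mathrm B})$.

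For the consequences: part \textit{a.} combines Proposition~\ref{prop:monotone-local-reach} and Proposition~\ref{prop:monotone-behavioral-complexity}. The right side of \eqref{eq:frontier-total-samples} is nondecreasing in $A_\reach(\eps_{\mathrm B})$, since both the linear factor and the logarithm are increasing, so it is nondecreasing in $\reach$. Part \textit{b.} substitutes Proposition~\ref{prop:quadratic-local-support-bias} into \eqref{eq:frontier-control-bound} and quotes \eqref{eq:behavioral-packing-upper}. Part \textit{c.} is exactly \eqref{eq:behavioral-net-lower-bound} from Lemma~\ref{lem:packing-net-sandwich}.
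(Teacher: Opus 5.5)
Your proposal is correct and follows the paper's proof step for step. The shared steps are: induction for $0\le\widehat Q_k\le V_{\max}$; conditional Hoeffding at tolerance $(1-\gamma)\eps_Q/2$ with a union bound over at most $SJA_\reach(\eps_{\mathrm B})$ state--action--iteration triples; the contraction recursion giving $\|\widehat Q_J-Q^\star_{\reach,\eps_{\mathrm B}}\|_\infty\le\eps_Q$; the $2\eps_Q$ greedy residual; and the finite-net support gap, with parts \textit{a.}--\textit{c.} taken from the same cited results. The only difference is cosmetic: you close via Proposition~\ref{prop:finite-net-support-selection-decomposition}, whereas the paper bounds $\|V^\star_{\reach,\eps_{\mathrm B}}-V^{\widehat\pi_{\reach,\eps_{\mathrm B},J}}\|_\infty\le 2\eps_Q/(1-\gamma)$ directly and adds Lemma~\ref{lem:finite-net-support-gap} by the triangle inequality, which is exactly how that proposition is proved.
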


\begin{proof}
    We divide the proof into five steps.

\noindent\textbf{Step 1: boundedness of the empirical iterates and targets.}
    We prove by induction that
    \begin{equation}
        \label{eq:empirical-q-boundedness}
        0
        \leq
        \widehat Q_k(s,a)
        \leq
        V_{\max}
    \end{equation}
    for every $k\in\{0, \ldots, J\}$ and every finite-net state--action pair.
    The claim holds for $k=0$ because $\widehat Q_0=0$.  Suppose it holds at iteration $k$.
    Every summand in \eqref{eq:empirical-q-iteration} is nonnegative.  It is also bounded above by
    \[
        r_{\max}+\gamma V_{\max}
        =
        r_{\max}
        +
        \gamma\frac{r_{\max}}{1-\gamma}
        =
        \frac{r_{\max}}{1-\gamma}
        =
        V_{\max}.
    \]
    Therefore their average lies in $[0,V_{\max}]$, proving
    \eqref{eq:empirical-q-boundedness} at iteration $k+1$.

\noindent\textbf{Step 2: a uniform one-step empirical Bellman bound.}
    Let $\mathcal F_k$ be the sigma-algebra generated by all samples drawn before iteration $k$.
    Then $\widehat Q_k$ is $\mathcal F_k$-measurable.  By the fresh-sample assumption, conditional
    on $\mathcal F_k$, the $m$ summands in \eqref{eq:empirical-q-iteration} for a fixed pair
    $(s,a)$ are independent, lie in $[0,V_{\max}]$, and have common conditional expectation
    $(\gT_{\reach,\eps_{\mathrm B}}\widehat Q_k)(s,a)$.
    H\"oeffding's inequality therefore gives, for every $\xi>0$, the bound
    \begin{align}
        &\mathbb P
        \left(
            \left|
                \widehat Q_{k+1}(s,a)
                -
                (\gT_{\reach,\eps_{\mathrm B}}\widehat Q_k)(s,a)
            \right|
            >\xi
            \,\middle|\,
            \mathcal F_k
        \right)\leq
        2\exp
        \left(
            -2m\xi^2/V_{\max}^2
        \right).
        \label{eq:conditional-hoeffding-frontier}
    \end{align}
    Taking expectations of both sides gives the same unconditional bound.  Across all iterations,
    there are at most
    $SJA_\reach(\eps_{\mathrm B})$
    state--action--iteration triples.  A union bound applied to
    \eqref{eq:conditional-hoeffding-frontier} yields
    \begin{align}
        &\mathbb P
        \left(
            \max_{0\le k<J}
            \left\lVert
                \widehat Q_{k+1}
                -
                \gT_{\reach,\eps_{\mathrm B}}\widehat Q_k
            \right\rVert_\infty
            >\xi
        \right)\le
        2SJA_\reach(\eps_{\mathrm B})
        \exp
        \left(
            -2m\xi^2/V_{\max}^2
        \right).
        \label{eq:uniform-empirical-bellman-probability}
    \end{align}

    Set
        $\xi
        :=
        (1-\gamma)\eps_Q/2$.
    By \eqref{eq:frontier-samples-per-pair}, we have that 
    \begin{align*}
        \frac{2m\xi^2}{V_{\max}^2}
        &=
        \frac{m(1-\gamma)^2\eps_Q^2}{2V_{\max}^2}
\ge
        \log
        \left(
            \frac{2SA_\reach(\eps_{\mathrm B})J}{\alpha}
        \right).
    \end{align*}
    Substituting into \eqref{eq:uniform-empirical-bellman-probability} shows that the event
    \begin{equation}
        \label{eq:good-empirical-bellman-event}
        \gE_{\mathrm{Bell}}
        :=
        \left\{
            \max_{0\leq k<J}
            \left\lVert
                \widehat Q_{k+1}
                -
                \gT_{\reach,\eps_{\mathrm B}}\widehat Q_k
            \right\rVert_\infty
            \leq
            \frac{(1-\gamma)\eps_Q}{2}
        \right\}
    \end{equation}
    has probability at least $1-\alpha$.

\noindent\textbf{Step 3: error of empirical Q-iteration on the finite net.}
    The operator $\gT_{\reach,\eps_{\mathrm B}}$ is a $\gamma$-contraction in supremum norm.  Indeed, for
    any two finite-net Q-functions $Q$ and $\widetilde Q$,
    \begin{align*}
        \left|
            (\gT_{\reach,\eps_{\mathrm B}}Q)(s,a)
            -
            (\gT_{\reach,\eps_{\mathrm B}}\widetilde Q)(s,a)
        \right|&\leq
        \gamma
        \sum_{s'\in\gS}
        P(s'\mid s,a)
        \left|
            \max_{a'\in\gA_{\reach,\eps_{\mathrm B}}(s')}Q(s',a')
            -
            \max_{a'\in\gA_{\reach,\eps_{\mathrm B}}(s')}\widetilde Q(s',a')
        \right|
        \\
        &\qquad\le
        \gamma\lVert Q-\widetilde Q\rVert_\infty.
    \end{align*}
    Taking the maximum over $(s,a)$ proves the contraction property.  Its fixed point is
    $Q_{\reach,\eps_{\mathrm B}}^\star$.

    Work on the event $\gE_{\mathrm{Bell}}$.  Define
    $e_k
        :=
    \lVert
            \widehat Q_k-Q_{\reach,\eps_{\mathrm B}}^\star
\rVert_\infty$.
    For $k<J$, we have that 
    \begin{align*}
        e_{k+1}
        &\leq\lVert
            \widehat Q_{k+1}
            -
            \gT_{\reach,\eps_{\mathrm B}}\widehat Q_k
\rVert_\infty+
        \lVert
            \gT_{\reach,\eps_{\mathrm B}}\widehat Q_k
            -
            \gT_{\reach,\eps_{\mathrm B}}Q_{\reach,\eps_{\mathrm B}}^\star
\rVert_\infty
\leq
        \frac{(1-\gamma)\eps_Q}{2}
        +
        \gamma e_k.
    \end{align*}
    Iterating this inequality yields
    \begin{align*}
        e_J
        &\leq
        \gamma^J e_0
        +
        \frac{(1-\gamma)\eps_Q}{2}
        \sum_{j=0}^{J-1}\gamma^j\le
        \gamma^J e_0
        +
        \frac{\eps_Q}{2}.
    \end{align*}
    Since $\widehat Q_0=0$ and
    $0\leq Q_{\reach,\eps_{\mathrm B}}^\star\leq V_{\max}$, we have that 
    $e_0
        \leq
        V_{\max}$.
    Since $\gamma^J V_{\max}
        \leq
        \frac{\eps_Q}{2}$,
   we have $\gamma^J e_0
        \leq
        \frac{\eps_Q}{2}$.
    Therefore
    \begin{equation}
        \label{eq:finite-net-q-error}
     \lVert
            \widehat Q_J-Q_{\reach,\eps_{\mathrm B}}^\star
\rVert_\infty
        \leq
        \eps_Q.
    \end{equation}

\noindent\textbf{Step 4: performance of the greedy finite-net policy.}
    Fix a state $s$ and let
    \[
        a^\star(s)
        \in
        \argmax_{a\in\gA_{\reach,\eps_{\mathrm B}}(s)}
        Q_{\reach,\eps_{\mathrm B}}^\star(s,a).
    \]
    By the definition of $\widehat\pi_{\reach,\eps_{\mathrm B},J}$ and
    \eqref{eq:finite-net-q-error}, we have that 
    \begin{align*}
        Q_{\reach,\eps_{\mathrm B}}^\star
        \bigl(
            s,\widehat\pi_{\reach,\eps_{\mathrm B},J}(s)
        \bigr)
        &\geq
        \widehat Q_J
        \bigl(
            s,\widehat\pi_{\reach,\eps_{\mathrm B},J}(s)
        \bigr)
        -\eps_Q
        \\
        &={}
        \max_{a\in\gA_{\reach,\eps_{\mathrm B}}(s)}
        \widehat Q_J(s,a)
        -\eps_Q
        \\
        &\geq
        \widehat Q_J\bigl(s,a^\star(s)\bigr)
        -\eps_Q
        \\
        &\geq
        Q_{\reach,\eps_{\mathrm B}}^\star\bigl(s,a^\star(s)\bigr)
        -2\eps_Q
        \\
        &=
        V_{\reach,\eps_{\mathrm B}}^\star(s)-2\eps_Q.
    \end{align*}
    Define the finite-net greedy residual
    $e_{\mathrm{greedy}}(s)
        :=
        V_{\reach,\eps_{\mathrm B}}^\star(s)
        -
        Q_{\reach,\eps_{\mathrm B}}^\star
        \bigl(
            s,\widehat\pi_{\reach,\eps_{\mathrm B},J}(s)
        \bigr)$.
    The preceding inequality yields
    $0\le e_{\mathrm{greedy}}(s)\le2\eps_Q$.

    Let
$D_{\mathrm{greedy}}
        :=
        V_{\reach,\eps_{\mathrm B}}^\star
        -
        V^{\widehat\pi_{\reach,\eps_{\mathrm B},J}}$.
    Adding and subtracting the finite-net policy Bellman operator applied to
    $V_{\reach,\eps_{\mathrm B}}^\star$ gives
    \begin{align*}
        D_{\mathrm{greedy}}(s)
        &=
        V_{\reach,\eps_{\mathrm B}}^\star(s)
        -
        (\gT^{\widehat\pi_{\reach,\eps_{\mathrm B},J}}
        V^{\widehat\pi_{\reach,\eps_{\mathrm B},J}})(s)
        \\
        &=
        V_{\reach,\eps_{\mathrm B}}^\star(s)
        -
        (\gT^{\widehat\pi_{\reach,\eps_{\mathrm B},J}}
        V_{\reach,\eps_{\mathrm B}}^\star)(s)+
        \gamma
        (P^{\widehat\pi_{\reach,\eps_{\mathrm B},J}}
        D_{\mathrm{greedy}})(s)
        \\
        &=
        e_{\mathrm{greedy}}(s)
        +
        \gamma
        (P^{\widehat\pi_{\reach,\eps_{\mathrm B},J}}
        D_{\mathrm{greedy}})(s).
    \end{align*}
    Taking the supremum norm yields
    \[
        \lVert D_{\mathrm{greedy}}\rVert_\infty
        \leq
        2\eps_Q
        +
        \gamma\lVert D_{\mathrm{greedy}}\rVert_\infty,
    \]
   so that rearranging gives us that 
    \begin{equation}
        \label{eq:finite-net-greedy-performance}
        \left\lVert
            V_{\reach,\eps_{\mathrm B}}^\star
            -
            V^{\widehat\pi_{\reach,\eps_{\mathrm B},J}}
        \right\rVert_\infty
        \leq
        2\eps_Q/(1-\gamma).
    \end{equation}

\textbf{Step 5: combine approximation, estimation, and monotonicity.}
    By Lemma~\ref{lem:finite-net-support-gap}, we have that
    \[
        \left\lVert
            V^\star-V_{\reach,\eps_{\mathrm B}}^\star
        \right\rVert_\infty
        \leq(\bar\Delta_\reach+\eps_{\mathrm B})/(1-\gamma).
    \]
    Combining this with \eqref{eq:finite-net-greedy-performance} and the triangle inequality gives
    \begin{align*}
        \left\lVert
            V^\star
            -
            V^{\widehat\pi_{\reach,\eps_{\mathrm B},J}}
        \right\rVert_\infty
        &\leq
        \left\lVert
            V^\star-V_{\reach,\eps_{\mathrm B}}^\star
        \right\rVert_\infty+
        \left\lVert
            V_{\reach,\eps_{\mathrm B}}^\star
            -
            V^{\widehat\pi_{\reach,\eps_{\mathrm B},J}}
        \right\rVert_\infty
        \leq
            (\bar\Delta_\reach+\eps_{\mathrm B}+2\eps_Q)/(1-\gamma).
    \end{align*}
    This proves \eqref{eq:frontier-control-bound} on the event
    $\gE_{\mathrm{Bell}}$, which has probability at least $1-\alpha$.

    The sample-count identity follows because iteration $k$ draws $m$ samples for every finite-net
    state--action pair.  Therefore
    \[
        N_{\mathrm{gen}}
        =
        Jm
        \sum_{s\in\gS}|\gA_{\reach,\eps_{\mathrm B}}(s)|
        \le
        SJmA_\reach(\eps_{\mathrm B}).
    \]

    Proposition~\ref{prop:monotone-local-reach} proves that
    $\bar\Delta_\reach$ is nonincreasing in $\reach$, while
    Proposition~\ref{prop:monotone-behavioral-complexity} proves that
    $A_\reach(\eps_{\mathrm B})$ is nondecreasing.  This proves
  that $\bar{\Delta}_\reach$ and $A_\reach(\eps_\mathrm{B})$ are monotonic.  The right-hand side of
    \eqref{eq:frontier-total-samples} is a nondecreasing function of $A_\reach(\eps_{\mathrm B})$, so its
    monotonicity in $\reach$ follows as well.

    Finally, substituting
    \eqref{eq:uniform-quadratic-local-support-bias} into
    \eqref{eq:frontier-control-bound} proves
    \eqref{eq:explicit-frontier-control-bound}, and
    \eqref{eq:explicit-frontier-action-count} follows from
    Proposition~\ref{prop:behavioral-packing-growth}.  The lower bound
    $\left(
            \ell_{\mathrm B}\reach/(2\eps_{\mathrm B})
        \right)^{d_{\mathrm{eff}}}$ is
    \eqref{eq:behavioral-net-lower-bound} from
    Lemma~\ref{lem:packing-net-sandwich}.
\end{proof}

\begin{corollary}[A target-accuracy reach choice]
    \label{cor:target-accuracy-radius-choice}
    Fix a desired full-MDP control tolerance $\eps_{\mathrm{ctrl}}>0$ and a confidence level $\alpha\in(0,1)$.  Suppose there exist
    $\reach\geq0$, $\eps_{\mathrm B}>0$ and $\eps_Q>0$ such that
    \begin{equation}
        \label{eq:target-accuracy-error-allocation}
        \bar\Delta_\reach
        \leq
      (1-\gamma)\eps_{\mathrm{ctrl}}/3,
        \qquad
        \eps_{\mathrm B}
        \leq
    (1-\gamma)\eps_{\mathrm{ctrl}}/3,
        \qquad
        2\eps_Q
        \leq
     (1-\gamma)\eps_{\mathrm{ctrl}}/3.
    \end{equation}
    If $J$ and $m$ satisfy
    $\gamma^J V_{\max}
        \leq
        \eps_Q/2$ and \eqref{eq:frontier-samples-per-pair}, then with
    probability at least $1-\alpha$,
    \[
       \lVert
            V^\star
            -
            V^{\widehat\pi_{\reach,\eps_{\mathrm B},J}}
\rVert_\infty
        \le
        \eps_{\mathrm{ctrl}}.
    \]
    Among two reaches $\reach_1\le \reach_2$ that both satisfy the support-bias requirement in
    \eqref{eq:target-accuracy-error-allocation}, the smaller reach has no larger behavioral packing complexity:
    \[
        A_{\reach_1}(\eps_{\mathrm B})
        \leq
        A_{\reach_2}(\eps_{\mathrm B}).
    \]
    Thus, if a smallest radius meeting a prescribed support-bias budget exists, it minimizes the
    behavioral packing complexity among all larger feasible radii.
\end{corollary}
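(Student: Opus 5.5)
The corollary follows from Theorem~\ref{thm:particle-bias-complexity-frontier} by budgeting the error, plus a monotonicity observation. Nothing new needs to be estimated.

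\textbf{Control guarantee.} With $\reach$, $\eps_{\mathrm B}$, $\eps_Q$ as in \eqref{eq:target-accuracy-error-allocation}, we choose a behavioral $\eps_{\mathrm B}$-net $\gA_{\reach,\eps_{\mathrm B}}(s)$ at every state. Lemma~\ref{lem:packing-net-sandwich} guarantees such a net exists with cardinality at most $A_\reach(\eps_{\mathrm B})$. Since $J$ and $m$ satisfy the hypotheses of Theorem~\ref{thm:particle-bias-complexity-frontier}, the bound \eqref{eq:frontier-control-bound} holds on an event of probability at least $1-\alpha$. On that event we sum the three allocations in \eqref{eq:target-accuracy-error-allocation}:
\[
\bar\Delta_\reach+\eps_{\mathrm B}+2\eps_Q\le(1-\gamma)\eps_{\mathrm{ctrl}}.
\]
Dividing by $1-\gamma$ gives $\lVert V^\star-V^{\widehat\pi_{\reach,\eps_{\mathrm B},J}}\rVert_\infty\le\eps_{\mathrm{ctrl}}$.

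\textbf{Comparison of reaches.} For $\reach_1\le\reach_2$ with the same $\eps_{\mathrm B}$, the inequality $A_{\reach_1}(\eps_{\mathrm B})\le A_{\reach_2}(\eps_{\mathrm B})$ is exactly Proposition~\ref{prop:monotone-behavioral-complexity}. It holds whether or not both reaches meet the bias budget; feasibility only matters for interpreting the result. For the final sentence, let $\reach_{\min}$ be a smallest feasible radius, meaning one with $\bar\Delta_{\reach_{\min}}\le(1-\gamma)\eps_{\mathrm{ctrl}}/3$. Every other feasible radius $\reach$ satisfies $\reach\ge\reach_{\min}$, so the same proposition gives $A_{\reach_{\min}}(\eps_{\mathrm B})\le A_\reach(\eps_{\mathrm B})$. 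By Proposition~\ref{prop:monotone-local-reach}, $\bar\Delta_\reach$ is nonincreasing, so the feasible set is an up-interval $[\reach_{\min},\infty)$ or $(\reach_{\min},\infty)$. In the attained case, "smallest" is therefore meaningful.

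\textbf{Main obstacle.} The argument has no substantive obstacle. The one point to check is that $m$ in \eqref{eq:frontier-samples-per-pair} depends on $A_\reach(\eps_{\mathrm B})$. The corollary's hypothesis should therefore be read as fixing $m$ for the reach actually used. We would note explicitly that the chosen net satisfies the cardinality requirement of Theorem~\ref{thm:particle-bias-complexity-frontier}, so that the union bound in that proof applies.
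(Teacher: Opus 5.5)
Your proposal is correct and follows the paper's proof: substitute the three budget inequalities into \eqref{eq:frontier-control-bound}, and obtain the complexity comparison directly from Proposition~\ref{prop:monotone-behavioral-complexity}. Your additional remarks are sound refinements that the paper leaves implicit, namely that the feasible set is upward closed by Proposition~\ref{prop:monotone-local-reach} and that $m$ must be fixed for the reach actually used.
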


\begin{proof}
    The performance conclusion follows by substituting the three inequalities in
    \eqref{eq:target-accuracy-error-allocation} into
    \eqref{eq:frontier-control-bound}.  The monotonicity conclusion follows from
    Proposition~\ref{prop:monotone-behavioral-complexity}.
\end{proof}

\begin{corollary}[Nested-radius adaptation cost in the latent-mode specialization]
    \label{cor:nested-reach-adaptation-cost}
    Consider the one-decision Bernoulli latent-mode construction of
    Theorem~\ref{thm:persistent-abstraction-separation}.  Suppose a resolution-$\eps_{\mathrm B}$ representation
    of a smaller reach $\reach_1$ contains $A_1$ behaviorally distinct modes and a representation of
    a larger reach $\reach_2\ge \reach_1$ contains a nested set of $A_2>A_1$ modes.  Fix
    $\eps\in(0,1/8]$ and $\alpha\in(0,1/4]$.
    There exists a zero-bias instance for the smaller reach on which any learner that remains
    $(\eps,\alpha)$-correct over the larger $A_2$-mode class must collect, in expectation, at least
    \begin{equation}
        \label{eq:nested-reach-lower-bound}
        \frac{3(A_2-A_1)}{256\eps^2}
        \log\frac{3}{4\alpha}
    \end{equation}
    online samples from modes admitted by the larger reach but excluded by the smaller reach.
\end{corollary}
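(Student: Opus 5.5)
The plan is to reduce the statement to Theorem~\ref{thm:persistent-abstraction-separation}\textit{b.} by relabeling. The $A_1$ modes of the smaller reach play the role of the flow-reversed set $\gC$, so $C:=A_1$. The $A_2$ modes of the larger reach play the role of the full class $[N]$, so $N:=A_2$. Nestedness is exactly what makes this identification legitimate: the $A_1$ modes form a subset of the $A_2$ modes, and I index them as $\{1,\dots,A_1\}\subset[A_2]$. Because the one-decision Bernoulli construction makes no smoothness assumption linking modes (Remark~\ref{rem:continuous-latent-embedding}), I may assign reward laws to these $A_2$ behaviorally distinct modes freely.

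The instance is built as in \eqref{eq:base-instance}. Mode $1$, which lies in the smaller-reach set, gets mean $1/2$, and every other mode gets mean $1/2-2\eps$. For each $j\in[A_2]\setminus[A_1]$ I form the alternative \eqref{eq:alternative-instance}, raising only mode $j$ to $1/2+2\eps$. Three facts then hold:
\begin{itemize}
\item The base instance has zero bias for the smaller reach, because its optimal mode lies in the $A_1$-mode representation; this is part~\textit{c.}
\item Every alternative agrees with the base on all modes in the smaller set.
\item Each alternative belongs to the $A_2$-mode class, so an $(\eps,\alpha)$-correct learner over that class must be correct on all of them.
\end{itemize}

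The lower bound follows from Theorem~\ref{thm:persistent-abstraction-separation}\textit{b.}, which rests on Lemma~\ref{lem:adaptive-change-of-measure} applied to the event $\{\widehat I_n=1\}$ and on Lemma~\ref{lem:binary-kl-bounds}. It gives $\E[N_j(n)]\ge \KLop(1-\alpha,\alpha)/\KLop(1/2-2\eps,1/2+2\eps)$ for each $j\in[A_2]\setminus[A_1]$. Summing over the $A_2-A_1$ excluded modes and applying \eqref{eq:bernoulli-instance-kl-upper} and \eqref{eq:binary-kl-alpha-lower} yields \eqref{eq:nested-reach-lower-bound}, following the same chain that produced \eqref{eq:explicit-nonfrs-lower-bound}. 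This bounds samples from excluded modes specifically, not just the total $n$. The lower bound in Theorem~\ref{thm:persistent-abstraction-separation}\textit{b.} already holds even when the learner is given the exact laws of the smaller-set modes, so it certainly holds without that information.

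The main obstacle is interpretive rather than computational. I need the phrase ``$A_1$ behaviorally distinct modes at resolution $\eps_{\mathrm B}$'' to license treating the modes as independent Bernoulli arms whose means can be set arbitrarily within $[0,1]$. I would state explicitly that the corollary works inside the latent-mode specialization, where each mode is an abstract arm. Distinctness is then only used to guarantee $A_2-A_1$ genuinely separate arms outside the smaller set, and the Bellman pseudometric values need not be compatible with the $4\eps$ mean perturbations, since no Lipschitz constraint is imposed there.
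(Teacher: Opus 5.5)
Your proposal is correct and follows the paper's proof. Both identify the smaller-reach modes with the anchor set ($C=A_1$) and the larger-reach modes with the full class ($N=A_2$), then invoke Theorem~\ref{thm:persistent-abstraction-separation}\textit{b.} on its zero-bias base instance. Your note that the bound on samples from excluded modes should be read off the intermediate inequality \eqref{eq:explicit-nonfrs-lower-bound}, rather than the total-$n$ bound \eqref{eq:full-latent-total-lower-bound}, is slightly more careful than the paper's one-line citation of part~\textit{b.}
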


\begin{proof}
    Identify the $A_1$ smaller-reach modes with the persistent abstraction in
    Theorem~\ref{thm:persistent-abstraction-separation}, and identify the remaining
    $A_2-A_1$ larger-reach modes with the non-abstraction modes in that theorem.  The base instance
    used in Theorem~\ref{thm:persistent-abstraction-separation} has its optimal mode inside the
    smaller set, so the smaller reach has zero abstraction bias.  For each of the
    $A_2-A_1$ additional modes, the corresponding alternative instance changes only that mode and
    makes it uniquely $\eps$-optimal.  Part~\textnormal{(b)} of
    Theorem~\ref{thm:persistent-abstraction-separation}, with
    $C=A_1$,
and 
        $N=A_2$,
    gives exactly \eqref{eq:nested-reach-lower-bound}.
\end{proof}

\subsection{Relation to a finite Gaussian particle cloud}
\label{subsec:gaussian-particle-cloud-connection}

The implemented particle construction uses
\begin{equation}
    \label{eq:implemented-gaussian-cloud}
    z_{c,k}(s)
    =
    z_c(s)
    +
    \sigma_{ p}\xi_{c,k},
    \qquad
    \xi_{c,k}\sim\mathcal N(0,I_{d_z}),
\end{equation}
with an exact center particle $z_{c,0}(s)=z_c(s)$.  Since a Gaussian distribution has full
support for every $\sigma_{ p}>0$, the scale $\sigma_{ p}$ does not define a hard
support radius.  The following proposition, which leverages standard covering arguments \citep{wainwright2019high,vershynin2019high}, instead quantifies how many particles are sufficient
to form a Euclidean net of a prescribed reach-$\reach$ envelope.

\begin{proposition}[Finite Gaussian coverage of a bounded local envelope]
    \label{prop:finite-gaussian-envelope-coverage}
    Fix constants
    \[
        \reach>0,
        \qquad
        \eps_z>0,
        \qquad
        \sigma_{ p}>0,
        \qquad
        \alpha\in(0,1).
    \]
    Fix a state $s\in \mathcal{S}$.  For every command $c\in \mathcal{C}$, draw $M$ independent particles according to
    \eqref{eq:implemented-gaussian-cloud}.  Define
    \begin{equation}
        \label{eq:gaussian-minimum-small-ball-mass}
        p_{\min}(\reach,\eps_z,\sigma_{p })
        :=
        \inf_{\lVert u\rVert_2\leq \reach}
        \mathbb P
        \left(
            \left\lVert
                \sigma_{p}\xi-u
            \right\rVert_2
            \leq
            \frac{\eps_z}{2}
        \right),
        \qquad
        \xi\sim\mathcal N(0,I_{d_z}).
    \end{equation}
    Then $p_{\min}(\reach,\eps_z,\sigma_{ p})>0$.  Let
    $N_{\reach,\eps_z}
        :=
        \left(
            1+4\reach/\eps_z
        \right)^{d_z}$.
    If
    \begin{equation}
        \label{eq:gaussian-particle-count-coverage}
        M
        \ge
        \frac{1}{p_{\min}(\reach,\eps_z,\sigma_{p })}
        \log
        \left(
            CN_{\reach,\eps_z}/\alpha
        \right),
    \end{equation}
    then, with probability at least $1-\alpha$, simultaneously for every command $c$, the sampled
    particles around $z_c(s)$ form a Euclidean $\eps_z$-net of
    $B_2(z_c(s),\reach)$.
    Moreover, the lower bound holds:
    \begin{equation}
        \label{eq:explicit-gaussian-small-ball-lower}
        p_{\min}(\reach,\eps_z,\sigma_{p })
        \geq
        \operatorname{vol}
        \left(
            B_2\left(0,\frac{\eps_z}{2}\right)
        \right)
        (2\pi\sigma_{p }^2)^{-d_z/2}
        \exp
        \left(
            -
            \frac{(\reach+\eps_z/2)^2}{2\sigma_{p }^2}
        \right).
    \end{equation}
\end{proposition}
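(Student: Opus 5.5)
The plan is a standard net-plus-union-bound argument. Fix a command $c$ and build a deterministic Euclidean $(\eps_z/2)$-net $\{u_1,\ldots,u_{N'}\}$ of the centered ball $B_2(0,\reach)$. A volumetric comparison of the type used in Proposition~\ref{prop:behavioral-packing-growth} shows that $N'\le (1+4\reach/\eps_z)^{d_z}=N_{\reach,\eps_z}$. To see this, take a maximal $(\eps_z/2)$-separated subset of $B_2(0,\reach)$. Maximality makes it an internal $(\eps_z/2)$-net. The disjoint balls of radius $\eps_z/4$ around its points sit inside $B_2(0,\reach+\eps_z/4)$, which gives the count. Translating by $z_c(s)$ gives a net $\{z_c(s)+u_j\}$ of $B_2(z_c(s),\reach)$.

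Next comes the event argument. Say a net point $u_j$ is \emph{hit} if some particle satisfies $\lVert \sigma_p\xi_{c,k}-u_j\rVert_2\le \eps_z/2$. If every $u_j$ is hit, the particles form an $\eps_z$-net of $B_2(z_c(s),\reach)$ by the triangle inequality. Any $z$ in the ball lies within $\eps_z/2$ of some $z_c(s)+u_j$, and that point lies within $\eps_z/2$ of a particle. The particles are i.i.d. and $\lVert u_j\rVert_2\le\reach$, so the probability that $u_j$ is never hit is at most $(1-p_{\min})^M\le e^{-Mp_{\min}}$. A union bound over the at most $CN_{\reach,\eps_z}$ pairs $(c,j)$ bounds the total failure probability by $CN_{\reach,\eps_z}e^{-Mp_{\min}}$. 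This is at most $\alpha$ under \eqref{eq:gaussian-particle-count-coverage}. The center particle $z_{c,0}$ is not needed.

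Finally, the density bound \eqref{eq:explicit-gaussian-small-ball-lower} also gives positivity of $p_{\min}$. Write the probability as an integral of the $\mathcal N(0,\sigma_p^2 I)$ density over $B_2(u,\eps_z/2)$. Every point $x$ of that ball has $\lVert x\rVert_2\le\reach+\eps_z/2$. The density there is therefore at least $(2\pi\sigma_p^2)^{-d_z/2}\exp(-(\reach+\eps_z/2)^2/(2\sigma_p^2))$. Multiplying by the ball's volume gives the bound uniformly in $\lVert u\rVert_2\le\reach$, so the infimum is strictly positive.

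The only delicate step is the net-size count, specifically getting the exact constant $1+4\reach/\eps_z$ at resolution $\eps_z/2$. Everything else is routine. One bookkeeping point: if the construction is carried out for each $c$ separately, the union bound must still range over all $C$ commands at once to obtain the simultaneous statement.
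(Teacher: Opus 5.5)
Your proposal is correct and follows the paper's proof essentially step for step. The shared steps are: an $\eps_z/2$-net of $B_2(0,\reach)$ of size at most $N_{\reach,\eps_z}$; the per-pair bound $(1-p_{\min})^M\le e^{-Mp_{\min}}$ with a union bound over all $C N_{\reach,\eps_z}$ command--net-point pairs; the triangle inequality to upgrade to an $\eps_z$-net; and the pointwise density lower bound on $B_2(u,\eps_z/2)$, which gives both positivity and the explicit estimate. Your explicit maximal-separated-set construction is slightly more careful than the paper's appeal to a ``standard volumetric argument,'' because it makes the net internal, so that $\lVert u_j\rVert_2\le\reach$ and the definition of $p_{\min}$ applies directly.
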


\begin{proof}
    Let
   $\{u_1,\ldots,u_N\}$
    be a Euclidean $\eps_z/2$-net of $B_2(0,\reach)$.  A standard Euclidean volumetric argument, with resolution $\eps_z/2$, permits the
    choice
    \[
        N
        \le
        \left(
            1+\frac{4\reach}{\eps_z}
        \right)^{d_z}
        =
        N_{\reach,\eps_z}.
    \]

    Fix a command $c$ and net point $u_j$.  Each perturbation
    $\sigma_{p }\xi_{c,k}$ falls in
    $B_2(u_j,\eps_z/2)$ with probability at least
    $p_{\min}(\reach,\eps_z,\sigma_{p })$.  Therefore the probability that none of the $M$
    particles falls in this ball is at most
    \[
        \bigl(1-p_{\min}\bigr)^M
        \leq
        \exp(-Mp_{\min}),
    \]
    where the final inequality uses $1-x\leq e^{-x}$ for $x\in[0,1]$.  A union bound over all
    $C N$ command--net-point pairs gives failure probability at most
    \[
        CN\exp(-Mp_{\min})
        \leq
        CN_{\reach,\eps_z}\exp(-Mp_{\min}).
    \]
    Under \eqref{eq:gaussian-particle-count-coverage}, this quantity is at most $\alpha$.

    On the complementary event, fix a command $c$ and arbitrary
    $u\in B_2(0,\reach)$.  Choose a net point $u_j$ satisfying
   $\lVert u-u_j\rVert_2\leq\frac{\eps_z}{2}$.
    By the event just established, there exists a particle perturbation
    $\sigma_{p }\xi_{c,k}$ satisfying
    $\left\lVert
            \sigma_{p }\xi_{c,k}-u_j
        \right\rVert_2
        \leq
        \frac{\eps_z}{2}$.
    The triangle inequality yields
 $\left\lVert
            \sigma_{p }\xi_{c,k}-u
        \right\rVert_2
        \leq \eps_z$.
    Translating by $z_c(s)$ proves that the candidate particles form an $\eps_z$-net of
    $B_2(z_c(s),\reach)$.

    It remains to prove \eqref{eq:explicit-gaussian-small-ball-lower}.  Fix any
    $u$ with $\lVert u\rVert_2\leq \reach$.  The density of
    $\sigma_{p }\xi$ at $x\in\R^{d_z}$ is
    \[
        f_{\sigma_{p }}(x)
        =
        (2\pi\sigma_{p }^2)^{-d_z/2}
        \exp
        \left(
            -\frac{\lVert x\rVert_2^2}{2\sigma_{p }^2}
        \right).
    \]
    If $x\in B_2(u,\eps_z/2)$, then
    \[
        \lVert x\rVert_2
        \leq
        \lVert u\rVert_2+\lVert x-u\rVert_2
        \leq
        \reach+\frac{\eps_z}{2}.
    \]
    Hence throughout $B_2(u,\eps_z/2)$,
    \[
        f_{\sigma_{p }}(x)
        \geq
        (2\pi\sigma_{p }^2)^{-d_z/2}
        \exp
        \left(
            -\frac{(\reach+\eps_z/2)^2}{2\sigma_{p }^2}
        \right).
    \]
    Integrating this lower bound over $B_2(u,\eps_z/2)$ yields
    \begin{align*}
        \mathbb P
        \left(
            \lVert\sigma_{p }\xi-u\rVert_2
            \le
            \frac{\eps_z}{2}
        \right)
        &\ge
        \operatorname{vol}
        \left(
            B_2\left(0,\frac{\eps_z}{2}\right)
        \right)
\cdot
        (2\pi\sigma_{p }^2)^{-d_z/2}
        \exp
        \left(
            -\frac{(\reach+\eps_z/2)^2}{2\sigma_{p }^2}
        \right).
    \end{align*}
    The right-hand side is strictly positive and independent of $u$.  Taking the infimum over
    $\lVert u\rVert_2\le \reach$ proves both positivity and
    \eqref{eq:explicit-gaussian-small-ball-lower}.
\end{proof}

\begin{corollary}[From Euclidean particle coverage to behavioral coverage]
    \label{cor:gaussian-behavioral-coverage}
    Suppose Assumption~\ref{ass:behavioral-lipschitzness} holds, so that
    \[
        d_s^{\mathrm B}(z,\tilde z)
        \le
        L_{\mathrm B}\lVert z-\tilde z\rVert_2
    \]
    for every state and pair of latents.  Fix a behavioral resolution
    $\eps_{\mathrm B}>0$ and choose a Euclidean resolution $\eps_z>0$ satisfying
$L_{\mathrm B}\eps_z
        \le
        \eps_{\mathrm B}$.
    If the particle count satisfies
    \eqref{eq:gaussian-particle-count-coverage} with this $\eps_z$, then, with probability at least
    $1-\alpha$, the sampled particles form a behavioral $\eps_{\mathrm B}$-net of $\gZ_\reach(s)$.
\end{corollary}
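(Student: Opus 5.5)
The plan is to compose Proposition~\ref{prop:finite-gaussian-envelope-coverage} with Assumption~\ref{ass:behavioral-lipschitzness}. I would first invoke the proposition at the chosen Euclidean resolution $\eps_z$, with the stated particle count \eqref{eq:gaussian-particle-count-coverage}. This gives an event $\gE$ with $\mathbb P(\gE)\ge 1-\alpha$. On $\gE$, simultaneously for every command $c\in\gC$, the particles around $z_c(s)$ form a Euclidean $\eps_z$-net of $B_2(z_c(s),\reach)$. The rest of the argument is deterministic on $\gE$.

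Next I would fix an arbitrary $z\in\gZ_\reach(s)$. By Definition~\ref{def:radius-reach-frs-action-set}, there is some $c$ with $z\in B_2(z_c(s),\reach)$. On $\gE$, choose a particle $z_{c,k}(s)$ with $\lVert z-z_{c,k}(s)\rVert_2\le\eps_z$. Assumption~\ref{ass:behavioral-lipschitzness} then gives $d_s^{\mathrm B}(z,z_{c,k}(s))\le L_{\mathrm B}\eps_z\le\eps_{\mathrm B}$. Hence the union over $c$ of the sampled particles is a behavioral $\eps_{\mathrm B}$-net of $\gZ_\reach(s)$ in the sense used for $\mathcal N_{\mathrm{int}}$.

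The only subtlety is that a net point may fall outside $\gZ_\reach(s)$, because Gaussian particles have unbounded support, so the cover need not be internal. I expect this to be the main point needing care, though it is not a real obstacle. The statement only asks for a net \emph{of} $\gZ_\reach(s)$, meaning a covering set, so I would state that the covering is by the particle set and note that internality is not claimed. If an internal net were desired, one could instead keep only the particles that land within $\eps_z/2$ of the $\eps_z/2$-net points of Proposition~\ref{prop:finite-gaussian-envelope-coverage}. These particles lie in $B_2(z_c(s),\reach+\eps_z/2)$, which yields an internal net of a slightly enlarged envelope. I would mention this only as a remark.
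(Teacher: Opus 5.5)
Your proof is correct and is essentially the paper's argument: on the coverage event of Proposition~\ref{prop:finite-gaussian-envelope-coverage}, every $z\in\gZ_\reach(s)$ has a sampled particle within Euclidean distance $\eps_z$, and Assumption~\ref{ass:behavioral-lipschitzness} turns this into $d_s^{\mathrm B}\le L_{\mathrm B}\eps_z\le\eps_{\mathrm B}$. Your observation that the cover need not be internal is apt, since the paper uses this net only in the covering sense (e.g.\ in Corollary~\ref{cor:gaussian-representation-error}); but for that reason, drop the phrase ``in the sense used for $\mathcal N_{\mathrm{int}}$,'' because that notion requires the net to be a subset of the covered set.
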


\begin{proof}
    On the event in Proposition~\ref{prop:finite-gaussian-envelope-coverage}, for every
    $z\in\gZ_\reach(s)$ there exists a sampled particle $\tilde z$ satisfying
    \[
        \lVert z-\tilde z\rVert_2
        \le
        \eps_z.
    \]
    Assumption~\ref{ass:behavioral-lipschitzness} therefore gives
    \[
        d_s^{\mathrm B}(z,\tilde z)
        \le
        L_{\mathrm B}\eps_z
        \le
        \eps_{\mathrm B}.
    \]
    This is exactly the definition of a behavioral $\eps_{\mathrm B}$-net.
\end{proof}

\textbf{From behavioral coverage to representation error.}
The preceding corollary shows that the finite Gaussian cloud approximates the
idealized reach-$\reach$ class geometrically.  To connect this coverage statement
to the control bounds above, we next translate behavioral approximation into
value approximation.  Because the Gaussian sampler has unbounded support, a
realized particle may lie outside $\gZ_\reach(s)$ and may even have value larger
than the best action in the reach-$\reach$ class.  We therefore measure only the
one-sided loss incurred when the finite bank fails to represent the best action
available in $\gZ_\reach(s)$.

For a realized particle bank $\mathcal B_P(s)$, define its one-step
representation error relative to the reach-$\reach$ class by
\begin{equation}
    \label{eq:gaussian-representation-error}
    \eps_{\mathrm{rep}}
    \bigl(s,\mathcal B_P(s);\reach\bigr)
    :=
    \left[
        \sup_{z\in\gZ_\reach(s)} Q^\star(s,z)
        -
        \max_{\tilde z\in\mathcal B_P(s)}
        Q^\star(s,\tilde z)
    \right]_+ ,
\end{equation}
where $[x]_+:=\max\{x,0\}$.

\begin{corollary}[Gaussian particles imply bounded representation error]
    \label{cor:gaussian-representation-error}
    Suppose the conditions of
    Corollary~\ref{cor:gaussian-behavioral-coverage} hold, so that with
    probability at least $1-\alpha$ the sampled particle bank is a behavioral
    $\eps_{\mathrm B}$-net of $\gZ_\reach(s)$.  
    Then, with probability at least $1-\alpha$, the bound holds:
        $\eps_{\mathrm{rep}}
        \bigl(s,\mathcal B_P(s);\reach\bigr)
        \leq
        \eps_{\mathrm B}$, which is 
equivalent to 
    \begin{equation}
        \label{eq:gaussian-value-coverage}
        \max_{\tilde z\in\mathcal B_P(s)}
        Q^\star(s,\tilde z)
        \geq
        \sup_{z\in\gZ_\reach(s)}
        Q^\star(s,z)-\eps_{\mathrm B}.
    \end{equation}
\end{corollary}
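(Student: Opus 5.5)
The plan is to work on the high-probability event supplied by Corollary~\ref{cor:gaussian-behavioral-coverage}, on which the realized particle bank $\mathcal B_P(s)$ is a behavioral $\eps_{\mathrm B}$-net of $\gZ_\reach(s)$, and to prove the value inequality \eqref{eq:gaussian-value-coverage} deterministically on that event. The two displayed forms are equivalent because the positive part in \eqref{eq:gaussian-representation-error} is at most $\eps_{\mathrm B}$ exactly when the bracketed difference is at most $\eps_{\mathrm B}$ (using $\eps_{\mathrm B}>0$). Hence it suffices to prove \eqref{eq:gaussian-value-coverage}.

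To prove it, fix any $z\in\gZ_\reach(s)$. By the net property there is a particle $\tilde z\in\mathcal B_P(s)$ with $d_s^{\mathrm B}(z,\tilde z)\le\eps_{\mathrm B}$. Inequality \eqref{eq:behavioral-pseudometric-controls-qstar} of Lemma~\ref{lem:behavioral-pseudometric-properties} applies because $V^\star\in\gV_{\max}$, and it gives
\[
Q^\star(s,\tilde z)\ge Q^\star(s,z)-\eps_{\mathrm B}.
\]
Thus $\max_{\tilde z\in\mathcal B_P(s)}Q^\star(s,\tilde z)\ge Q^\star(s,z)-\eps_{\mathrm B}$; the bank is finite, so this maximum is attained. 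Since $z$ was arbitrary, taking the supremum over $z\in\gZ_\reach(s)$ yields \eqref{eq:gaussian-value-coverage}. Alternatively, one can use the attained maximizer $z_\reach^\star(s)$ from compactness and continuity, exactly as in the proof of Lemma~\ref{lem:finite-net-support-gap}.

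The only point needing care is what the bank contains. Corollary~\ref{cor:gaussian-behavioral-coverage} is stated for the sampled particles, and $\mathcal B_P(s)$ contains them together with the anchors. Adding points to a net keeps it a net, so the argument is unaffected. Particles lying outside $\gZ_\reach(s)$ only increase the maximum, which is why the one-sided positive-part definition is used. There is no real obstacle: the probability $1-\alpha$ is inherited verbatim from the coverage event of Corollary~\ref{cor:gaussian-behavioral-coverage}.
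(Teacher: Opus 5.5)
Your proposal is correct and follows essentially the same route as the paper: you work on the coverage event, use the net property together with the bound $|Q^\star(s,z)-Q^\star(s,\tilde z)|\le d_s^{\mathrm B}(z,\tilde z)$ from Lemma~\ref{lem:behavioral-pseudometric-properties}, and pass to the supremum. The only cosmetic difference is that the paper picks an $\eta$-near-maximizer $z_\eta$ and lets $\eta\downarrow 0$, whereas you bound $Q^\star(s,z)$ for every $z\in\gZ_\reach(s)$ and take the supremum directly, which is equally valid.
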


\begin{proof}
    Let $\mathcal E$ denote the event that
    $\mathcal B_P(s)$ is a behavioral $\eps_{\mathrm B}$-net of
    $\gZ_\reach(s)$.  By
    Corollary~\ref{cor:gaussian-behavioral-coverage},
    we have that $\mathbb P(\mathcal E)\ge 1-\alpha$.
    We prove the claimed value bound on $\mathcal E$.

    Fix an arbitrary $\eta>0$.  By the definition of the supremum, there
    exists $z_\eta\in\gZ_\reach(s)$ such that
    \begin{equation}
        \label{eq:eta-optimal-reach-action}
        Q^\star(s,z_\eta)
        \ge
        \sup_{z\in\gZ_\reach(s)}Q^\star(s,z)-\eta.
    \end{equation}
    Since $\mathcal E$ holds and the particle bank is a behavioral
    $\eps_{\mathrm B}$-net of $\gZ_\reach(s)$, there exists
    $\tilde z_\eta\in\mathcal B_P(s)$ satisfying
    $d_s^{\mathrm B}(z_\eta,\tilde z_\eta)
        \leq
        \eps_{\mathrm B}$. By Lemma~\ref{lem:behavioral-pseudometric-properties}, we have that 
\[
    Q^\star(s,\widetilde z_\eta)
    \geq
    Q^\star(s,z_\eta)
    -
    d_s^{\mathrm B}(z_\eta,\widetilde z_\eta)
\geq
    Q^\star(s,z_\eta)
    -
    \eps_{\mathrm B}.\]
    Combining this inequality with
    \eqref{eq:eta-optimal-reach-action} yields
    \[
        Q^\star(s,\tilde z_\eta)
        \geq
        \sup_{z\in\gZ_\reach(s)}Q^\star(s,z)-\eta-\eps_{\mathrm B}.
    \]
    Because $\tilde z_\eta$ is a member of the realized particle bank,
    \[
        \max_{\tilde z\in\mathcal B_P(s)}
        Q^\star(s,\tilde z)
        \geq
        \sup_{z\in\gZ_\reach(s)}Q^\star(s,z) -
        \eta-\eps_{\mathrm B}.
    \]
    Since $\eta>0$ was arbitrary, letting $\eta\downarrow0$ gives
    \[
        \max_{\tilde z\in\mathcal B_P(s)}
        Q^\star(s,\tilde z)
        \geq
        \sup_{z\in\gZ_\reach(s)}Q^\star(s,z)-\eps_{\mathrm B}.
    \]
    Rearranging and taking the positive part proves
    $\eps_{\mathrm{rep}}
        \bigl(s,\mathcal B_P(s);\reach\bigr)
        \leq\eps_{\mathrm B}$.
    The result therefore holds on an event of probability at least
    $1-\alpha$.
\end{proof}

\begin{remark}[What the Gaussian coverage result does and does not imply]
\label{rem:gaussian-coverage-interpretation}
Proposition~\ref{prop:finite-gaussian-envelope-coverage},
Corollary~\ref{cor:gaussian-behavioral-coverage}, and
Corollary~\ref{cor:gaussian-representation-error} provide the bridge from the
implemented Gaussian particle mechanism to the deterministic reach-$\reach$
analysis above.
In particular, under the stated regularity conditions, the representation-error
term  may be taken to satisfy
$\eps_{\mathrm{rep}}\leq\eps_{\mathrm B}$ with probability at least
$1-\alpha$. These coverage and representation-error guarantees are stated for a
single fixed state $s$.  A simultaneous guarantee over all  states
requires an additional union bound, for example by replacing $\alpha$
with $\alpha/S$ in the preceding particle-count condition.
    These results do not identify $\reach$ with $\sigma_{p }$ or $\eps_z$ with $\eps_{\mathrm B}$.  The former
    pair controls envelope size versus random sampling scale; the latter pair is connected only
    through a behavioral regularity condition such as
    $d_s^{\mathrm B}(z,\tilde z)\leq L_{\mathrm B}\lVert z-\tilde z\rVert_2$.

    The result also shows why finite-particle coverage
    can be difficult in a high-dimensional isotropic latent space: the lower bound on
    $p_{\min}$, and hence the required particle count, can deteriorate rapidly with dimension.
    The parameter $d_{\mathrm{eff}}$ in
Proposition~\ref{prop:behavioral-packing-growth} measures local behavioral
richness through a packing lower bound. It does not by itself establish
a low-dimensional upper bound on behavioral complexity or improve the
Gaussian coverage guarantee.
\end{remark}

\subsection{Interpretation, novelty, and connection to the preceding results}
\label{subsec:particle-frontier-interpretation}

Theorem~\ref{thm:particle-bias-complexity-frontier} describes a
reach-parameterized bias--complexity tradeoff connecting the two regimes
identified by Theorems~\ref{thm:persistent-abstraction-separation} and
\ref{thm:general-mdp-abstraction-price}.

At zero reach, $\reach=0$, the local action set is exactly the online flow-reversed anchor bank, so the structural bias is
$\bar\Delta_{\mF}$ and the action class has at most $C$ candidates before any further
discretization.  Increasing $\reach$ admits local residual corrections.  The oracle support gap
$\bar\Delta_\reach$ can only decrease, and under smooth local geometry it is bounded above
by a quadratic function of the residual distance from the nearest anchor to an optimal latent.  At the same time, the maximal number of pairwise behaviorally distinguishable actions at
resolution $\eps_{\mathrm B}$ can only increase.  Under nondegenerate local geometry, every behavioral $\eps_{\mathrm B}$-net
must contain at least order $(\reach/\eps_{\mathrm B})^{d_{\mathrm{eff}}}$ candidates.

The resulting finite-sample sub-optimality guarantee~\eqref{eq:frontier-control-bound} separates three error contributions, each scaled by $1/(1-\gamma)$:
\[
\bar\Delta_\reach+
\eps_{\mathrm B}+2\eps_Q,
\]
namely, the remaining local support bias, behavior discretization error, and finite-sample value-selection error, respectively. 
The same reach that reduces the first term enlarges the behavioral action class appearing in the
sample count.  This is the precise sense in which \QFRS{} traces a bias--complexity
frontier rather than simply providing more exploration.

The connection to Theorem~\ref{thm:persistent-abstraction-separation} is statistical.  In a
one-decision specialization, enlarging the local reach adds latent modes that must be ruled out if
the learner remains correct over the larger class.  Corollary~\ref{cor:nested-reach-adaptation-cost}
shows that the additional verification cost is proportional to the number of newly admitted modes,
even on a zero-bias instance for the smaller reach.

The connection to Theorem~\ref{thm:general-mdp-abstraction-price} is structural.  That theorem is
the zero-reach endpoint $\reach=0$.  Proposition~\ref{prop:radius-reach-abstraction-price} applies the same
one-step $V^\star-Q^\star$ support-gap logic to every local reach.  Particle expansion therefore
changes the support term itself; it does not merely improve selection among the original anchors.

The contraction, packing, H\"oeffding, and approximate-value-iteration arguments are
standard.  The Q-FRS-specific contribution is their organization around the state-dependent local
FRS action family.  In particular, the theorem
\begin{enumerate}[label={\it\roman*.},itemsep=0pt, topsep=0pt]
    \item identifies a monotone path from exact-anchor
Q-FRS toward broader latent control; 
\item measures complexity in terms of behaviorally
distinguishable candidates rather than raw particle count; 
\item places support bias and finite-data
selection error in the same full-MDP performance bound;
\item distinguishes deterministic local
reach from the stochastic Gaussian scale used to instantiate a finite particle cloud.
\end{enumerate}

The finite-sample guarantee above applies to synchronous empirical
Q-iteration on fixed behavioral nets under a generative model.
The Gaussian coverage results connect finite particle clouds to the
representation-error term, but do not by themselves provide a
sample-complexity guarantee for the implemented hierarchical policy.
Theorem~\ref{thm:hierarchical-anchorq-control} separately bounds the
suboptimality of that policy, accounting for coarse-anchor selection,
particle weighting, critic error, and barycentric refinement.

The reach parameter and the value-tilting parameter have different roles.  Reach determines which
local behaviors are made available; value tilting reallocates probability within the behaviors that
are available.  The next section studies the latter operation and its connection to entropic risk.

\section{Entropic weighting for local particle refinement}
\label{sec:exponential-tilting}
Theorem~\ref{thm:particle-bias-complexity-frontier} separates the coarse
behavioral representation supplied by the flow-reversed anchors from the local degrees of
freedom supplied by particles.  In the implemented method, a coarse anchor is
selected first.  Exponential tilting is then applied only to the local particle
cloud around that anchor, and the resulting weights are used to construct a
weighted latent refinement.  This section studies this local weighting operation
and its effect on the executed barycentric refinement.

Throughout this section, $\betatilt\geq 0$ has one meaning only: it is the exponential-tilting
intensity, equivalently an inverse temperature.  It is distinct from the confidence parameter
$\alpha$, the MDP discount $\gamma$, the deterministic local reach $\reach$, the Gaussian particle
scale $\sigma_{p}$, and the flow-integration step size $h$.  We analyze only the
positive, reward-seeking tilt $\betatilt>0$; no negative or lower-tail tilt is considered.

\textbf{Notation introduced in this section.}
To avoid collisions with the preceding results, the probability law before value reweighting is
$\nu_s$, its exponential tilt is $\nu_{s,\betatilt}^{q}$,
$\mathrm{CE}_{\betatilt}(q;s)$ is the entropic candidate evaluation, and
$\eps_{\mathrm{crit}}$ is a uniform score-estimation error.  A realized finite bank is indexed by
a finite set $\mathcal J_s$; its cardinality is written explicitly as $|\mathcal J_s|$ rather than
introducing another symbol.  None of this notation is used for the deterministic reach $R$,
Gaussian particle scale $\sigma_{p}$, behavioral resolution $\eps_{\mathrm B}$,
Euclidean latent resolution $\eps_z$, flow step size $h$, discount factor $\gamma$, confidence
level $\alpha$, or sample count $n$ defined earlier.

Throughout, we will use the standard notation for moment generating functions. For instance, $\Mom_X(a)=\E[e^{a X}]$,
where $\Mom_X$
 is the moment-generating function of the random variable $X$.
For a conditional moment-generating function, we use the notation $\Mom_{X|Y}(a|y):=\E[e^{a X}|Y=y]$ for random variables $X$ and $Y$.

\subsection{The local particle law and its entropic tilt}
\label{subsec:tilt-candidate-law}

Fix a state $s\in\gS$ and a coarse anchor $c\in\gC$ selected by the
anchor-level \QFRS{} rule.  We study only the subsequent local refinement around
$z_c(s)$.  As in the particle-refinement analysis, let
$\gZ=\R^{d_z}$ equipped with its Borel $\sigma$-algebra
$\mathcal B(\gZ)$.

Let $\nu_{s,c}$
be a probability measure on $(\gZ,\mathcal B(\gZ))$ describing the local
particle law around the selected anchor before value-based reweighting.  For
example, the population particle construction
\[
    Z_{s,c}
    =
    z_c(s)+\sigma_{p}\xierr,
    \qquad
    \xierr\sim\mathcal N(0,I_{d_z}),
\]
induces a Gaussian local law centered at $z_c(s)$.  For the implemented finite
particle cloud, after conditioning on a realized bank
\[
    b
    =
    \bigl(z_{c,0}(s),\ldots,z_{c,M}(s)\bigr),
\]
the corresponding reference law is the uniform law over its indexed
particles,
\[
    \nu_{s,c,b}
    :=
    \frac{1}{M+1}
    \sum_{k=0}^{M}
    \delta_{z_{c,k}(s)}.
\]
The abstract results below are stated for $\nu_{s,c}$; the finite realized-bank
specialization is developed in
Section~\ref{subsec:finite-cloud-tilting}.

\begin{assumption}[Bounded measurable local score]
    \label{ass:bounded-tilting-score}
    The score
    $q:\gS\times\gZ\to\R$
    is measurable.  There exist finite constants
    $q_{\min}<q_{\max}$ such that, for every state and selected anchor under
    consideration,
    \begin{equation}
        \label{eq:bounded-tilting-score}
        q_{\min}
        \leq
        q(s,z)
        \leq
        q_{\max}
        \qquad
        \text{for $\nu_{s,c}$-almost every }z.
    \end{equation}
\end{assumption}

Boundedness is natural when $q$ is an action value in the bounded-reward
discounted MDP of
Theorems~\ref{thm:general-mdp-abstraction-price}
and~\ref{thm:particle-bias-complexity-frontier}.  It guarantees that the
exponential moments and derivatives used below are finite.  The same results
extend under weaker exponential-moment assumptions, but boundedness avoids
unnecessary domain qualifications.

\begin{definition}[Entropic certainty equivalent\protect\footnote{The notation $\mathrm{CE}$ denotes a certainty equivalent.  If
$Y_{s,c}^{q}:=q(s,Z_{s,c})$ with $Z_{s,c}\sim\nu_{s,c}$, then
$\mathrm{CE}_{\beta}(q;s,c)$ is the deterministic scalar satisfying
$\exp\bigl(
        \beta\,\mathrm{CE}_{\beta}(q;s,c)
    \bigr)
    =
    \mathbb{E}\left[
        \exp\bigl(\beta Y_{s,c}^{q}\bigr)
    \right]$.
Solving for this scalar gives exactly
$\mathrm{CE}_{\beta}(q;s,c)
=
\beta^{-1}
\log
\mathbb{E}[
\exp(\beta Y_{s,c}^{q})
]$.} of local-particle value]
    \label{def:entropic-candidate-evaluation}
    For $\betatilt>0$, define
    \begin{equation}
        \label{eq:entropic-candidate-evaluation}
        \mathrm{CE}_{\betatilt}(q;s,c)
        :=
        \frac{1}{\betatilt}
        \log
        \int_{\gZ}
        \exp \bigl(\betatilt q(s,z)\bigr)
        \,\nu_{s,c}(\mathrm dz).
    \end{equation}
    At zero tilt, define
        $\mathrm{CE}_{0}(q;s,c)
        :=
        \int_{\gZ}
        q(s,z)\,\nu_{s,c}(\mathrm dz)$.
\end{definition}

Corollary~\ref{cor:entropic-mean-variance-expansion} below shows that the
zero-tilt definition is the continuous extension as
$\betatilt\downarrow0$.  The quantity
$\mathrm{CE}_{\betatilt}(q;s,c)$ is the reward-oriented entropic certainty
equivalent of the values of local refinements around the selected anchor.
Positive $\betatilt$ places increasing emphasis on the upper tail of those
local scores.

\begin{definition}[Exponentially tilted local weighting law]
    \label{def:exponentially-tilted-law}
    For $\betatilt>0$, define the probability measure
    $\nu_{s,c,\betatilt}^{q}$ by
    \begin{equation}
        \label{eq:tilted-radon-nikodym}
        \frac{
            \mathrm d\nu_{s,c,\betatilt}^{q}
        }{
            \mathrm d\nu_{s,c}
        }(z)
        :=
        \exp
        \left(
            \betatilt
            \left(
                q(s,z)
                -
                \mathrm{CE}_{\betatilt}(q;s,c)
            \right)
        \right).
    \end{equation}
    At zero tilt, set
    $\nu_{s,c,0}^{q}
        :=
        \nu_{s,c}$.
\end{definition}

The normalization in~\eqref{eq:tilted-radon-nikodym} is exact:
\[
\begin{aligned}
    &
    \int_{\gZ}
    \exp
    \left(
        \betatilt
        \left[
            q(s,z)
            -
            \mathrm{CE}_{\betatilt}(q;s,c)
        \right]
    \right)
    \nu_{s,c}(\mathrm dz)\\
    &\qquad=
    \exp \left(
        -\betatilt
        \mathrm{CE}_{\betatilt}(q;s,c)
    \right)
    \int_{\gZ}
    e^{\betatilt q(s,z)}
    \nu_{s,c}(\mathrm dz)
    =
    1.
\end{aligned}
\]

Let
$Z_{s,c}\sim\nu_{s,c}$
and define the scalar local-score random variable
$Y_{s,c}^{q}
    :=
    q(s,Z_{s,c})$.
Then
\begin{equation}
    \label{eq:entropic-evaluation-random-variable}
    \mathrm{CE}_{\betatilt}(q;s,c)
    =
    \frac{1}{\betatilt}
    \log
    \E
    \left[
        e^{\betatilt Y_{s,c}^{q}}
    \right].
\end{equation}
Thus the randomness in the entropic evaluation is the randomness of the local
particle around the already-selected anchor.  Unless $q$ is itself defined
through a conditional entropic transform of realized return,
$Y_{s,c}^{q}$ is a random local candidate score, not the environment return.

\paragraph{Risk-sensitive interpretation.}
Equation~\eqref{eq:entropic-evaluation-random-variable} is the
reward-oriented entropic certainty equivalent of the local particle score
$Y_{s,c}^{q}$.  Under the reward convention used here,
$\betatilt>0$ is upper-tail seeking: increasing $\betatilt$ places greater
emphasis on high-value local refinements.

Importantly, $\nu_{s,c,\betatilt}^{q}$ is an auxiliary
\emph{weighting law}, not the distribution of the action executed by
\QFRS{}.  In the implemented finite-cloud method, its probability masses are
the exponential weights
\[
    w_{c,\betatilt}(k\mid s)
    \propto
    \exp \left(
        \betatilt Q_\psi(s,z_{c,k}(s))
    \right),
\]
and these weights are used to construct the executed barycentric refinement
\[
    \tilde z_{c,\betatilt}(s)
    =
    \sum_{k=0}^{M}
    w_{c,\betatilt}(k\mid s)
    z_{c,k}(s).
\]
Thus the entropic transform governs how strongly each local particle
influences the executed refinement; it does not define a policy that samples
one of those particles for execution.

Theorem~\ref{thm:exponential-tilting} makes the variational interpretation
exact.  The entropic certainty equivalent and the exponentially tilted
weighting law are two views of the same optimization:
\begin{equation}
    \label{eq:entropic-dual-preview}
    \mathrm{CE}_{\betatilt}(q;s,c)
    =
    \sup_{\probmeas\ll\nu_{s,c}}
    \left\{
        \int_{\gZ}
        q(s,z)\,\probmeas(\mathrm dz)
        -
        \frac{1}{\betatilt}
        \KL(\probmeas\|\nu_{s,c})
    \right\},
\end{equation}
whose unique optimizer is
$\nu_{s,c,\betatilt}^{q}$.
Hence the exponential weights are the optimal change of measure in the
variational representation of the entropic risk functional: they favor
high-value local deviations while the KL penalty limits departure from the
untilted local particle law.

\subsection{The exact exponential-tilting theorem}
\label{subsec:exact-exponential-tilting-theorem}

For probability measures $\probmeas$ and $\nu$ on $\gZ$, define the extended
relative entropy
\[
    \KL(\probmeas\|\nu)
    :=
    \begin{cases}
        \displaystyle
        \int_{\gZ}
        f(z)\log f(z)\,\nu(\mathrm dz),
        &
        \text{if $\probmeas\ll\nu$ with
        $f=\mathrm d\probmeas/\mathrm d\nu$},
        \\
        +\infty,
        &
        \text{otherwise}.
    \end{cases}
\]
Here $0\log0:=0$.  For $\betatilt>0$, define the local KL-regularized
score functional
\begin{equation}
    \label{eq:kl-regularized-score-functional}
     \ScoreFn_{\betatilt,s,c}(\probmeas;q)
    :=
    \int_{\gZ}q(s,z)\,\probmeas(\mathrm dz)
    -
    \frac{1}{\betatilt}
    \KL(\probmeas\|\nu_{s,c}),
\end{equation}
with the convention
$ \ScoreFn_{\betatilt,s,c}(\probmeas;q)=-\infty$
whenever $\KL(\probmeas\|\nu_{s,c})=+\infty$.

For a Borel set $A\subseteq\gZ$ and probability measure $\probmeas$
satisfying $0<\probmeas(A)<1$, define
\begin{equation}
    \label{eq:odds-definition}
    \operatorname{Odds}_{\probmeas}(A)
    :=
    \frac{\probmeas(A)}{1-\probmeas(A)}.
\end{equation}
For $0<\nu_{s,c}(A)<1$, define the essential score separation
\begin{equation}
    \label{eq:essential-score-separation}
    \Xi_{\nu_{s,c}}(A;q)
    :=
    \operatorname{ess\,inf}_{z\in A}^{\nu_{s,c}}q(s,z)
    -
    \operatorname{ess\,sup}_{z\in A^c}^{\nu_{s,c}}q(s,z).
\end{equation}

\begin{theorem}[
    Entropic local weighting: duality, support, monotonicity, and robust
    high-value amplification
]
\label{thm:exponential-tilting}
Suppose Assumption~\ref{ass:bounded-tilting-score} holds.
Fix a state $s\in\gS$ and an anchor $c\in\gC$.

\begin{enumerate}[
    label=\textnormal{\it\alph*.},
    itemsep=0pt,
    topsep=-2pt,
    leftmargin=20pt
]

\item \textbf{Exact KL-regularized optimizer.}
For every $\betatilt>0$ and every probability measure $\probmeas$ on
$\gZ$ satisfying $\KL(\probmeas\|\nu_{s,c})<\infty$, the local KL-regularized score functional satisfies
\begin{equation}
    \label{eq:gibbs-pythagorean-identity}
     \ScoreFn_{\betatilt,s,c}(\probmeas;q)
    =
    \mathrm{CE}_{\betatilt}(q;s,c)
    -
    \frac{1}{\betatilt}
    \KL
    \left(
        \probmeas
        \,\middle\|\,
        \nu_{s,c,\betatilt}^{q}
    \right),
\end{equation}
so that 
\begin{equation}
    \label{eq:gibbs-variational-principle}
    \mathrm{CE}_{\betatilt}(q;s,c)
    =
    \sup_{\probmeas}
    \left\{
        \int_{\gZ}q(s,z)\,\probmeas(\mathrm dz)
        -
        \frac{1}{\betatilt}
        \KL(\probmeas\|\nu_{s,c})
    \right\},
\end{equation}
and the unique optimizer is
$\nu_{s,c,\betatilt}^{q}$.  In particular,
\begin{equation}
    \label{eq:optimizer-value-identity}
    \mathrm{CE}_{\betatilt}(q;s,c)
    =
    \int_{\gZ}
        q(s,z)\,
        \nu_{s,c,\betatilt}^{q}(\mathrm dz)
    -
    \frac{1}{\betatilt}
    \KL
    \left(
        \nu_{s,c,\betatilt}^{q}
        \,\middle\|\,
        \nu_{s,c}
    \right).
\end{equation}

\item \textbf{Support preservation of the weighting law.}
For every $\betatilt>0$,  the measures $\nu_{s,c,\betatilt}^{q}$
and
    $\nu_{s,c}$ are mutually absolutely continuous (i.e., $\nu_{s,c,\betatilt}^{q}
    \sim
    \nu_{s,c}$) 
and hence
\begin{equation}
    \label{eq:support-preservation}
    \operatorname{supp}
(
        \nu_{s,c,\betatilt}^{q}
)
    =
    \operatorname{supp}(\nu_{s,c}).
\end{equation}
Thus exponential tilting neither removes nor adds particles to the local
weighting support.  This statement concerns the weighting law: the barycentric
latent executed by \QFRS{} may lie in the convex hull of this support rather
than in the support itself.

\item \textbf{Monotone score and KL path.}
Define
\begin{align*}
    \mathcal{S}_{q,s,c}(\betatilt)
    &:=
    \int_{\gZ}
        q(s,z)\,
        \nu_{s,c,\betatilt}^{q}(\mathrm dz),
    \quad\text{and}\quad
    \mathsf{KL}_{q,s,c}(\betatilt)
    :=
    \KL
    \left(
        \nu_{s,c,\betatilt}^{q}
        \,\middle\|\,
        \nu_{s,c}
    \right).
\end{align*}
Then the following bounds hold:
\begin{align}
    \label{eq:mean-score-derivative}
    \frac{\mathrm d}{\mathrm d\betatilt}
    \mathcal{S}_{q,s,c}(\betatilt)
    &=
    \Var_{\nu_{s,c,\betatilt}^{q}}
    \left(
        q(s,Z)
    \right)
    \ge0,
    \\
    \label{eq:kl-derivative}
    \frac{\mathrm d}{\mathrm d\betatilt}
    \mathsf{KL}_{q,s,c}(\betatilt)
    &=
    \betatilt
    \Var_{\nu_{s,c,\betatilt}^{q}}
    \left(
        q(s,Z)
    \right)
    \ge0,
    \\
    \label{eq:ce-derivative}
    \frac{\mathrm d}{\mathrm d\betatilt}
    \mathrm{CE}_{\betatilt}(q;s,c)
    &=
    \frac{
        \mathsf{KL}_{q,s,c}(\betatilt)
    }{
        \betatilt^2
    }
    \ge0.
\end{align}
Moreover, we have the mean score improvement bound:
\begin{equation}
    \label{eq:mean-score-improvement}
    \mathcal{S}_{q,s,c}(\betatilt)
    -
    \mathcal{S}_{q,s,c}(0)
    \ge
    \frac{1}{\betatilt}
    \mathsf{KL}_{q,s,c}(\betatilt)
    \ge0.
\end{equation}
If $q(s,\cdot)$ is not constant
$\nu_{s,c}$-almost everywhere, then all three maps are strictly increasing
on $(0,\infty)$.

\item \textbf{Exponential amplification of high-score weight.}
Let $A\in\mathcal B(\gZ)$ satisfy
$0<\nu_{s,c}(A)<1$. Then
\begin{equation}
    \label{eq:exact-odds-ratio-identity}
    \frac{
        \operatorname{Odds}_{\nu_{s,c,\betatilt}^{q}}(A)
    }{
        \operatorname{Odds}_{\nu_{s,c}}(A)
    }
    =
    \frac{
        \E_{\nu_{s,c}(\cdot\mid A)}
        \left[
            e^{\betatilt q(s,Z)}
        \right]
    }{
        \E_{\nu_{s,c}(\cdot\mid A^c)}
        \left[
            e^{\betatilt q(s,Z)}
        \right]
    },
\end{equation}
and therefore
\begin{equation}
    \label{eq:high-score-odds-amplification}
    \operatorname{Odds}_{\nu_{s,c,\betatilt}^{q}}(A)
    \ge
    e^{
        \betatilt
        \Xi_{\nu_{s,c}}(A;q)
    }
    \operatorname{Odds}_{\nu_{s,c}}(A),
\end{equation}
so that 
\begin{equation}
    \label{eq:high-score-probability-amplification}
    \nu_{s,c,\betatilt}^{q}(A)
    \ge
    \frac{
        \nu_{s,c}(A)
        e^{\betatilt\Xi_{\nu_{s,c}}(A;q)}
    }{
        1-\nu_{s,c}(A)
        +
        \nu_{s,c}(A)
        e^{\betatilt\Xi_{\nu_{s,c}}(A;q)}
    }.
\end{equation}
Hence a positive score margin
$\Xi_{\nu_{s,c}}(A;q)>0$ yields strict exponential amplification of the
weight assigned to $A$ relative to its complement.

\item \textbf{Reliability under learned-score error.}
Let $q_{\mathrm{pop}}$ be a bounded population score and let $\hat q$
be a bounded learned score satisfying
\begin{equation}
    \label{eq:uniform-critic-error}
    \left\|
        \hat q(s,\cdot)
        -
        q_{\mathrm{pop}}(s,\cdot)
    \right\|_{L^\infty(\nu_{s,c})}
    \le
    \eps_{\mathrm{crit}}.
\end{equation}
Then
\begin{equation}
    \label{eq:entropic-value-robustness}
    \left|
        \mathrm{CE}_{\betatilt}(\hat q;s,c)
        -
        \mathrm{CE}_{\betatilt}(q_{\mathrm{pop}};s,c)
    \right|
    \le
    \eps_{\mathrm{crit}}.
\end{equation}
Moreover, the bounds hold:
\begin{equation}
    \label{eq:tilted-density-ratio-robustness}
    e^{-2\betatilt\eps_{\mathrm{crit}}}
    \le
    \frac{
        \mathrm d\nu_{s,c,\betatilt}^{\hat q}
    }{
        \mathrm d\nu_{s,c,\betatilt}^{q_{\mathrm{pop}}}
    }(z)
    \le
    e^{2\betatilt\eps_{\mathrm{crit}}}
\end{equation}
for $\nu_{s,c}$-almost every $z$, and consequently
\begin{align}
    \label{eq:forward-tilted-kl-robustness}
    \KL
    \left(
        \nu_{s,c,\betatilt}^{\hat q}
        \,\middle\|\,
        \nu_{s,c,\betatilt}^{q_{\mathrm{pop}}}
    \right)
    &\le
    2\betatilt\eps_{\mathrm{crit}}
    \quad\text{and}\quad
    \KL
    \left(
        \nu_{s,c,\betatilt}^{q_{\mathrm{pop}}}
        \,\middle\|\,
        \nu_{s,c,\betatilt}^{\hat q}
    \right)
    \le
    2\betatilt\eps_{\mathrm{crit}}.
\end{align}
The learned weighting law is also
$2\eps_{\mathrm{crit}}$-optimal for the population KL-regularized
objective:
\begin{equation}
    \label{eq:approximate-regularized-optimality}
     \ScoreFn_{\betatilt,s,c}
    \left(
        \nu_{s,c,\betatilt}^{q_{\mathrm{pop}}};
        q_{\mathrm{pop}}
    \right)
    -
     \ScoreFn_{\betatilt,s,c}
    \left(
        \nu_{s,c,\betatilt}^{\hat q};
        q_{\mathrm{pop}}
    \right)
    \le
    2\eps_{\mathrm{crit}}.
\end{equation}

Finally, let $A$ satisfy $0<\nu_{s,c}(A)<1$ and define its population-score
separation\footnote{The quantity
$\Xi_{\nu_{s,c}}(A;q_{\mathrm{pop}})$
measures the population-score margin separating $A$ from its complement under
the local proposal law $\nu_{s,c}$: it is the lowest population score attained
on $A$ minus the highest population score attained on $A^c$, ignoring sets of
$\nu_{s,c}$-measure zero.  More precisely, the essential infimum is the largest
value $a$ such that
$q_{\mathrm{pop}}(s,z)\ge a$
for $\nu_{s,c}$-almost every $z\in A$, while the essential supremum is the
smallest value $b$ such that
$q_{\mathrm{pop}}(s,z)\le b$
for $\nu_{s,c}$-almost every $z\in A^c$.
Thus exceptional subsets of $A$ or $A^c$ having zero probability under the
local proposal law do not determine the separation margin.  Consequently,
$\Xi_{\nu_{s,c}}(A;q_{\mathrm{pop}})>0$
means that, $\nu_{s,c}$-almost surely, every candidate in $A$ has larger
population score than every candidate outside $A$.  For a finite particle bank with positive mass on every particle,
the essential infimum over $A$ is the minimum population score among
bank particles lying in $A$, and the essential supremum over $A^c$
is the maximum population score among bank particles lying in $A^c$.
Thus $\Xi_{\nu_{s,c}}(A;q_{\mathrm{pop}})$ compares only the represented
particles; latent points outside the bank do not affect this separation.}

\begin{equation}
    \label{eq:population-high-score-separation}
    \Xi_{\nu_{s,c}}(A;q_{\mathrm{pop}})
    :=
    \operatorname{ess\,inf}_{z\in A}^{\nu_{s,c}}
        q_{\mathrm{pop}}(s,z)
    -
    \operatorname{ess\,sup}_{z\in A^c}^{\nu_{s,c}}
        q_{\mathrm{pop}}(s,z).
\end{equation}
Then
\begin{equation}
    \label{eq:robust-high-score-odds}
    \operatorname{Odds}_{\nu_{s,c,\betatilt}^{\hat q}}(A)
    \ge
    \exp
    \left(
        \betatilt
        \left[
            \Xi_{\nu_{s,c}}(A;q_{\mathrm{pop}})
            -
            2\eps_{\mathrm{crit}}
        \right]
    \right)
    \operatorname{Odds}_{\nu_{s,c}}(A).
\end{equation}
Therefore strict high-value weight amplification is guaranteed whenever
\begin{equation}
    \label{eq:separation-dominates-critic-error}
    \Xi_{\nu_{s,c}}(A;q_{\mathrm{pop}})
    >
    2\eps_{\mathrm{crit}}.
\end{equation}

\end{enumerate}
\end{theorem}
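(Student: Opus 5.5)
The whole theorem follows from one object, the Radon--Nikodym density in \eqref{eq:tilted-radon-nikodym}. Write $g_\betatilt(z):=\exp(\betatilt(q(s,z)-\mathrm{CE}_\betatilt(q;s,c)))$. By Assumption~\ref{ass:bounded-tilting-score} this density lies $\nu_{s,c}$-a.e.\ in the interval $[e^{\betatilt(q_{\min}-q_{\max})},e^{\betatilt(q_{\max}-q_{\min})}]$. We prove the parts in order.

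\textbf{Part a.} Take $\probmeas$ with $\KL(\probmeas\|\nu_{s,c})<\infty$ and density $f$. Then $\probmeas\ll\nu^q_{s,c,\betatilt}$ with density $f/g_\betatilt$, because $g_\betatilt$ is bounded below. Expanding gives
\[
\KL(\probmeas\|\nu^q_{s,c,\betatilt})=\KL(\probmeas\|\nu_{s,c})-\betatilt\int q\,\mathrm d\probmeas+\betatilt\,\mathrm{CE}_\betatilt .
\]
Every term is finite since $q$ is bounded. Rearranging yields \eqref{eq:gibbs-pythagorean-identity}. The variational formula \eqref{eq:gibbs-variational-principle} and uniqueness follow because $\KL\ge0$, with equality iff $\probmeas=\nu^q_{s,c,\betatilt}$. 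Measures of infinite KL give $-\infty$ and can be ignored. Setting $\probmeas=\nu^q_{s,c,\betatilt}$ gives \eqref{eq:optimizer-value-identity}.

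\textbf{Part b.} Mutual absolute continuity holds because $g_\betatilt$ is strictly positive and finite. Equivalent measures have equal closed supports, which gives \eqref{eq:support-preservation}.

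\textbf{Part c.} Let $\Lambda(\betatilt):=\log\int e^{\betatilt q}\,\mathrm d\nu_{s,c}$. Since $q$ is bounded, dominated convergence lets us differentiate under the integral. This gives
\[
\Lambda'=\mathcal S_{q,s,c}(\betatilt),\qquad \Lambda''=\Var_{\nu^q_{s,c,\betatilt}}(q),
\]
which is \eqref{eq:mean-score-derivative}. From \eqref{eq:optimizer-value-identity} we get $\mathsf{KL}_{q,s,c}(\betatilt)=\betatilt\Lambda'-\Lambda$. Differentiating gives $\betatilt\Lambda''$, which is \eqref{eq:kl-derivative}. Since $\mathrm{CE}_\betatilt=\Lambda/\betatilt$, its derivative is $(\betatilt\Lambda'-\Lambda)/\betatilt^2=\mathsf{KL}_{q,s,c}(\betatilt)/\betatilt^2$, which is \eqref{eq:ce-derivative}.

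For \eqref{eq:mean-score-improvement}, write
\[
\mathcal S(\betatilt)-\mathcal S(0)=\Lambda'(\betatilt)-\Lambda'(0),\qquad \mathsf{KL}(\betatilt)/\betatilt=\Lambda'(\betatilt)-\Lambda(\betatilt)/\betatilt .
\]
So the inequality reduces to $\Lambda(\betatilt)\ge\betatilt\Lambda'(0)$, using $\Lambda(0)=0$. This is convexity of $\Lambda$, equivalently Jensen's inequality.

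For strictness: if $q$ is not $\nu_{s,c}$-a.e.\ constant, then by part b it is not constant under any tilt, so $\Lambda''>0$ and all three derivatives are strictly positive on $(0,\infty)$. The KL derivative there is $\betatilt\Lambda''>0$ as well. Continuity at $0$ is needed only for the extension $\mathrm{CE}_0$; it comes from the Taylor expansion of $\Lambda$.

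\textbf{Part d.} The normalizing constant $e^{-\betatilt\mathrm{CE}_\betatilt}$ cancels in the ratio $\nu^q_{s,c,\betatilt}(A)/\nu^q_{s,c,\betatilt}(A^c)$. Dividing by $\nu_{s,c}(A)/\nu_{s,c}(A^c)$ gives the conditional-expectation ratio \eqref{eq:exact-odds-ratio-identity}. Bound the numerator below by $e^{\betatilt\,\mathrm{ess\,inf}_A q}$ and the denominator above by $e^{\betatilt\,\mathrm{ess\,sup}_{A^c}q}$. Both are essential bounds under the conditional laws, and these are well defined since $0<\nu_{s,c}(A)<1$. This gives \eqref{eq:high-score-odds-amplification}. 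Solving $p/(1-p)\ge K\,o$ for $p$, with $o$ the prior odds, gives \eqref{eq:high-score-probability-amplification}, because $p\mapsto p/(1-p)$ is increasing.

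\textbf{Part e.} Write $q_{\mathrm{pop}}$ for the population score.

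\emph{CE robustness.} Under \eqref{eq:uniform-critic-error}, $e^{\betatilt\hat q}$ lies between $e^{\pm\betatilt\eps_{\mathrm{crit}}}e^{\betatilt q_{\mathrm{pop}}}$. Integrating and taking $\betatilt^{-1}\log$ gives \eqref{eq:entropic-value-robustness}.

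\emph{Density ratio.} The ratio of tilted densities is
\[
\exp\bigl(\betatilt[(\hat q-q_{\mathrm{pop}})-(\mathrm{CE}_\betatilt(\hat q)-\mathrm{CE}_\betatilt(q_{\mathrm{pop}}))]\bigr).
\]
Each bracketed difference is at most $\eps_{\mathrm{crit}}$ in absolute value, so the ratio lies in $[e^{-2\betatilt\eps_{\mathrm{crit}}},e^{2\betatilt\eps_{\mathrm{crit}}}]$. This is \eqref{eq:tilted-density-ratio-robustness}.

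\emph{KL bounds.} Each KL is the expectation of a log density ratio bounded by $2\betatilt\eps_{\mathrm{crit}}$, which gives both bounds in \eqref{eq:forward-tilted-kl-robustness}.

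\emph{Approximate optimality.} Apply \eqref{eq:gibbs-pythagorean-identity} with score $q_{\mathrm{pop}}$ and $\probmeas=\nu^{\hat q}_{s,c,\betatilt}$. The suboptimality gap equals $\betatilt^{-1}\KL(\nu^{\hat q}\|\nu^{q_{\mathrm{pop}}})\le2\eps_{\mathrm{crit}}$, which is \eqref{eq:approximate-regularized-optimality}.

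\emph{Robust amplification.} Apply part d to $\hat q$. Since $\Xi_{\nu_{s,c}}(A;\hat q)\ge\Xi_{\nu_{s,c}}(A;q_{\mathrm{pop}})-2\eps_{\mathrm{crit}}$ (the ess-inf drops by at most $\eps_{\mathrm{crit}}$ and the ess-sup rises by at most $\eps_{\mathrm{crit}}$), we obtain \eqref{eq:robust-high-score-odds}.

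The main technical care lies in part a. It requires justifying the KL chain rule without assuming finiteness of $\KL(\probmeas\|\nu^q_{s,c,\betatilt})$ in advance, which boundedness of $q$ provides, and handling the $+\infty$ convention. Differentiation under the integral in part c is the other point needing care, and it is routine given bounded $q$.
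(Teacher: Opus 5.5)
Your proof is correct and follows the paper's argument essentially step for step. Both use the two-sided bound on the tilted density, the chain-rule identity for $\KL$ yielding the Gibbs variational principle, derivatives of the cumulant-generating function $\Lambda$, the odds identity with essential-inf/sup bounds, and perturbation of the exponent under the $\eps_{\mathrm{crit}}$ critic error. The only deviations are cosmetic: you obtain the mean-score improvement from convexity of $\Lambda$, whereas the paper plugs $\probmeas=\nu_{s,c}$ into the variational principle, which is the same inequality $\Lambda(\betatilt)\ge\betatilt\Lambda'(0)$. Likewise, you obtain approximate optimality as the exact gap $\betatilt^{-1}\KL(\nu^{\hat q}_{s,c,\betatilt}\|\nu^{q_{\mathrm{pop}}}_{s,c,\betatilt})$ via the Pythagorean identity, where the paper uses a two-sided $\eps_{\mathrm{crit}}$ sandwich of the functional; both give $2\eps_{\mathrm{crit}}$.
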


\begin{proof}
We prove each part separately.

\paragraph{Part a: exact KL-regularized optimizer.}
Fix $\betatilt>0$.  Boundedness gives
\[
    q_{\min}
    \le
    \mathrm{CE}_{\betatilt}(q;s,c)
    \le
    q_{\max}.
\]
Hence
\[
    0
    <
    e^{-\betatilt(q_{\max}-q_{\min})}
    \le
    \frac{
        \mathrm d\nu_{s,c,\betatilt}^{q}
    }{
        \mathrm d\nu_{s,c}
    }(z)
    \le
    e^{\betatilt(q_{\max}-q_{\min})}
    <
    \infty
\]
for $\nu_{s,c}$-almost every $z$.
Thus
$\nu_{s,c,\betatilt}^{q}\sim\nu_{s,c}$.

Let $\probmeas$ satisfy
$\KL(\probmeas\|\nu_{s,c})<\infty$.  Then
$\probmeas\ll\nu_{s,c,\betatilt}^{q}$ and
\begin{align*}
    \log
    \frac{
        \mathrm d\probmeas
    }{
        \mathrm d\nu_{s,c,\betatilt}^{q}
    }(z)
    &=
    \log
    \frac{
        \mathrm d\probmeas
    }{
        \mathrm d\nu_{s,c}
    }(z)
    -
    \betatilt q(s,z)
    +
    \betatilt
    \mathrm{CE}_{\betatilt}(q;s,c).
\end{align*}
Integrating with respect to $\probmeas$ gives
\[
    \KL
    \left(
        \probmeas
        \,\middle\|\,
        \nu_{s,c,\betatilt}^{q}
    \right)
    =
    \KL(\probmeas\|\nu_{s,c})
    -
    \betatilt
    \int q(s,z)\,\probmeas(\mathrm dz)
    +
    \betatilt
    \mathrm{CE}_{\betatilt}(q;s,c).
\]
Rearranging proves
\eqref{eq:gibbs-pythagorean-identity}.
Nonnegativity of KL gives the variational principle, and equality occurs
if and only if
$\probmeas=\nu_{s,c,\betatilt}^{q}$.
This proves uniqueness and
\eqref{eq:optimizer-value-identity}.

\paragraph{Part b: support preservation of the weighting law.}
The density bound above is strictly positive and finite
$\nu_{s,c}$-almost everywhere, so
$\nu_{s,c,\betatilt}^{q}
    \sim
    \nu_{s,c}$.
Mutually absolutely continuous Borel measures have the same null open sets,
and therefore the same topological support, proving
\eqref{eq:support-preservation}.

\paragraph{Part c: monotone score and KL path.}
Let
$Y_{s,c}^{q}
    :=
    q(s,Z_{s,c})$, with 
    $Z_{s,c}\sim\nu_{s,c}$,
and define its cumulant-generating function
\begin{equation}
    \label{eq:log-moment-function}
    \ConMom_{Y_{s,c}^{q}}(\betatilt)
    :=
    \log
    \Mom_{Y_{s,c}^{q}}(\betatilt)
    =
    \log
    \int_{\gZ}
        e^{\betatilt q(s,z)}
        \nu_{s,c}(\mathrm dz).
\end{equation}
Boundedness justifies differentiation under the integral.  Hence
\begin{align}
    \frac{\mathrm d}{\mathrm d\betatilt}
    \ConMom_{Y_{s,c}^{q}}(\betatilt)
    &=
    \mathcal{S}_{q,s,c}(\betatilt),
    \label{eq:log-moment-first-derivative}
    \\
    \frac{\mathrm d^2}{\mathrm d\betatilt^2}
    \ConMom_{Y_{s,c}^{q}}(\betatilt)
    &=
    \Var_{\nu_{s,c,\betatilt}^{q}}
    \left(
        q(s,Z)
    \right).
    \label{eq:log-moment-second-derivative}
\end{align}
Differentiating the first identity proves
\eqref{eq:mean-score-derivative}.

Furthermore, basic calculus yields
$\log
    \frac{
        \mathrm d\nu_{s,c,\betatilt}^{q}
    }{
        \mathrm d\nu_{s,c}
    }(z)
    =
    \betatilt q(s,z)
    -
    \ConMom_{Y_{s,c}^{q}}(\betatilt)$,
so that
\begin{equation}
    \label{eq:kl-log-moment-identity}
    \mathsf{KL}_{q,s,c}(\betatilt)
    =
    \betatilt
    \mathcal{S}_{q,s,c}(\betatilt)
    -
    \ConMom_{Y_{s,c}^{q}}(\betatilt).
\end{equation}
Differentiating this expression yields
$\frac{\mathrm d}{\mathrm d\betatilt}
    \mathsf{KL}_{q,s,c}(\betatilt)
    =
    \betatilt
    \Var_{\nu_{s,c,\betatilt}^{q}}
    \left(
        q(s,Z)
    \right)$,
proving \eqref{eq:kl-derivative}.

Since
$\mathrm{CE}_{\betatilt}(q;s,c)
    =
\ConMom_{Y_{s,c}^{q}}(\betatilt)
/ \betatilt$,
the quotient rule together with
\eqref{eq:kl-log-moment-identity} yields
$\frac{\mathrm d}{\mathrm d\betatilt}
    \mathrm{CE}_{\betatilt}(q;s,c)
    =
        \mathsf{KL}_{q,s,c}(\betatilt)/\betatilt^2$.
This proves \eqref{eq:ce-derivative}.

Finally, applying the variational principle with
$\probmeas=\nu_{s,c}$ yields
$\mathrm{CE}_{\betatilt}(q;s,c)
    \geq
    \mathcal{S}_{q,s,c}(0)$.
Combining this with
\eqref{eq:optimizer-value-identity} proves
\eqref{eq:mean-score-improvement}.
Strictness follows whenever $q(s,\cdot)$ is nonconstant
$\nu_{s,c}$-almost everywhere.

\paragraph{Part d: exponential amplification of high-score weight.}
Let $A$ satisfy $0<\nu_{s,c}(A)<1$.  By definition, we immediately have that
\[
    \operatorname{Odds}_{\nu_{s,c,\betatilt}^{q}}(A)
    =
    \frac{
        \int_A
        e^{\betatilt q(s,z)}
        \nu_{s,c}(\mathrm dz)
    }{
        \int_{A^c}
        e^{\betatilt q(s,z)}
        \nu_{s,c}(\mathrm dz)
    }.
\]
Factoring out the base masses yields
\[
    \operatorname{Odds}_{\nu_{s,c,\betatilt}^{q}}(A)
    =
    \operatorname{Odds}_{\nu_{s,c}}(A)
        \mathbb{E}_{\nu_{s,c}(\cdot\mid A)}
        [e^{\betatilt q(s,Z)}]
/\mathbb{E}_{\nu_{s,c}(\cdot\mid A^c)}
        [e^{\betatilt q(s,Z)}],
\]
which proves
\eqref{eq:exact-odds-ratio-identity}.

On $A$, we have the lower bound
$q(s,z)
    \ge
    \operatorname{ess\,inf}_{u\in A}^{\nu_{s,c}}q(s,u)$
almost everywhere, whereas on $A^c$, we have the upper bound
$q(s,z)
    \le
    \operatorname{ess\,sup}_{u\in A^c}^{\nu_{s,c}}q(s,u)$
almost everywhere.  Substitution gives
$\operatorname{Odds}_{\nu_{s,c,\betatilt}^{q}}(A)
    \ge
    e^{\betatilt\Xi_{\nu_{s,c}}(A;q)}
    \operatorname{Odds}_{\nu_{s,c}}(A)$,
proving \eqref{eq:high-score-odds-amplification}.
Solving this odds inequality for
$\nu_{s,c,\betatilt}^{q}(A)$ yields
\eqref{eq:high-score-probability-amplification}.

\paragraph{Part e: reliability under learned-score error.}
Condition~\eqref{eq:uniform-critic-error} gives
\[
    q_{\mathrm{pop}}(s,z)-\eps_{\mathrm{crit}}
    \le
    \hat q(s,z)
    \le
    q_{\mathrm{pop}}(s,z)+\eps_{\mathrm{crit}}
\]
for $\nu_{s,c}$-almost every $z$.  Therefore
\[
\begin{aligned}
    e^{-\betatilt\eps_{\mathrm{crit}}}
    \int e^{\betatilt q_{\mathrm{pop}}(s,z)}
        \nu_{s,c}(\mathrm dz)
    &\leq
    \int e^{\betatilt\hat q(s,z)}
        \nu_{s,c}(\mathrm dz)
\leq
    e^{\betatilt\eps_{\mathrm{crit}}}
    \int e^{\betatilt q_{\mathrm{pop}}(s,z)}
        \nu_{s,c}(\mathrm dz).
\end{aligned}
\]
Taking logarithms and dividing by $\betatilt$ proves
\eqref{eq:entropic-value-robustness}.

Next, basic calculus yields
\begin{align*}
    \log
    \frac{
        \mathrm d\nu_{s,c,\betatilt}^{\hat q}
    }{
        \mathrm d\nu_{s,c,\betatilt}^{q_{\mathrm{pop}}}
    }(z)
    &=
    \betatilt
    \left(
        \hat q(s,z)-q_{\mathrm{pop}}(s,z)
    \right)-
    \betatilt
    \left(
        \mathrm{CE}_{\betatilt}(\hat q;s,c)
        -
        \mathrm{CE}_{\betatilt}(q_{\mathrm{pop}};s,c)
    \right).
\end{align*}
Both differences have absolute value at most
$\eps_{\mathrm{crit}}$, so that
\[
    -2\betatilt\eps_{\mathrm{crit}}
    \le
    \log
    \frac{
        \mathrm d\nu_{s,c,\betatilt}^{\hat q}
    }{
        \mathrm d\nu_{s,c,\betatilt}^{q_{\mathrm{pop}}}
    }(z)
    \le
    2\betatilt\eps_{\mathrm{crit}}.
\]
Exponentiating proves
\eqref{eq:tilted-density-ratio-robustness}.
Integrating the log-density ratio under either measure gives the two KL
bounds.

For any $\probmeas$ of finite KL relative to $\nu_{s,c}$, we have that 
\[
    \left|
        \ScoreFn_{\betatilt,s,c}
        (\probmeas;\hat q)
        -
        \ScoreFn_{\betatilt,s,c}
        (\probmeas;q_{\mathrm{pop}})
    \right|
    \le
    \eps_{\mathrm{crit}}.
\]
Applying this first to the population optimizer and then to the learned
optimizer yields
\[
\begin{aligned}
    &
    \ScoreFn_{\betatilt,s,c}
    \left(
        \nu_{s,c,\betatilt}^{q_{\mathrm{pop}}};
        q_{\mathrm{pop}}
    \right)\leq
    \ScoreFn_{\betatilt,s,c}
    \left(
        \nu_{s,c,\betatilt}^{\hat q};
        q_{\mathrm{pop}}
    \right)
    +
    2\eps_{\mathrm{crit}},
\end{aligned}
\]
which proves
\eqref{eq:approximate-regularized-optimality}.

Finally, we have that
$\operatorname{ess\,inf}_{A}^{\nu_{s,c}}\hat q
    \ge
    \operatorname{ess\,inf}_{A}^{\nu_{s,c}}q_{\mathrm{pop}}
    -
    \eps_{\mathrm{crit}}$,
and
$\operatorname{ess\,sup}_{A^c}^{\nu_{s,c}}\hat q
    \leq
    \operatorname{ess\,sup}_{A^c}^{\nu_{s,c}}q_{\mathrm{pop}}
    +
    \eps_{\mathrm{crit}}$.
Hence we deduce that 
\begin{equation}
    \label{eq:learned-separation-lower-bound}
    \Xi_{\nu_{s,c}}(A;\hat q)
    \ge
    \Xi_{\nu_{s,c}}(A;q_{\mathrm{pop}})
    -
    2\eps_{\mathrm{crit}}.
\end{equation}
Applying part~\textnormal{\it d} to $\hat q$ proves
\eqref{eq:robust-high-score-odds}.  If
$\Xi_{\nu_{s,c}}(A;q_{\mathrm{pop}})
>2\eps_{\mathrm{crit}}$,
the multiplicative factor is strictly larger than one.
\end{proof}

\subsection{Risk-sensitive local weighting and the correct random variable}
\label{subsec:tilt-risk-interpretation}

Theorem~\ref{thm:exponential-tilting} shows that, conditional on an
anchor $c$, exponential weighting is exactly the optimizer associated with an
entropic risk-sensitive objective over the random local-particle score
\[
    Y_{s,c}^q
    :=
    q(s,Z_{s,c}),
    \qquad
    Z_{s,c}\sim\nu_{s,c}.
\]
Under our reward convention, $\betatilt>0$ is upper-tail seeking: increasing
$\betatilt$ places greater emphasis on local particles having large score.
Importantly, this statement concerns the auxiliary weighting law
$\nu_{s,c,\betatilt}^q$.  \QFRS{} does not sample its executed latent from this
law; in the finite-cloud implementation, the corresponding weights are used to
construct the barycentric refinement
\[
    \tilde z_{c,\betatilt}(s)
    =
    \sum_{k=0}^{M}
    w_{c,\betatilt}(k\mid s)\,
    z_{c,k}(s).
\]
The small-$\betatilt$ expansion below makes the upper-tail preference of the
weighting objective explicit.

\begin{corollary}[Mean--variance expansion with an explicit remainder]
    \label{cor:entropic-mean-variance-expansion}
    Under Assumption~\ref{ass:bounded-tilting-score}, let
    $\operatorname{osc}(q)
        :=
        q_{\max}-q_{\min}$.
    Then, for every $\betatilt>0$,
    \begin{equation}
        \label{eq:entropic-expansion-remainder}
        \bigl|
            \mathrm{CE}_{\betatilt}(q;s,c)
            -
            \mathbb{E}_{\nu_{s,c}}[q(s,Z)]
            -
            \frac{\betatilt}{2}
            \Var_{\nu_{s,c}}(q(s,Z))\bigr|
        \leq
        \frac{
            \betatilt^2
            \operatorname{osc}(q)^3
        }{
            6
        }.
    \end{equation}
    Consequently, as $\betatilt\downarrow0$, the expansion holds:
    \begin{equation}
        \label{eq:entropic-small-beta-expansion}
        \mathrm{CE}_{\betatilt}(q;s,c)
        =
        \mathbb{E}_{\nu_{s,c}}[q(s,Z)]
        +
        \frac{\betatilt}{2}
        \Var_{\nu_{s,c}}(q(s,Z))
        +
        \mathcal O(\betatilt^2).
    \end{equation}
\end{corollary}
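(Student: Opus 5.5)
The plan is a Taylor expansion of the cumulant-generating function $\ConMom_{Y}(\betatilt)=\log\Mom_Y(\betatilt)$ of $Y:=Y^q_{s,c}=q(s,Z_{s,c})$ around $\betatilt=0$, using the derivative identities already obtained in the proof of Theorem~\ref{thm:exponential-tilting}(c). Since $\mathrm{CE}_{\betatilt}(q;s,c)=\ConMom_Y(\betatilt)/\betatilt$, it suffices to show
\[
\Bigl|\ConMom_Y(\betatilt)-\betatilt\,\E[Y]-\tfrac{\betatilt^2}{2}\Var(Y)\Bigr|\le \tfrac{\betatilt^3}{6}\operatorname{osc}(q)^3 ,
\]
and then divide by $\betatilt$.

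We have $\ConMom_Y(0)=0$. By \eqref{eq:log-moment-first-derivative}, $\ConMom_Y'(0)=\mathcal S_{q,s,c}(0)=\E_{\nu_{s,c}}[q]$, and by \eqref{eq:log-moment-second-derivative}, $\ConMom_Y''(0)=\Var_{\nu_{s,c}}(q)$. Boundedness (Assumption~\ref{ass:bounded-tilting-score}) allows differentiation under the integral to any order. A third differentiation gives $\ConMom_Y'''(t)=\E_{\nu^q_{s,c,t}}\bigl[(Y-\E_{\nu^q_{s,c,t}}Y)^3\bigr]$, the third central moment under the tilted law. Taylor's theorem with Lagrange remainder then yields the displayed bound once we establish $\sup_{t\in[0,\betatilt]}|\ConMom_Y'''(t)|\le \operatorname{osc}(q)^3$.

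That remainder estimate is the only real step. Under every tilted law, $Y\in[q_{\min},q_{\max}]$ almost surely, because the tilted laws are mutually absolutely continuous with $\nu_{s,c}$ by part (b). The tilted mean also lies in this interval, so $|Y-\E Y|\le \operatorname{osc}(q)$ almost surely and $|\E(Y-\E Y)^3|\le\operatorname{osc}(q)^3$. (A sharper constant is available but is not needed.) Dividing the Taylor bound by $\betatilt>0$ gives \eqref{eq:entropic-expansion-remainder}.

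The asymptotic statement \eqref{eq:entropic-small-beta-expansion} follows immediately, since the remainder is $\betatilt^2\operatorname{osc}(q)^3/6=\mathcal O(\betatilt^2)$. The same bound shows that $\mathrm{CE}_{\betatilt}\to\mathrm{CE}_0$ as $\betatilt\downarrow 0$, which justifies the zero-tilt convention. The main point requiring care is justifying the third derivative formula and its uniform bound, and both follow from boundedness.
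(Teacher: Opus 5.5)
Your proposal is correct and follows the paper's proof essentially step for step. Both arguments Taylor-expand $\ConMom_{Y_{s,c}^q}$ to second order with a Lagrange remainder, read off $\ConMom'(0)$ and $\ConMom''(0)$ as the mean and variance, bound the third derivative (the third central moment under the tilted law) by $\operatorname{osc}(q)^3$, and divide by $\betatilt$; your appeal to absolute continuity of the tilted law, to keep $Y$ in $[q_{\min},q_{\max}]$, spells out a detail the paper leaves implicit.
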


\begin{proof}
    Retain the cumulant-generating function
    $\ConMom_{Y_{s,c}^q}$ introduced in
    \eqref{eq:log-moment-function}.  Its first two derivatives at zero are
    \[
        \ConMom_{Y_{s,c}^q}'(0)
        =
        \mathbb{E}_{\nu_{s,c}}[q(s,Z)],
        \qquad
        \ConMom_{Y_{s,c}^q}''(0)
        =
        \Var_{\nu_{s,c}}(q(s,Z)).
    \]

    For any $\theta\ge0$, differentiating once more, justified by boundedness,
    gives
    \[
        \ConMom_{Y_{s,c}^q}'''(\theta)
        =
        \mathbb{E}_{\nu_{s,c,\theta}^{q}}
        \left[
            \left(
                q(s,Z)
                -
                \mathcal{S}_{q,s,c}(\theta)
            \right)^3
        \right].
    \]
    Both $q(s,Z)$ and its expectation under every tilted weighting law lie in
    $[q_{\min},q_{\max}]$.  Hence
    $\left|
            q(s,Z)
            -
            \mathcal{S}_{q,s,c}(\theta)
        \right|
        \leq
        \operatorname{osc}(q)$,
    and therefore
    $| \ConMom_{Y_{s,c}^q}'''(\theta)|
        \leq
        \operatorname{osc}(q)^3$.

   Applying Taylor's theorem, for some
    $\theta_{\betatilt}\in(0,\betatilt)$, yields 
    \[
    \begin{aligned}
        \ConMom_{Y_{s,c}^q}(\betatilt)
        &=
        \ConMom_{Y_{s,c}^q}(0)
        +
        \betatilt
        \ConMom_{Y_{s,c}^q}'(0)
        +
        \frac{\betatilt^2}{2}
        \ConMom_{Y_{s,c}^q}''(0)+
        \frac{\betatilt^3}{6}
        \ConMom_{Y_{s,c}^q}'''(\theta_{\betatilt}).
    \end{aligned}
    \]
    Since
    $\ConMom_{Y_{s,c}^q}(0)=0$ and
        $\mathrm{CE}_{\betatilt}(q;s,c)
        =
            \ConMom_{Y_{s,c}^q}(\betatilt)/
            \betatilt$,
    dividing by $\betatilt$ and applying the third-derivative bound proves
    \eqref{eq:entropic-expansion-remainder}.  The asymptotic expansion
    \eqref{eq:entropic-small-beta-expansion} follows immediately.
\end{proof}

\paragraph{Risk preference as a local refinement mechanism.}
Conditional on the selected anchor $c$, the law $\nu_{s,c}$ describes the
local variation supplied by the particle construction.  The random variable
\[
    Y_{s,c}^q=q(s,Z_{s,c}),
    \qquad
    Z_{s,c}\sim\nu_{s,c},
\]
therefore describes variation in score across possible local deviations around
that coarse behavior.  Equation~\eqref{eq:entropic-small-beta-expansion}
shows that positive $\betatilt$ does more than optimize their mean score: to
first order in $\betatilt$, it additionally rewards variance in local scores,
and hence the presence of upper-tail value opportunities.

The resulting exponential change of measure converts an undirected local
particle cloud into a value-directed weighting of that cloud.  In the
implemented method, larger weight does not correspond to larger probability of
executing an individual particle.  Rather, it gives that particle greater
influence on the barycentric refinement
$\tilde z_{c,\betatilt}(s)$.  Thus the three stages of \QFRS{} have distinct
roles:
\begin{itemize}[itemsep=0pt, topsep=0pt]
   \item \emph{FRS anchor generation}: reverse coarse behavior modes to latent anchors. 
    \item \emph{Local particles}: expose local degrees of freedom. 
    \item \emph{Risk-sensitive weighting}: direct the local refinement, and select coarse behavioral mode.
\end{itemize}
As $\betatilt\downarrow0$, the weighting approaches the untilted local law.
As $\betatilt$ increases, the weighting concentrates progressively on the
highest-scoring particles; in a finite cloud with a unique score maximizer, it
converges to unit weight on that particle as
$\betatilt\to\infty$.

\paragraph{Score risk and critic uncertainty are distinct.}
The upper-tail preference across local particle scores should be distinguished
from uncertainty in the value estimate of a fixed particle.  The random
variable in the entropic objective is
$q(s,Z_{s,c})$
with
    $Z_{s,c}\sim\nu_{s,c}$,
so its variability is across local candidate behaviors.  

A common set of heuristics in practice involve leveraging a critic ensemble. 
If the score supplied to the weighting rule is itself constructed
conservatively from a critic ensemble, for example
\[
    \hat q_{\mathrm{pes}}(s,z)
    :=
    \min_{i\in\mathcal I}
    Q_i(s,z),
\]
then the two operations act along different axes:
\begin{itemize}[itemsep=0pt, topsep=4pt]
    \item pessimism across value estimates at fixed $z$: guard against overestimation;
    \item upper-tail weighting across local particles: favor promising refinements. 
\end{itemize}
The reliability condition from
Theorem~\ref{thm:exponential-tilting}---namely,
$\Xi_{\nu_{s,c}}(A;q_{\mathrm{pop}})
    >
    2\eps_{\mathrm{crit}}$---formalizes the corresponding tradeoff: aggressive local weighting is
certifiably aligned with the population score only when the value separation of
the high-score region dominates critic error.

\paragraph{Beyond entropic weighting.}
The entropic functional is one member of a broader family of local
risk-sensitive objectives.  Let
$\mathfrak R_{\nu_{s,c}}(q)$ be a reward-oriented functional of the random
local score $q(s,Z_{s,c})$.  Whenever it admits a variational representation
\begin{equation}
    \label{eq:general-risk-variational-representation}
    \mathfrak R_{\nu_{s,c}}(q)
    =
    \sup_{\probmeas\ll\nu_{s,c}}
    \left\{
        \int_{\gZ}
            q(s,z)\,
            \probmeas(\mathrm dz)
        -
        \mathcal P_{\mathfrak R}
        (\probmeas,\nu_{s,c})
    \right\},
\end{equation}
an optimizer $\probmeas^\star$ defines a corresponding local weighting rule.
The entropic case is recovered by
\[
    \mathcal P_{\mathfrak R}
    (\probmeas,\nu_{s,c})
    =
    \frac{1}{\betatilt}
    \KL(\probmeas\|\nu_{s,c}),
\]
for which
$\probmeas^\star=\nu_{s,c,\betatilt}^{q}$ has the closed-form exponential
density derived in Theorem~\ref{thm:exponential-tilting}.

In a population version of the local-refinement construction, such a weighting
law naturally induces the barycentric refinement
\[
    \tilde z_{\mathfrak R}(s,c)
    :=
    \int_{\gZ}
        z\,
        \probmeas^\star(\mathrm dz),
\]
whenever the integral exists.  Thus alternative risk functionals can change
how the same local particle law is combined without changing how FRS constructs
the coarse anchor or how the local particle cloud is generated.

\paragraph{Return-level risk.}
As noted above, the variance in
\eqref{eq:entropic-small-beta-expansion} is the variance of local candidate
scores under $Z_{s,c}\sim\nu_{s,c}$.  It is not automatically return variance,
and it is not uncertainty across critic heads.  The following proposition
identifies the exact score transformation under which the local entropic
functional becomes an entropic functional of realized return.

\begin{proposition}[
    Exact return-level entropic interpretation under a conditional transform
]
\label{prop:conditional-entropic-return-transform}
Fix a state $s$, a selected anchor $c$, and $\betatilt>0$.
Let $(\Omega,\mathcal F,\mathbb P)$ carry a pair
$(Z_{s,c},G_{s,c})$, where
$Z_{s,c}\sim\nu_{s,c}$
and $G_{s,c}$ is a real-valued discounted return satisfying
$\E
    \left[
        e^{\betatilt G_{s,c}}
    \right]
    <
    \infty$.
Let
$\Mom_{G_{s,c}\mid Z_{s,c}}
    (\betatilt\mid z)
    :=
    \E
    \left[
        e^{\betatilt G_{s,c}}
        \,\middle|\,
        Z_{s,c}=z
    \right]$
denote the conditional moment-generating function, and define
\begin{equation}
    \label{eq:conditional-entropic-return-score}
    q_{\mathrm{ret},\betatilt}^{\,c}(s,z)
    :=
    \frac{1}{\betatilt}
    \log
    \Mom_{G_{s,c}\mid Z_{s,c}}
    (\betatilt\mid z)
\end{equation}
for $\nu_{s,c}$-almost every $z$.
Then
\begin{equation}
    \label{eq:joint-return-entropic-identity}
    \mathrm{CE}_{\betatilt}
    \left(
        q_{\mathrm{ret},\betatilt}^{\,c};
        s,c
    \right)
    =
    \frac{1}{\betatilt}
    \log
    \E
    \left[
        e^{\betatilt G_{s,c}}
    \right].
\end{equation}
Moreover, with  the joint exponentially tilted measure
$\mathbb P_{s,c,\betatilt}^{G}$ defined by
$\mathrm d\mathbb P_{s,c,\betatilt}^{G}/
        \mathrm d\mathbb P
    :=
        e^{\betatilt G_{s,c}}/
        \E[e^{\betatilt G_{s,c}}]$,  the marginal law of $Z_{s,c}$ under this measure is
    $\mathcal L_{\mathbb P_{s,c,\betatilt}^{G}}
    (Z_{s,c})
    =
    \nu_{s,c,\betatilt}^{
        q_{\mathrm{ret},\betatilt}^{\,c}
    }$.
\end{proposition}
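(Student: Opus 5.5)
The plan is to treat both claims as consequences of the tower property for conditional expectations, followed by a direct computation of the $Z_{s,c}$-marginal of the tilted measure.

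For \eqref{eq:joint-return-entropic-identity}, I would first check that $q_{\mathrm{ret},\betatilt}^{\,c}(s,z)$ is well defined $\nu_{s,c}$-almost everywhere. Since $e^{\betatilt G_{s,c}}>0$ and $\E[e^{\betatilt G_{s,c}}]<\infty$, a regular version of the conditional expectation $\Mom_{G_{s,c}\mid Z_{s,c}}(\betatilt\mid z)$ exists; this uses that $\gZ$ is standard Borel. It takes values in $(0,\infty)$ for $\nu_{s,c}$-almost every $z$, so its logarithm is finite almost surely. I would note, though, that Assumption~\ref{ass:bounded-tilting-score} need not hold here. I therefore would not invoke Definition~\ref{def:entropic-candidate-evaluation} through boundedness. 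Instead I would observe that the definition only requires $\int e^{\betatilt q}\,\mathrm d\nu_{s,c}$ to be finite and positive. Then
\[
\int_{\gZ}\exp\bigl(\betatilt q_{\mathrm{ret},\betatilt}^{\,c}(s,z)\bigr)\,\nu_{s,c}(\mathrm dz)=\int_{\gZ}\Mom_{G_{s,c}\mid Z_{s,c}}(\betatilt\mid z)\,\nu_{s,c}(\mathrm dz)=\E\bigl[\E[e^{\betatilt G_{s,c}}\mid Z_{s,c}]\bigr]=\E[e^{\betatilt G_{s,c}}],
\]
by the tower property, using $Z_{s,c}\sim\nu_{s,c}$. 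Taking $\betatilt^{-1}\log$ of both sides gives the identity, and the quantity is finite and positive.

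For the marginal claim, fix a Borel set $B\subseteq\gZ$ and compute
\[
\mathbb P_{s,c,\betatilt}^{G}(Z_{s,c}\in B)=\frac{\E\bigl[\1\{Z_{s,c}\in B\}e^{\betatilt G_{s,c}}\bigr]}{\E[e^{\betatilt G_{s,c}}]}.
\]
Conditioning on $Z_{s,c}$ turns the numerator into $\int_B \Mom_{G_{s,c}\mid Z_{s,c}}(\betatilt\mid z)\,\nu_{s,c}(\mathrm dz)$. I would rewrite the integrand as $e^{\betatilt q_{\mathrm{ret},\betatilt}^{\,c}(s,z)}$ and the denominator as $e^{\betatilt\mathrm{CE}_{\betatilt}(q_{\mathrm{ret},\betatilt}^{\,c};s,c)}$, using the first part. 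The result is exactly $\int_B \exp\bigl(\betatilt(q_{\mathrm{ret},\betatilt}^{\,c}(s,z)-\mathrm{CE}_{\betatilt})\bigr)\nu_{s,c}(\mathrm dz)$. By \eqref{eq:tilted-radon-nikodym}, this is $\nu_{s,c,\betatilt}^{q_{\mathrm{ret},\betatilt}^{\,c}}(B)$. Since $B$ is arbitrary, the two laws coincide.

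The only delicate point is measure-theoretic: making the conditional MGF a measurable function of $z$ with values in $(0,\infty)$, and justifying that the tilted-law definition applies to a possibly unbounded score. I would handle this by remarking that Definitions~\ref{def:entropic-candidate-evaluation} and~\ref{def:exponentially-tilted-law} need only integrability of $e^{\betatilt q}$, which the tower computation supplies. Everything else is direct substitution.
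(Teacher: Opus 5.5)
Your proposal is correct and follows the paper's proof essentially step for step. You establish positivity and finiteness of the conditional MGF, use the tower property to identify $\int e^{\betatilt q_{\mathrm{ret},\betatilt}^{\,c}}\,\mathrm d\nu_{s,c}$ with $\E[e^{\betatilt G_{s,c}}]$, and then condition on $Z_{s,c}$ inside $\E[\1\{Z_{s,c}\in B\}e^{\betatilt G_{s,c}}]$ to recover the tilted density. Your remark that the definitions only need integrability of $e^{\betatilt q}$, not boundedness, is a sensible clarification that the paper leaves implicit.
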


\begin{proof}
Since $e^{\betatilt G_{s,c}}>0$ almost surely,
we have that \[
    \Mom_{G_{s,c}\mid Z_{s,c}}
    (\betatilt\mid Z_{s,c})
    =
    \E
    \left[
        e^{\betatilt G_{s,c}}
        \,\middle|\,
        Z_{s,c}
    \right]
\]
is strictly positive almost surely.  Furthermore,
\[
\begin{aligned}
    \E
    \left[
        \Mom_{G_{s,c}\mid Z_{s,c}}
        (\betatilt\mid Z_{s,c})
    \right]
    &=
    \E
    \left[
        e^{\betatilt G_{s,c}}
    \right]
    <
    \infty,
\end{aligned}
\]
so the conditional moment is finite almost surely.

By
\eqref{eq:conditional-entropic-return-score},
we have $
    e^{
        \betatilt
        q_{\mathrm{ret},\betatilt}^{\,c}
        (s,Z_{s,c})
    }
    =
    \E
    \left[
        e^{\betatilt G_{s,c}}
        \,\middle|\,
        Z_{s,c}
    \right]$
almost surely.  Therefore, by the tower property,
\[
\begin{aligned}
    \int_{\gZ}
        e^{
            \betatilt
            q_{\mathrm{ret},\betatilt}^{\,c}(s,z)
        }
        \nu_{s,c}(\mathrm dz)
    &=
    \E
    \left[
        e^{
            \betatilt
            q_{\mathrm{ret},\betatilt}^{\,c}
            (s,Z_{s,c})
        }
    \right]=
    \E
    \left[
        \E
        \left[
            e^{\betatilt G_{s,c}}
            \,\middle|\,
            Z_{s,c}
        \right]
    \right]=
    \E
    \left[
        e^{\betatilt G_{s,c}}
    \right].
\end{aligned}
\]
Taking logarithms and dividing by $\betatilt$ proves
\eqref{eq:joint-return-entropic-identity}.

For any Borel set $A\subseteq\gZ$, elementry calculations yield
\begin{align*}
    \mathbb P_{s,c,\betatilt}^{G}
    (Z_{s,c}\in A)
    &=
    \frac{
        \E
        \left[
            \1\{Z_{s,c}\in A\}
            e^{\betatilt G_{s,c}}
        \right]
    }{
        \E[e^{\betatilt G_{s,c}}]
    }
    \\
    &=
    \frac{
        \E
        \left[
            \1\{Z_{s,c}\in A\}
            \E[
                e^{\betatilt G_{s,c}}
                \mid
                Z_{s,c}
            ]
        \right]
    }{
        \E[e^{\betatilt G_{s,c}}]
    }
    \\
    &=
    \frac{
        \displaystyle
        \int_A
        e^{
            \betatilt
            q_{\mathrm{ret},\betatilt}^{\,c}(s,z)
        }
        \nu_{s,c}(\mathrm dz)
    }{
        \displaystyle
        \int_{\gZ}
        e^{
            \betatilt
            q_{\mathrm{ret},\betatilt}^{\,c}(s,z)
        }
        \nu_{s,c}(\mathrm dz)
    }
    \\
    &=
    \nu_{s,c,\betatilt}^{
        q_{\mathrm{ret},\betatilt}^{\,c}
    }(A).
\end{align*}
Since this holds for every Borel set $A$, the marginal laws agree.
\end{proof}

Proposition~\ref{prop:conditional-entropic-return-transform} identifies the
precise distinction between local candidate-value risk and realized-return
risk.  If the score is the ordinary conditional mean return, the outer entropic
functional is applied to variation in conditional mean values across local
particles.  If the score is
$q_{\mathrm{ret},\betatilt}^{\,c}$, the same outer transform is exactly the
entropic certainty equivalent of the realized return.

This identity concerns the risk functional and its associated exponentially tilted 
weighting law. In the bounded-return setting, the transformed score is bounded, and Theorem~\ref{thm:exponential-tilting} also establishes KL-regularized optimality of this law. It does \emph{not} imply that the actual barycentric latent
executed by \QFRS{} has the return distribution
$\mathbb P_{s,c,\betatilt}^{G}$, because \QFRS{} executes
$\tilde z_{c,\betatilt}$ rather than sampling
$Z_{s,c}$ from the tilted law.  Relating the value of the executed barycenter
to the weighted candidate values requires an additional local regularity
condition; this connection is developed in
Section~\ref{subsec:finite-cloud-tilting}.

\begin{corollary}[Mean-return candidate risk is generally not return risk]
    \label{cor:mean-return-versus-return-entropic}
    In the setting of
    Proposition~\ref{prop:conditional-entropic-return-transform}, assume
    additionally that
    $\E[|G_{s,c}|]<\infty$,
    and define
$q_{\mathrm{mean}}^{\,c}(s,z)
        :=
        \E
        \left[
            G_{s,c}
            \,\middle|\,
            Z_{s,c}=z
        \right]$.
    Then
    \begin{equation}
        \label{eq:mean-return-entropic-jensen}
        \mathrm{CE}_{\betatilt}
        \left(
            q_{\mathrm{mean}}^{\,c};
            s,c
        \right)
        \le
        \frac1{\betatilt}
        \log
        \E
        \left[
            e^{\betatilt G_{s,c}}
        \right].
    \end{equation}
    Equality holds if and only if, for
    $\nu_{s,c}$-almost every $z$, the conditional return
    $G_{s,c}\mid Z_{s,c}=z$ is almost surely constant.
\end{corollary}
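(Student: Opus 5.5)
The plan is to apply the conditional Jensen inequality to the convex map $x\mapsto e^{\betatilt x}$, and then to use the tower-property identity already established in the proof of Proposition~\ref{prop:conditional-entropic-return-transform}. Since $\E[|G_{s,c}|]<\infty$, the conditional mean $q_{\mathrm{mean}}^{\,c}(s,Z_{s,c})=\E[G_{s,c}\mid Z_{s,c}]$ is well defined and finite almost surely. Conditional Jensen then gives, $\nu_{s,c}$-almost surely,
\[
    e^{\betatilt q_{\mathrm{mean}}^{\,c}(s,Z_{s,c})}
    \le
    \E\bigl[e^{\betatilt G_{s,c}}\,\big|\,Z_{s,c}\bigr]
    =
    e^{\betatilt q_{\mathrm{ret},\betatilt}^{\,c}(s,Z_{s,c})}.
\]
Equivalently, $q_{\mathrm{mean}}^{\,c}\le q_{\mathrm{ret},\betatilt}^{\,c}$ pointwise almost everywhere.

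Next, integrate this inequality against $\nu_{s,c}$. The right-hand side integrates to $\E[e^{\betatilt G_{s,c}}]<\infty$ by the tower property. Taking logarithms, which is monotone, and dividing by $\betatilt>0$ yields \eqref{eq:mean-return-entropic-jensen}. In this step I must not assume that $q_{\mathrm{mean}}^{\,c}$ is bounded, because Definition~\ref{def:entropic-candidate-evaluation} is applied here to a possibly unbounded score. The integral on the left is still well defined, since its integrand is nonnegative and dominated by an integrable function, so no appeal to Assumption~\ref{ass:bounded-tilting-score} is needed.

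The equality case is the main obstacle. For the ``if'' direction, if $G_{s,c}$ is almost surely constant conditional on $Z_{s,c}=z$ for almost every $z$, then conditional Jensen holds with equality pointwise, and integrating preserves equality. For the ``only if'' direction, suppose equality holds. Then the nonnegative function $D:=e^{\betatilt q_{\mathrm{ret}}}-e^{\betatilt q_{\mathrm{mean}}}$ has zero $\nu_{s,c}$-integral, so $D=0$ almost everywhere, and conditional Jensen is tight for almost every $z$. I then use strict convexity of the exponential. For a regular conditional distribution $\kappa(z,\cdot)$ of $G_{s,c}$ given $Z_{s,c}=z$, which exists because $G_{s,c}$ is real-valued, equality $e^{\betatilt\int g\,\kappa(z,dg)}=\int e^{\betatilt g}\kappa(z,dg)$ for a strictly convex function forces $\kappa(z,\cdot)$ to be a Dirac mass. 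To see this, apply the tangent-line inequality $e^{\betatilt g}\ge e^{\betatilt m}+\betatilt e^{\betatilt m}(g-m)$ at the mean $m$. The inequality is strict whenever $g\ne m$, so equality in expectation forces $g=m$ almost surely under $\kappa(z,\cdot)$. The main technical care is to justify passing to regular conditional distributions and to handle the null sets consistently, since the equality holds only almost everywhere.
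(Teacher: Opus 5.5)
Your proposal is correct and follows the same route as the paper: conditional Jensen for $x\mapsto e^{\betatilt x}$, integration against $\nu_{s,c}$ via the tower property, then taking logarithms and dividing by $\betatilt$. Your tangent-line argument with regular conditional distributions spells out the equality case, which the paper's proof settles by appealing directly to strict convexity.
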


\begin{proof}
Applying Jensen's inequality yields
\[
    e^{
        \betatilt
        q_{\mathrm{mean}}^{\,c}
        (s,Z_{s,c})
    }
    =
    e^{
        \betatilt
        \E[G_{s,c}\mid Z_{s,c}]
    }
    \le
    \E
    \left[
        e^{\betatilt G_{s,c}}
        \,\middle|\,
        Z_{s,c}
    \right]
\]
almost surely.  Taking expectations,
\[
    \int_{\gZ}
        e^{
            \betatilt
            q_{\mathrm{mean}}^{\,c}(s,z)
        }
        \nu_{s,c}(\mathrm dz)
    \le
    \E
    \left[
        e^{\betatilt G_{s,c}}
    \right].
\]
Taking logarithms and dividing by $\betatilt>0$ proves
\eqref{eq:mean-return-entropic-jensen}.

Because $x\mapsto e^{\betatilt x}$ is strictly convex, equality in conditional
Jensen holds almost surely if and only if $G_{s,c}$ is conditionally almost
surely constant given $Z_{s,c}$, which gives the stated condition.
\end{proof}

\subsection{Sharpness of the critic-separation condition}
\label{subsec:separation-condition-sharpness}

The factor $2\eps_{\mathrm{crit}}$ in
\eqref{eq:separation-dominates-critic-error} is not an artifact of the proof.
A uniform critic error of size $\eps_{\mathrm{crit}}$ can underestimate the
score of a truly high-value local particle by $\eps_{\mathrm{crit}}$ while
simultaneously overestimating a lower-value particle by
$\eps_{\mathrm{crit}}$, reducing their apparent separation by
$2\eps_{\mathrm{crit}}$.  The following two-point construction shows that this
loss is worst-case sharp.

\begin{proposition}[Worst-case sharpness of the local value-separation threshold]
    \label{prop:critic-separation-threshold-sharpness}
    Fix a state $s$, a selected anchor $c$, a tilt intensity
    $\betatilt>0$, a critic-error level
    $\eps_{\mathrm{crit}}>0$, and a population-score separation
    $\vartheta_{\mathrm{score}}\ge0$.
    There exists a two-point local particle space
    $\gZ_{\mathrm{loc}}
        =
        \{z_{\mathrm{hi}},z_{\mathrm{lo}}\}$,
    a local reference law $\nu_{s,c}$ assigning strictly positive mass to both
    points, a population score $q_{\mathrm{pop}}$, and a learned score
    $\hat q$ such that
    \begin{equation}
        \label{eq:sharp-uniform-score-error}
        \left\|
            \hat q(s,\cdot)
            -
            q_{\mathrm{pop}}(s,\cdot)
        \right\|_{L^\infty(\nu_{s,c})}
        =
        \eps_{\mathrm{crit}},
    \end{equation}
    the point $z_{\mathrm{hi}}$ has population-score separation exactly
    $\vartheta_{\mathrm{score}}$ from $z_{\mathrm{lo}}$, and, for
    $A:=\{z_{\mathrm{hi}}\}$,
    \begin{equation}
        \label{eq:sharp-two-point-odds-factor}
        \frac{
            \operatorname{Odds}_{
                \nu_{s,c,\betatilt}^{\hat q}
            }(A)
        }{
            \operatorname{Odds}_{\nu_{s,c}}(A)
        }
        =
        \exp
        \left(
            \betatilt
            \left[
                \vartheta_{\mathrm{score}}
                -
                2\eps_{\mathrm{crit}}
            \right]
        \right).
    \end{equation}
    Consequently, if
    $\vartheta_{\mathrm{score}}<2\eps_{\mathrm{crit}}$, an admissible learned
    critic can reverse the ordering of the two local particles and cause
    exponential weighting to \emph{decrease} the relative weight assigned to
    the truly high-value particle.  At
    $\vartheta_{\mathrm{score}}=2\eps_{\mathrm{crit}}$, an admissible critic
    can eliminate any strict preference between them.
\end{proposition}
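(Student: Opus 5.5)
The plan is an explicit two-point construction followed by direct evaluation of the exact odds-ratio identity \eqref{eq:exact-odds-ratio-identity} from Theorem~\ref{thm:exponential-tilting}.\textit{d}. Take $\gZ_{\mathrm{loc}}=\{z_{\mathrm{hi}},z_{\mathrm{lo}}\}$, viewed as two distinct points of $\R^{d_z}$. Let $\nu_{s,c}$ put mass $p\in(0,1)$ on $z_{\mathrm{hi}}$ and $1-p$ on $z_{\mathrm{lo}}$; for concreteness take $p=1/2$, matching a uniform finite cloud. Define the population score by
\[
q_{\mathrm{pop}}(s,z_{\mathrm{hi}}):=\vartheta_{\mathrm{score}},
\qquad
q_{\mathrm{pop}}(s,z_{\mathrm{lo}}):=0.
\]
Define the adversarial learned score by
\[
\hat q(s,z_{\mathrm{hi}}):=\vartheta_{\mathrm{score}}-\eps_{\mathrm{crit}},
\qquad
\hat q(s,z_{\mathrm{lo}}):=\eps_{\mathrm{crit}}.
\]
This shifts the high point down and the low point up by exactly $\eps_{\mathrm{crit}}$. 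Both scores are bounded, so Assumption~\ref{ass:bounded-tilting-score} holds. Since both atoms have positive mass, the $L^\infty(\nu_{s,c})$ error is the maximum of the two pointwise errors, which equals $\eps_{\mathrm{crit}}$. This gives \eqref{eq:sharp-uniform-score-error}.

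Next I verify the separation. For a finite law with positive atoms, the essential infimum over $A=\{z_{\mathrm{hi}}\}$ and the essential supremum over $A^c=\{z_{\mathrm{lo}}\}$ are just the point values. Hence $\Xi_{\nu_{s,c}}(A;q_{\mathrm{pop}})=\vartheta_{\mathrm{score}}$, as required.

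Then I compute the odds ratio. By \eqref{eq:exact-odds-ratio-identity}, the conditional laws on $A$ and $A^c$ are Dirac masses, so the ratio equals $e^{\betatilt\hat q(s,z_{\mathrm{hi}})}/e^{\betatilt\hat q(s,z_{\mathrm{lo}})}$. This is $\exp(\betatilt[\vartheta_{\mathrm{score}}-2\eps_{\mathrm{crit}}])$, which is \eqref{eq:sharp-two-point-odds-factor}. Alternatively, one can compute directly from Definition~\ref{def:exponentially-tilted-law}: the normalizer $\mathrm{CE}_{\betatilt}$ cancels in the odds.

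For the consequences, first suppose $\vartheta_{\mathrm{score}}<2\eps_{\mathrm{crit}}$. Then $\hat q(s,z_{\mathrm{hi}})<\hat q(s,z_{\mathrm{lo}})$, so the critic reverses the true ordering. The factor is also strictly below one, so the tilted odds of $A$ are strictly smaller than the reference odds. Because $x\mapsto x/(1-x)$ is strictly increasing, the tilted mass $\nu^{\hat q}_{s,c,\betatilt}(A)$ is strictly smaller than $\nu_{s,c}(A)$. Next suppose $\vartheta_{\mathrm{score}}=2\eps_{\mathrm{crit}}$. Then the two learned scores coincide and the factor is $1$, so the tilted law equals the reference law and there is no strict preference between the two particles.

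I do not expect any step to be a real obstacle. The only point needing care is that $\nu_{s,c}$ gives strictly positive mass to both points. This positivity is what makes the essential sup and inf reduce to point values and ensures $0<\nu_{s,c}(A)<1$, so the odds are defined.
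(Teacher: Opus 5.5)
Your proposal is correct and matches the paper's proof essentially step for step. You use the same two-point construction, $q_{\mathrm{pop}}=(\vartheta_{\mathrm{score}},0)$ and $\hat q=(\vartheta_{\mathrm{score}}-\eps_{\mathrm{crit}},\eps_{\mathrm{crit}})$, and evaluate it through the exact odds identity \eqref{eq:exact-odds-ratio-identity}. Fixing $p=1/2$ is harmless, since the paper allows any strictly positive masses.
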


\begin{proof}
    Fix the two-point local particle space
    $\gZ_{\mathrm{loc}}
        =
        \{z_{\mathrm{hi}},z_{\mathrm{lo}}\}$,
    and let $\nu_{s,c}$ assign strictly positive probability to both points.
    Set
    $A
        :=
        \{z_{\mathrm{hi}}\}$.
    Define the population score by
    \[
        q_{\mathrm{pop}}(s,z_{\mathrm{hi}})
        :=
        \vartheta_{\mathrm{score}},
        \qquad
        q_{\mathrm{pop}}(s,z_{\mathrm{lo}})
        :=
        0.
    \]
    Hence the population-score separation between the designated high-value
    particle and its complement is exactly
    $\Xi_{\nu_{s,c}}
        (A;q_{\mathrm{pop}})
        =
        \vartheta_{\mathrm{score}}$.

    Now define the learned score by
    \[
        \hat q(s,z_{\mathrm{hi}})
        :=
        \vartheta_{\mathrm{score}}
        -
        \eps_{\mathrm{crit}}
        \quad\text{and}\quad
        \hat q(s,z_{\mathrm{lo}})
        :=
        \eps_{\mathrm{crit}}.
    \]
    The absolute score error at each point is exactly
    $\eps_{\mathrm{crit}}$, so
    \eqref{eq:sharp-uniform-score-error} holds.
    The learned score difference is
  $
        \hat q(s,z_{\mathrm{hi}})
        -
        \hat q(s,z_{\mathrm{lo}})
        =
        \vartheta_{\mathrm{score}}
        -
        2\eps_{\mathrm{crit}}$.
    On a two-point space, the exact odds identity
    \eqref{eq:exact-odds-ratio-identity} gives
    \[
    \begin{aligned}
        \frac{
            \operatorname{Odds}_{
                \nu_{s,c,\betatilt}^{\hat q}
            }(A)
        }{
            \operatorname{Odds}_{\nu_{s,c}}(A)
        }
        &=
        \frac{
            e^{\betatilt
                \hat q(s,z_{\mathrm{hi}})}
        }{
            e^{\betatilt
                \hat q(s,z_{\mathrm{lo}})}
        }=
        \exp
        \left(
            \betatilt
            \left[
                \hat q(s,z_{\mathrm{hi}})
                -
                \hat q(s,z_{\mathrm{lo}})
            \right]
        \right)=
        \exp
        \left(
            \betatilt
            \left[
                \vartheta_{\mathrm{score}}
                -
                2\eps_{\mathrm{crit}}
            \right]
        \right),
    \end{aligned}
    \]
    proving \eqref{eq:sharp-two-point-odds-factor}.

    If
    $\vartheta_{\mathrm{score}}<2\eps_{\mathrm{crit}}$, the exponent is
    negative, so the tilted weighting law assigns smaller odds to the truly
    high-value point than the untilted local law.  If
    $\vartheta_{\mathrm{score}}=2\eps_{\mathrm{crit}}$, the exponent is zero,
    so the odds are unchanged.  This proves the final claims.
\end{proof}

Thus the threshold
$\Xi_{\nu_{s,c}}(A;q_{\mathrm{pop}})
    >
    2\eps_{\mathrm{crit}}$
is necessary for a uniform worst-case guarantee that exponential weighting
strictly amplifies the relative weight of a truly high-value local region under
only a supremum-norm critic-error assumption.  The result concerns the
\emph{weighting law}; translating this weighting guarantee into a guarantee for
the value of the barycentric latent executed by \QFRS{} additionally requires
the local regularity conditions developed in
Section~\ref{subsec:finite-cloud-tilting}.  The threshold is not claimed to be
necessary on every particular instance.

\subsection{True-score improvement under a learned local weighting}
\label{subsec:true-score-improvement-learned-tilt}

The robust amplification result in
Theorem~\ref{thm:exponential-tilting} is local to a separated high-value
subset of the particle law.  The next result gives a complementary global
statement: despite critic error, the learned exponential weighting can improve
the population score averaged over the entire local particle distribution.

\begin{corollary}[Population-score lower bound for learned local weighting]
    \label{cor:learned-tilt-true-score-improvement}
    Under the assumptions of part~\textnormal{(e)} of
    Theorem~\ref{thm:exponential-tilting}, fix a state $s\in \mathcal{S}$ and an anchor
    $c\in \mathcal{C}$.  Then
    \begin{equation}
        \label{eq:learned-tilt-true-score-lower}
        \int_{\gZ}
            q_{\mathrm{pop}}(s,z)
            \,\nu_{s,c,\betatilt}^{\hat q}(\mathrm dz)
        \ge
        \int_{\gZ}
            q_{\mathrm{pop}}(s,z)
            \,\nu_{s,c}(\mathrm dz)
        +
        \frac{1}{\betatilt}
        \KL
        \left(
            \nu_{s,c,\betatilt}^{\hat q}
            \,\middle\|\,
            \nu_{s,c}
        \right)
        -
        2\eps_{\mathrm{crit}}.
    \end{equation}
    Consequently, the learned weighting has strictly larger average population
    score than the untilted local law whenever
    \begin{equation}
        \label{eq:kl-gain-dominates-score-error}
        \KL
        \left(
            \nu_{s,c,\betatilt}^{\hat q}
            \,\middle\|\,
            \nu_{s,c}
        \right)
        >
        2\betatilt\eps_{\mathrm{crit}}.
    \end{equation}
\end{corollary}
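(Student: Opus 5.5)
The plan is to chain two facts already established in Theorem~\ref{thm:exponential-tilting}: the mean-score improvement identity of part~\textit{c} (in its Gibbs form, part~\textit{a}), applied to the learned score $\hat q$, and the uniform critic error bound \eqref{eq:uniform-critic-error}. These are used to move between $\hat q$ and $q_{\mathrm{pop}}$ under the two measures $\nu_{s,c,\betatilt}^{\hat q}$ and $\nu_{s,c}$.

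\textbf{Step 1: improvement under the learned score.} Apply \eqref{eq:mean-score-improvement} with $q=\hat q$, which is bounded, so Assumption~\ref{ass:bounded-tilting-score} holds for it. This gives
\[
\int \hat q\, d\nu_{s,c,\betatilt}^{\hat q} \ge \int \hat q\, d\nu_{s,c} + \tfrac{1}{\betatilt}\KL(\nu_{s,c,\betatilt}^{\hat q}\|\nu_{s,c}).
\]
Equivalently, one can combine \eqref{eq:optimizer-value-identity} with the variational lower bound $\mathrm{CE}_{\betatilt}(\hat q;s,c)\ge \int\hat q\,d\nu_{s,c}$, which is obtained by plugging $\probmeas=\nu_{s,c}$ into \eqref{eq:gibbs-variational-principle}.

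\textbf{Step 2: transfer to the population score.} By \eqref{eq:uniform-critic-error}, $|\hat q-q_{\mathrm{pop}}|\le\eps_{\mathrm{crit}}$ holds $\nu_{s,c}$-a.e. Part~\textit{b} of Theorem~\ref{thm:exponential-tilting} gives $\nu_{s,c,\betatilt}^{\hat q}\sim\nu_{s,c}$, so the same bound also holds $\nu_{s,c,\betatilt}^{\hat q}$-a.e. This mutual-absolute-continuity point is the only subtle step. We therefore get $\int q_{\mathrm{pop}}\,d\nu_{s,c,\betatilt}^{\hat q}\ge \int\hat q\,d\nu_{s,c,\betatilt}^{\hat q}-\eps_{\mathrm{crit}}$ and $\int \hat q\,d\nu_{s,c}\ge\int q_{\mathrm{pop}}\,d\nu_{s,c}-\eps_{\mathrm{crit}}$. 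Chaining these with Step~1 yields \eqref{eq:learned-tilt-true-score-lower}.

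\textbf{Step 3: the strict-improvement consequence.} The final claim follows by rearranging. The bound \eqref{eq:learned-tilt-true-score-lower} exceeds $\int q_{\mathrm{pop}}\,d\nu_{s,c}$ strictly exactly when $\betatilt^{-1}\KL>2\eps_{\mathrm{crit}}$, which is condition \eqref{eq:kl-gain-dominates-score-error}.

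No step is a genuine obstacle. The only care needed is in Step~2, to justify that the a.e.\ critic bound carries over to the tilted law, and to confirm that all integrals are finite; both follow from boundedness.
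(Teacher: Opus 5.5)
Your proposal is correct and follows the paper's argument. The paper obtains your Step 1 inequality by comparing the Gibbs optimizer $\nu_{s,c,\betatilt}^{\hat q}$ with the feasible reference law $\nu_{s,c}$ in the KL-regularized objective for $\hat q$, which is exactly how \eqref{eq:mean-score-improvement} is derived. It then moves between $\hat q$ and $q_{\mathrm{pop}}$ using the uniform critic-error bound together with $\nu_{s,c,\betatilt}^{\hat q}\ll\nu_{s,c}$, as in your Step 2, before rearranging for the strict-improvement condition.
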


\begin{proof}
    By Theorem~\ref{thm:exponential-tilting},
    $\nu_{s,c,\betatilt}^{\hat q}$ is the unique optimizer of the
    KL-regularized objective associated with the learned score $\hat q$.
    Since the untilted local law $\nu_{s,c}$ is feasible,
    \begin{equation}
        \label{eq:learned-weight-optimality-base-comparison}
        \int_{\gZ}
            \hat q(s,z)
            \,\nu_{s,c,\betatilt}^{\hat q}(\mathrm dz)
        -
        \frac{1}{\betatilt}
        \KL
        \left(
            \nu_{s,c,\betatilt}^{\hat q}
            \,\middle\|\,
            \nu_{s,c}
        \right)
        \ge
        \int_{\gZ}
            \hat q(s,z)
            \,\nu_{s,c}(\mathrm dz).
    \end{equation}

    The uniform critic-error assumption $\left\|
            \hat q(s,\cdot)
            -
            q_{\mathrm{pop}}(s,\cdot)
        \right\|_{L^\infty(\nu_{s,c})}
        \le
        \eps_{\mathrm{crit}}$
    implies
    \begin{equation}
        \label{eq:population-score-lower-under-learned-weight}
        \int_{\gZ}
            q_{\mathrm{pop}}(s,z)
            \,\nu_{s,c,\betatilt}^{\hat q}(\mathrm dz)
        \ge
        \int_{\gZ}
            \hat q(s,z)
            \,\nu_{s,c,\betatilt}^{\hat q}(\mathrm dz)
        -
        \eps_{\mathrm{crit}}.
    \end{equation}
    Indeed, the integrand satisfies
    $q_{\mathrm{pop}}(s,z)
        \ge
        \hat q(s,z)-\eps_{\mathrm{crit}}$
    $\nu_{s,c}$-almost everywhere, and
    $\nu_{s,c,\betatilt}^{\hat q}\ll\nu_{s,c}$ by
    Theorem~\ref{thm:exponential-tilting}.

    Analogously, we have that 
    \begin{equation}
        \label{eq:learned-score-lower-under-base-law}
        \int_{\gZ}
            \hat q(s,z)
            \,\nu_{s,c}(\mathrm dz)
        \ge
        \int_{\gZ}
            q_{\mathrm{pop}}(s,z)
            \,\nu_{s,c}(\mathrm dz)
        -
        \eps_{\mathrm{crit}}.
    \end{equation}

    Rearranging
    \eqref{eq:learned-weight-optimality-base-comparison} yields
    \[
        \int_{\gZ}
            \hat q(s,z)
            \,\nu_{s,c,\betatilt}^{\hat q}(\mathrm dz)
        \ge\;
        \int_{\gZ}
            \hat q(s,z)
            \,\nu_{s,c}(\mathrm dz)+
        \frac{1}{\betatilt}
        \KL
        \left(
            \nu_{s,c,\betatilt}^{\hat q}
            \,\middle\|\,
            \nu_{s,c}
        \right).
    \]
    Combining this inequality with
    \eqref{eq:population-score-lower-under-learned-weight}
    and
    \eqref{eq:learned-score-lower-under-base-law}
    yields
    \[
    \begin{aligned}
        \int_{\gZ}
            q_{\mathrm{pop}}(s,z)
            \,\nu_{s,c,\betatilt}^{\hat q}(\mathrm dz)
        \ge\;&
        \int_{\gZ}
            q_{\mathrm{pop}}(s,z)
            \,\nu_{s,c}(\mathrm dz)
+
        \frac{1}{\betatilt}
        \KL
        \left(
            \nu_{s,c,\betatilt}^{\hat q}
            \,\middle\|\,
            \nu_{s,c}
        \right)
        -
        2\eps_{\mathrm{crit}},
    \end{aligned}
    \]
    proving \eqref{eq:learned-tilt-true-score-lower}.

    Under the condition of 
    \eqref{eq:kl-gain-dominates-score-error},
   we have that  $
        \frac{1}{\betatilt}
        \KL(
            \nu_{s,c,\betatilt}^{\hat q}
           \|
            \nu_{s,c})
        -
        2\eps_{\mathrm{crit}}
        >
        0$,
    so the weighted population score is strictly larger than its value under
    the untilted local law.
\end{proof}

Corollary~\ref{cor:learned-tilt-true-score-improvement} is a statement about
the \emph{weighted average} of the population scores of local particles.  In
the implemented \QFRS{} policy, however, the tilted law is used to construct
the barycentric latent
\[
    \tilde z_{c,\betatilt}(s)
    =
    \int_{\gZ}
        z\,
        \nu_{s,c,\betatilt}^{\hat q}(\mathrm dz)
\]
in the population idealization, provided that
$\int_{\gZ}\|z\|_2\,\nu_{s,c}(\mathrm dz)<\infty$.
Under the bounded-score assumptions, the tilted density relative to
$\nu_{s,c}$ is bounded, so this condition also ensures that the tilted
barycenter exists. It holds for the Gaussian particle law. For a realized
finite cloud, the corresponding weighted sum is always well-defined.  Therefore
\eqref{eq:learned-tilt-true-score-lower} does not by itself imply an
improvement in
$q_{\mathrm{pop}}(s,\tilde z_{c,\betatilt}(s))$.
The finite-cloud results below show how local smoothness of the population
action value converts the weighted-score guarantee into a guarantee for the
executed barycentric refinement.

\subsection{Finite realized local particle clouds}
\label{subsec:finite-cloud-tilting}

The implemented local-refinement rule operates on a finite particle cloud
generated around a selected FRS anchor. Since the Gaussian perturbations are
resampled at each refinement step, this cloud is random.  In this subsection we
condition on a realized cloud and analyze the weighting and barycentric
refinement induced by that realization.  Conditional on the cloud, both the
exponential weights and the executed barycentric latent are deterministic.
High-probability guarantees over the random Gaussian construction itself are
developed separately in
Section~\ref{subsec:gaussian-particle-cloud-connection}.

Fix a state $s\in \mathcal{S}$ and an anchor $c\in \mathcal{C}$, and condition on a realized local
cloud
\[
    \mathcal B_c(s)
    :=
    (z_{c,0}(s), z_{c,1}(s), \ldots, z_{c,M}(s)).
\]
Different particle indices are treated as distinct even if two realized latent
vectors coincide.

Set
$q_{c,k}
    :=
    q\bigl(s,z_{c,k}(s)\bigr)$ for each $k\in \{0,\ldots, M\}$.
and let the vector $u$ defined such that 
$u(k):=1/(M+1)$ for $k\in\{0,\ldots, M\}$
denote uniform weighting of the indexed cloud.  Let
\[
    \Delta_{M+1}
    :=
    \left\{
        w\in[0,1]^{M+1}:
        \sum_{k=0}^{M}w_k=1
    \right\}.
\]

\begin{proposition}[Exact entropic weights on a realized local cloud]
    \label{prop:finite-cloud-softmax-duality}
    For every $\betatilt>0$, define
    \begin{equation}
        \label{eq:finite-cloud-softmax-weights}
        w_{c,k}^{q,\betatilt}(s)
        :=\textstyle
            \exp\left(
                \betatilt q_{c,k}
            \right)/
            \sum_{\ell=0}^{M}
            \exp\left(
                \betatilt q_{c,\ell}
            \right),
        \quad\forall\; k\in\{0,1,\ldots, M\}.
    \end{equation}
    Then the finite-cloud entropic evaluation
    \begin{equation}
        \label{eq:finite-cloud-entropic-value}
        \mathrm{CE}_{\betatilt}^{\mathrm{cloud}}(q;s,c)
        :=
        \frac{1}{\betatilt}
        \log
        \left[
            \frac{1}{M+1}
            \sum_{k=0}^{M}
            \exp\!\left(
                \betatilt q_{c,k}
            \right)
        \right]
    \end{equation}
    satisfies
    \begin{equation}
        \label{eq:finite-cloud-simplex-dual}
        \mathrm{CE}_{\betatilt}^{\mathrm{cloud}}(q;s,c)
        =
        \max_{w\in\Delta_{M+1}}
        \left\{
            \sum_{k=0}^{M}w_kq_{c,k}
            -
            \frac{1}{\betatilt}
            \KL(w\|u)
        \right\},
    \end{equation}
    where
    $\KL(w\|u)
        :=
        \sum_{k=0}^{M}
        w_k
        \log
        \frac{w_k}{u(k)}$.
    The unique optimizer is
    $w_c^{q,\betatilt}(s)
        :=
        \bigl(
            w_{c,k}^{q,\betatilt}(s)
        \bigr)_{k=0}^{M}$.
    In particular, the weights are non-negative:
    \begin{equation}
        \label{eq:finite-bank-positive-index-support}
        w_{c,k}^{q,\betatilt}(s)>0
        \qquad
        \text{for every }k\in \{0,1,\ldots, M\}.
    \end{equation}
    Moreover, the bounds hold:
    \begin{equation}
        \label{eq:finite-cloud-max-sandwich}
        \max_{0\le k\le M}q_{c,k}
        -
        \frac{\log(M+1)}{\betatilt}
        \le
        \mathrm{CE}_{\betatilt}^{\mathrm{cloud}}(q;s,c)
        \le
        \max_{0\le k\le M}q_{c,k},
    \end{equation}
    and
    \begin{equation}
        \label{eq:finite-cloud-expected-score-gap}
        0
        \le
        \max_{0\le k\le M}q_{c,k}
        -
        \sum_{k=0}^{M}
        w_{c,k}^{q,\betatilt}(s)q_{c,k}
        \le
        \frac{\log(M+1)}{\betatilt}.
    \end{equation}
\end{proposition}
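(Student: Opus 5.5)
The plan is to obtain every claim from Theorem~\ref{thm:exponential-tilting} by specializing it to the finite reference law $\nu_{s,c,b}=\frac{1}{M+1}\sum_{k}\delta_{z_{c,k}(s)}$. To respect the convention that indices are distinct even when latents coincide, I will work on the index set $\{0,\ldots,M\}$ with the uniform law $u$ and the score $k\mapsto q_{c,k}$. This score is trivially bounded by $\min_k q_{c,k}$ and $\max_k q_{c,k}$, so Assumption~\ref{ass:bounded-tilting-score} holds.

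\textbf{Step 1: duality and uniqueness.} Every probability vector $w$ on the indices has finite $\KL(w\|u)$, and $\KL(w\|u)$ equals the extended relative entropy of Theorem~\ref{thm:exponential-tilting}. Under $u$, the certainty equivalent of Definition~\ref{def:entropic-candidate-evaluation} is exactly $\mathrm{CE}^{\mathrm{cloud}}_{\betatilt}$. The tilted law of Definition~\ref{def:exponentially-tilted-law} has density $e^{\betatilt(q_{c,k}-\mathrm{CE})}$ with respect to $u$, so its masses are $u(k)e^{\betatilt q_{c,k}}e^{-\betatilt\mathrm{CE}}$. These are precisely the softmax weights~\eqref{eq:finite-cloud-softmax-weights}. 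Part~\textit{a} of Theorem~\ref{thm:exponential-tilting} (the Gibbs identity~\eqref{eq:gibbs-pythagorean-identity}) then gives~\eqref{eq:finite-cloud-simplex-dual} together with the unique optimizer. Strict positivity~\eqref{eq:finite-bank-positive-index-support} follows directly because each exponential is positive; it is also the finite form of part~\textit{b}.

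\textbf{Step 2: the sandwich~\eqref{eq:finite-cloud-max-sandwich}.} For the upper bound, note that $\frac{1}{M+1}\sum_k e^{\betatilt q_{c,k}}\le e^{\betatilt\max_k q_{c,k}}$. For the lower bound, I will either keep only the maximizing term of the sum, or evaluate the dual at the point mass $w=\delta_{k^\star}$. In the latter case $\KL(\delta_{k^\star}\|u)=\log(M+1)$, which yields $\max_k q_{c,k}-\log(M+1)/\betatilt$.

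\textbf{Step 3: the expected-score gap~\eqref{eq:finite-cloud-expected-score-gap}.} The left inequality holds because a convex combination cannot exceed the maximum. For the right inequality, I will use~\eqref{eq:optimizer-value-identity}:
\[
\sum_k w_{c,k}^{q,\betatilt} q_{c,k}=\mathrm{CE}^{\mathrm{cloud}}_{\betatilt}+\betatilt^{-1}\KL(w^{q,\betatilt}_c\|u)\ge \mathrm{CE}^{\mathrm{cloud}}_{\betatilt}.
\]
Combining this with the lower sandwich bound gives the claim.

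\textbf{Main obstacle.} The only delicate point is bookkeeping rather than analysis. The finite law must be defined on indices, not on points of $\gZ$, so that coincident particles do not merge atoms and the KL equals $\sum_k w_k\log((M+1)w_k)$. Once that identification is fixed, every claim is either a direct instance of Theorem~\ref{thm:exponential-tilting} or a one-line bound.
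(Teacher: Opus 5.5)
Your proposal is correct and follows essentially the same route as the paper: you specialize Theorem~\ref{thm:exponential-tilting} to the uniform law on the index set $\{0,\ldots,M\}$ to get the softmax weights, the duality and uniqueness, then bound the log-mean-exp termwise by its maximal summand, and finally combine the optimizer-value identity~\eqref{eq:optimizer-value-identity} with the lower sandwich bound. One small nit: Assumption~\ref{ass:bounded-tilting-score} requires $q_{\min}<q_{\max}$ strictly, so when all $q_{c,k}$ coincide you should take slightly looser constants than $\min_k q_{c,k}$ and $\max_k q_{c,k}$; your point-mass alternative for the lower bound, using $\KL(\delta_{k^\star}\|u)=\log(M+1)$, is equally valid.
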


\begin{proof}
Apply Theorem~\ref{thm:exponential-tilting} to the finite index set $[M]$, equipped with the uniform probability mass function $u(k)=1/(M+1)$, and score $k\mapsto q(s,z_{
c,k}(s))$.
    Its exponentially tilted law is
    \[
        w_{c,k}^{q,\betatilt}(s)
        =
            e^{\betatilt q_{c,k}}/\textstyle
            \sum_{\ell=0}^{M}
            e^{\betatilt q_{c,\ell}},
    \]
    proving \eqref{eq:finite-cloud-softmax-weights}.  The variational identity
    \eqref{eq:finite-cloud-simplex-dual} is the finite-space specialization of
    \eqref{eq:gibbs-variational-principle}, and uniqueness follows from
    Theorem~\ref{thm:exponential-tilting}.  Positivity follows immediately
    from the exponential form.

    Let
    $\bar{q}_{\max}
        :=
        \max_{0\leq k\leq M}q_{c,k}$.
    Since every summand in
    \eqref{eq:finite-cloud-entropic-value} is at most
    $e^{\betatilt \bar{q}_{\max}}$,
    \[
        \frac{1}{M+1}
        \sum_{k=0}^{M}
        e^{\betatilt q_{c,k}}
        \le
        e^{\betatilt \bar{q}_{\max}},
    \]
    which gives the upper bound in
    \eqref{eq:finite-cloud-max-sandwich}.  Since at least one summand equals
    $e^{\betatilt \bar{q}_{\max}}$,
    \[
        \frac{1}{M+1}
        \sum_{k=0}^{M}
        e^{\betatilt q_{c,k}}
        \geq
        \frac{1}{M+1}
        e^{\betatilt \bar{q}_{\max}},
    \]
    giving the lower bound.

    Finally, the optimizer identity gives
    \[
    \begin{aligned}
        \mathrm{CE}_{\betatilt}^{\mathrm{cloud}}(q;s,c)
        &=
        \sum_{k=0}^{M}
        w_{c,k}^{q,\betatilt}(s)q_{c,k}
        -
        \frac{1}{\betatilt}
        \KL
        \left(
            w_c^{q,\betatilt}(s)
            \,\middle\|\,
            u
        \right)\leq
        \sum_{k=0}^{M}
        w_{c,k}^{q,\betatilt}(s)q_{c,k}.
    \end{aligned}
    \]
    Combining this with the lower bound in
    \eqref{eq:finite-cloud-max-sandwich} yields
    \[
        \sum_{k=0}^{M}
        w_{c,k}^{q,\betatilt}(s)q_{c,k}
        \geq
        \bar{q}_{\max}
        -
        \frac{\log(M+1)}{\betatilt}.
    \]
    The reverse inequality follows because a convex combination cannot exceed
    the maximum component, proving
    \eqref{eq:finite-cloud-expected-score-gap}.
\end{proof}

The log-mean-exp in
\eqref{eq:finite-cloud-entropic-value} is the optimal value of the
\emph{auxiliary entropic weighting problem}.  It is not, in general, the
value of the barycentric latent executed by \QFRS{}, nor is it used as the
Bellman target in the implemented method.  The next results connect the
entropic weighting guarantees to the actual local refinement.

\begin{corollary}[Population-score error of learned local weights]
    \label{cor:finite-cloud-selection-error}
    Let
    $q_{c,k}^{\mathrm{pop}}
        :=
        q_{\mathrm{pop}}
        \bigl(s,z_{c,k}(s)\bigr)$
    be population scores and let
    $\hat q_{c,k}
        :=
        \hat q
        \bigl(s,z_{c,k}(s)\bigr)$
    satisfy
    \begin{equation}
        \label{eq:finite-bank-uniform-score-error}
        \max_{0\le k\le M}|
            \hat q_{c,k}
            -
            q_{c,k}^{\mathrm{pop}}|
        \leq
        \eps_{\mathrm{crit}}.
    \end{equation}
    Let
    $\widehat w_{c,k}^{\betatilt}(s)
        :=
            e^{\betatilt\hat q_{c,k}}
/\textstyle\sum_{\ell=0}^{M}
            e^{\betatilt\hat q_{c,\ell}}$.
    Then
    \begin{equation}
        \label{eq:finite-cloud-true-selection-error}
        \max_{0\le k\le M}
            q_{c,k}^{\mathrm{pop}}
        -
        \sum_{k=0}^{M}
            \widehat w_{c,k}^{\betatilt}(s)
            q_{c,k}^{\mathrm{pop}}
        \le
        \frac{\log(M+1)}{\betatilt}
        +
        2\eps_{\mathrm{crit}}.
    \end{equation}
\end{corollary}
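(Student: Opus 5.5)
The bound splits into a finite-temperature term and a critic-error term. The plan is to pass from the population maximum to the learned maximum, apply the learned-score version of Proposition~\ref{prop:finite-cloud-softmax-duality}, and then return to population scores under the learned weights. Each transfer between $q^{\mathrm{pop}}$ and $\hat q$ costs at most $\eps_{\mathrm{crit}}$, by \eqref{eq:finite-bank-uniform-score-error}.

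First, because $q_{c,k}^{\mathrm{pop}}\le \hat q_{c,k}+\eps_{\mathrm{crit}}$ for every index $k$, maximizing over $k$ gives
\[
\max_{0\le k\le M} q_{c,k}^{\mathrm{pop}} \le \max_{0\le k\le M}\hat q_{c,k}+\eps_{\mathrm{crit}}.
\]
Second, the weights $\widehat w_{c,k}^{\betatilt}(s)$ are exactly the softmax weights \eqref{eq:finite-cloud-softmax-weights} built from the score $\hat q$. We can therefore invoke \eqref{eq:finite-cloud-expected-score-gap} from Proposition~\ref{prop:finite-cloud-softmax-duality} with $q$ replaced by $\hat q$. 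Only the realized values $\hat q_{c,k}$ enter that proposition, so no boundedness or measurability issue arises. This gives
\[
\max_{0\le k\le M}\hat q_{c,k}\le \sum_{k=0}^{M}\widehat w_{c,k}^{\betatilt}(s)\,\hat q_{c,k}+\frac{\log(M+1)}{\betatilt}.
\]
Third, the weights $\widehat w_{c,k}^{\betatilt}(s)$ are nonnegative and sum to one. Averaging the pointwise bound $\hat q_{c,k}\le q_{c,k}^{\mathrm{pop}}+\eps_{\mathrm{crit}}$ against them therefore yields
\[
\sum_{k}\widehat w_{c,k}^{\betatilt}(s)\,\hat q_{c,k}\le \sum_{k}\widehat w_{c,k}^{\betatilt}(s)\,q_{c,k}^{\mathrm{pop}}+\eps_{\mathrm{crit}}.
\]
Chaining these three inequalities and rearranging gives \eqref{eq:finite-cloud-true-selection-error}.

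Alternatively, one could use the finite-cloud specialization of Corollary~\ref{cor:learned-tilt-true-score-improvement} or part~e of Theorem~\ref{thm:exponential-tilting}. The direct chain above is simpler and gives exactly the stated constants.

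No step is a real obstacle. The only point needing care is the second step: we must apply the proposition to the learned score $\hat q$ and not to $q^{\mathrm{pop}}$. The reason is that the weights are a softmax of $\hat q$, so the finite-temperature bound $\log(M+1)/\betatilt$ holds only relative to the learned maximum. The two critic transfers before and after that step are what produce the factor $2\eps_{\mathrm{crit}}$.
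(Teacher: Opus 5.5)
Your proposal is correct and matches the paper's proof step for step. The paper uses the same three-link chain: pass from the population maximum to the learned maximum, apply the finite-cloud softmax gap of Proposition~\ref{prop:finite-cloud-softmax-duality} to the learned scores $\hat q$, and then transfer back to population scores under the learned weights, with each transfer costing $\eps_{\mathrm{crit}}$.
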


\begin{proof}
    The uniform score-error bound gives
    \begin{equation}
        \label{eq:finite-bank-max-score-transfer}
        \max_k q_{c,k}^{\mathrm{pop}}
        \le
        \max_k \hat q_{c,k}
        +
        \eps_{\mathrm{crit}}.
    \end{equation}
    Proposition~\ref{prop:finite-cloud-softmax-duality}, applied to the learned
    scores, gives
    \begin{equation}
        \label{eq:finite-bank-learned-soft-gap}
        \max_k\hat q_{c,k}
        -
        \sum_{k=0}^{M}
        \widehat w_{c,k}^{\betatilt}(s)
        \hat q_{c,k}
        \le
        \frac{\log(M+1)}{\betatilt}.
    \end{equation}
    Finally, we have that 
    \begin{equation}
        \label{eq:finite-bank-average-score-transfer}
        \sum_{k=0}^{M}
        \widehat w_{c,k}^{\betatilt}(s)
        \hat q_{c,k}
        \le
        \sum_{k=0}^{M}
        \widehat w_{c,k}^{\betatilt}(s)
        q_{c,k}^{\mathrm{pop}}
        +
        \eps_{\mathrm{crit}},
    \end{equation}
    because the weights are nonnegative and sum to one.  Combining
    \eqref{eq:finite-bank-max-score-transfer}--
    \eqref{eq:finite-bank-average-score-transfer}
    proves \eqref{eq:finite-cloud-true-selection-error}.
\end{proof}

\paragraph{From weighted particle values to the executed barycenter.}
The preceding corollary controls the population score averaged under the
learned local weights.  \QFRS{}, however, executes the weighted latent
\begin{equation}
    \label{eq:finite-cloud-executed-barycenter}
    \tilde z_{c,\betatilt}(s)
    :=
    \sum_{k=0}^{M}
    \widehat w_{c,k}^{\betatilt}(s)
    z_{c,k}(s).
\end{equation}
The next proposition gives the regularity condition required to transfer the
weighted-score guarantee to this executed latent.

    Define the weighted particle variance
    \begin{equation}
        \label{eq:weighted-particle-variance}
        \mathcal V_{c,\betatilt}(s)
        :=
        \sum_{k=0}^{M}
        \widehat w_{c,k}^{\betatilt}(s)
        \left\|
            z_{c,k}(s)
            -
            \tilde z_{c,\betatilt}(s)
        \right\|_2^2.
    \end{equation}
\begin{proposition}[Value of the executed barycentric refinement]
    \label{prop:barycentric-refinement-value}
    Suppose $q_{\mathrm{pop}}(s,\cdot)$ is differentiable on a neighborhood of
    $\operatorname{conv}
        \mathcal B_c(s)$
    and has $L_{\mathrm{sm}}$-Lipschitz gradient there.  Consequently, for every
    $z,\tilde z\in\operatorname{conv}\mathcal B_c(s)$, the bound holds:
    \begin{equation}
        \label{eq:local-score-smoothness}
        q_{\mathrm{pop}}(s,\tilde z)
        \leq
        q_{\mathrm{pop}}(s,z)
        +
        \nabla_z q_{\mathrm{pop}}(s,z)^\top
        (\tilde z-z)
        +
        \frac{L_{\mathrm{sm}}}{2}
        \|\tilde z-z\|_2^2.
    \end{equation}
    Then
    \begin{equation}
        \label{eq:barycenter-weighted-value-lower}
        q_{\mathrm{pop}}
        \left(
            s,
            \tilde z_{c,\betatilt}(s)
        \right)
        \ge
        \sum_{k=0}^{M}
        \widehat w_{c,k}^{\betatilt}(s)
        q_{\mathrm{pop}}
        \left(
            s,
            z_{c,k}(s)
        \right)
        -
        \frac{L_{\mathrm{sm}}}{2}
        \mathcal V_{c,\betatilt}(s).
    \end{equation}
    Moreover, under the uniform critic-error condition
    \eqref{eq:finite-bank-uniform-score-error},
    \begin{equation}
        \label{eq:barycenter-best-particle-gap}
        \begin{aligned}
        &
        \max_{0\le k\le M}
        q_{\mathrm{pop}}
        \left(
            s,z_{c,k}(s)
        \right)
        -
        q_{\mathrm{pop}}
        \left(
            s,\tilde z_{c,\betatilt}(s)
        \right)\leq
        \frac{\log(M+1)}{\betatilt}
        +
        2\eps_{\mathrm{crit}}
        +
        \frac{L_{\mathrm{sm}}}{2}
        \mathcal V_{c,\betatilt}(s).
        \end{aligned}
    \end{equation}
\end{proposition}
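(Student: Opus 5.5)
The plan is to apply the smoothness inequality \eqref{eq:local-score-smoothness} with the barycenter $\tilde z:=\tilde z_{c,\betatilt}(s)$ as the base point and each particle as the comparison point, and then average with the learned weights. For the first claim, fix $k$ and write $z_k:=z_{c,k}(s)$. Both $z_k$ and $\tilde z$ lie in $\operatorname{conv}\mathcal B_c(s)$: the barycenter does because the weights $\widehat w_{c,k}^{\betatilt}(s)$ are nonnegative and sum to one, by Proposition~\ref{prop:finite-cloud-softmax-duality}. Applying \eqref{eq:local-score-smoothness} with base $z=\tilde z$ and comparison point $z_k$ gives
\[
q_{\mathrm{pop}}(s,z_k)\le q_{\mathrm{pop}}(s,\tilde z)+\nabla_z q_{\mathrm{pop}}(s,\tilde z)^\top(z_k-\tilde z)+\tfrac{L_{\mathrm{sm}}}{2}\|z_k-\tilde z\|_2^2 .
\]
I will multiply by $\widehat w_{c,k}^{\betatilt}(s)$ and sum over $k$. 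The linear term vanishes exactly, since $\sum_k \widehat w_{c,k}^{\betatilt}(s)(z_k-\tilde z)=0$ by the definition \eqref{eq:finite-cloud-executed-barycenter}. The quadratic terms sum to $\frac{L_{\mathrm{sm}}}{2}\mathcal V_{c,\betatilt}(s)$ by \eqref{eq:weighted-particle-variance}. Rearranging yields \eqref{eq:barycenter-weighted-value-lower}. This is the Jensen-type step for a function with an upper quadratic model, and it is the only place where smoothness enters.

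One point needs care. The inequality \eqref{eq:local-score-smoothness} is the standard descent lemma: it follows from differentiability with an $L_{\mathrm{sm}}$-Lipschitz gradient along the segment joining the two points. That segment stays in the convex hull, because the hull is convex, so the lemma applies. It holds in either ordering of the two points, which justifies taking the barycenter as the base point, and the statement already grants it.

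For \eqref{eq:barycenter-best-particle-gap}, I split the gap into two parts:
\[
\max_k q^{\mathrm{pop}}_{c,k}-q_{\mathrm{pop}}(s,\tilde z)=\Big(\max_k q^{\mathrm{pop}}_{c,k}-\sum_k \widehat w_{c,k}^{\betatilt}q^{\mathrm{pop}}_{c,k}\Big)+\Big(\sum_k \widehat w_{c,k}^{\betatilt}q^{\mathrm{pop}}_{c,k}-q_{\mathrm{pop}}(s,\tilde z)\Big).
\]
The first bracket is at most $\log(M+1)/\betatilt+2\eps_{\mathrm{crit}}$ by Corollary~\ref{cor:finite-cloud-selection-error}, using the critic-error condition \eqref{eq:finite-bank-uniform-score-error}. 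The second bracket is at most $\frac{L_{\mathrm{sm}}}{2}\mathcal V_{c,\betatilt}(s)$ by the first part just proved. Adding the two bounds completes the proof.

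I do not expect a real obstacle. The only subtle point is to place the base of the smoothness inequality at the barycenter, not at the particles, so that the gradient term cancels without requiring any bound on $\nabla q_{\mathrm{pop}}$.
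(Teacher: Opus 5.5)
Your proposal is correct and matches the paper's proof essentially step for step: the paper also applies the smoothness inequality with the barycenter as the base point and each particle as the comparison point, averages with the learned weights so that the first-order term cancels, and then adds the bound from Corollary~\ref{cor:finite-cloud-selection-error} to obtain the best-particle gap. Your remark that the barycenter lies in the convex hull, which is what makes the smoothness inequality applicable there, is a detail the paper leaves implicit.
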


\begin{proof}
    For each $k$, apply
    \eqref{eq:local-score-smoothness} with
    $z
        =
        \tilde z_{c,\betatilt}(s)$
        and
        $\tilde z
        =
        z_{c,k}(s)$.
    This yields
    \[
    \begin{aligned}
        q_{\mathrm{pop}}
        \left(
            s,z_{c,k}(s)
        \right)
        \leq&
        q_{\mathrm{pop}}
        \left(
            s,\tilde z_{c,\betatilt}(s)
        \right)+
        \nabla_zq_{\mathrm{pop}}
        \left(
            s,\tilde z_{c,\betatilt}(s)
        \right)^\top
        \left(
            z_{c,k}(s)
            -
            \tilde z_{c,\betatilt}(s)
        \right)\\ &\quad +
        \frac{L_{\mathrm{sm}}}{2}
        \left\|
            z_{c,k}(s)
            -
            \tilde z_{c,\betatilt}(s)
        \right\|_2^2 .
    \end{aligned}
    \]
    Multiply by
    $\widehat w_{c,k}^{\betatilt}(s)$ and sum over
    $k\in\{0,\ldots,M\}$ so that, by the definition of the barycenter, we deduce that
    \[
        \sum_{k=0}^{M}
        \widehat w_{c,k}^{\betatilt}(s)
        \left(
            z_{c,k}(s)
            -
            \tilde z_{c,\betatilt}(s)
        \right)
        =
        0.
    \]
    Hence the first-order term vanishes, yielding
    \[
        \sum_{k=0}^{M}
        \widehat w_{c,k}^{\betatilt}(s)
        q_{\mathrm{pop}}
        \left(
            s,z_{c,k}(s)
        \right)
        \le
        q_{\mathrm{pop}}
        \left(
            s,\tilde z_{c,\betatilt}(s)
        \right)
        +
        \frac{L_{\mathrm{sm}}}{2}
        \mathcal V_{c,\betatilt}(s).
    \]
    Rearranging proves
    \eqref{eq:barycenter-weighted-value-lower}.

    Corollary~\ref{cor:finite-cloud-selection-error} gives
    \[
        \max_k
        q_{\mathrm{pop}}
        \left(
            s,z_{c,k}(s)
        \right)
        \le
        \sum_{k=0}^{M}
        \widehat w_{c,k}^{\betatilt}(s)
        q_{\mathrm{pop}}
        \left(
            s,z_{c,k}(s)
        \right)
        +
        \frac{\log(M+1)}{\betatilt}
        +
        2\eps_{\mathrm{crit}}.
    \]
    Substituting
    \eqref{eq:barycenter-weighted-value-lower}
    proves \eqref{eq:barycenter-best-particle-gap}.
\end{proof}

\paragraph{Connection to the ideal reach-$\rho$ class.}
The preceding proposition compares the executed barycentric refinement with the
best particle in the realized local cloud.  The particle theory of
Section~\ref{sec:particle-bias-complexity-frontier}, however, is stated relative
to the ideal reach-$\rho$ neighborhood around the selected anchor.  We therefore
separate the remaining gap into two pieces: the error from representing this
continuous local neighborhood by finitely many sampled particles, and the error
from combining those particles through learned entropic weights.  The following
corollary makes this decomposition explicit.

\begin{corollary}[From ideal local reach to the executed refinement]
    \label{cor:local-reach-barycenter-error}
    Let
  $\gZ_{\rho,c}(s)
        :=
         B_2
        \left(
            z_c(s),\rho
        \right)$
    denote the ideal reach-$\rho$ neighborhood of the selected anchor.
    Define the realized local representation error
    \begin{equation}
        \label{eq:local-realized-representation-error}
        \eps_{\mathrm{rep}}
        \left(
            s,c,\mathcal B_c(s);\rho
        \right)
        :=
        \left[
            \sup_{z\in\gZ_{\rho,c}(s)}
                q_{\mathrm{pop}}(s,z)
            -
            \max_{0\le k\le M}
                q_{\mathrm{pop}}
                \left(
                    s,z_{c,k}(s)
                \right)
        \right]_+ .
    \end{equation}
    Under the smoothness assumption of
    Proposition~\ref{prop:barycentric-refinement-value} and the uniform critic-error condition~\ref{eq:finite-bank-uniform-score-error}, for $\betatilt>0$, we have that 
    \begin{equation}
        \label{eq:local-reach-executed-refinement-bound}
        \sup_{z\in\gZ_{\rho,c}(s)}
            q_{\mathrm{pop}}(s,z)
        -
        q_{\mathrm{pop}}
        \left(
            s,\tilde z_{c,\betatilt}(s)
        \right)
\leq
        \eps_{\mathrm{rep}}
        \left(
            s,c,\mathcal B_c(s);\rho
        \right)
        +
        \frac{\log(M+1)}{\betatilt}
        +
        2\eps_{\mathrm{crit}}
        +
        \frac{L_{\mathrm{sm}}}{2}
        \mathcal V_{c,\betatilt}(s).
    \end{equation}
\end{corollary}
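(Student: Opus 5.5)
The bound follows by adding and subtracting the value of the best realized particle and then invoking two results already established. Set
\[
    q^\star_{\rho,c}(s):=\sup_{z\in\gZ_{\rho,c}(s)}q_{\mathrm{pop}}(s,z),
    \qquad
    \bar q_{\max}:=\max_{0\le k\le M}q_{\mathrm{pop}}\bigl(s,z_{c,k}(s)\bigr).
\]
Then write the left-hand side of \eqref{eq:local-reach-executed-refinement-bound} as
\[
    \bigl(q^\star_{\rho,c}(s)-\bar q_{\max}\bigr)
    +
    \bigl(\bar q_{\max}-q_{\mathrm{pop}}(s,\tilde z_{c,\betatilt}(s))\bigr).
\]
The first bracket concerns representation. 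Because $x\le[x]_+$, it is at most $\eps_{\mathrm{rep}}(s,c,\mathcal B_c(s);\rho)$ by definition \eqref{eq:local-realized-representation-error}.

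The second bracket is exactly what Proposition~\ref{prop:barycentric-refinement-value} controls. Its smoothness hypothesis holds on a neighborhood of $\operatorname{conv}\mathcal B_c(s)$, and we have the uniform critic-error condition \eqref{eq:finite-bank-uniform-score-error}. Under these, \eqref{eq:barycenter-best-particle-gap} bounds the second bracket by
\[
    \frac{\log(M+1)}{\betatilt}+2\eps_{\mathrm{crit}}+\frac{L_{\mathrm{sm}}}{2}\mathcal V_{c,\betatilt}(s).
\]
Summing the two bounds gives the claim.

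The one step needing care is the case $q^\star_{\rho,c}(s)<\bar q_{\max}$. This can happen because Gaussian particles may lie outside the ball $B_2(z_c(s),\rho)$, so a realized particle can beat the best point of the ball. In that case the first bracket is negative, and bounding it by the positive part is still valid, so the decomposition survives. I would also point out that the supremum is finite under the standing boundedness of $q_{\mathrm{pop}}$ (Assumption~\ref{ass:bounded-tilting-score}), so the subtraction is well defined. Nothing more is needed: the smoothness is used only on the convex hull of the cloud, and it does not have to hold on the ideal ball.
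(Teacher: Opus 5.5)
Your proof is correct and is essentially the paper's argument: you bound the supremum over $\gZ_{\rho,c}(s)$ by the best particle value plus $\eps_{\mathrm{rep}}$ (using $x\le[x]_+$), then control the remaining gap with \eqref{eq:barycenter-best-particle-gap} from Proposition~\ref{prop:barycentric-refinement-value}. One small aside: Assumption~\ref{ass:bounded-tilting-score} only bounds $q$ $\nu_{s,c}$-almost everywhere, so for a discrete cloud law it does not by itself make the supremum over the ball finite. You do not need finiteness, however, because an infinite supremum gives $\eps_{\mathrm{rep}}=+\infty$ and the bound holds trivially.
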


\begin{proof}
    By definition of
    \eqref{eq:local-realized-representation-error}, we have that 
    \[
        \sup_{z\in\gZ_{\rho,c}(s)}
        q_{\mathrm{pop}}(s,z)
        \le
        \max_k
        q_{\mathrm{pop}}
        \left(
            s,z_{c,k}(s)
        \right)
        +
        \eps_{\mathrm{rep}}
        \left(
            s,c,\mathcal B_c(s);\rho
        \right).
    \]
    Subtract
    $q_{\mathrm{pop}}
        (s,\tilde z_{c,\betatilt}(s))$
    from both sides and apply
    \eqref{eq:barycenter-best-particle-gap} to conclude the proof.
\end{proof}

Thus, conditional on the selected anchor, the gap between the best behavior
available in the ideal reach-$\rho$ neighborhood and the latent actually
executed by \QFRS{} decomposes into finite-particle representation error,
critic error, finite-temperature weighting error, and barycentric interpolation
error.

\subsection{Sub-optimality bound for hierarchical \QFRS{}}
\label{subsec:hierarchical-anchorq-control}

The preceding results characterize local refinement within any fixed anchor
neighborhood. We now combine this local analysis with the support-gap
decomposition from Theorems~\ref{thm:general-mdp-abstraction-price}
and~\ref{thm:particle-bias-complexity-frontier} to obtain a sub-optimality
bound for the complete hierarchical \QFRS{} policy. At each state, the policy
samples a local particle cloud around every FRS anchor, forms one
value-tilted barycentric candidate per cloud, and then uses the critic to
select among these refined candidates. Only the selected candidate is
executed through the frozen VLA.

For each state $s\in\gS$ and anchor $c\in\gC$, define the ideal
reach-$\rho$ neighborhood
$\gZ_{\rho,c}(s):= B_2(z_c(s),\rho)$ and its best unrestricted
action value
\[
    Q_{\rho,c}^\star(s)
    :=
    \sup_{z\in\gZ_{\rho,c}(s)}Q^\star(s,z).
\]
The union of these local neighborhoods is
$\gZ_\rho(s):=\bigcup_{c\in\gC}\gZ_{\rho,c}(s)$, so
\begin{equation}
    \label{eq:reach-value-over-anchors}
    Q_\rho^\star(s)
    :=
    \sup_{z\in\gZ_\rho(s)}Q^\star(s,z)
    =
    \max_{c\in\gC}Q_{\rho,c}^\star(s).
\end{equation}
Recall the corresponding support gap
\begin{equation}
    \label{eq:reach-support-gap-recall}
    \Delta_\rho(s):=V^\star(s)-Q_\rho^\star(s).
\end{equation}

Let
\[
    \mathbf B_s
    :=(\mathbf B_{s,c})_{c\in\gC},
    \qquad
    \mathbf B_{s,c}
    :=(Z_{s,c,0},\ldots,Z_{s,c,M}),
\]
denote the joint collection of random local particle clouds, where
$Z_{s,c,0}=z_c(s)$. For a realization
$b=(b_c)_{c\in\gC}$, with $b_c=(z_{c,0},\ldots,z_{c,M})$, define
\[
    \widehat w_{c,k}(s,b)
    :=
    \frac{
        \exp\!\left(\betatilt Q_\psi(s,z_{c,k})\right)
    }{
        \sum_{\ell=0}^{M}
        \exp\!\left(\betatilt Q_\psi(s,z_{c,\ell})\right)
    },
    \qquad k=0,\ldots,M,
\]
where $\betatilt>0$. The corresponding local barycentric candidate is
\begin{equation}
    \label{eq:hierarchical-local-barycenters}
    \mathcal T_{\betatilt,c}(s,b)
    :=
    \sum_{k=0}^{M}\widehat w_{c,k}(s,b)z_{c,k}.
\end{equation}
Weights are normalized separately within each cloud; particles associated
with different anchors are not pooled into a common tilted distribution.

After constructing all candidates, the greedy selector chooses
\begin{equation}
    \label{eq:hierarchical-refined-selector}
    \widehat c(s,b)
    \in
    \arg\max_{c\in\gC}
    Q_\psi\!\left(s,\mathcal T_{\betatilt,c}(s,b)\right),
\end{equation}
with a fixed measurable tie-breaking rule. The executed latent is
\begin{equation}
    \label{eq:hierarchical-executed-barycenter}
    \mathcal T_{\betatilt}(s,b)
    :=
    \mathcal T_{\betatilt,\widehat c(s,b)}(s,b).
\end{equation}
Unlike a selector applied to the unrefined anchors, $\widehat c(s,b)$
depends on the realized particle clouds through their refined candidates.

Define the resulting \emph{refined-candidate selection error} by
\begin{equation}
    \label{eq:refined-candidate-selection-error}
    \eps_{\mathrm{sel}}(s,b)
    :=
    \max_{c\in\gC}
    Q^\star\!\left(s,\mathcal T_{\betatilt,c}(s,b)\right)
    -
    Q^\star\!\left(s,\mathcal T_{\betatilt}(s,b)\right)
    \geq 0.
\end{equation}
This measures the loss from choosing among the realized refined candidates
using the learned critic rather than their true values. Also define the
critic error at these candidates by
\begin{equation}
    \label{eq:hierarchical-selection-critic-error}
    \eps_{\mathrm{crit,sel}}(s,b)
    :=
    \max_{c\in\gC}
    \left|
        Q_\psi\!\left(s,\mathcal T_{\betatilt,c}(s,b)\right)
        -
        Q^\star\!\left(s,\mathcal T_{\betatilt,c}(s,b)\right)
    \right|.
\end{equation}
Greedy selection implies
\begin{equation}
    \label{eq:hierarchical-selection-error-bound}
    \eps_{\mathrm{sel}}(s,b)
    \leq 2\eps_{\mathrm{crit,sel}}(s,b).
\end{equation}
Accuracy at the sampled particles alone does not imply accuracy at their
barycenters, so this error is distinguished from the particle-level error
below.

For the joint realization $b$, define the representation error relative to
the union of ideal local neighborhoods by
\begin{equation}
    \label{eq:hierarchical-local-representation-error}
    \eps_{\mathrm{rep}}(s,b)
    :=
    \left[
        Q_\rho^\star(s)
        -
        \max_{\substack{c\in\gC\\0\leq k\leq M}}
        Q^\star(s,z_{c,k})
    \right]_+,
\end{equation}
and the maximum critic error on the sampled particles by
\begin{equation}
    \label{eq:hierarchical-cloud-critic-error}
    \eps_{\mathrm{crit}}(s,b)
    :=
    \max_{\substack{c\in\gC\\0\leq k\leq M}}
    \left|Q_\psi(s,z_{c,k})-Q^\star(s,z_{c,k})\right|.
\end{equation}
The maximum over all sampled particles in the representation error is an
analytical benchmark, not an additional action-selection step. The positive
part keeps the comparison valid even when particles fall outside the ideal
reach-$\rho$ neighborhoods.

For each cloud, define its tilted particle variance by
\[
    \mathcal V_{\betatilt,c}(s,b)
    :=
    \sum_{k=0}^{M}
    \widehat w_{c,k}(s,b)
    \left\|z_{c,k}-\mathcal T_{\betatilt,c}(s,b)\right\|_2^2,
\]
and use the uniform within-cloud bound
\begin{equation}
    \label{eq:hierarchical-cloud-variance}
    \mathcal V_{\betatilt}(s,b)
    :=\max_{c\in\gC}\mathcal V_{\betatilt,c}(s,b).
\end{equation}
This is the largest local tilted variance, not the variance of a distribution
that mixes particles across anchors.

\begin{assumption}
\label{ass:smoothness_mdp}
Consider the discounted latent-action MDP induced by the frozen VLA, with
bounded rewards and discount factor $\gamma\in[0,1)$. All sampled particles
and refined candidates are admissible latent actions. For every state $s$
and almost every realized joint collection $b$, the function
$Q^\star(s,\cdot)$ has $L_{\mathrm{sm}}$-Lipschitz gradient on
$\operatorname{conv}(b_c)$ for every $c\in\gC$.
\end{assumption}

\begin{theorem}[Sub-optimality decomposition for hierarchical \QFRS{}]
\label{thm:hierarchical-anchorq-control}
Suppose Assumption~\ref{ass:smoothness_mdp} holds. Fix the learned critic,
the state-conditioned particle-sampling rule, and $\betatilt>0$. Define the
stationary randomized policy $\pi_{\AQ}$ to sample $\mathbf B_s$, form all
local barycentric candidates, and execute
$\mathcal T_{\betatilt}(s,\mathbf B_s)$ using the greedy selector in
\eqref{eq:hierarchical-refined-selector}. Then, for every state $s$,
\begin{equation}
\label{eq:hierarchical-one-step-control-gap}
\begin{aligned}
    0
    &\leq
    V^\star(s)
    -
    \mathbb E_{\mathbf B_s}
    \left[Q^\star\!\left(s,\mathcal T_{\betatilt}(s,\mathbf B_s)\right)\right]
    \\
    &\leq
    \Delta_\rho(s)
    +\mathbb E_{\mathbf B_s}
        [\eps_{\mathrm{sel}}(s,\mathbf B_s)]
    +\mathbb E_{\mathbf B_s}
        [\eps_{\mathrm{rep}}(s,\mathbf B_s)]
    \\
    &\quad
    +\frac{\log(M+1)}{\betatilt}
    +2\mathbb E_{\mathbf B_s}
        [\eps_{\mathrm{crit}}(s,\mathbf B_s)]
    +\frac{L_{\mathrm{sm}}}{2}
        \mathbb E_{\mathbf B_s}
        [\mathcal V_{\betatilt}(s,\mathbf B_s)].
\end{aligned}
\end{equation}
Define
\begin{align*}
    \bar\Delta_\rho
    &:=\sup_s\Delta_\rho(s),
    &\bar\eps_{\mathrm{sel}}
    &:=\sup_s\mathbb E_{\mathbf B_s}
        [\eps_{\mathrm{sel}}(s,\mathbf B_s)],
    \\
    \bar\eps_{\mathrm{rep}}
    &:=\sup_s\mathbb E_{\mathbf B_s}
        [\eps_{\mathrm{rep}}(s,\mathbf B_s)],
    &\bar\eps_{\mathrm{crit}}
    &:=\sup_s\mathbb E_{\mathbf B_s}
        [\eps_{\mathrm{crit}}(s,\mathbf B_s)],
    \\
    \bar{\mathcal V}_{\betatilt}
    &:=\sup_s\mathbb E_{\mathbf B_s}
        [\mathcal V_{\betatilt}(s,\mathbf B_s)],
    &\bar\eps_{\mathrm{crit,sel}}
    &:=\sup_s\mathbb E_{\mathbf B_s}
        [\eps_{\mathrm{crit,sel}}(s,\mathbf B_s)].
\end{align*}
Whenever these quantities are finite, the policy satisfies
\begin{equation}
\label{eq:hierarchical-anchorq-control-bound}
    \left\|V^\star-V^{\pi_{\AQ}}\right\|_\infty
    \leq
    \frac{
        \bar\Delta_\rho
        +\bar\eps_{\mathrm{sel}}
        +\bar\eps_{\mathrm{rep}}
        +\betatilt^{-1}\log(M+1)
        +2\bar\eps_{\mathrm{crit}}
        +\tfrac{L_{\mathrm{sm}}}{2}\bar{\mathcal V}_{\betatilt}
    }{1-\gamma}.
\end{equation}
Moreover, $\bar\eps_{\mathrm{sel}}\leq
2\bar\eps_{\mathrm{crit,sel}}$, yielding a bound entirely in terms of the
support gap, representation error, temperature, particle-level critic error,
candidate-level critic error, and within-cloud variance.
\end{theorem}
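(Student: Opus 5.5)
The proof splits into a pointwise one-step gap for a fixed realization $b$, an expectation over $\mathbf B_s$, and a resolvent (Neumann-series) argument that lifts the one-step gap to the value gap. The argument follows Proposition~\ref{prop:radius-reach-abstraction-price} and Theorem~\ref{thm:general-mdp-abstraction-price}.

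\textbf{Pointwise one-step bound.} Fix $s$ and a realization $b$. The lower bound $0\le V^\star(s)-\mathbb E[Q^\star(s,\mathcal T_{\betatilt}(s,\mathbf B_s))]$ is immediate from $V^\star(s)=\sup_z Q^\star(s,z)$. For the upper bound, telescope
\[
V^\star(s)-Q^\star(s,\mathcal T_{\betatilt}(s,b))=\Delta_\rho(s)+\bigl[Q^\star_\rho(s)-\max_{c,k}Q^\star(s,z_{c,k})\bigr]+\bigl[\max_{c,k}Q^\star(s,z_{c,k})-\max_c Q^\star(s,\mathcal T_{\betatilt,c}(s,b))\bigr]+\eps_{\mathrm{sel}}(s,b).
\]
The second bracket is at most $\eps_{\mathrm{rep}}(s,b)$ by the definition of the positive part. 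For the third bracket, let $c^\dagger$ be the anchor whose cloud contains the overall best particle. Then the bracket is at most $\max_k Q^\star(s,z_{c^\dagger,k})-Q^\star(s,\mathcal T_{\betatilt,c^\dagger}(s,b))$.

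\textbf{Bounding the third bracket.} Apply Proposition~\ref{prop:barycentric-refinement-value} to the cloud $b_{c^\dagger}$ with $q_{\mathrm{pop}}=Q^\star$ and $\hat q=Q_\psi$. Its critic error is at most $\eps_{\mathrm{crit}}(s,b)$, and its variance is at most $\mathcal V_{\betatilt}(s,b)$. This gives the bound $\log(M+1)/\betatilt+2\eps_{\mathrm{crit}}(s,b)+\tfrac{L_{\mathrm{sm}}}{2}\mathcal V_{\betatilt}(s,b)$. Assumption~\ref{ass:smoothness_mdp} supplies the required smoothness on $\operatorname{conv}(b_c)$. The proposition asks for a neighborhood of the hull, but the proof only uses inequality \eqref{eq:local-score-smoothness} between points of the hull, and the assumption yields that directly. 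Taking expectations over $\mathbf B_s$ proves \eqref{eq:hierarchical-one-step-control-gap}. Measurability of the integrands follows from the measurable tie-breaking rule and continuity of the softmax weights.

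\textbf{Lifting to the value gap.} Write $g(s)$ for the right-hand side of \eqref{eq:hierarchical-one-step-control-gap}, so that $(\gT^{\pi_{\AQ}}V^\star)(s)=\mathbb E_{\mathbf B_s}[Q^\star(s,\mathcal T_{\betatilt}(s,\mathbf B_s))]\ge V^\star(s)-g(s)$. Set $D:=V^\star-V^{\pi_{\AQ}}$. Adding and subtracting $\gT^{\pi_{\AQ}}V^\star$, as in Proposition~\ref{prop:radius-reach-abstraction-price}, gives $D\le g+\gamma P^{\pi_{\AQ}}D$. Iterating with the positive operator $P^{\pi_{\AQ}}$ yields $D\le\sum_t\gamma^t(P^{\pi_{\AQ}})^t g\le\sup_s g(s)/(1-\gamma)$. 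Bounding $\sup_s g$ by the sum of the individual suprema gives \eqref{eq:hierarchical-anchorq-control-bound}. The bound $D\ge0$ holds since $V^{\pi}\le V^\star$. Finally, \eqref{eq:hierarchical-selection-error-bound} follows from the standard greedy argument: the true maximizer among the candidates loses at most $\eps_{\mathrm{crit,sel}}$ under $Q_\psi$, and the chosen candidate gains at most $\eps_{\mathrm{crit,sel}}$. Taking expectations and suprema gives $\bar\eps_{\mathrm{sel}}\le2\bar\eps_{\mathrm{crit,sel}}$.

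\textbf{Main obstacle.} The measure-theoretic details of the lifting step are the hard part. One must justify that $\pi_{\AQ}$ is a well-defined Markov kernel, so that $P^{\pi_{\AQ}}$ is a Markov operator and $V^{\pi_{\AQ}}$ is its Bellman fixed point. For finite $\gS$ this is immediate; in general I would impose measurability of the sampling rule and argue as in Lemma~\ref{lem:restricted-bellman-fixed-point}.
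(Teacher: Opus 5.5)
Your proposal is correct and follows the paper's proof essentially step for step: the same four-term telescoping of $V^\star(s)-Q^\star(s,\mathcal T_{\betatilt}(s,b))$, and the same choice of the anchor whose cloud contains the globally best particle. You import the per-cloud Gibbs, critic-error and barycentric bound from Proposition~\ref{prop:barycentric-refinement-value}, whereas the paper re-derives it inline using the hull-only smoothness of Assumption~\ref{ass:smoothness_mdp}. The greedy argument giving $\bar\eps_{\mathrm{sel}}\le 2\bar\eps_{\mathrm{crit,sel}}$ is also the same, and your one-sided iteration $D\le g+\gamma P^{\pi_{\AQ}}D$ is an inessential variant of the paper's exact resolvent identity $V^\star-V^{\pi_{\AQ}}=(I-\gamma P^{\pi_{\AQ}})^{-1}\xierr$.
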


\begin{proof}
Fix a state $s$ and a joint cloud realization $b$ for which
Assumption~\ref{ass:smoothness_mdp} holds. For each anchor $c$, optimality of
the Gibbs weights in the KL-regularized objective, compared with a point mass
on a highest-scoring particle, gives
\[
    \sum_{k=0}^{M}\widehat w_{c,k}(s,b)Q_\psi(s,z_{c,k})
    \geq
    \max_{0\leq k\leq M}Q_\psi(s,z_{c,k})
    -\frac{\log(M+1)}{\betatilt}.
\]
Using the particle-level critic error twice therefore yields
\begin{equation}
\label{eq:hierarchical-local-weighted-value}
    \sum_{k=0}^{M}\widehat w_{c,k}(s,b)Q^\star(s,z_{c,k})
    \geq
    \max_{0\leq k\leq M}Q^\star(s,z_{c,k})
    -\frac{\log(M+1)}{\betatilt}
    -2\eps_{\mathrm{crit}}(s,b).
\end{equation}
Smoothness on $\operatorname{conv}(b_c)$ and the barycenter identity
$\sum_k\widehat w_{c,k}(z_{c,k}-\mathcal T_{\betatilt,c})=0$ imply
\[
    Q^\star\!\left(s,\mathcal T_{\betatilt,c}(s,b)\right)
    \geq
    \sum_{k=0}^{M}\widehat w_{c,k}(s,b)Q^\star(s,z_{c,k})
    -\frac{L_{\mathrm{sm}}}{2}\mathcal V_{\betatilt,c}(s,b).
\]
Consequently, for every $c$,
\begin{equation}
\label{eq:hierarchical-local-bound}
\begin{aligned}
    Q^\star\!\left(s,\mathcal T_{\betatilt,c}(s,b)\right)
    \geq\;&
    \max_{0\leq k\leq M}Q^\star(s,z_{c,k})
    -\frac{\log(M+1)}{\betatilt}
    \\
    &-2\eps_{\mathrm{crit}}(s,b)
    -\frac{L_{\mathrm{sm}}}{2}\mathcal V_{\betatilt,c}(s,b).
\end{aligned}
\end{equation}
Apply this inequality to an anchor whose cloud contains a highest-valued
particle among all clouds. Since
$\mathcal V_{\betatilt,c}(s,b)\leq\mathcal V_{\betatilt}(s,b)$, it follows that
\begin{equation}
\label{eq:hierarchical-best-refined-candidate}
\begin{aligned}
    \max_{c\in\gC}
    Q^\star\!\left(s,\mathcal T_{\betatilt,c}(s,b)\right)
    \geq\;&
    \max_{\substack{c\in\gC\\0\leq k\leq M}}Q^\star(s,z_{c,k})
    -\frac{\log(M+1)}{\betatilt}
    \\
    &-2\eps_{\mathrm{crit}}(s,b)
    -\frac{L_{\mathrm{sm}}}{2}\mathcal V_{\betatilt}(s,b).
\end{aligned}
\end{equation}

For the complete policy, add and subtract the best ideal-neighborhood
value, the best sampled-particle value, and the best refined-candidate value:
\begin{equation}
\label{eq:hierarchical-exact-decomposition}
\begin{aligned}
    &V^\star(s)-Q^\star\!\left(s,\mathcal T_{\betatilt}(s,b)\right)
    \\
    &\quad=
    \Delta_\rho(s)
    +\left[
        Q_\rho^\star(s)
        -\max_{\substack{c\in\gC\\0\leq k\leq M}}Q^\star(s,z_{c,k})
    \right]
    \\
    &\qquad
    +\left[
        \max_{\substack{c\in\gC\\0\leq k\leq M}}Q^\star(s,z_{c,k})
        -\max_{c\in\gC}
        Q^\star\!\left(s,\mathcal T_{\betatilt,c}(s,b)\right)
    \right]
    +\eps_{\mathrm{sel}}(s,b).
\end{aligned}
\end{equation}
Bound the first bracket by its positive part
$\eps_{\mathrm{rep}}(s,b)$ and the second using
\eqref{eq:hierarchical-best-refined-candidate}. This gives
\begin{align*}
    V^\star(s)-Q^\star\!\left(s,\mathcal T_{\betatilt}(s,b)\right)
    \leq\;&
    \Delta_\rho(s)+\eps_{\mathrm{sel}}(s,b)
    +\eps_{\mathrm{rep}}(s,b)
    \\
    &+\frac{\log(M+1)}{\betatilt}
    +2\eps_{\mathrm{crit}}(s,b)
    +\frac{L_{\mathrm{sm}}}{2}\mathcal V_{\betatilt}(s,b).
\end{align*}
Taking expectation over the joint cloud collection proves the upper bound in
\eqref{eq:hierarchical-one-step-control-gap}. Its lower bound follows from
$Q^\star(s,z)\leq V^\star(s)$ for every admissible latent $z$.

To verify \eqref{eq:hierarchical-selection-error-bound}, let $c^+$ maximize
the true value among the realized refined candidates. Then
\begin{align*}
    \eps_{\mathrm{sel}}(s,b)
    &=
    Q^\star\!\left(s,\mathcal T_{\betatilt,c^+}(s,b)\right)
    -Q^\star\!\left(s,\mathcal T_{\betatilt,\widehat c(s,b)}(s,b)\right)
    \\
    &\leq
    Q_\psi\!\left(s,\mathcal T_{\betatilt,c^+}(s,b)\right)
    -Q_\psi\!\left(s,\mathcal T_{\betatilt,\widehat c(s,b)}(s,b)\right)
    +2\eps_{\mathrm{crit,sel}}(s,b)
    \\
    &\leq 2\eps_{\mathrm{crit,sel}}(s,b),
\end{align*}
where the last inequality uses greedy selection.

Finally, let $\gT^{\pi_{\AQ}}$ and $P^{\pi_{\AQ}}$ denote the Bellman and
transition operators of the stationary policy. Since
\[
    Q^\star(s,z)
    =r(s,z)+\gamma\mathbb E_{S'\sim P(\cdot\mid s,z)}[V^\star(S')],
\]
we have
\[
    \gT^{\pi_{\AQ}}V^\star(s)
    =\mathbb E_{\mathbf B_s}
    \left[Q^\star\!\left(s,\mathcal T_{\betatilt}(s,\mathbf B_s)\right)\right].
\]
Define $\xierr:=V^\star-\gT^{\pi_{\AQ}}V^\star$. The policy Bellman equation
gives
\[
    V^\star-V^{\pi_{\AQ}}
    =\xierr+\gamma P^{\pi_{\AQ}}(V^\star-V^{\pi_{\AQ}}),
\]
and hence
\begin{equation}
\label{eq:hierarchical-resolvent-identity}
    V^\star-V^{\pi_{\AQ}}
    =(I-\gamma P^{\pi_{\AQ}})^{-1}\xierr.
\end{equation}
Because $P^{\pi_{\AQ}}$ is a Markov operator,
\[
    \left\|(I-\gamma P^{\pi_{\AQ}})^{-1}\right\|_{\infty\to\infty}
    \leq\sum_{t=0}^{\infty}\gamma^t
    =\frac{1}{1-\gamma}.
\]
Taking the supremum of the one-step bound over states establishes
\eqref{eq:hierarchical-anchorq-control-bound}. Taking expectations and
suprema in \eqref{eq:hierarchical-selection-error-bound} also gives
$\bar\eps_{\mathrm{sel}}\leq2\bar\eps_{\mathrm{crit,sel}}$.
\end{proof}

Theorem~\ref{thm:hierarchical-anchorq-control} separates the performance loss
into errors arising at each stage of the refinement-first construction.
The term $\bar\Delta_\rho$ is structural: it measures the largest one-step
gap between the unrestricted optimum and the best latent in the union of
reach-$\rho$ neighborhoods around the FRS anchors. The representation term
$\bar\eps_{\mathrm{rep}}$ measures how well the joint collection of finite
particle clouds represents the best value in this union. There is no
additional term for committing to one anchor before refinement, because
every anchor is refined before selection.

The remaining terms distinguish local refinement from final selection.
The finite-temperature term $\log(M+1)/\betatilt$ is the price of soft
entropic weighting within each cloud. It depends on the number of particles
per cloud, not the total number across anchors, because tilting is local.
The term $2\bar\eps_{\mathrm{crit}}$ captures the effect of particle-level
critic errors on the tilted weights. The variance term
$(L_{\mathrm{sm}}/2)\bar{\mathcal V}_{\betatilt}$ accounts for the difference
between the weighted average of particle values and the value of their
weighted-average latent. Smoothness is needed only within each cloud's
convex hull, not between different anchor neighborhoods.

The term $\bar\eps_{\mathrm{sel}}$ is the price of choosing among the
resulting barycentric candidates using the learned critic. It vanishes when
the selector chooses a truly highest-valued refined candidate and is at most
$2\bar\eps_{\mathrm{crit,sel}}$. Particle-level and candidate-level critic
errors are distinct because a barycenter need not be one of the sampled
particles. Both are measured relative to the unrestricted optimal action
value $Q^\star$. They can include estimation error and bias from the
candidate-based bootstrap, and need not vanish merely because the critic
accurately fits its Bellman target. The theorem is therefore a performance
decomposition conditional on accuracy relative to $Q^\star$, not a guarantee
that training drives these terms to zero.

The bound exposes complementary tradeoffs. Increasing the ideal reach $\rho$
can reduce the structural support gap but enlarges the neighborhood union
that the particles must represent. For fixed anchors and realized clouds,
representation error is nondecreasing in $\rho$; adding particles while
retaining the existing clouds cannot increase that representation error.
These statements concern the representation term, not monotonic improvement
of the entire bound or the executed policy. Increasing $\betatilt$ reduces
the explicit soft-weighting term, but reliable refinement still requires
sufficiently accurate local value rankings, as quantified by
Theorem~\ref{thm:exponential-tilting}. Thus good performance requires both
useful local candidates and accurate selection among their refinements.

The same pointwise argument accommodates a common adaptive tilt
$\betatilt(s,b)>0$ shared across anchors. In that case, all weights,
barycenters, and variances use the realized tilt, and the temperature term in
the uniform value bound becomes
\[
    \sup_s\mathbb E_{\mathbf B_s}
    \left[\frac{\log(M+1)}{\betatilt(s,\mathbf B_s)}\right],
\]
provided it is finite. This allows uncertainty-dependent tilting without
changing the decomposition or dispensing with either critic-error term.

The theorem evaluates greedy selection after refinement. For an
$\epsilon$-greedy execution rule that selects an anchor uniformly with
probability $\epsilon$, define
\[
    D_{\mathrm{unif}}(s,b)
    :=
    \max_{c\in\gC}Q^\star\!\left(s,\mathcal T_{\betatilt,c}(s,b)\right)
    -\frac{1}{|\gC|}\sum_{c\in\gC}
        Q^\star\!\left(s,\mathcal T_{\betatilt,c}(s,b)\right).
\]
The same decomposition applies to the resulting randomized policy after
replacing the conditional selection error by
\[
    \eps_{\mathrm{sel}}^{\epsilon}(s,b)
    :=(1-\epsilon)\eps_{\mathrm{sel}}(s,b)
      +\epsilon D_{\mathrm{unif}}(s,b),
\]
and averaging over the selection randomness as well as the particle clouds.
This makes the additional exploration loss explicit rather than attributing
it to critic error.

\subsection{Connection to Theorems 1--3}
\label{subsec:tilting-connection-prior-theorems}

The preceding results complete the coarse-to-fine picture developed in
Theorems~\ref{thm:persistent-abstraction-separation}--
\ref{thm:particle-bias-complexity-frontier}.  Those results characterize the
action representation made available to online learning; the present section
characterizes how value information is used to refine that representation after
a coarse anchor has been selected.

\paragraph{Connection to Theorem~\ref{thm:persistent-abstraction-separation}.}
Theorem~\ref{thm:persistent-abstraction-separation} isolates the statistical
benefit of restricting the coarse online decision to a small set of
state-conditioned FRS anchors.  The local weighting mechanism does not alter
this first-stage identification problem: \QFRS{} still selects among the
$C$ coarse anchors before constructing a local refinement.  Particles then
restore additional expressivity only around the selected anchor.

Conditional on this local cloud, Theorem~\ref{thm:exponential-tilting} shows
that value information directs the refinement toward reliably better local
particles.  In particular, for a high-value local region $A$,
\begin{equation}
    \label{eq:connection-robust-local-amplification}
    \operatorname{Odds}_{
        \nu_{s,c,\betatilt}^{\hat q}
    }(A)
    \ge
    \exp
    \left(
        \betatilt
        \left[
            \Xi_{\nu_{s,c}}(A;q_{\mathrm{pop}})
            -
            2\eps_{\mathrm{crit}}
        \right]
    \right)
    \operatorname{Odds}_{\nu_{s,c}}(A).
\end{equation}
Thus the FRS construction reduces the coarse search problem, while particles
and value weighting introduce a controlled local refinement problem.  The
latter is useful only once the critic separates genuinely useful refinements
from inferior ones with sufficient accuracy.

\paragraph{Connection to Theorem~\ref{thm:general-mdp-abstraction-price}.}
Theorem~\ref{thm:general-mdp-abstraction-price} shows that restricting the
available latent actions incurs a structural support price. Value weighting
cannot by itself remove this price. It changes the relative influence of
particles within each realized local cloud and can thereby affect which
refined anchor candidate is ultimately selected, but it does not enlarge the
union of local regions accessible around the FRS anchors.

For each
anchor $c$, the barycentric refinement
\begin{equation}
\tilde z_{c,\betatilt}(s)
=
\sum_{k=0}^{M}
w_{c,\betatilt}(k\,|\, s),
z_{c,k}(s)
\end{equation}
need not coincide with any sampled particle; rather, it lies in their convex
hull. Consequently, if the realized local cloud around anchor $c$ lies in the
convex reach-$\rho$ neighborhood
$ B_2(z_c(s),\rho)$, then
\begin{equation}
\tilde z_{c,\betatilt}(s)
\in
 B_2(z_c(s),\rho).
\end{equation}
Thus, if this condition holds for every anchor, the candidate ultimately
selected for execution lies in
\begin{equation}
\gZ_\rho(s)
=
\bigcup_{c\in\gC}
 B_2(z_c(s),\rho).
\end{equation}
Local value weighting can therefore improve the refinement within each
accessible region and influence selection among the resulting candidates,
but it cannot overcome the structural gap between the union of these regions
and globally better latent behavior outside it. This distinction is captured
by the support term $\Delta_\rho$ in
Theorem~\ref{thm:hierarchical-anchorq-control}.

\paragraph{Connection to Theorem~\ref{thm:particle-bias-complexity-frontier}.}
Theorem~\ref{thm:particle-bias-complexity-frontier} characterizes the tradeoff
created by enlarging the local action representation. The ideal reach
$\rho$ controls how much latent behavior is accessible in the union of local
neighborhoods around the FRS anchors, while the implemented parameters
$(\sigma_p,M)$ determine the finite random clouds used to probe these
neighborhoods. The tilt intensity $\betatilt$ plays a different role: it does
not enlarge the accessible region, but controls how strongly particle values
influence each local refinement.

Theorem~\ref{thm:hierarchical-anchorq-control} combines these effects for the
actual hierarchical policy:
\begin{equation}
    \label{eq:connection-hierarchical-control-bound}
    \left\|
        V^\star-V^{\pi_{\AQ}}
    \right\|_\infty
    \le
    \frac{
        \bar\Delta_\rho
        +
        \bar\eps_{\mathrm{sel}}
        +
        \bar\eps_{\mathrm{rep}}
        +
        \betatilt^{-1}\log(M+1)
        +
        2\bar\eps_{\mathrm{crit}}
        +
        L_{\mathrm{sm}}
        \bar{\mathcal V}_{\betatilt}/2
    }{
        1-\gamma
    }.
\end{equation}
The terms have distinct origins: $\bar\Delta_\rho$ is the structural price of
restricting behavior to the union of the local action regions;
$\bar\eps_{\mathrm{rep}}$ is the finite-particle representation error across
those regions;
$\log(M+1)/\betatilt$ is the finite-temperature weighting error;
$2\bar\eps_{\mathrm{crit}}$ captures the effect of particle-level critic
errors on the local tilting;
$(L_{\mathrm{sm}}/2)\bar{\mathcal V}_{\betatilt}$ controls the discrepancy
between high weighted particle value and the value of the corresponding
barycentric latent; and
$\bar\eps_{\mathrm{sel}}$ is the error from selecting among the resulting
refined candidates using the learned critic. In particular,
$\bar\eps_{\mathrm{sel}}\leq 2\bar\eps_{\mathrm{crit,sel}}$, where
$\bar\eps_{\mathrm{crit,sel}}$ measures critic error at the refined
barycentric candidates.

The bound therefore exposes three coupled tuning axes. Increasing the ideal
reach $\rho$ can reduce structural support bias, but enlarges the region that
the particle clouds must represent. For fixed anchors and realized clouds,
adding particles while retaining the existing particles cannot increase the
representation error, although it may affect the other terms in the bound.
Increasing $\betatilt$ reduces the explicit finite-temperature term, but more
aggressive weighting is trustworthy only when local value differences are
large relative to particle-level critic error. Indeed,
\eqref{eq:tilted-density-ratio-robustness} shows that sensitivity of the
weighting law to critic perturbations grows with
$\betatilt\eps_{\mathrm{crit}}$, while
\eqref{eq:connection-robust-local-amplification} guarantees correct
high-value amplification when the true local margin exceeds
$2\eps_{\mathrm{crit}}$. Finally, because selection occurs only after each
anchor has been locally refined, reliable execution also requires accurate
value comparisons across the resulting barycentric candidates, as captured
by $\bar\eps_{\mathrm{sel}}$.

\subsection{Interpretation and novelty}
\label{subsec:tilting-interpretation-novelty}

The Gibbs variational identity and the associated exponential-family
derivatives are standard.  The contribution here is their use within the
specific coarse-to-fine structure of \QFRS{}, together with results that connect
the resulting risk-sensitive weights to the barycentric latent actually
executed by the robot.

First, the analysis identifies the precise random variable to which the
entropic functional is applied.  Conditional on state $s$ and selected anchor
$c$,
\[
    Y_{s,c}^{q}
    =
    q(s,Z_{s,c}),
    \qquad
    Z_{s,c}\sim\nu_{s,c},
\]
is the score of a random \emph{local refinement} of that anchor.  Positive
$\betatilt$ is therefore upper-tail seeking over local particle values, not
over coarse FRS commands, critic-head uncertainty, or realized environment
returns.  It becomes an exact entropic functional of realized return only
under the conditional transformation of
Proposition~\ref{prop:conditional-entropic-return-transform}.

Second, the exponential weights have an exact optimization interpretation.
The tilted local law $\nu_{s,c,\betatilt}^{q}$ is the unique optimizing change
of measure in the KL-regularized variational representation of the entropic
functional.  Thus the weights are not an ad hoc softmax heuristic: they are the
optimal risk-sensitive reweighting of the local particle law under a KL penalty
from its untilted reference distribution.  This interpretation also separates
the role of $\betatilt$ from the particle parameters: $(\sigma_{p},M)$
control which local refinements are represented, whereas $\betatilt$ controls
how aggressively value concentrates their influence.

Third, the high-value amplification result provides a reliability guarantee
for this local weighting.  If a nonempty proper subset of the particle indices has true value margin $\margin_{\mathcal H}$ over its complement and the critic error is bounded by $\eps_{\mathrm{crit}}$, then its odds relative to that complement are at least a factor 
\[
    \exp
    \left(
        \betatilt
        (\margin_{\mathcal H}-2\eps_{\mathrm{crit}})
    \right)
\]
times the corresponding odds under uniform weighting. The threshold
$\margin_{\mathcal H}>2\eps_{\mathrm{crit}}$ is worst-case sharp under a
uniform critic-error model by
Proposition~\ref{prop:critic-separation-threshold-sharpness}.  Thus aggressive
tilting is theoretically justified only when the critic resolves the relevant
local value differences reliably.

Fourth, the implemented policy does not sample from this tilted law.
It leverages the barycentric latent
\[
    \tilde z_{c,\betatilt}(s)
    =
    \sum_{k=0}^{M}
    w_{c,\betatilt}(k\mid s)\,
    z_{c,k}(s).
\]
This creates a new theoretical question absent from ordinary exponential
candidate selection: high weighted particle value need not imply high value at
their weighted-average latent.  Proposition~\ref{prop:barycentric-refinement-value}
shows that local smoothness closes this gap, with interpolation penalty
$L_{\mathrm{sm}}\cdot
    \mathcal V_{c,\betatilt}(s)/2$.
Hence the theory explicitly distinguishes the quality of the weighting rule
from the geometry of latent interpolation used to realize that weighting.

Finally, Theorem~\ref{thm:hierarchical-anchorq-control} combines the full
construction in a single sequential-control bound.  Its decomposition separates
the structural support bias, coarse-anchor error, finite-particle
representation error, critic error, finite-temperature weighting error, and
barycentric interpolation error.  

The resulting picture is that the anchor and particle construction determines the available local
behavior, while risk-sensitive weighting determines how that local capacity is
used.

\section{Uncertainty Adaptation: Epistemically adaptive local tilting}
\label{sec:uncertainty_adaptation}
The preceding section treated the exponential-tilting intensity $\betatilt$ as fixed.  There, positive $\betatilt$ was shown to control the strength of upper-tail weighting over local particle values: increasing $\betatilt$ places progressively more influence on high-value local refinements, while the limit $\betatilt\downarrow0$ recovers the un-tilted local particle law.  The same analysis also exposes the principal limitation of aggressive tilting with a learned critic.  If the local critic error is bounded by $\eps_{\mathrm{crit}}$, Theorem~\ref{thm:exponential-tilting} yields
    $e^{-2\betatilt\eps_{\mathrm{crit}}}
    \leq
        \mathrm d\nu_{s,c,\betatilt}^{\hat q}/
        \mathrm d\nu_{s,c,\betatilt}^{q_{\mathrm{pop}}}
    \leq
    e^{2\betatilt\eps_{\mathrm{crit}}}$.
Thus the effect of critic error on the tilted weighting law is governed by the product
$\betatilt\eps_{\mathrm{crit}}$.

A large fixed $\betatilt$ can therefore amplify spurious score differences precisely when the critic is least reliable, as may occur early in online learning or in weakly covered regions of the state--latent space.

This motivates adapting the tilting intensity to \emph{epistemic uncertainty} in the learned critic.  
For a fixed state $s$, selected anchor $c$, and realized local particle cloud, let  $\widehat u_{\mathrm{epi}}(s)\in [0,\infty)$ denote a finite scalar surrogate obtained from disagreement across the critic ensemble. We suppress its dependence on the selected anchor and realized cloud in the notation. We define
\begin{equation}
u_{\mathrm{epi}}(s)
:=
\max\{
\widehat u_{\mathrm{epi}}(s),
u_{\min}
\},
\qquad
\betatilt(s)
:=
1/
u_{\mathrm{epi}}(s).
\label{eq:adaptive-tilt-beta}
\end{equation}
The floor $u_{\min}>0$ bounds the maximum tilting intensity when the estimated uncertainty is spuriously small.

To distinguish the resulting adaptive weights from the fixed-$\betatilt$
weights defined in Proposition~\ref{prop:finite-cloud-softmax-duality}, we use
the shorthand 
\begin{equation}
\label{eq:adaptive_weights}
\widehat w_{c,k}^{\mathrm{ad}}(s)
:=
\widehat w_{c,k}^{\betatilt(s)}(s)
=
\frac{
\exp\left(
\betatilt(s)\hat q_{c,k}(s)
\right)
}{
\sum_{\ell=0}^{M}
\exp\left(
\betatilt(s)\hat q_{c,\ell}(s)
\right)
},
\qquad k=0,\ldots,M,
\end{equation}
where $\hat q_{c,k}(s)$ denotes the learned score of the $k$th particle in the realized local cloud, as in Section~\ref{subsec:finite-cloud-tilting}. Thus $\widehat w_{c,k}^{\mathrm{ad}}(s)$ is not a new weighting rule; it is the same entropic weighting analyzed above, with its tilting intensity now chosen state-wise from epistemic uncertainty. We similarly write
\[
\tilde z_c^{\mathrm{ad}}(s)
:=
\sum_{k=0}^{M}
\widehat w_{c,k}^{\mathrm{ad}}(s)
z_{c,k}(s)
\]
for the corresponding adaptive barycentric refinement. We similarly define the adaptive weighted particle variance by
\begin{equation}
    \label{eq:adaptive-weighted-particle-variance}
    \mathcal V_c^{\mathrm{ad}}(s)
    :=
    \sum_{k=0}^{M}
    \widehat w_{c,k}^{\mathrm{ad}}(s)
    \left\|
        z_{c,k}(s)
        -
        \tilde z_c^{\mathrm{ad}}(s)
    \right\|_2^2.
\end{equation}

The interpretation is direct.  Exponential weighting remains the upper-tail-seeking local refinement mechanism developed in Section~\ref{sec:exponential-tilting}; adaptation changes only \emph{how strongly that mechanism is applied}.  Large epistemic uncertainty yields weak tilting and keeps the weighting law close to the FRS local particle law.  As epistemic uncertainty contracts, the same mechanism becomes progressively more aggressive in amplifying high-value local refinements.

Throughout this section, $u_{\mathrm{epi}}$ refers to uncertainty in the learned score of a fixed particle.  It is distinct from the variation of $q(s,Z)$ across local particles analyzed by the entropic functional in the preceding section, and from aleatoric randomness in the realized return conditional on a fixed latent.

\subsection{Calibration of the epistemic scale}

Randomized-prior critic ensembles provide a practical surrogate for epistemic uncertainty, but ensemble disagreement is not automatically a calibrated confidence radius for deep Q-learning.  We therefore state explicitly the condition required by the adaptive analysis.

\begin{assumption}[Local epistemic calibration]
\label{ass:adaptive-epistemic-calibration}
Let $q_{\mathrm{pop}}$ denote the population local score and $\hat q$ the learned score.  There exists a finite constant $c_{\mathrm{cal}}>0$, independent of the state $s$ and selected anchor $c$, such that, for every state-anchor pair under consideration, 
\begin{equation}
\left\|
\hat q(s,\cdot)
-
q_{\mathrm{pop}}(s,\cdot)
\right\|_{
L^\infty(\nu_{s,c})
}
\le
c_{\mathrm{cal}}
u_{\mathrm{epi}}(s).
\label{eq:adaptive-epistemic-calibration}
\end{equation}
When interpreted probabilistically, this inequality is assumed to hold simultaneously over the state-anchor pairs under consideration on a common high-probability event. 
\end{assumption}

Assumption~\ref{ass:adaptive-epistemic-calibration} says that the ensemble-derived scale need not equal the critic error, but it must track that error up to a state-independent multiplicative constant.  Under this condition, choosing $\betatilt$ inversely proportional to $u_{\mathrm{epi}}$ normalizes the aggressiveness of the local tilt by the reliability of the critic.

\begin{theorem}[Epistemically adaptive local tilting]
\label{thm:adaptive-local-tilting}
Suppose Assumption~\ref{ass:bounded-tilting-score} and~\ref{ass:adaptive-epistemic-calibration} hold.  Fix a state $s$ and selected anchor $c$, and set
$u
    :=
    u_{\mathrm{epi}}(s)$ and 
    $\beta:=\betatilt(s)=
    1/u$.
Let
$\nu_{s,c,\beta}^{\hat q}$
and
$\nu_{s,c,\beta}^{q_{\mathrm{pop}}}$
denote the adaptive weighting laws constructed from the learned and population scores, respectively, using the same reference law $\nu_{s,c}$ and the same realized intensity $\beta=1/u_{\mathrm{epi}}(s)$. 

\begin{enumerate}[
label=\textnormal{\it\alph*.},
itemsep=0pt,
topsep=0pt,
leftmargin=*
]

\item \textbf{Uncertainty-weighted variational objective.}
The learned adaptive weighting law is the unique optimizer of
\begin{equation}
\label{eq:adaptive-variational-objective}
\nu_{s,c,\beta}^{\hat q}
=
\argmax_{\probmeas\ll \nu_{s,c}}
\left\{
\int_{\gZ}
\hat q(s,z)\probmeas(\mathrm dz)
-
u_{\mathrm{epi}}(s)
\KL(
\probmeas
\|
\nu_{s,c}
)
\right\}.
\end{equation}
Thus epistemic uncertainty is exactly the coefficient penalizing departure from the untilted FRS local-particle law.

\item \textbf{Scale-free sensitivity.}
The adaptive learned and population weighting laws satisfy
\begin{equation}
\label{eq:adaptive-density-ratio}
e^{-2c_{\mathrm{cal}}}
\leq
\frac{
\mathrm d
\nu_{s,c,\beta}^{\hat q}
}{
\mathrm d
\nu_{s,c,\beta}^{q_{\mathrm{pop}}}
}(z)
\leq
e^{2c_{\mathrm{cal}}}
\end{equation}
for $\nu_{s,c}$-almost every\footnote{For two probability measures $\mu$ and $\nu$, we write
$\mu\sim\nu$ to denote \emph{equivalence of measures}, i.e.,
$\mu\ll\nu$ and $\nu\ll\mu$, where $\mu\ll\nu$ denotes absolute continuity.
Equivalently, $\mu$ and $\nu$ have the same measure-zero sets.  In our setting,
the tilted law satisfies
\[
    \frac{\mathrm d\nu_{s,c,\beta}^{q}}
         {\mathrm d\nu_{s,c}}(z)
    =
    \exp\left(
        \beta\left(q(s,z)-\mathrm{CE}_{\beta}(q;s,c)\right)
    \right).
\]
Under the bounded-score assumption and finite $\beta$, this density is strictly
positive and finite $\nu_{s,c}$-almost everywhere.  Hence
$\nu_{s,c,\beta}^{q}\ll\nu_{s,c}$ and also
$\nu_{s,c}\ll\nu_{s,c,\beta}^{q}$, so
$\nu_{s,c,\beta}^{q}\sim\nu_{s,c}$.  Consequently, tilted laws constructed
from different bounded scores, such as $\hat q$ and $q_{\mathrm{pop}}$,
are also mutually equivalent and have the same null sets as $\nu_{s,c}$.} $z$, so that 
\begin{align}
\KL
\left(
\nu_{s,c,\beta}^{\hat q}
\|
\nu_{s,c,\beta}^{q_{\mathrm{pop}}}
\right)
&\leq
2c_{\mathrm{cal}},
\label{eq:adaptive-forward-kl}
\quad\text{and}\quad 
\KL
\left(
\nu_{s,c,\beta}^{q_{\mathrm{pop}}}
\|
\nu_{s,c,\beta}^{\hat q}
\right)
\leq
2c_{\mathrm{cal}}.
\end{align}
These bounds are independent of the absolute magnitude of $u:=u_{\mathrm{epi}}(s)$.

\item \textbf{Standardized high-value amplification.}
For any Borel set $A\subseteq\gZ$ with
$0<\nu_{s,c}(A)<1$,
\begin{equation}
\label{eq:adaptive-high-value-odds}
\operatorname{Odds}_{
\nu_{s,c,\beta}^{\hat q}
}(A)
\geq
\exp
\left(
\frac{
\Xi_{\nu_{s,c}}
(A;q_{\mathrm{pop}})
}{
u_{\mathrm{epi}}(s)
}
-
2c_{\mathrm{cal}}
\right)
\operatorname{Odds}_{\nu_{s,c}}(A).
\end{equation}
In particular, if for some $m\ge0$ we have 
\begin{equation}
\label{eq:adaptive-standardized-margin}
\Xi_{\nu_{s,c}}
(A;q_{\mathrm{pop}})
\ge
\bigl(
2c_{\mathrm{cal}}+m
\bigr)
u_{\mathrm{epi}}(s),
\end{equation}
then
\begin{equation}
\label{eq:adaptive-exp-amplification}
\operatorname{Odds}_{
\nu_{s,c,\beta}^{\hat q}
}(A)
\geq
e^m
\operatorname{Odds}_{\nu_{s,c}}(A).
\end{equation}

\end{enumerate}
\end{theorem}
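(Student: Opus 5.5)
The plan is to treat the adaptive statement as a pointwise specialization of Theorem~\ref{thm:exponential-tilting}. For the fixed state $s$ and anchor $c$, the tilt $\beta=1/u_{\mathrm{epi}}(s)$ is a deterministic positive number, and it is finite and positive because $u_{\mathrm{epi}}(s)\ge u_{\min}>0$. The laws $\nu_{s,c,\beta}^{\hat q}$ and $\nu_{s,c,\beta}^{q_{\mathrm{pop}}}$ are therefore exactly the fixed-tilt Gibbs laws of Definition~\ref{def:exponentially-tilted-law}. Assumption~\ref{ass:bounded-tilting-score} applies to both scores, so every part of Theorem~\ref{thm:exponential-tilting} is available with this particular $\beta$. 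The whole proof then consists of substituting $\beta=1/u$ and $\eps_{\mathrm{crit}}=c_{\mathrm{cal}}u$, the latter supplied by Assumption~\ref{ass:adaptive-epistemic-calibration}.

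For part~\textit{a}, I will start from the variational identity \eqref{eq:gibbs-variational-principle}, applied to $\hat q$ with its unique optimizer $\nu_{s,c,\beta}^{\hat q}$. Since $1/\beta=u_{\mathrm{epi}}(s)$, the objective there is literally \eqref{eq:adaptive-variational-objective}. Two small points need checking. First, the constraint set $\probmeas\ll\nu_{s,c}$ contains all measures with finite KL, and any measure with infinite KL gives objective value $-\infty$, so the optimizer is unchanged. Second, uniqueness follows from the Pythagorean identity \eqref{eq:gibbs-pythagorean-identity}.

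For part~\textit{b}, I will invoke the density-ratio bound \eqref{eq:tilted-density-ratio-robustness} with $\eps_{\mathrm{crit}}:=c_{\mathrm{cal}}u$; this is admissible by \eqref{eq:adaptive-epistemic-calibration}. The exponent becomes $2\beta\eps_{\mathrm{crit}}=2(1/u)c_{\mathrm{cal}}u=2c_{\mathrm{cal}}$. The two KL bounds follow the same way from \eqref{eq:forward-tilted-kl-robustness}, or directly by integrating the log-ratio, which is bounded by $2c_{\mathrm{cal}}$, under either measure. The cancellation of $u$ is exactly what makes these bounds independent of the scale of $u$.

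For part~\textit{c}, I will apply \eqref{eq:robust-high-score-odds} with the same substitution, so the exponent is $\beta(\Xi-2c_{\mathrm{cal}}u)=\Xi/u-2c_{\mathrm{cal}}$, which is \eqref{eq:adaptive-high-value-odds}. Under \eqref{eq:adaptive-standardized-margin} this exponent is at least $(2c_{\mathrm{cal}}+m)-2c_{\mathrm{cal}}=m$. Monotonicity of the exponential then gives \eqref{eq:adaptive-exp-amplification}.

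The only delicate step is justifying that a state-dependent, data-driven $\beta$ can be plugged into the fixed-$\beta$ theorem. I would handle this by conditioning on the realized critic and cloud, which makes $u_{\mathrm{epi}}(s)$ a constant. I would also state that the conclusions hold on the common event on which Assumption~\ref{ass:adaptive-epistemic-calibration} holds. Beyond that, everything is algebra.
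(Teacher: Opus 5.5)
Your proposal is correct and matches the paper's proof. Each part follows by specializing Theorem~\ref{thm:exponential-tilting} (parts \textit{a} and \textit{e}) with $\beta=1/u_{\mathrm{epi}}(s)$ and $\eps_{\mathrm{crit}}=c_{\mathrm{cal}}u_{\mathrm{epi}}(s)$ from Assumption~\ref{ass:adaptive-epistemic-calibration}, so that $\beta\eps_{\mathrm{crit}}\le c_{\mathrm{cal}}$. Your remarks that the floor $u_{\min}$ ensures $0<\beta<\infty$, and that conditioning on the realized critic lets $\beta$ be treated as fixed, are sensible details the paper leaves implicit.
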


\begin{proof}
Part~\textnormal{\it a} is Theorem~\ref{thm:exponential-tilting}\textnormal{\it (a)} with
$1/\betatilt
    =
    u_{\mathrm{epi}}(s)$.

For part~\textnormal{\it b}, Theorem~\ref{thm:exponential-tilting}\textnormal{\it (e)} gives us that 
\[
    e^{-2\beta\eps_{\mathrm{crit}}}
    \le
    \frac{
        \mathrm d
        \nu_{s,c,\beta}^{\hat q}
    }{
        \mathrm d
        \nu_{s,c,\beta}^{q_{\mathrm{pop}}}
    }
    \le
    e^{2\beta\eps_{\mathrm{crit}}}.
\]
Under Assumption~\ref{ass:adaptive-epistemic-calibration},
we have that 
$\eps_{\mathrm{crit}}
    \leq
    c_{\mathrm{cal}}
    u_{\mathrm{epi}}(s)$,
and with
$\beta
    =
    \frac1{
        u_{\mathrm{epi}}(s)
    }$,
we obtain
$\beta
    \eps_{\mathrm{crit}}
    \leq
    c_{\mathrm{cal}}$.

Substitution gives~\eqref{eq:adaptive-density-ratio}, and integrating the log-density ratio under either measure yields
\eqref{eq:adaptive-forward-kl}.

For part~\textnormal{\it c},
Theorem~\ref{thm:exponential-tilting}\textnormal{\it (e)} gives us that
\[
    \operatorname{Odds}_{
        \nu_{s,c,\beta}^{\hat q}
    }(A)
    \geq
    \exp
    \left(
        \beta
        \left(
            \Xi_{\nu_{s,c}}
            (A;q_{\mathrm{pop}})
            -
            2\eps_{\mathrm{crit}}
        \right)
    \right)
    \operatorname{Odds}_{\nu_{s,c}}(A).
\]
Using
$\beta=1/u_{\mathrm{epi}}(s)$
and
$\eps_{\mathrm{crit}}
\leq
c_{\mathrm{cal}}u_{\mathrm{epi}}(s)$
yields
\[
\begin{aligned}
    \beta
    \left(
        \Xi_{\nu_{s,c}}
        (A;q_{\mathrm{pop}})
        -
        2\eps_{\mathrm{crit}}
    \right)
    &\ge
    \frac{
        \Xi_{\nu_{s,c}}
        (A;q_{\mathrm{pop}})
    }{
        u_{\mathrm{epi}}(s)
    }
    -
    2c_{\mathrm{cal}},
\end{aligned}
\]
which proves~\eqref{eq:adaptive-high-value-odds}.  Under
\eqref{eq:adaptive-standardized-margin}, the exponent is at least $m$, proving
\eqref{eq:adaptive-exp-amplification}.
\end{proof}

The following corollary applies uncertainty adaptation to the barycentric refinement used in \QFRS{}. Recall that for a fixed state $s$ and coarse anchor $c$, we have  the  adaptive weights by
$\widehat w_{c,k}^{\mathrm{ad}}(s)
    :=
    \widehat w_{c,k}^{\betatilt(s)}(s)$ for each $k\in \{0,\ldots, M\}$.
The adaptive barycentric refinement is
$\tilde z_c^{\mathrm{ad}}(s)
    :=
    \sum_{k=0}^{M}
    \widehat w_{c,k}^{\mathrm{ad}}(s)
    z_{c,k}(s)$,
with weighted particle variance defined in \eqref{eq:adaptive-weighted-particle-variance}, where $\betatilt(s):=1/u_{\mathrm{epi}}(s)$.
\begin{corollary}[Value of the adaptive barycentric refinement]
\label{cor:adaptive-barycentric-refinement}
Fix a state $s$, a selected coarse anchor $c$, and its realized local particle
cloud
$\mathcal B_c(s)
    =
    \{z_{c,0}(s),\ldots,z_{c,M}(s)\}$.
Set
$\betatilt(s)
    :=
    1/u_{\mathrm{epi}}(s)$.
Suppose the local smoothness condition of
Proposition~\ref{prop:barycentric-refinement-value} holds on
$\operatorname{conv}\mathcal B_c(s)$, and suppose the critic error on the
realized cloud is calibrated to the epistemic scale---i.e., 
\begin{equation}
    \label{eq:adaptive-barycenter-calibration}
    \max_{0\le k\le M}
    \left|
        \hat q
        \left(
            s,z_{c,k}(s)
        \right)
        -
        q_{\mathrm{pop}}
        \left(
            s,z_{c,k}(s)
        \right)
    \right|
    \le
    c_{\mathrm{cal}}
    u_{\mathrm{epi}}(s).
\end{equation}
Then
\begin{equation}
    \label{eq:adaptive-barycenter-best-particle-gap}
    \max_{0\le k\le M}
    q_{\mathrm{pop}}
    \left(
        s,z_{c,k}(s)
    \right)
    -
    q_{\mathrm{pop}}
    \left(
        s,\tilde z_c^{\mathrm{ad}}(s)
    \right)
\leq
    u_{\mathrm{epi}}(s)
    \left(
        \log(M+1)
        +
        2c_{\mathrm{cal}}
    \right)
    +
    \frac{L_{\mathrm{sm}}}{2}
    \mathcal V_c^{\mathrm{ad}}(s).
\end{equation}
\end{corollary}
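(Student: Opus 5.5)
The plan is to recognize Corollary~\ref{cor:adaptive-barycentric-refinement} as a specialization of Proposition~\ref{prop:barycentric-refinement-value} at the realized intensity $\betatilt=\betatilt(s)=1/u_{\mathrm{epi}}(s)$ with critic error $\eps_{\mathrm{crit}}:=c_{\mathrm{cal}}u_{\mathrm{epi}}(s)$. Conditional on the realized cloud $\mathcal B_c(s)$, the learned scores $\hat q_{c,k}(s)$ are fixed numbers. Hence $u_{\mathrm{epi}}(s)$, and with it $\betatilt(s)$, is a deterministic positive scalar; it is finite and positive because of the floor $u_{\min}>0$ in \eqref{eq:adaptive-tilt-beta}. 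The adaptive weights $\widehat w^{\mathrm{ad}}_{c,k}(s)$ in \eqref{eq:adaptive_weights} are then exactly the fixed-$\betatilt$ softmax weights of Corollary~\ref{cor:finite-cloud-selection-error} evaluated at this $\betatilt$. Likewise, $\tilde z^{\mathrm{ad}}_c(s)$ and $\mathcal V^{\mathrm{ad}}_c(s)$ coincide with $\tilde z_{c,\betatilt}(s)$ and $\mathcal V_{c,\betatilt}(s)$ from \eqref{eq:finite-cloud-executed-barycenter}--\eqref{eq:weighted-particle-variance}.

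I will then check the two hypotheses of Proposition~\ref{prop:barycentric-refinement-value}. Local smoothness on $\operatorname{conv}\mathcal B_c(s)$ is assumed directly. The uniform particle-level error condition \eqref{eq:finite-bank-uniform-score-error} holds with $\eps_{\mathrm{crit}}=c_{\mathrm{cal}}u_{\mathrm{epi}}(s)$ by \eqref{eq:adaptive-barycenter-calibration}. Applying \eqref{eq:barycenter-best-particle-gap} gives a gap bounded by
\[
\frac{\log(M+1)}{\betatilt(s)}+2c_{\mathrm{cal}}u_{\mathrm{epi}}(s)+\frac{L_{\mathrm{sm}}}{2}\mathcal V^{\mathrm{ad}}_c(s).
\]
Substituting $1/\betatilt(s)=u_{\mathrm{epi}}(s)$ and factoring out $u_{\mathrm{epi}}(s)$ from the first two terms yields \eqref{eq:adaptive-barycenter-best-particle-gap}.

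The only point needing care, and the one I expect to be the main obstacle, is legitimizing the substitution when $\betatilt$ depends on the cloud through $\widehat u_{\mathrm{epi}}$. Proposition~\ref{prop:barycentric-refinement-value} and Corollary~\ref{cor:finite-cloud-selection-error} are stated for a fixed $\betatilt>0$ and a fixed realized cloud. Their proofs use only the softmax optimality gap $\log(M+1)/\betatilt$ from Proposition~\ref{prop:finite-cloud-softmax-duality} and the barycenter identity, both of which are pointwise in the realization. I will therefore note that once we condition on the cloud, the data-dependent $\betatilt(s)$ is just a constant, and the deterministic inequalities apply verbatim. No uniformity over $\betatilt$ or measurability argument is required, since the statement is about a single fixed realization.
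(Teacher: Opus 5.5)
Your proposal is correct and matches the paper's proof. You apply Proposition~\ref{prop:barycentric-refinement-value} at the realized intensity $\betatilt(s)=1/u_{\mathrm{epi}}(s)$, take $\eps_{\mathrm{crit}}=c_{\mathrm{cal}}u_{\mathrm{epi}}(s)$ from the calibration condition, and substitute $1/\betatilt(s)=u_{\mathrm{epi}}(s)$; your remark that the data-dependent tilt is just a constant once the cloud is fixed is a harmless clarification that the paper leaves implicit.
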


\begin{proof}
Apply Proposition~\ref{prop:barycentric-refinement-value} with
$\betatilt
    =
    \betatilt(s)
    =
    1/u_{\mathrm{epi}}(s)$.
The calibration condition
\eqref{eq:adaptive-barycenter-calibration} implies that the uniform
critic-error condition used in
\eqref{eq:barycenter-best-particle-gap} holds with
$\eps_{\mathrm{crit}}
    =
    c_{\mathrm{cal}}
    u_{\mathrm{epi}}(s)$.
Therefore
\begin{align*}
\max_{0\le k\le M}
q_{\mathrm{pop}}
\left(
    s,z_{c,k}(s)
\right)
-
q_{\mathrm{pop}}
\left(
    s,\tilde z_c^{\mathrm{ad}}(s)
\right)
\leq
\frac{\log(M+1)}{\betatilt(s)}
+
2c_{\mathrm{cal}}
u_{\mathrm{epi}}(s)
+
\frac{L_{\mathrm{sm}}}{2}
\mathcal V_c^{\mathrm{ad}}(s).
\end{align*}
Since
$1/\betatilt(s)
    =
    u_{\mathrm{epi}}(s)$,
the claimed bound follows.
\end{proof}

\paragraph{Interpretation.}
Corollary~\ref{cor:adaptive-barycentric-refinement} extends the adaptive
tilting guarantee from the particle weights to the latent actually executed by
the barycenter implementation.  The bound separates two effects:
\[
    \underbrace{
    u_{\mathrm{epi}}(s)
    \left(
        \log(M+1)+2c_{\mathrm{cal}}
    \right)
    }_{\text{epistemically tempered selection error}}
    \;+\;
    \underbrace{
    L_{\mathrm{sm}}
    \mathcal V_c^{\mathrm{ad}}(s)/2
    }_{\text{barycentric interpolation error}}.
\]
Thus the value-selection component contracts with calibrated epistemic
uncertainty, while the additional cost of executing a weighted latent rather
than sampling a particle is controlled by the local curvature of the
population score and the geometric spread of the tilted cloud.

Theorem~\ref{thm:adaptive-local-tilting} isolates the benefit of adapting the tilting intensity to epistemic uncertainty.  With fixed $\betatilt$, the sensitivity of the weighting law to critic error scales as
$\betatilt\eps_{\mathrm{crit}}$.
Yet, under the adaptive choice
$
    \betatilt(s)
    =
    1/u_{\mathrm{epi}}(s),
$
and a calibrated uncertainty scale
$\eps_{\mathrm{crit}}\lesssim u_{\mathrm{epi}}$,
this dependence becomes scale free.  The same normalization yields a standardized criterion for aggressive local weighting: a high-value region is amplified according to its population value separation measured in units of epistemic uncertainty.

Consequently, adaptation does not remove the upper-tail preference established in the preceding section.  Rather, it determines when that preference can be trusted.  Large uncertainty keeps the local weighting conservative; as uncertainty contracts, progressively smaller true value differences become sufficient to justify strong exponential amplification.

\subsection{Finite-cloud implication}

The same substitution gives an immediate implementation-level consequence for the finite particle cloud used by \QFRS{}.  In the fixed-temperature analysis, Corollary~\ref{cor:finite-cloud-selection-error} gives
us that \[
    \max_{0\le k\le M}
        q_{c,k}^{\mathrm{pop}}
    -
    \sum_{k=0}^{M}
        \widehat w_{c,k}^{\betatilt}
        q_{c,k}^{\mathrm{pop}}
    \leq
    \frac{\log(M+1)}{\betatilt}
    +
    2\eps_{\mathrm{crit}}.
\]

Under adaptive tilting, both terms are controlled by the same epistemic scale.

\begin{corollary}[Finite-cloud error under epistemically adaptive tilting]
\label{cor:adaptive-finite-cloud}
Fix a realized local particle cloud and assume \eqref{eq:adaptive-barycenter-calibration} holds on that cloud. Let $\betatilt(s)=1/u_{\mathrm{epi}}(s)$. Then 
\begin{equation}
\label{eq:adaptive-finite-cloud-gap}
\max_{0\leq k\leq M}
q_{c,k}^{\mathrm{pop}}
-
\sum_{k=0}^{M}
\widehat w_{c,k}^{\mathrm{ad}}(s)
q_{c,k}^{\mathrm{pop}}
\le
u_{\mathrm{epi}}(s)
\left(
\log(M+1)
+
2c_{\mathrm{cal}}
\right).
\end{equation}
If, in addition, the local smoothness assumptions of
Proposition~\ref{prop:barycentric-refinement-value} hold, then the executed barycentric refinement satisfies
\begin{equation}
\label{eq:adaptive-barycentric-gap}
\begin{aligned}
&
\max_{0\leq k\leq M}
q_{\mathrm{pop}}
\left(
s,z_{c,k}(s)
\right)
-
q_{\mathrm{pop}}
\left(
s,
\tilde z_{c}^{\mathrm{ad}}(s)
\right)
\leq
u_{\mathrm{epi}}(s)
\left(
\log(M+1)
+
2c_{\mathrm{cal}}
\right)
+
\frac{L_{\mathrm{sm}}}{2}
\mathcal V_{c}^{\mathrm{ad}}(s).
\end{aligned}
\end{equation}
\end{corollary}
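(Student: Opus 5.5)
The plan is to obtain both bounds by substitution into the fixed-temperature finite-cloud results. Throughout, the realized cloud $\mathcal B_c(s)$ and the scalar $u_{\mathrm{epi}}(s)$ are held fixed, so $\betatilt(s)=1/u_{\mathrm{epi}}(s)$ is a deterministic positive number. Positivity is guaranteed by the floor $u_{\min}>0$ in \eqref{eq:adaptive-tilt-beta}.

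Conditional on the cloud, the adaptive weights $\widehat w_{c,k}^{\mathrm{ad}}(s)$ in \eqref{eq:adaptive_weights} are exactly the fixed-temperature softmax weights $\widehat w_{c,k}^{\betatilt}(s)$ of Corollary~\ref{cor:finite-cloud-selection-error}, evaluated at $\betatilt=\betatilt(s)$. No new weighting rule appears, so the fixed-$\betatilt$ statements apply verbatim.

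For \eqref{eq:adaptive-finite-cloud-gap}, I will invoke Corollary~\ref{cor:finite-cloud-selection-error} with $\betatilt=\betatilt(s)$ and $\eps_{\mathrm{crit}}:=c_{\mathrm{cal}}u_{\mathrm{epi}}(s)$. The calibration hypothesis \eqref{eq:adaptive-barycenter-calibration} is precisely the particle-wise uniform error condition \eqref{eq:finite-bank-uniform-score-error} with this $\eps_{\mathrm{crit}}$. The resulting bound is $\log(M+1)/\betatilt(s)+2\eps_{\mathrm{crit}}$. Since $1/\betatilt(s)=u_{\mathrm{epi}}(s)$, this equals $u_{\mathrm{epi}}(s)(\log(M+1)+2c_{\mathrm{cal}})$, which factors as claimed.

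For \eqref{eq:adaptive-barycentric-gap}, I will add the smoothness hypothesis and use \eqref{eq:barycenter-weighted-value-lower} from Proposition~\ref{prop:barycentric-refinement-value}. With the same weights, that inequality lower-bounds $q_{\mathrm{pop}}(s,\tilde z_c^{\mathrm{ad}}(s))$ by the adaptive weighted population score minus $\tfrac{L_{\mathrm{sm}}}{2}\mathcal V_c^{\mathrm{ad}}(s)$. This step relies on the fact that $\tilde z_c^{\mathrm{ad}}$ and $\mathcal V_c^{\mathrm{ad}}$ coincide with $\tilde z_{c,\betatilt}$ and $\mathcal V_{c,\betatilt}$ at $\betatilt=\betatilt(s)$. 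Chaining this with the first bound gives \eqref{eq:adaptive-barycentric-gap}. Equivalently, it is Corollary~\ref{cor:adaptive-barycentric-refinement} restated.

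I do not expect a real obstacle; the argument is bookkeeping. The one point I will check explicitly is that the data-dependent $\betatilt(s)$ may be treated as fixed. All cited results are deterministic statements conditional on a realized cloud and a given $\betatilt>0$. The score $u_{\mathrm{epi}}(s)$ is computed from that same cloud, and nothing in those proofs uses independence of $\betatilt$ from the scores. So the substitution is legitimate pointwise.
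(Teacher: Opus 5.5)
Your proposal is correct and follows the paper's proof, which is exactly this substitution of $\betatilt(s)=1/u_{\mathrm{epi}}(s)$ and $\eps_{\mathrm{crit}}=c_{\mathrm{cal}}u_{\mathrm{epi}}(s)$ into Corollary~\ref{cor:finite-cloud-selection-error} and Proposition~\ref{prop:barycentric-refinement-value}. Your added check that the data-dependent tilt may be treated as fixed is a sound detail the paper leaves implicit: the cited results are deterministic given the realized cloud and any finite $\betatilt>0$, and the floor gives $\betatilt(s)\le 1/u_{\min}$.
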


\begin{proof}
Substitute
$\betatilt(s)
    =
    1/u_{\mathrm{epi}}(s)
$
and
$
    \eps_{\mathrm{crit}}
    \leq
    c_{\mathrm{cal}}
    u_{\mathrm{epi}}(s)
$
into
Corollary~\ref{cor:finite-cloud-selection-error}
and
Proposition~\ref{prop:barycentric-refinement-value}.
\end{proof}

Corollary~\ref{cor:adaptive-finite-cloud} gives the most direct finite-cloud interpretation of adaptive $\betatilt$.  Ignoring the separate barycentric interpolation term, the gap between the best represented local particle and the population value achieved by the learned weighting is bounded above by a quantity linear in $u_{\mathrm{epi}}(s)$, for fixed $M$ and $c_{\mathrm{cal}}$: 
\[
    \text{adaptive weighting error}
    =
    \mathcal O
    \left(
        u_{\mathrm{epi}}(s)
    \right).
\]
As calibrated epistemic uncertainty contracts, the finite-temperature
term and the upper bound on the critic-estimation contribution
contract together, until the uncertainty floor becomes active.

\subsection{Automatic annealing with learning}

The adaptive rule also induces an automatic transition from conservative to aggressive local refinement.  Suppose, for a fixed state, local particle law, and score landscape, the epistemic uncertainty decreases over training---i.e., $u_{\mathrm{epi},t}(s)
    \downarrow$.
Then
$\betatilt^{t}(s)
    =
    \frac1{
        u_{\mathrm{epi},t}(s)
    }
    \uparrow$.

By the monotonicity results of Theorem~\ref{thm:exponential-tilting},
the average score under the tilted weighting and the KL departure
from the local reference law are nondecreasing as uncertainty contracts.
Both increase strictly when the score is nonconstant
$\nu_{s,c}$-almost surely and the tilting intensity increases strictly.
Hence the same fixed exponential-tilting mechanism automatically moves from a weakly tilted, reference-like regime toward increasingly concentrated upper-tail refinement.

This statement isolates the effect of uncertainty adaptation: during actual online learning the critic scores and particle clouds also change, so the observed weighting statistics need not be monotone in training time.  Conditional on a fixed local score landscape, however, contraction of epistemic uncertainty necessarily strengthens the risk-seeking local tilt.

\paragraph{Role of the uncertainty floor.}
The floor in~\eqref{eq:adaptive-tilt-beta} imposes
$\betatilt(s)
    \le
    1/u_{\min}$.
It therefore caps the risk-seeking intensity, preventing a vanishing ensemble-disagreement estimate from making $\betatilt(s)$ diverge. 
 Once the floor becomes active, uncertainty-driven annealing saturates at this maximum intensity.

\end{document}